\definecolor{Green}{rgb}{0.13, 0.65, 0.3}
\definecolor{Amber}{rgb}{0.3, 0.5, 1.0}
\newcommand{\sinit}{s_{\text{init}}}
\newcommand{\calA}{{\mathcal{A}}}
\newcommand{\calC}{{\mathcal{C}}}
\newcommand{\calV}{{\mathcal{V}}}
\newcommand{\calX}{{\mathcal{X}}}
\newcommand{\calS}{{\mathcal{S}}}
\newcommand{\calF}{{\mathcal{F}}}
\newcommand{\calG}{\mathcal{G}}
\newcommand{\calI}{{\mathcal{I}}}
\newcommand{\calJ}{{\mathcal{J}}}
\newcommand{\calK}{{\mathcal{K}}}
\newcommand{\calL}{{\mathcal{L}}}
\newcommand{\calT}{{\mathcal{T}}}
\newcommand{\calM}{{\mathcal{M}}}
\newcommand{\calN}{{\mathcal{N}}}
\newcommand{\KL}{\text{\rm KL}}
\DeclareMathOperator*{\argmin}{argmin}
\DeclareMathOperator*{\argmax}{argmax}
\newcommand{\eat}[1]{}
\newcommand{\inner}[2]{\left\langle #1, #2 \right\rangle}
\newcommand{\rbr}[1]{\left(#1\right)}
\newcommand{\sbr}[1]{\left[#1\right]}
\newcommand{\cbr}[1]{\left\{#1\right\}}
\newcommand{\abr}[1]{\left|#1\right|}
\newcommand{\bigO}[1]{\order\left( #1 \right)}
\newcommand{\tilO}[1]{\otil\left( #1 \right)}
\newcommand{\lowO}[1]{\lorder\left( #1 \right)}
\newcommand{\bigo}[1]{\order( #1 )}
\newcommand{\tilo}[1]{\otil( #1 )}
\newcommand{\lowo}[1]{\lorder( #1 )}
\DeclarePairedDelimiter\ceil{\lceil}{\rceil}
\newcommand{\T}{\ensuremath{T_\star}}
\newcommand{\B}{B_\star}
\newcommand{\cmin}{\ensuremath{c_{\min}}}
\newcommand{\deviation}{\textsc{Deviation}\xspace}
\newcommand{\var}{\textsc{Var}}
\newcommand{\bernoulli}{\textrm{Bernoulli}}
\newcommand{\SA}{\calS\times\calA}%{\Gamma}
\renewcommand{\P}{\bar{P}}
\newcommand{\optV}{V^{\star}}
\newcommand{\optQ}{Q^{\star}}
\newcommand{\sumt}{\sum_{t=1}^T}
\newcommand{\hatQ}{\widehat{Q}}
\newcommand{\tilV}{\widetilde{V}}
\newcommand{\opttilV}{\widetilde{V}^{\star}}
\newcommand{\tiltheta}{\widetilde{\theta}}
\newcommand{\tilbeta}{\widetilde{\beta}}
\newcommand{\thetastar}{\theta^{\star}}
\newcommand{\phistar}{\phi^{\star}}
\newcommand{\sumh}{\sum_{h=1}^H}
\newcommand{\sumk}{\sum_{k=1}^K}
\newcommand{\summ}{\sum_{m=1}^M}
\newcommand{\summp}{\sum_{m=1}^{M'}}
\newcommand{\hatb}{\widehat{b}}
\newcommand{\optpi}{\pi^\star}
\newcommand{\tilR}{\widetilde{R}}
\newcommand{\tilc}{\widetilde{c}}
\newcommand{\tilM}{\widetilde{M}}
\newcommand{\tilP}{\widetilde{P}}
\newcommand{\tilpi}{{\widetilde{\pi}}}
\newcommand{\tiloptpi}{{\widetilde{\pi}^\star}}
\newcommand{\overM}{\overline{M}}
\newcommand{\overestimate}{overestimate condition\xspace}
\newcommand{\esterr}{\textsc{Estimation-Err}}
\newcommand{\switchcost}{\textsc{Switching-Cost}\xspace}
\newcommand{\dagM}{M_{\dagger}}
\newcommand{\tilcalM}{\widetilde{\calM}}
\newcommand{\frA}{\mathfrak{A}}
\newcommand{\mgeq}{\succcurlyeq}
\newcommand{\astar}{a^{\star}}
\newcommand{\gap}{\text{\rm gap}}
\newcommand{\mingap}{\gap_{\min}}
\newcommand{\nstar}{n^{\star}}
\newcommand{\dist}{\text{dist}}
\newcommand{\inefficient}{\textsc{VA-GOPO}\xspace}
\newcommand{\vol}{\text{Vol}}
\newcommand{\fB}{\field{B}}
\newcommand{\ringw}{\mathring{w}}
\newcommand{\tilw}{\widetilde{w}}
\newcommand{\wstar}{w^{\star}}
\newcommand{\clip}{\textsf{clip}}
\newcommand{\hatSigma}{\widehat{\Sigma}}
\newcommand{\tilSigma}{\widetilde{\Sigma}}
\newcommand{\tilb}{\widetilde{b}}
\newcommand{\hattheta}{\widehat{\theta}}
\newcommand{\hatbeta}{\widehat{\beta}}
\newcommand{\cbeta}{\check{\beta}}
\newcommand{\barsigma}{\bar{\sigma}}
\newcommand{\mustar}{\mu^{\star}}
\newcommand{\rhostar}{\rho^{\star}}
\newcommand{\barnu}{\bar{\nu}}
\newcommand{\field}[1]{\mathbb{#1}}
\newcommand{\fR}{\field{R}}
\newcommand{\fN}{\field{N}}
\newcommand{\E}{\field{E}}
\newcommand{\fV}{\field{V}}
\newcommand{\fZ}{\field{Z}}
\newcommand{\Ind}{\field{I}}
\newcommand{\norm}[1]{\left\|{#1}\right\|}
\newcommand{\tr}{\text{tr}}
\newcommand{\sgn}{\mbox{\text sgn}}
\newtheorem{lemma}{Lemma}
\newtheorem{theorem}{Theorem}
\newtheorem{cor}[theorem]{Corollary}
\newtheorem{remark}{Remark}
\newtheorem{assumption}{Assumption}
\newtheorem*{relemma}{Lemma}
\newcommand{\order}{\ensuremath{\mathcal{O}}}
\newcommand{\lorder}{\ensuremath{\Omega}}
\newcommand{\otil}{\ensuremath{\tilde{\mathcal{O}}}}
\newcommand{\pref}[1]{\prettyref{#1}}
\newcommand{\pfref}[1]{Proof of \prettyref{#1}}
\newcommand{\savehyperref}[2]{\texorpdfstring{\hyperref[#1]{#2}}{#2}}
\icmltitlerunning{Improved No-Regret Algorithms for Stochastic Shortest Path with Linear MDP}
\begin{document}

\twocolumn[
\icmltitle{Improved No-Regret Algorithms for Stochastic Shortest Path  with Linear MDP}

% It is OKAY to include author information, even for blind
% submissions: the style file will automatically remove it for you
% unless you've provided the [accepted] option to the icml2021
% package.

% List of affiliations: The first argument should be a (short)
% identifier you will use later to specify author affiliations
% Academic affiliations should list Department, University, City, Region, Country
% Industry affiliations should list Company, City, Region, Country

% You can specify symbols, otherwise they are numbered in order.
% Ideally, you should not use this facility. Affiliations will be numbered
% in order of appearance and this is the preferred way.
\icmlsetsymbol{equal}{*}

\begin{icmlauthorlist}
\icmlauthor{Liyu Chen}{usc}
\icmlauthor{Rahul Jain}{usc}
\icmlauthor{Haipeng Luo}{usc}
\end{icmlauthorlist}

\icmlaffiliation{usc}{University of Southern California}

\icmlcorrespondingauthor{Liyu Chen}{liyuc@usc.edu}

% You may provide any keywords that you
% find helpful for describing your paper; these are used to populate
% the "keywords" metadata in the PDF but will not be shown in the document
\icmlkeywords{Machine Learning, ICML}

\vskip 0.3in
]

% this must go after the closing bracket ] following \twocolumn[ ...

% This command actually creates the footnote in the first column
% listing the affiliations and the copyright notice.
% The command takes one argument, which is text to display at the start of the footnote.
% The \icmlEqualContribution command is standard text for equal contribution.
% Remove it (just {}) if you do not need this facility.

\printAffiliationsAndNotice{}  % leave blank if no need to mention equal contribution
%\printAffiliationsAndNotice{\icmlEqualContribution} % otherwise use the standard text.

\begin{abstract}
We introduce two new no-regret algorithms for the stochastic shortest path (SSP) problem with a linear MDP that significantly improve over the only existing results of \citep{vial2021regret}.
Our first algorithm is computationally efficient and achieves a regret bound $\tilo{\sqrt{d^3\B^2\T K}}$, where $d$ is the dimension of the feature space, $\B$ and $\T$ are upper bounds of the expected costs and hitting time of the optimal policy respectively, and $K$ is the number of episodes.
The same algorithm with a slight modification also achieves logarithmic regret of order $\bigO{\frac{d^3\B^4}{\cmin^2\mingap}\ln^5\frac{d\B K}{\cmin} }$, where $\mingap$ is the minimum sub-optimality gap and $\cmin$ is the minimum cost over all state-action pairs.
Our result is obtained by developing a simpler and improved analysis for the finite-horizon approximation of \citep{cohen2021minimax} with a smaller approximation error, which might be of independent interest.
On the other hand, using variance-aware confidence sets in a global optimization problem,
our second algorithm is computationally inefficient but achieves the first ``horizon-free'' regret bound $\tilo{d^{3.5}\B\sqrt{K}}$ with no polynomial dependency on $\T$ or $1/\cmin$,
almost matching the $\lowo{d\B\sqrt{K}}$ lower bound from~\citep{min2021learning}.
%We conclude by establishing an instance-dependent lower bound of order $\lowo{\frac{d\B^2}{\mingap}}$ for this problem.
\end{abstract}

% !TEX root = main.tex

\section{Introduction}
We study the stochastic shortest path (SSP) model, where a learner attempts to reach a goal state while minimizing her costs in a stochastic environment.
SSP is a suitable model for many real-world applications, such as games, car navigation, robotic manipulation, etc.
Online reinforcement learning in SSP has received great attention recently.
In this setting, learning proceeds in $K$ episodes over a Markov Decision Process (MDP).
In each episode, starting from a fixed initial state, the learner sequentially takes an action, incurs a cost, and transits to the next state until reaching the goal state.
The performance of the learner is measured by her regret, the difference between her total costs and that of the optimal policy.
SSP is a strict generalization of the heavily-studied finite-horizon reinforcement learning problem, where the learner is guaranteed to reach the goal state after a fixed number of steps.

Modern reinforcement learning applications often need to handle a massive state space, in which function approximation is necessary.
There is huge progress in the study of linear function approximation, for both the finite-horizon setting~\citep{ayoub2020model,jin2020provably,yang2020reinforcement,zanette2020frequentist,zanette2020learning,zhou2021nearly} and the infinite-horizon setting~\citep{wei2021learning,zhou2021nearly,zhou2021provably}.
%Researchers also study general function approximation~\citep{wang2020reinforcement,ishfaq2021randomized,kong2021online} in the finite-horizon setting.
Recently, \citet{vial2021regret} took the first step in considering linear function approximation for SSP. 
They study SSP defined over a \textit{linear MDP}, and proposed a computationally inefficient algorithm with regret  $\tilo{\sqrt{d^3\B^3K/\cmin}}$, as well as another efficient algorithm with regret $\tilo{K^{5/6}}$ (omitting other dependency). %in general (omitting other dependency) and regret $\tilo{K^{3/4}}$ under a strong assumption that all stationary policies are proper.
%Their efficient algorithm also achieves better regret bound under additional strong assumptions. 
Here, $d$ is the dimension of the feature space, $\B$ is an upper bound on the expected costs of the optimal policy, and $\cmin$ is the minimum cost across all state-action pairs.
Later, \citet{min2021learning} study a related but different SSP problem defined over a \textit{linear mixture MDP} and achieve a $\tilo{d\B^{1.5}\sqrt{K/\cmin}}$ regret bound.
Despite leveraging the advances from both the finite-horizon and infinite-horizon settings, results above are still far from optimal in terms of regret guarantee or computational efficiency, demonstrating the unique challenge of SSP problems.

In this work, we further extend our understanding of SSP with linear function approximation (more specifically, with linear MDPs).
Our contributions are as follows:
\begin{itemize}
	\item In \pref{sec:efficient}, we first propose a new analysis for the finite-horizon approximation of SSP introduced in \citep{cohen2021minimax}, which is much simpler and achieves a smaller approximation error. 
	Our analysis is also \textit{model agnostic}, meaning that it does not make use of the modeling assumption and can be applied to both the tabular setting and function approximation settings. %(for example, tabular assumption on the state-action space, and linear assumption on the cost and transition functions).
	Combining this new analysis with a simple finite-horizon algorithm similar to that of~\citep{jin2020provably}, we achieve a regret bound of $\tilo{\sqrt{d^3\B^2\T K}}$,
	with $\T\leq \B/\cmin$ being an upper bound of the hitting time of the optimal policy,
	 which strictly improves over that of \citep{vial2021regret}.
	Notably, unlike their algorithm, ours is computationally efficient without any extra assumption.
	
	\item
	In \pref{sec:log}, we further show that the same algorithm above with a slight modification achieves a logarithmic instance-dependent expected regret bound of order $\bigO{\frac{d^3\B^4}{\cmin^2\mingap}\ln^5\frac{d\B K}{\cmin} }$ where $\mingap$ is some sub-optimality gap.
	As far as we know, this is the first logarithmic regret bound for SSP (with or without function approximation).
	We also establish a lower bound of order $\lowo{\frac{d\B^2}{\mingap}}$, %(which also hold for a related linear mixture MDP setting as we show).
	which further advances our understanding for this problem even though it
does not exactly match our upper bound.
	
	\item To remove the undesirable $\T$ dependency in our instance-independent bound, in \pref{sec:inefficient}, we further develop a computationally inefficient algorithm that makes use of certain variance-aware confidence sets in a global optimization problem and achieves $\tilo{d^{3.5}\B\sqrt{K}}$ regret.
%	Notably, this is the first algorithm that leverage variance-aware confidence sets under the linear Markov Decision Process (MDP) assumption.
	Importantly, this bound is horizon-free in the sense that it has no polynomial dependency on $\T$ or $\frac{1}{\cmin}$ even in the lower order terms.
	Moreover, this almost matches the best known lower bound $\lowo{d\B\sqrt{K}}$ from~\citep{min2021learning}.
\end{itemize}

\paragraph{Techniques} 
Our results are built upon several technical innovations. 
First, as mentioned, we develop an improved analysis for the finite-horizon approximation of~\citep{cohen2021minimax}, which might be of independent interest.
The key idea is to directly bound the total approximation error with respect to the regret bound of the finite-horizon algorithm, instead of analyzing the estimation precision for each state-action pair as done in~\citep{cohen2021minimax}. 
%Instead of analyzing the estimation accuracy for each state-action pair, we directly upper bound the total approximation error w.r.t the regret bound of the finite-horizon algorithm.

Second, to obtain the logarithmic bound in \pref{sec:efficient}, we note that it is not enough to simply combine the aforementioned finite-horizon approximation and the existing logarithmic regret results for the finite-horizon setting such as~\citep{he2021logarithmic}, since the sub-optimality gap obtained in this way is in terms of the finite-horizon counterpart instead of the original SSP and could be substantially smaller.
We resolve this issue via a longer horizon in the approximation and a careful two-stage analysis. %see \pref{sec:efficient} for the high-level idea.

%by a careful analysis on the finite-horizon approximation, 
%we obtain a logarithmic regret bound w.r.t the sub-optimal gap of the original SSP by a careful analysis on the finite-horizon MDP.
%, instead of the sub-optimal gap of the finite-horizon MDP.
%Note that directly applying logarithmic regret analysis on the finite-horizon MDP does not give the desired bound, since the sub-optimal gap obtained in this way is w.r.t the finite-horizon MDP instead of the original SSP.

Finally, our horizon-free result in \pref{sec:inefficient} is obtained by a novel combination of several ideas, including the global optimization algorithm of~\citep{zanette2020learning,wei2021learning}, the variance-aware confidence sets of~\citep{zhang2021variance} (for a related but different setting with linear mixture MDPs),
an improved analysis of the variance-aware confidence sets~\cite{kim2021improved},
and finally a new clipping trick and new update conditions that we propose.
Our analysis does not require the recursion-based technique of \citep{zhang2020reinforcement} (for the tabular case), 
nor estimating higher order moments of value functions as in~\citep{zhang2021variance} (for linear mixture MDPs),
which might also be of independent interest.

%In \pref{sec:inefficient}, we adopt the variance-aware confidence sets developed in \citep{zhang2021variance} to the linear MDP setting.
%Nontrivial modifications on the original approach are made to achieve our result.
%For example, the variance-aware confidence sets are served as constraints of a fixed point optimization problem, instead of the action set for extended value iteration.
%We develop a new analysis that achieves horizon-free regret bound without 

% (mention horizon-free bound?)

\paragraph{Related work} 
Regret minimization of SSP under stochastic costs has been well studied in the tabular setting (that is, no function approximation)~\citep{tarbouriech2020no,cohen2020near,cohen2021minimax,tarbouriech2021stochastic,chen2021implicit,jafarnia2021online}.
There are also several works~\citep{rosenberg2020adversarial,chen2021minimax,chen2021finding} considering the more challenging setting with adversarial costs (which is beyond the scope of this work).

Beyond linear function approximation, in the finite-horizon setting
researchers also start considering theoretical guarantees for general function approximation~\citep{wang2020reinforcement,ishfaq2021randomized,kong2021online}.
The study for SSP, which again is a strict generalization of the finite-horizon problems and might be a better model for many applications, falls behind in this regard, motivating us to explore in this direction with the goal of providing a more complete picture at least for linear function approximation.

The use of variance information is crucial in obtaining optimal regret bounds in MDPs.
This dates back to the work of \citep{lattimore2012pac} for the discounted setting, which has been significantly extended to the finite-horizon setting~\citep{azar2017minimax,jin2018q,zanette2019tighter,zhang2020reinforcement,zhang2020almost}.
Constructing variance-aware confidence sets for linear bandits and linear mixture MDPs has also gained recent attention~\citep{zhou2021nearly,zhang2021variance,kim2021improved}.
We are among the first to do so for linear MDPs (a concurrent work~\citep{wei2021model} also does so but for a completely different purpose of improving robustness against corruption).

%While preparing this work, we notice a concurrent work by which also extends the technique in \cite{zhang2021variance} to the linear MDP setting.
%However, their goal is very different from ours, which is to improve the robustness against corruption.
%In fact, they still make use of Hoeffding's type confidence sets.

Logarithmic gap-dependent bounds have been shown in different settings; see for example~\citep{jaksch2010near,simchowitz2019non,jin2021best,he2021logarithmic}, but to our knowledge, we are the first to show similar bounds for SSP.

% !TEX root = main.tex

\section{Preliminary}
\label{sec:pre}
An SSP instance is defined by an MDP $\calM=(\calS,\calA,\sinit,g,c,P)$. 
Here, $\calS$ is the state space, $\calA$ is the (finite) action space (with $A = |\calA|$), $\sinit\in\calS$ is the initial state, $g\notin \calS$ is the goal state, $c:\SA\rightarrow[\cmin, 1]$ is the cost function with some global lower bound $\cmin\geq 0$, and $P=\{P_{s, a}\}_{(s, a)\in\SA}$ with $P_{s, a}\in\Delta_{\calS_+}$ is the transition function, where $\calS_+$ is a shorthand for $\calS\cup\{g\}$ and $\Delta_{\calS_+}$ is the simplex over $\calS_+$.

The learning protocol is as follows: the learner interacts with the environment for $K\geq2$ episodes.
In each episode, the learner starts in initial state $\sinit$, sequentially takes an action, incurs a cost, and transits to the next state.
An episode ends when the learner reaches the goal state $g$.
%Formally speaking, at the $i$-th step of the $k$-th episode, the learner observes its current state $s^k_i$, takes action $a^k_i$, incurs a cost $c(s^k_i, a^k_i)$, and moves to the next state $s^k_{i+1}$.
%We denote by $I_k$ the length of episode $k$, such that $s^k_{I_k+1}=g$.
We denote by $(s_t, a_t, s'_t)$ the $t$-th state-action-state triplet observed among all episodes, so that $s'_t\sim P_{s_t, a_t}$ for each $t$, and $s'_t=s_{t+1}$ unless $s'_t=g$ (in which case $s_{t+1}=\sinit$).
Also denote by $T$ the total number of steps in $K$ episodes.
%Also define $T=\sum_{k=1}^K I_k$ as the total number steps in $K$ episodes.

\paragraph{Learning objective} The learner's goal is to learn a policy that reaches the goal state with minimum costs. Formally, a (stationary and deterministic) policy $\pi:\calS\rightarrow\calA$ is a mapping that assigns an action $\pi(s)$ to each state $s\in\calS$.
We say $\pi$ is \textit{proper} if following $\pi$ (that is, taking action $\pi(s)$ whenever in state $s$) reaches the goal state with probability $1$.
Given a proper policy $\pi$, we define its value function and action-value function as follows:
\begin{align*}
	V^{\pi}(s) &= \E\sbr{\left.\sum_{i=1}^Ic(s_i, \pi(s_i))\right| P, s_1=s},\\
	Q^{\pi}(s, a) &= c(s, a) + \E_{s'\sim P_{s, a}}[V^{\pi}(s')],
\end{align*}
where the expectation in $V^{\pi}$ is with respect to the randomness of next states $s_{i+1}\sim P_{s_i, \pi(s_i)}$ and the number of steps $I$ before reaching the goal $g$.
Let $\Pi$ be the set of proper policies.
We make the basic assumption that $\Pi$ is non-empty.
Under this assumption, there exists an optimal proper policy $\optpi$, such that $V^{\optpi}(s)=\min_aQ^{\optpi}(s, a)$, and $V^{\optpi}(s)=\min_{\pi\in\Pi}V^{\pi}(s)$ for all $s$~\citep{bertsekas2013stochastic}.
We use $\optV$ and $\optQ$ as shorthands for $V^{\optpi}$ and $Q^{\optpi}$.
The formal goal of the learner is then to minimize her regret against $\optpi$, that is, the difference between her total costs and that of the optimal proper policy, defined as
\begin{align*}
	R_K &= \sumt c(s_t, a_t) - K\cdot\optV(\sinit).
\end{align*}
We also define $R_K=\infty$ if $T=\infty$.
%We also define $R_K=\infty$ if there exists $k$ such that $I_k=\infty$.
%One specifically important question in SSP is why the algorithm halts (i.e. finishes $K$ episodes) in finite number steps.
%Simply bounding the regret does not guarantee this since the learner might stay at some states indefinitely with zero cost.
%To avoid this caveat, we assume $\cmin>0$ throughout the paper unless stated explicitly.
%In case of $\cmin=0$, a standard practice is to solve a modified SSP instance with a perturbed cost function $c_{\epsilon}(s, a)=\max\{c(s, a), \epsilon\}$ for some $\epsilon>0$.
%The bias (in regret) of solving the modified SSP instead of the original one is of order $\bigo{\epsilon K}$~\citep{tarbouriech2020no}.

\paragraph{Linear SSP}
In the so-called tabular setting, the state space is assumed to be small, and algorithms with computational complexity and regret bound depending on $S = |\calS|$ are acceptable.
To handle a potentially massive state space, however, we consider the same linear function approximation setting of~\citep{vial2021regret}, where the MDP enjoys a linear structure in both the transition and cost functions (known as linear or low-rank MDP).

\begin{assumption}[Linear SSP]
	\label{assum:linMDP}
	%	Assume measurable state space $\calS$ (with possibly infinite number of elements) and finite action space $\calA$ ($|\calA|<\infty$).
	For some $d\geq 2$, there exist known feature maps $\{\phi(s, a)\}_{(s, a)\in\SA}\subseteq\fR^d$, unknown parameters $\thetastar\in\fR^d$ and $\{\mu(s')\}_{s'\in\calS_+}\subseteq \fR^d$, such that for any $(s, a)\in\SA$ and $s'\in\calS_+$, we have:
	\begin{align*}
		c(s, a) = \phi(s, a)^{\top}\thetastar,\quad P_{s, a}(s') = \phi(s, a)^{\top}\mu(s').
	\end{align*}
	Moreover, we assume $\norm{\phi(s, a)}_2\leq 1$ for all $(s, a)\in\SA$, $\norm{\thetastar}_2\leq\sqrt{d}$, and $\norm{\int h(s')d\mu(s')}_2\leq \sqrt{d}\norm{h}_{\infty}$ for any $h\in\fR^{\calS_+}$.
\end{assumption}

We refer the reader to~\citep{vial2021regret} and references therein for justification on this widely-used structural assumption (especially on the last few norm constraints).
Under \pref{assum:linMDP}, by definition we have $\optQ(s, a)=\phi(s, a)^{\top}\wstar$, where $\wstar=\thetastar + \int \optV(s')d\mu(s')\in\fR^d$, that is, $\optQ$ is also linear in the features.

\paragraph{Key parameters and notations}
Two extra parameters that play a key role in our analysis are: $\B=\max_s\optV(s)$, the maximum cost of the optimal policy starting from any state, and $\T=\max_sT^{\optpi}(s)$, the maximum hitting time of the optimal policy starting from any state, where $T^{\pi}(s)$ %=\E[I|s_1=s, \pi] 
is the expected number of steps before reaching the goal if one follows policy $\pi$ starting from state $s$. 
By definition, we have $\T \leq \B/\cmin$.

%\paragraph{Assumptions} 
For simplicity, we assume that $\B$, $\T$, and $\cmin$ are known to the learner for most discussions, and defer to the appendix what we can achieve when some of these parameters are unknown.
We also assume $\B>1$ and $\cmin>0$ by default (and will discuss the case $\cmin=0$ for specific algorithms if modifications are needed). 

% Define $S=|\calS_+|$ and $A=|\calA|$ as the number states and actions respectively.
%Define $\B=\max_s\optV(s)$, and $\T=\max_sT^{\optpi}(s)$, where $T^{\pi}(s)=\E[I|s_1=s, \pi]$ is the expected hitting time of policy $\pi$ starting from state $s$.
%Define $\fB^d_p(r)=\{ x\in\fR^d: \norm{x}_p\leq r \}$ as the $d$-dimensional $\ell_p$-ball of radius $r$.
%For a given $\ell>0$, define $\clip_{\ell}(x) = \min\{1, \ell/|x|\}\cdot x$.

For $n\in\fN_+$, we define $[n]=\{1,\ldots,n\}$.
For any $l\leq r$, we define $[x]_{[l, r]}=\min\{\max\{x, l\}, r\}$ as the projection of $x$ onto the interval $[l, r]$.
%Define $\calG_{\epsilon}(\calX)=\{\epsilon n, n\in\fZ\}^d\cap\calX$ as the $\epsilon$-net of set $\calX\in\fR^d$.
%For a martingale sequence $\{X_i\}_i$ w.r.t the filtration $\{\calF_i\}_i$ such that $X_i\in\calF_i$, we define $\E_i[\cdot]=\E[\cdot|\calF_{i-1}]$.
The notation $\tilO{\cdot}$ hides all logarithmic terms including $\ln K$ and $\ln\frac{1}{\delta}$ for some confidence level $\delta \in (0,1)$.

% !TEX root = main.tex

\section{An Efficient Algorithm for Linear SSP}
\label{sec:efficient}
In this section, we introduce a computationally efficient algorithm for linear SSP.
In \pref{sec:fha}, we first develop an improved analysis for the finite-horizon approximation of \citep{cohen2021minimax}.
%Our new analysis is simple, and has an improved approximation error.
Then in \pref{sec:FH-SSP}, we combine this approximation with a simple finite-horizon algorithm, which together achieves $\tilo{\sqrt{d^3\B^2\T K}}$ regret.
Finally, in \pref{sec:log}, we further obtain a logarithmic regret bound via a slightly modified algorithm and a careful two-stage analysis.

\subsection{Finite-Horizon Approximation of SSP}
\label{sec:fha}
Finite-horizon approximation has been frequently used in solving SSP problems~\citep{chen2021minimax,chen2021finding,cohen2021minimax,chen2021implicit}.
In particular, \citet{cohen2021minimax} proposed a black-box reduction from SSP to a finite-horizon MDP, which achieves minimax optimal regret bound in the tabular case when combining with a certain finite-horizon algorithm.
We will make use of the same algorithmic reduction in our proposed algorithm, but with an improved analysis.

Specifically, for an SSP instance $\calM=(\calS,\calA,\sinit,g,c,P)$, define its finite-horizon MDP counterpart as $\tilcalM=(\calS_+, \calA, \tilc, c_f, \tilP, H)$, where 
$\tilc(s, a)=c(s, a)\Ind\{s\neq g\}$ is the extended cost function, $c_f(s)=2\B\Ind\{s\neq g\}$ is the terminal cost function (more details to follow), $\tilP=\{P_{s, a}\}_{(s, a)\in\SA}\cup\{P_{g, a}\}_{a\in\calA}$ with $P_{g, a}(s')=\Ind\{s'=g\}$ is the extended transition function, and $H$ is a horizon parameter.
Assume the access to a corresponding finite-horizon algorithm $\frA$ which learns through a certain number of ``intervals'' following the protocol below.
At the beginning of an interval $m$, the learner $\frA$ is first reset to an arbitrary state $s_1^m$. Then, in each step $h=1, \ldots, H$ within this interval, $\frA$ decides an action $a_h^m$, transits to $s_{h+1}^m \sim \tilP_{s_h^m, a_h^m}$, and suffers cost $\tilc(s_h^m, a_h^m)$.
At the end of the interval, the learner suffers an additional terminal cost $c_f(s_{H+1}^m)$, and then moves on to the next interval.

With such a black-box access to $\frA$, the reduction of~\citep{cohen2021minimax} is depicted in \pref{alg:fha}.
The algorithm partitions the time steps into intervals of length $H\geq 4\T\ln (4K)$ (such that $\optpi$ reaches $g$ within $H$ steps with high probability).
In each step, the algorithm follows $\frA$ in a natural way and feeds the observations to $\frA$ (\pref{line:execute} and \pref{line:feed}).
If the goal state is not reached within an interval, $\frA$ naturally enters the next interval with the initial state being the current state (\pref{line:enter_next_interval}).
Otherwise, if the goal state is reached within some interval, we keep feeding $g$ and zero cost to $\frA$ until it finishes the current interval (\pref{line:dummy} and \pref{line:feed}), and after that, the next interval corresponds to the beginning of the next episode of the original SSP problem (\pref{line:enter_next_episode}).

%If the learner reaches $g$ in the middle of an interval, then it stays in $g$ until $H$ time steps elapse.
%(Here we treat the goal state $g$ as absorbing state. If the learner reaches $g$ before $H$ time steps, then it stays in $g$ until $H$ time steps are elapsed).
%Specifically, we treat each interval as an episode of a finite-horizon MDP $\tilcalM=(\calS_+, \calA, P, c, c_f, H)$, where $c(s, a)=c(s, a)\Ind\{s\neq g\}$ is the extended cost function (including $g$), $c_f(s)=2\B\Ind\{s\neq g\}$ is the terminal cost, and $P=\{P_{s, a}\}_{(s, a)\in\SA}\cup\{P_{g, a}\}_{a\in\calA}$ with $P_{g, a}(s')=\Ind\{s'=g\}$ are the extended transition functions.
%The horizon $H$ is chosen such that $\optpi$ will reach $g$ in $H$ steps with high probability, and the additional terminal cost encourages the learner to reach $g$ within $H$ steps, which otherwise is likely to be suboptimal.

\DontPrintSemicolon
\setcounter{AlgoLine}{0}
\begin{algorithm}[t]
	\caption{Finite-Horizon Approximation of SSP from~\citep{cohen2021minimax}}
	\label{alg:fha}
	\textbf{Input:} Algorithm $\frA$ for finite-horizon MDP $\tilcalM$ with horizon $H\geq 4\T\ln (4K)$.
	
	\textbf{Initialize:} interval counter $m\leftarrow 1$.
	
	\For{$k=1,\ldots,K$}{
	\nl	Set $s^m_1 \leftarrow \sinit$. \label{line:enter_next_episode}
		
	\nl	\While{$s^m_1 \neq g$}{
     \nl         Feed initial state $s^m_1$ to $\frA$.
			
	\nl		\For{$h=1,\ldots,H$}{
	\nl		     Receive action $a^m_h$ from $\frA$.
			     
	\nl			\If{$s^m_{h} \neq g$} {
	\nl			      Play action $a^m_h$, observe cost $c^m_h=c(s^m_h, a^m_h)$ and next state $s^m_{h+1}$. \label{line:execute}
				}			     
	\nl			\lElse {
				      Set $c^m_h = 0$ and $s^m_{h+1} = g$. \label{line:dummy}
				}
	\nl		     Feed $c^m_h$ and $s^m_{h+1}$ to $\frA$. \label{line:feed}
%				Feed $(s^m_h, a^m_h, s^m_{h+1}, c^m_h)$ to $\frA$ and obtain $\pi^m_h$.
			}
			%Feed terminal state $s^m_{H+1}$ and terminal cost $c_f(s^m_{H+1})$ to $\frA$.
			%Feed trajectory $\tau^m=\{s^m_h, a^m_h\}_{h=1}^H\cup\{s^m_{H+1}\}$ and costs $\{c(s^m_h, a^m_h)\}_{h=1}^H$ to $\calA$.
     \nl        Set $s^{m+1}_1 = s^m_{H+1}$ and $m\leftarrow m+1$. \label{line:enter_next_interval}
		}
	}
	
\end{algorithm}

\paragraph{Analysis}
\citet{cohen2021minimax} showed that in this reduction, the regret $R_K$ of the SSP problem is very close to the regret of $\frA$ in the finite-horizon MDP $\tilcalM$.
Specifically, define $\tilR_{M'}=\sum_{m=1}^{M'}(\sumh c^m_h + c_f(s^m_{H+1}) - \optV_1(s^m_1))$ as the regret of $\frA$ over the first $M'$ intervals of $\tilcalM$ (note the inclusion of the terminal costs), where $\optV_1$ is the optimal value function of the first layer of $\tilcalM$ (see \pref{app:optQV} for the formal definition).
Denote by $M$ the final (random) number of intervals created during the $K$ episodes.
Then \citet{cohen2021minimax} showed the following (a proof is included in \pref{sec:fha} for completeness).

%Denote by $\pi^m=\{\pi^m_h\}_{h=1}^H$ the learner's policy in interval $m$, $V^{\pi}_h(s)$ the value function of executing policy $\pi$ starting from layer $h$ and state $s$ (when $\pi$ is a stationary policy, we apply $\pi$ on every layer; when $\pi=\{\pi_h\}_{h=1}^H$ is non-stationary, we apply $\pi_h$ on layer $h$), and $\optV_h(s)$ the optimal value function starting from layer $h$ and state $s$.
%We define $\tilR_{M'}=\sum_{m=1}^{M'}(\sumh c^m_h + c_f(s^m_{H+1}) - \optV_1(s^m_1))$ as the regret in the first $M'$ intervals of $\tilcalM$, and denote by $M$ the final value of counter $m$ after $K$ episodes.
%As in \citep{cohen2021minimax},  $R_K$ is close to $\tilR_M$ with the chosen horizon $H$ and terminal cost $c_f$.

\begin{lemma}
	\label{lem:fha}
	\pref{alg:fha} ensures $R_K \leq \tilR_M + \B$.
\end{lemma}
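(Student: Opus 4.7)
}

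The plan is to compute the difference $R_K - \tilR_M$ explicitly and bound it term by term. First I would observe that the sum of per-step costs matches: every ``real'' step of the SSP contributes to exactly one $c^m_h$ in the finite-horizon reduction, and the dummy transitions contribute $0$. Hence the total costs agree, and so
\[
R_K - \tilR_M \;=\; \sum_{m=1}^M \optV_1(s_1^m) \;-\; K\optV(\sinit) \;-\; \sum_{m=1}^M c_f(s^m_{H+1}).
\]
So it suffices to show that $\sum_m \optV_1(s_1^m) \le K\optV(\sinit) + \sum_m c_f(s^m_{H+1}) + \B$.

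Next I would bound $\optV_1(s)$ pointwise by running $\optpi$ (the optimal SSP policy) as a stationary policy inside $\tilcalM$. Since real costs are nonnegative and $\tilc(g,\cdot)=0$, the realized path cost is at most $\optV(s)$ in expectation; the only extra contribution is the terminal cost, which is $2\B$ iff the goal is not reached within $H$ steps. Thus
\[
\optV_1(s) \;\le\; \optV(s) + 2\B\cdot\Pr_{\optpi}[\,T_g > H \mid s_1=s\,].
\]
Because $T^{\optpi}(s)\le\T$ for every $s$, Markov's inequality gives $\Pr[T_g>2\T\mid s_1=s]\le 1/2$, and the strong Markov property lets me iterate this to get $\Pr[T_g>2k\T\mid s_1=s]\le 2^{-k}$. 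Plugging in $H\ge 4\T\ln(4K)$ yields $\Pr[T_g>H]\le 1/(4K)$, hence $\optV_1(s)\le \optV(s)+\B/(2K)$ for every $s$ reached at the start of an interval (which is always a non-goal state because of the \textbf{while} condition on \pref{line:enter_next_interval}).

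Third, I would decompose $\sum_m \optV_1(s_1^m)$ by episode. Let $I_k$ be the number of intervals used by episode $k$, so $\sum_k I_k=M$. The first interval of each episode has $s_1^m=\sinit$ and contributes $\optV(\sinit)+\B/(2K)$. Each of the remaining $I_k-1$ intervals in episode $k$ starts at some non-goal state $s^{m-1}_{H+1}\ne g$ (otherwise the \textbf{while} loop would have already terminated that episode), so it contributes at most $\B+\B/(2K)$. Summing gives
\[
\sum_{m=1}^M \optV_1(s_1^m)\;\le\; K\optV(\sinit) + (M-K)\B + \tfrac{M\B}{2K}.
\]
On the terminal-cost side, exactly the last interval of each episode ends at $g$ (so contributes $0$), while the other $M-K$ intervals contribute $2\B$ each; thus $\sum_m c_f(s^m_{H+1})=2\B(M-K)$.

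Plugging these into the displayed expression for $R_K-\tilR_M$ yields
\[
R_K - \tilR_M \;\le\; (M-K)\B + \tfrac{M\B}{2K} - 2\B(M-K) \;=\; \B\Big(K - M\big(1-\tfrac{1}{2K}\big)\Big),
\]
and since $M\ge K$ the right-hand side is at most $\B\cdot\tfrac{1}{2}\le\B$, as desired. The one slightly delicate step is the pointwise bound $\optV_1\le\optV+\B/(2K)$, specifically the iterated-Markov tail estimate on $T_g$ under $\optpi$; everything else is bookkeeping over the intervals within each episode.
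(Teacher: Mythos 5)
Your proof is correct and takes essentially the same route as the paper's: both bound $\optV_1(s)\le V^{\optpi}_1(s)\le \optV(s)+O(\B/K)$ via the tail of the hitting time of $\optpi$, and then cancel the (at most $2\B$) starting value of each interval that does not reach the goal against its $2\B$ terminal cost, leaving a total slack of at most $\B$. The only differences are cosmetic — you do the bookkeeping globally over all $M$ intervals rather than telescoping per episode, and you rederive the tail bound by iterating Markov's inequality instead of invoking the cited hitting-time lemma.
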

%\begin{proof}
%	Denote by $\calI_k$ the set of intervals in episode $k$, and by $m_k$ the first interval in episode $k$.
%	We bound the regret in episode $k$ as follows:
%	first by \pref{lem:hitting} and $H\geq 4\T\ln (4K)$, we have $V^{\optpi}_1(s) \leq V^{\optpi}(s) + \frac{\B}{K}$ for any $s$.
%	Thus,
%	\begin{align*}
%		&\sum_{m\in \calI_k}\sum_{h=1}^Hc^m_h - V^{\optpi}(s^{m_k}_1)\\ 
%		&\leq \sum_{m\in \calI_k}\sum_{h=1}^Hc^m_h - V_1^{\optpi}(s^{m_k}_1) + \frac{\B}{K}\\
%		&= \sum_{m\in \calI_k}\rbr{\sum_{h=1}^Hc^m_h - V_1^{\optpi}(s^m_1)}\\ 
%		&\qquad + \sum_{m\in \calI_k} V_1^{\optpi}(s^m_1) - V_1^{\optpi}(s^{m_k}_1) + \frac{\B}{K}\\
%		&\leq \sum_{m\in \calI_k}\rbr{\sum_{h=1}^Hc^m_h + c_f(s^m_{H+1}) - \optV_1(s^m_1)} + \frac{\B}{K},
%	\end{align*}
%	where the last inequality is by $\optV_1(s)\leq V^{\optpi}_1(s)$ and the fact that $c_f(s^m_{H+1})=0$ only for the last interval $m\in\calI_k$.
%	Summing terms above for $k\in[K]$ and by the definition of $R_K$, $\tilR_M$ we obtain the desired result.
%\end{proof}

This lemma suggests that it remains to bound the number of intervals $M$.
The analysis of \citet{cohen2021minimax} does so by marking state-action pairs as ``known'' or ``unknown'' based on how many times they have been visited, and showing that in each interval, the learner either reaches an ``unknown'' state-action pair or  with high probability reaches the goal state.
This analysis requires $\frA$ to be ``admissible'' (defined through a set of conditions) and also heavily makes use of the tabular setting to keep track of the status of each state-action pair, making it hard to be directly generalized to function approximation settings.
Furthermore, it also introduces $\T$ dependency in the lower order term of $M$, since the total cost for an interval where an ``unknown'' state-action pair is visited is trivially bounded by $H = \Omega(\T)$.

%\citet{cohen2021minimax} prove that \pref{alg:fha} ensures $R_K = \tilo{\sum_{m=1}^M(V^{\pi^m}_1(s^m_1)-\optV_1(s^m_1)) + \T SA\omega_{\frA}}$, and for an ``admissible'' algorithm $\frA$, $M=\tilo{K+SA\omega_{\frA}}$, where $\omega_{\frA}$ is a quantity depending on $\frA$, which is of order $\tilo{\T^3S}$ in their case.
%The main idea of their analysis is to mark each state-action pair as ``known'' or ``unknown'' based on how many times it has been visited, and they show that in each interval, the learner either reaches an ``unknown'' state-action pair or the goal state with high probability.
%The state-action pair that has been visited for more than a certain number of times is marked as ``known'', and they show that with high probability, the learner either reach an ``unknown'' state-action pair or the goal state in each interval.
%Note that their analysis unavoidably introduces a $\T$ dependency in the lower order term, since the total costs in an interval of ever visiting an ``unknown'' state-action pair is trivially bounded by $H$.
%Moreover, their analysis heavily makes use of the tabular assumption to keep track of the status of each state-action pair, making it hard to directly generalize to the linear function approximation assumption.

Instead, we propose the following simple and improved analysis.
The idea is to separate intervals into ``good'' ones within which the learner reaches the goal state, and ``bad'' ones within which the learner does not.
Then, our key observation is that the regret in each bad interval is at least $\B$ ---
this is because the learner's cost is at least $2\B$ in such intervals by the choice of the terminal cost $c_f$, and the optimal policy's expected cost is at most $\B$.
Therefore, if $\frA$ is a no-regret algorithm, the number of bad intervals has to be small.
More formally, based on this idea we can bound $M$ directly in terms of the regret guarantee of $\frA$ without requiring any extra properties from $\frA$, as shown in the following lemma.

%Here we propose a new analysis of the finite-horizon approximation.
%\pref{lem:fha} suggests that the key is to bound the number of intervals $M$ in the finite-horizon MDP.
%In \citep{cohen2021minimax}, they bound $M$ by considering whether an ``unknown'' state-action pair is visited in each interval.
%In \citep{cohen2021minimax}, they bound $M$ by labelling each state-action pair as ``known'' and ``unknown'', and bound the number of times the agent visits an unknown state-action pair.
%Instead, we separate intervals into ``good'' intervals where the learner reaches the goal state in the end, and ``bad'' intervals where the learner does not.
%Note that the total costs in a bad interval is lower bounded by $2\B$ by the choice of $c_f$.
%Since the benchmark $\optV_1(s)\leq\B$, the regret in a bad interval is at least $\B$.
%If the finite horizon algorithm ensures $\tilo{\sqrt{m}}$ regret for any $m$ intervals, then we conclude that there cannot be too many ``bad'' intervals.
%We formalize our intuition above into the following lemma:
\begin{theorem}
	\label{thm:bound M}
	Suppose that $\frA$ enjoys the following regret guarantee with certain probability: $\tilR_m=\tilO{\gamma_0 + \gamma_1\sqrt{m}}$ for some problem-dependent coefficients $\gamma_0$ and $\gamma_1$ (that are independent of $m$) and any number of intervals $m\leq M$.
	Then, with the same probability, the number of intervals created by \pref{alg:fha} satisfies $M = \tilO{K + \frac{\gamma_1^2}{\B^2} + \frac{\gamma_0}{\B}}$.
%	Suppose there exists $c_0, c_1$ that are polynomials of $d, \B, H$, such that $\tilR_m=\tilO{\gamma_0 + \gamma_1\sqrt{m}}$ for any $m\in\fN_+$.
%	Then, $M = \tilO{K + \frac{\gamma_1^2}{\B^2} + \frac{\gamma_0}{\B}}$ for $K\geq 2$.
\end{theorem}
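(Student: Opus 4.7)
The plan is to split the $M$ intervals produced by \pref{alg:fha} into \emph{good} intervals (those satisfying $s^m_{H+1}=g$) and \emph{bad} intervals (those with $s^m_{H+1}\neq g$), and to argue that every bad interval forces $\tilR_M$ up by at least $\Theta(\B)$, so that the assumed sublinear regret bound for $\frA$ caps the number of bad intervals. By construction of the outer \emph{while} loop in \pref{alg:fha}, each of the $K$ SSP episodes terminates precisely when an interval ends in $g$, so the number of good intervals is exactly $K$ and the number of bad intervals is $M-K$.

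The key quantitative step is to bound the finite-horizon optimum $\optV_1(s)$ by the SSP optimum $\optV(s)$ up to a negligible $O(\B/K)$ slack. Rolling out $\optpi$ inside $\tilcalM$ for $H$ steps from $s$ incurs total expected cost at most $\optV(s)+2\B\cdot\Pr[\optpi\text{ does not reach }g\text{ within }H\text{ steps from }s]$, since $\optpi$ is proper (so its in-horizon running cost is upper bounded by the infinite-horizon value $\optV(s)$) and the terminal cost $c_f$ is $2\B$ exactly on the event that $g$ has not been reached. Iterated Markov's inequality, using that the conditional expected remaining hitting time under the stationary policy $\optpi$ from any intermediate state is still at most $\T$, gives $\Pr[T^{\optpi}(s)>2k\T]\leq 2^{-k}$; then $H\geq 4\T\ln(4K)$ drives this failure probability to at most $1/(4K)$, whence $\optV_1(s)\leq \optV(s)+\B/(2K)\leq \B(1+1/(2K))$ uniformly in $s$.

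Writing $\Delta_m=\sum_{h=1}^H c^m_h+c_f(s^m_{H+1})-\optV_1(s^m_1)$, bad intervals satisfy $\Delta_m\geq 2\B-\optV_1(s^m_1)\geq \B/2$ for $K\geq 2$ (using nonnegativity of costs and $c_f(s^m_{H+1})=2\B$), while good intervals satisfy the trivial lower bound $\Delta_m\geq -\optV_1(s^m_1)\geq -2\B$ (since $c_f(g)=0$). Summing over all intervals gives
\begin{equation*}
\tilR_M \;\geq\; (M-K)\cdot \B/2 \;-\; 2K\B,
\end{equation*}
and combining with the hypothesis $\tilR_M=\tilO{\gamma_0+\gamma_1\sqrt{M}}$ (applied at $m=M$) yields $(M-K)\B \leq \tilO{K\B+\gamma_0+\gamma_1\sqrt{M}}$. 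Writing $\sqrt{M}\leq\sqrt{M-K}+\sqrt{K}$ and applying AM-GM to absorb the $\gamma_1\sqrt{M-K}/\B$ term into a constant fraction of $M-K$ yields the claimed $M=\tilO{K+\gamma_0/\B+\gamma_1^2/\B^2}$. The only nonroutine ingredient is the $\optV_1$ comparison; everything else is bookkeeping, and the high-probability event on which the conclusion holds is precisely the one on which the assumed regret bound for $\frA$ holds, since no additional randomness is introduced by the reduction.
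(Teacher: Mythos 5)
Your proposal is correct and follows essentially the same route as the paper: split intervals into good and bad ones, use the terminal cost $c_f=2\B$ together with the bound $\optV_1(s)\leq\B+O(\B/K)$ (from the hitting-time argument and $H\geq 4\T\ln(4K)$) to show each bad interval contributes at least $\B/2$ to $\tilR_M$, lower bound the good intervals' contribution by $-O(\B K)$, and solve the resulting quadratic inequality. The one place the paper is more careful is that it applies this argument to an arbitrary finite truncation $M_\dagger\leq M$ rather than to $M$ itself, which is what licenses the conclusion that $M$ is finite in the first place; your version, which invokes the regret guarantee directly at $m=M$, should be phrased the same way to avoid presupposing finiteness of $M$.
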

\begin{proof}
For any finite $M_{\dagger} \leq M$, we will show $M_{\dagger} = \tilO{K + \frac{\gamma_1^2}{\B^2} + \frac{\gamma_0}{\B}}$, which then implies that $M$ has to be finite and is upper bounded by the same quantity.
To do so, we define the set of good intervals $\calC_g=\{m\in[\dagM]:s^m_{H+1}=g\}$ where the learner reaches the goal state, %and the number of ``bad'' intervals $x=\dagM-|\calC_g|$.
%	For an arbitrary $M'\in\fN_+$, define the number of intervals $M_{\dagger}=\min\{M, M'\}$, 
and also the total costs of the learner in interval $m$ of $\tilcalM$: $C^m=\sum_{h=1}^Hc^m_h + c_f(s^m_{H+1})$.
	By definition and the guarantee of $\frA$, we have
     \begin{align}
		 \tilR_{\dagM} & = \sum_{m\in\calC_g}\rbr{C^m - \optV_1(s^m_1)}
		 +\sum_{m\notin\calC_g} \rbr{C^m - \optV_1(s^m_1)} \notag \\ 
	    &\leq \tilO{\gamma_0 + \gamma_1\sqrt{\dagM}}. \label{eq:M_dagger_reg}
	\end{align}	
    Next, we derive lower bounds on $\sum_{m\in\calC_g}\rbr{C^m - \optV_1(s^m_1)}$ and $\sum_{m\notin\calC_g} \rbr{C^m - \optV_1(s^m_1)}$ respectively.
    First note that by \pref{lem:hitting} and $H\geq 4\T\ln (4K)$, we have that $\optpi$ reaches the goal within $H$ steps with probability at least $1-1/2K$.
    Therefore, executing $\optpi$ in an episode of $\tilcalM$ leads to at most $\B + \frac{2\B}{2K} \leq \frac{3}{2}\B$ costs in expectation, which implies $\optV_1(s) \leq \frac{3}{2}\B$ for any $s$.
   By $|\calC_g|\leq K$, we thus have 
   \[
   \sum_{m\in\calC_g}\rbr{C^m - \optV_1(s^m_1)} \geq -\frac{3}{2}\B K.
   \]
   On the other hand, for $m\notin\calC_g$, we have $C^m\geq 2\B$ due to the terminal cost $c_f(s^m_{H+1})=2\B$, and thus
   \[
   \sum_{m\notin\calC_g}\rbr{C^m - \optV_1(s^m_1)} \geq \frac{\B}{2}(\dagM-|\calC_g|) \geq \frac{\B}{2}(\dagM-K).
   \]
   Combining the two lower bounds above with \pref{eq:M_dagger_reg}, we arrive at $\frac{\B}{2}\dagM \leq \tilO{\gamma_0 + \gamma_1\sqrt{\dagM}} + 2\B K$. By \pref{lem:quad with log}, this implies $\dagM = \tilO{K + \frac{\gamma_1^2}{\B^2} + \frac{\gamma_0}{\B}}$, finishing the proof.
%	Note that $|\calC_g|\leq K$, and
%	\begin{align*}
%		\tilR_{\dagM} = \sum_{m\in\calC_g} \rbr{C^m - \optV_1(s^m_1)} + \sum_{m\notin\calC_g}\rbr{C^m - \optV_1(s^m_1)}.
%	\end{align*}
%	When $m\notin\calC_g$, we have $C^m\geq 2\B$ by $c_f(s^m_{H+1})=2\B$.
%	Thus, by $\optV_1(s) \leq V^{\optpi}_1(s)\leq (1+\frac{1}{K})\B$ and $K\geq 2$:
%	\begin{align*}
%		\frac{\B}{2} x &\leq \sum_{m\notin\calC_g}\rbr{C^m - \optV_1(s^m_1)}\\ 
%		&= \tilR_{\dagM} + \sum_{m\in\calC_g} \rbr{\optV_1(s^m_1) - C^m}\\ 
%		&\leq \tilO{\gamma_0 + \gamma_1\sqrt{\dagM}} + 2\B K,
%	\end{align*}
%	which gives $x \leq \tilO{\frac{\gamma_1}{\B}\sqrt{\dagM} + \frac{\gamma_0}{\B}} + 4K$.
%	Therefore, $\dagM \leq K + x \leq 5K + \tilO{\frac{\gamma_1}{\B}\sqrt{\dagM} + \frac{\gamma_0}{\B}}$, and by \pref{lem:quad with log}: $\dagM = \tilO{K + \frac{\gamma_1^2}{\B^2} + \frac{\gamma_0}{\B}}$.
%	Taking $M'\rightarrow\infty$ and by $\lim_{M'\rightarrow\infty}\dagM=M$, the proof is completed.
\end{proof}

Now plugging in the bound on $M$ in \pref{thm:bound M} into \pref{lem:fha}, we immediately obtain the following corollary on a general regret bound for the finite-horizon approximation.
\begin{cor}
	\label{cor:fha}
	Under the same condition of \pref{thm:bound M}, \pref{alg:fha} ensures $R_K=\tilO{ \gamma_1\sqrt{K} + \frac{\gamma_1^2}{\B} + \gamma_0 + \B }$ (with the same probability stated in \pref{thm:bound M}).
	 %Applying \pref{alg:fha} with a finite horizon algorithm $\frA$ that ensures $\tilR_m=\tilO{\gamma_1\sqrt{m} + \gamma_0}$ for any $m\in\fN_+$, we obtain $R_K=\tilO{ \gamma_1\sqrt{K} + \frac{\gamma_1^2}{\B} + \gamma_0 + \B }$.
\end{cor}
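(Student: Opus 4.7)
The plan is to simply chain together the two results already established: the regret decomposition of \pref{lem:fha} and the bound on the number of intervals from \pref{thm:bound M}. First I would apply \pref{lem:fha} to write $R_K \le \tilR_M + \B$, and then invoke the regret guarantee of $\frA$ (which holds for any $m \le M$, in particular $m = M$) to obtain $\tilR_M = \tilO{\gamma_0 + \gamma_1 \sqrt{M}}$, giving $R_K = \tilO{\gamma_0 + \gamma_1 \sqrt{M}} + \B$ on the same high-probability event.

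Next I would substitute the bound $M = \tilO{K + \gamma_1^2/\B^2 + \gamma_0/\B}$ from \pref{thm:bound M}. Using $\sqrt{a+b+c} \le \sqrt{a} + \sqrt{b} + \sqrt{c}$ inside the $\tilO{\cdot}$ yields
\begin{equation*}
\gamma_1 \sqrt{M} \;=\; \tilO{\,\gamma_1 \sqrt{K} \,+\, \frac{\gamma_1^2}{\B} \,+\, \gamma_1 \sqrt{\gamma_0/\B}\,}.
\end{equation*}
The only term not already in the desired form is the cross term $\gamma_1 \sqrt{\gamma_0/\B}$, which I would handle by a standard AM-GM inequality: taking $xy \le x^2/(2c) + cy^2/2$ with $c = \B$, $x = \gamma_1$, $y = \sqrt{\gamma_0/\B}$ gives $\gamma_1 \sqrt{\gamma_0/\B} \le \gamma_1^2/(2\B) + \gamma_0/2$, which is absorbed into the other two terms.

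Putting everything together yields
\begin{equation*}
R_K \;=\; \tilO{\,\gamma_1 \sqrt{K} + \frac{\gamma_1^2}{\B} + \gamma_0 + \B\,},
\end{equation*}
which is the claimed bound. There is no genuine obstacle here; the corollary is essentially bookkeeping on top of \pref{lem:fha} and \pref{thm:bound M}, and the only mildly nonroutine step is recognizing that the cross term $\gamma_1 \sqrt{\gamma_0/\B}$ arising from the square root of the $M$ bound collapses into $\gamma_1^2/\B + \gamma_0$ via AM-GM.
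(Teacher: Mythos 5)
Your proposal is correct and follows essentially the same route as the paper: chain \pref{lem:fha} with the regret guarantee at $m=M$, substitute the bound on $M$ from \pref{thm:bound M}, and absorb the cross term $\gamma_1\sqrt{\gamma_0/\B}$ via AM-GM into $\frac{\gamma_1^2}{\B}+\gamma_0$. Nothing further is needed.
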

\begin{proof}
Combining \pref{lem:fha} and \pref{thm:bound M}, we have
$R_K \leq \tilR_M + \B \leq  \tilo{\gamma_1\sqrt{M} + \gamma_0 + \B} \leq \tilO{\gamma_1\sqrt{K} + \frac{\gamma_1^2}{\B} + \gamma_1\sqrt{\frac{\gamma_0}{\B}}+ \gamma_0 + \B}$. Further realizing $\gamma_1\sqrt{\frac{\gamma_0}{\B}} \leq \frac{1}{2}\left(\frac{\gamma_1^2}{\B} + \gamma_0\right)$ by AM-GM inequality proves the statement.
\end{proof}

%Our new analysis is much simpler compared to that of \citep{cohen2021minimax}.
Note that the final regret bound completely depends on the regret guarantee of the finite horizon algorithm $\frA$.
In particular, in the tabular case, if we apply a variant of EB-SSP \citep{tarbouriech2021stochastic} that achieves $\tilR_m=\tilo{\B\sqrt{SAm}+\B S^2A}$ (note the lack of polynomial dependency on $H$),\footnote{This variant is equivalent to applying EB-SSP on a homogeneous finite-horizon MDP.} then \pref{cor:fha} ensures that $R_K=\tilo{\B\sqrt{SAK}+\B S^2A}$, improving the results of~\citep{cohen2021minimax} and matching the best existing bounds of~\citep{tarbouriech2021stochastic,chen2021implicit}; see \pref{app:hf tabular} for more details.
This is not achievable by the analysis of \citep{cohen2021minimax} due to the $\T$ dependency in the lower order term mentioned earlier.

More importantly, our analysis is \textit{model agnostic}: it only makes use of the regret guarantee of the finite-horizon algorithm, and does not leverage any modeling assumption on the SSP instance.
This enables us to directly apply our result to settings with function approximation.
In \pref{app:lmmdp}, we provide an example for SSP with a linear mixture MDP, which gives a regret bound $\tilo{\B\sqrt{d\T K} + \B d\sqrt{K}}$ via combining \pref{cor:fha} and the near optimal finite-horizon algorithm of~\citep{zhou2021nearly}.

%(it also be applied to other approximation scheme, such as stacked discounted approximation)
%On the contrary, the analysis of \citep{cohen2021minimax} heavily relies on the tabular assumption to define the ``known'' and ``unknown'' states.
%which is hard to generalize to the linear function approximation case.
%In the next subsection, we introduce a simple computationally efficient finite-horizon algorithm that leads to $\tilo{\sqrt{K}}$ regret for SSP with linear function approximation.

%\paragraph{Learning without Knowing $\T$}
%Note that we need to know $\T$ to set the value of horizon $H$.
%When $\T$ is unknown, a simple fix is to substitute $\T$ with its upper bound $\frac{\B}{\cmin}$.
%When $\cmin=0$, however, a further perturbation on cost function is needed~\citep{tarbouriech2020no}, which leads to a worse regret bound.
%We show that the doubling trick developed in \citep{tarbouriech2021stochastic} can be applied to search the range of horizon $H$ and obtain a better regret bound in case of $\cmin=0$.
%The resulting regret bound has a trade-off between the order of $K$ in the dominating term and the order of $\T$ in the lower order terms.
%% It shows that the knowledge of $\T$ indeed leads to a better regret bound.
%Details are deferred to \pref{sec:no T}.

\subsection{Applying an Efficient Finite-Horizon Algorithm for Linear MDPs}
\label{sec:FH-SSP}

Similarly, if there were a horizon-free algorithm for finite-horizon linear MDPs, we could directly combine it with \pref{alg:fha} and obtain a $\T$-independent regret bound.
However, to our knowledge, this is still open due to some unique challenge for linear MDPs.

Nevertheless, even combining \pref{alg:fha} with a horizon-dependent linear MDP algorithm already leads to significant improvement over the state-of-the-art for linear SSP.
Specifically, the finite-horizon algorithm $\frA$ we apply is a variant of LSVI-UCB~\citep{jin2020provably}, which performs Least-Squares Value Iteration with an optimistic modification.
%Now we introduce a simple computationally efficient finite-horizon algorithm that achieves $\tilo{\sqrt{K}}$ regret.
%The high level idea is to perform Least-Squares Value Iteration with an optimistic modification similar to LSVI-UCB~\citep{jin2020provably}.
%except that we shared the transition data across all layers to save a factor of $\sqrt{H}$ in the final regret bound.
The pseudocode is shown in \pref{alg:FH-SSP}.
Utilizing the fact that action-value functions are linear in the features for a linear MDP,
in each interval $m$, we estimate the parameters $\{w^m_h\}_{h=1}^H$ of these linear functions by solving a set of least square linear regression problems using all observed data (\pref{line:compute}), and we encourage exploration by subtracting a bonus term $\beta_m\norm{\phi(s, a)}_{\Lambda_m^{-1}}$ in the definition of $\hatQ^m_h(s, a)$ (\pref{line:define}).
Then, we simply act greedily with respect to the truncated action-value estimates $\{Q^m_h\}_h$ (\pref{line:act}).
Clearly, this is an efficient algorithm with polynomial (in $d$, $H$, $m$ and $A$) time complexity for each interval $m$.

We refer the reader to~\citep{jin2020provably} for more explanation of the algorithm,
and point out three key modifications we make compared to their version.
First, \citet{jin2020provably} maintain a separate covariance matrix $\Lambda^m_h$ for each layer $h$ using data only from layer $h$, while we only maintain a single covariance matrix $\Lambda^m$ using data across all layers (\pref{line:covariance}).
This is possible (and resulting in a better regret bound) since the transition function is the same in each layer of $\tilM$.
Another modification is to define $V^m_{H+1}(s)$ as $c_f(s)$ simply for the purpose of incorporating the terminal cost.
Finally, we project the action-value estimates onto $[0,B]$ for some parameter $B$ similar to \cite{vial2021regret} (\pref{line:define}).
In the main text we simply set $B=3\B$, and the upper bound truncation at $B$ has no effect in this case.
However, this projection will become important when learning without the knowledge of $\B$ (see \pref{app:pf}).

%Moreover, \pref{alg:FH-SSP} is computationally efficient since the computation in interval $m$ can be done in polynomial time w.r.t $d, m, H$, and $|\calA|$.
%Note that in interval $m$ the computation complexity of \pref{alg:FH-SSP} is $\bigo{mH}$ \tc{purple}{(not accurate)}. 
%Hence, \pref{alg:FH-SSP} is computationally efficient.

\setcounter{AlgoLine}{0}
\begin{algorithm}[t]
	\caption{Finite-Horzion Linear-MDP Algorithm}
	\label{alg:FH-SSP}
	
	%\textbf{Input:} horizon $H$ and failure probability $\delta$.
	
	\textbf{Parameters:} $\lambda=1$, $\beta_m=50dB\sqrt{\ln(16B mHd/\delta)}$ where $\delta$ is the failure probability and $B\geq 1$.
	
	\textbf{Initialize:} $\Lambda_1=\lambda I$.
	
	\For{$m=1,\ldots,M$}{
		Define $V^m_{H+1}(s)=c_f(s)$.
		
		\For{$h=H,\ldots, 1$}{
			\nl Compute \label{line:compute}
			$$w^m_h=\Lambda_m^{-1}\sum_{m'=1}^{m-1}\sum_{h'=1}^{H}\phi^{m'}_{h'}(c^{m'}_{h'} + V^m_{h+1}(s^{m'}_{h'+1})),$$
			where $\phi^m_h=\phi(s^m_h, a^m_h)$. %and $c^m_h=c(s^m_h, a^m_h)$.
			
			\nl Define $\phi(g, a)=0$ and \label{line:define}
			\begin{align*}
				\hatQ^m_h(s, a) &= \phi(s, a)^{\top}w^m_h - \beta_m\norm{\phi(s, a)}_{\Lambda_m^{-1}}\\
				Q^m_h(s, a) &= [\hatQ^m_h(s, a)]_{[0, B]}\\
				V^m_h(s) &= \min_aQ^m_h(s, a)
			\end{align*}
		}
		
		\For{$h=1,\ldots,H$}{
			\nl Play $a^m_h=\argmin_aQ^m_h(s^m_h, a)$, suffer $c^m_h$, and transit to $s^m_{h+1}$. \label{line:act}
		}
		
		Compute $\Lambda_{m+1} = \Lambda_m + \sum_{h=1}^H\phi^m_h{\phi^m_h}^{\top}$. \label{line:covariance}
	}
\end{algorithm}

We show the following regret guarantee of \pref{alg:FH-SSP} following the analysis of~\citep{vial2021regret} (see \pref{app:proof_LSVI}).

%The analysis of the algorithm relies on the following key lemma, which shows that $Q^m_h(s, a)$ is an optimistic estimate of $\optQ_h(s, a)$.
%and also bounds the estimation error.

%\begin{lemma}
%	\label{lem:val diff}
%	With probability at least $1-\delta$, for all $m, h\in[H], (s, a)\in\calS_+\times\calA$:
%	\begin{align*}
%		0 &\leq \optQ_h(s, a) - Q^m_h(s, a)\\
%		&\leq P_{s, a}(\optV_{h+1} - V^m_{h+1}) + 2\beta_m\norm{\phi(s, a)}_{\Lambda_m^{-1}}.
%	\end{align*}
%\end{lemma}
%
%With the help of \pref{lem:val diff}, we prove a $\tilo{\sqrt{m}}$ regret bound for any number of intervals $m$.
%%We first show that \pref{alg:FH-SSP} achieves a $\tilo{\sqrt{K}}$ regret bound.
%Then, combining with \pref{cor:fha}, we obtain the final regret bound.

\begin{lemma}
	\label{lem:efficient}
	With probability at least $1-4\delta$, \pref{alg:FH-SSP} with $B=3\B$ ensures $\tilR_m=\tilo{\sqrt{d^3\B^2Hm} + d^2\B H}$ for any $m\leq M$.
\end{lemma}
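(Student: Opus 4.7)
The plan is to adapt the LSVI--UCB analysis of \citep{jin2020provably} to our cost-minimization setting, following the truncation handling of \citep{vial2021regret} and exploiting the layer-homogeneous transition of $\tilcalM$ to work with a single covariance matrix. The argument has three stages: (i) prove optimism, $V^m_1(s)\leq\optV_1(s)$; (ii) telescope each interval's regret into one-step Bellman errors and bound them by bonuses plus martingale noise; (iii) sum via the elliptical potential lemma and Azuma--Hoeffding.

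For (i), \pref{assum:linMDP} makes $(s,a)\mapsto c(s,a)+P_{s,a}V$ linear in $\phi(s,a)$ for any bounded $V$, and $w^m_h$ is exactly the ridge-regression estimator of this backup applied to $V^m_{h+1}$. A self-normalized concentration inequality, combined with a covering argument over the data-dependent class $\calV=\{s\mapsto\min_a[\phi(s,a)^{\top}w-\beta\|\phi(s,a)\|_{\Lambda^{-1}}]_{[0,B]}\}$ to which every $V^m_{h+1}$ belongs, yields $|\phi(s,a)^{\top}w^m_h-c(s,a)-P_{s,a}V^m_{h+1}|\leq\beta_m\|\phi(s,a)\|_{\Lambda_m^{-1}}$ uniformly over $(s,a,h,m)$ with probability at least $1-3\delta$. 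Subtracting the bonus gives $\hatQ^m_h\leq c+PV^m_{h+1}$; a backward induction on $h$ (base case $V^m_{H+1}=\optV_{H+1}=c_f$) together with the fact that $[\cdot]_{[0,B]}$ preserves the inequality against $\optQ_h\in[0,B]$ then yields $Q^m_h\leq\optQ_h$ and hence $V^m_1\leq\optV_1$. The $\tilo{d^2}$ log-covering number of $\calV$ is precisely what forces $\beta_m=\tilo{dB}$.

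For (ii)--(iii), optimism gives $\tilR_m\leq\sum_{m'\leq m}\sum_h\bigl[c^{m'}_h+V^{m'}_{h+1}(s^{m'}_{h+1})-V^{m'}_h(s^{m'}_h)\bigr]$ after using $V^{m'}_{H+1}=c_f$ to telescope. A case analysis on the clipping in $V^{m'}_h(s^{m'}_h)=[\hatQ^{m'}_h(s^{m'}_h,a^{m'}_h)]_{[0,B]}$ combined with the concentration bound shows that each summand is at most $2\beta_{m'}\|\phi^{m'}_h\|_{\Lambda_{m'}^{-1}}+\xi^{m'}_h$, plus a lower-order contribution from the rare upper-clip regime (in which $V^{m'}_h=\optV_h(s^{m'}_h)=B$ by optimism), where $\xi^{m'}_h:=V^{m'}_{h+1}(s^{m'}_{h+1})-P_{s^{m'}_h,a^{m'}_h}V^{m'}_{h+1}$ is a martingale difference bounded by $B=3\B$. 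Because a single $\Lambda_{m'}$ pools features across all $H$ layers, Cauchy--Schwarz together with the elliptical potential lemma bounds $\sum_{m'\leq m}\sum_h\|\phi^{m'}_h\|_{\Lambda_{m'}^{-1}}$ by $\tilo{\sqrt{dmH}}$; multiplying by $\beta_m=\tilo{d\B}$ yields the main $\tilo{\sqrt{d^3\B^2Hm}}$ term. Azuma--Hoeffding controls the martingale sum by $\tilo{\B\sqrt{mH}}$, and the small-$m'$ regime (where $\|\phi^{m'}_h\|_{\Lambda_{m'}^{-1}}$ has not yet shrunk) together with the rare upper-clip contributes the additive $\tilo{d^2\B H}$ term.

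The main technical obstacle is the uniform concentration over $\calV$: the weight $w$, the radius $\beta$ and the covariance $\Lambda$ defining each candidate target all depend on past data, so a direct self-normalized bound is not enough and one needs a union bound over an $\eps$-net of $\calV$ whose log-covering number grows polynomially in $d$. The rest of the argument---the clipping case analysis, the single-$\Lambda$ elliptical potential computation, and the Azuma--Hoeffding step---is a routine adaptation of the analyses in \citep{jin2020provably,vial2021regret} once the uniform concentration is in place.
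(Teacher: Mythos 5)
Your proposal is correct and follows essentially the same route as the paper: optimism via ridge regression plus a covering argument over the data-dependent value class (with log-covering number $\tilo{d^2}$ forcing $\beta_m=\tilo{dB}$), telescoping into Bellman errors, the single pooled covariance matrix with the elliptical potential lemma for the bonus sum, and a martingale concentration step, with the determinant-doubling intervals supplying the additive $d^2\B H$ term. The only cosmetic difference is the treatment of the upper clip: the paper introduces the stopping time $\overM$ and shows $\overM=M$ because optimism gives $\hatQ^m_h\leq \optQ_h$ and $\optV_h\leq 3\B=B$ so the truncation never fires, whereas you absorb a ``rare upper-clip regime'' into the lower-order term—both reduce to the same observation.
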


Applying \pref{cor:fha} we then immediately obtain the following new result for linear SSP.
\begin{theorem}
	\label{thm:efficient}
	Applying \pref{alg:fha} with $H=4\T\ln (4K)$ and $\frA$ being \pref{alg:FH-SSP} with $B=3\B$ to the linear SSP problem ensures $R_K=\tilo{\sqrt{d^3\B^2\T K} + d^3\B\T}$ with probability at least $1-4\delta$.
\end{theorem}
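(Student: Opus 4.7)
The plan is to obtain \pref{thm:efficient} as a direct corollary of \pref{cor:fha} instantiated with the regret guarantee of \pref{alg:FH-SSP} from \pref{lem:efficient}. First, I would verify the precondition of \pref{cor:fha}: by \pref{lem:efficient}, with probability at least $1-4\delta$, for every $m \le M$ the finite-horizon algorithm satisfies $\tilR_m = \tilo{\sqrt{d^3\B^2 H m} + d^2\B H}$, so I can read off $\gamma_1 = \sqrt{d^3\B^2 H}$ and $\gamma_0 = d^2\B H$ (both independent of $m$, as required).

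Next, I would plug these into the bound $R_K = \tilO{\gamma_1\sqrt{K} + \gamma_1^2/\B + \gamma_0 + \B}$ from \pref{cor:fha} and simplify term by term:
\begin{align*}
\gamma_1\sqrt{K} &= \sqrt{d^3\B^2 H K}, & \gamma_1^2/\B &= d^3\B H, & \gamma_0 &= d^2\B H.
\end{align*}
Substituting the choice $H = 4\T\ln(4K) = \tilo{\T}$ from the hypothesis of \pref{thm:efficient}, these become $\tilo{\sqrt{d^3\B^2 \T K}}$, $\tilo{d^3\B\T}$, and $\tilo{d^2\B\T}$, respectively; the trailing $\B$ term is dominated. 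Since the $d^2\B\T$ term is absorbed into $d^3\B\T$ (recall $d\ge 2$), the final bound collapses to $R_K = \tilo{\sqrt{d^3\B^2\T K} + d^3\B\T}$, exactly as claimed.

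There is no real obstacle at this stage: both ingredients are already in place, and the only things to check are (i) that $H \ge 4\T\ln(4K)$ is the horizon condition required by \pref{alg:fha} (which is satisfied with equality by the chosen $H$), and (ii) that the probability of success remains $1-4\delta$, which follows because the regret guarantee of \pref{lem:efficient} is the only high-probability statement invoked and it holds on a single event of probability at least $1-4\delta$ on which \pref{cor:fha} then applies deterministically.
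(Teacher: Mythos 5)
Your proposal is correct and matches the paper's own argument: the paper likewise obtains \pref{thm:efficient} by plugging $\gamma_1=\sqrt{d^3\B^2H}$ and $\gamma_0=d^2\B H$ from \pref{lem:efficient} into \pref{cor:fha} with $H=4\T\ln(4K)$. The term-by-term simplification and the probability accounting are exactly as in the paper.
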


There is some gap between our result above and the existing lower bound $\lowo{d\B\sqrt{K}}$ for this problem~\citep{min2021learning}.
In particular, the dependency on $\T$ inherited from the $H$ dependency in \pref{lem:efficient} is most likely unnecessary.
%(see also \pref{sec:lb} for lower bound discussions).
Nevertheless, this already strictly improves over the best existing bound $\tilo{\sqrt{d^3\B^3K/\cmin}}$ from \citep{vial2021regret} since $\T \leq \B/\cmin$.
Moreover, our algorithm is computationally efficient, while the algorithms of~\citet{vial2021regret} are either inefficient or achieve a much worse regret bound such as $\tilo{K^{5/6}}$ (unless some strong assumptions are made). %regret bound for their the computationally efficient algorithm.
%is computationally efficient only if all stationary policies are proper, a highly unrealistic assumption.
This improvement comes from the fact that our algorithm uses non-stationary policies (due to the finite-horizon approximation), which avoids the challenging problem of solving the fixed point of some empirical Bellman equation.
This also demonstrates the power of finite-horizon approximation in solving SSP problems.
On the other hand, obtaining the same regret guarantee by learning stationary policies only is an interesting future direction.
%, while their $\tilo{\sqrt{K}}$ regret algorithm is not.

\paragraph{Learning without knowing $\B$ or $\T$}
Note that the result of \pref{thm:efficient} requires the knowledge of
$\B$ and $\T$.
Without knowing these parameters, we can still efficiently obtain a regret bound of order $\tilo{\sqrt{d^3\B^3 K/\cmin} + d^3\B^2/\cmin}$, matching the bound
of~\citep{vial2021regret} achieved by their inefficient algorithm.
See \pref{app:pf} for details.
% regret bound by a slightly different finite-horizon approximation scheme.
%The high level idea is to set $c_f(s)=0$ and $H=\tilo{\frac{\B}{\cmin}}$, and show that in this case we have $M\leq K + \tilo{\sqrt{K}}$ ignoring other parameters.
%This is a much tighter bound on $M$ compared to that in \pref{thm:bound M}, which enables us to perform doubling trick to search the range of $\B$ with tolerable overhead.
%Details are deferred to \pref{sec:pf}.

\subsection{Logarithmic Regret}
\label{sec:log}

%We show that \pref{alg:FH-SSP} can also obtain gap-dependent logarithmic regret.
Many optimistic algorithms attain a more favorable regret bound of the form $C\ln K$, where $C$ is an instance dependent constant usually inversely proportional to some gap measure;
see e.g. \citep{jaksch2010near} for the infinite-horizon setting and~\citep{simchowitz2019non} for the finite-horizon setting.
In this section, we show that a slight modification of our algorithm also leads to an expected regret bound that is polylogarithmic in $K$ and inversely proportional to $\mingap=\min_{s, a: \gap(s, a) > 0}\gap(s, a)$ with $\gap(s, a) = \optQ(s, a) - \optV(s)$.\footnote{%
Note that for our definition of regret, a polylogarithmic bound is only possible in expectation, because even if the learner always executes $\optpi$, the deviation of her total costs from $K\optV(\sinit)$ is already of order $\sqrt{K}$. \label{fn:expected_bound}
}

The high-level idea is as follows.
The first observation is that similarly to a recent work by \citet{he2021logarithmic},
we can show that our \pref{alg:FH-SSP} obtains a gap-dependent logarithmic regret bound $\tilo{\frac{\ln m}{\mingap'}}$ for the finite-horizon problem.
The caveat is that $\mingap'$ here is naturally defined using the optimal value and action-value functions $\optV_h$ and $\optQ_h$ for the finite-horizon MDP (which is different for each layer $h$);
more specifically, $\mingap'=\min_{s, a, h: \gap_h(s, a) > 0} \gap_h(s, a)$ where $\gap_h(s, a) = \optQ_h(s, a) - \optV_h(s)$.
The difference between $\mingap$ and $\mingap'$ can in fact be significant; see \pref{app:gap example} for an example where $\mingap'$ is arbitrarily smaller than $\mingap$.

To get around this issue, we set $H$ to be a larger value of order $\tilo{\frac{\B}{\cmin}}$ and perform the following two-stage analysis.
For the first $H/2$ layers, we are able to show $\optQ_h(s, a)\approx\optQ(s, a)$ and thus $\gap_h(s, a)\approx\gap(s, a)$, leading to a $\tilo{\frac{\ln m}{\mingap}}$ bound on the regret suffered for these layers.
Then, for the last $H/2$ layers, we further consider two cases: if the learner's policy for the first $H/2$ layers are nearly optimal, then the probability of not reaching the goal within the first $H/2$ layers is very low by the choice of $H$, and thus the costs suffered in the last $H/2$ layers are negligible;
otherwise, we simply bound the costs using the number of times the learner takes a non-near-optimal action in the first $H/2$ layers, which is again shown to be of order $\tilo{\frac{\ln m}{\mingap}}$.

One final detail is to carefully control the regret under some failure event that happens with a small probability (recall that we are aiming for an expected regret bound; see \pref{fn:expected_bound}).
This is necessary since in SSP the learner's cost under such events could be unbounded in the worst case.
To resolve this issue, we make a slight modification to \pref{alg:fha} and occasionally restart $\frA$ whenever the number of total intervals reaches some multiple of a threshold; see \pref{alg:fha restart} in the appendix.
This finally leads to our main result summarized in the following theorem (whose proof is deferred to \pref{app:log}).

%Combining \pref{thm:bound M} and \pref{lem:efficient}, we obtain a high probability upper bound $M'$ on $M$ when running \pref{alg:fha} with \pref{alg:FH-SSP}.
%Then by \pref{lem:fha} and \pref{lem:log} with $M'$, we seem to obtain a gap-dependent logarithmic bound on $\E[R_K]$.
%However, there is one more issue: in SSP, the costs under failure events could be unbounded.
%Therefore, a high probability bound on $M$ does not guarantee an expected regret bound.
%To resolve this issue, we replace \pref{alg:fha} by its variant (\pref{alg:fha restart}), which essentially restart the finite-horizon algorithm whenever a failure event is detected.
%We summarize our result in the following theorem, whose proof is deferred to \pref{sec:app-log}.

\begin{theorem}
	\label{thm:log}
	There exist $b'$ and $\delta$ such that applying \pref{alg:fha restart} with horizon $H=\frac{b'\B}{\cmin}\ln(\frac{d\B K}{\cmin})$ and $\frA$ being \pref{alg:FH-SSP} (with $B=3\B$ and failure probability $\delta$) ensures $\E[R_K]=\bigO{ \frac{d^3\B^4}{\cmin^2\mingap}\ln^5\frac{d\B K}{\cmin}}$.
\end{theorem}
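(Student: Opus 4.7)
The plan is to execute the two-stage analysis outlined in the text. Fix $H = \frac{b'\B}{\cmin}\ln(d\B K/\cmin)$ and split each interval of $\tilcalM$ into a \emph{front half} of layers $h\le H/2$ and a \emph{back half} of layers $h>H/2$, then bound the expected contributions to $R_K$ from these two halves separately.

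First I would prove that on the front half $\gap_h$ is comparable to $\gap$. By a hitting-time tail bound on $\optpi$ (in the spirit of \pref{lem:hitting} used behind \pref{thm:bound M}), picking $b'$ large enough guarantees that from any state $\optpi$ reaches $g$ within $H/2$ steps with probability at least $1-\mingap/(16\B)$. Coupling the SSP trajectory of $\optpi$ with its finite-horizon counterpart in $\tilcalM$ and paying at most $2\B$ on the failure event yields $|\optQ_h(s,a)-\optQ(s,a)|\le \mingap/4$ for all $h\le H/2$. Consequently, whenever $\gap(s,a)>0$ one has $\gap_h(s,a)\ge \gap(s,a)/2$, so the minimum positive front-half gap is at least $\mingap/2$.

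Next I would derive a logarithmic regret bound for \pref{alg:FH-SSP} restricted to the front half by adapting the argument of \citet{he2021logarithmic}. The bonus $\beta_m\norm{\phi(s,a)}_{\Lambda_m^{-1}}$ with $\beta_m = \tilO{d\B}$ keeps $V^m_h$ optimistic with respect to $\optV_h$ (as already used in \pref{lem:efficient}), so on any front-half visit with $\gap_h(s^m_h,a^m_h)\ge \mingap/2$ one has $\mingap/2 \le 4\beta_m\norm{\phi^m_h}_{\Lambda_m^{-1}}$. A standard pigeonhole via the elliptical-potential lemma (summing over $m$ intervals and $H/2$ layers with a single shared $\Lambda_m$) bounds the total count of such visits by $\tilO{d^3 H \B^2/\mingap^2}$, and Cauchy--Schwarz applied to the surplus sum $\sum 2\beta_m\norm{\phi^m_h}_{\Lambda_m^{-1}}$ yields a front-half logarithmic bound $\tilR^{\le H/2}_m = \tilO{d^3 H^2 \B^2/\mingap}$. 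I then handle the back half interval by interval: call an interval \emph{clean} if every front-half action is $\mingap/2$-optimal with respect to $\gap_h$, hence (by the first step) $\mingap$-optimal with respect to $\gap$. The count above produces at most $\tilO{d^3 H \B^2/\mingap^2}$ non-clean intervals, whose combined cost $\tilO{d^3 H \B^3/\mingap^2}$ is dominated by the front-half bound once $H = \tilO{\B/\cmin}$ is substituted. In a clean interval, the executed front-half policy is $\mingap$-optimal for the SSP instance, so its expected hitting time is $\tilO{\T}$ and by Markov plus the choice of $H$ it reaches $g$ within $H/2$ steps with probability at least $1-1/K$; the expected back-half-plus-terminal cost per clean interval is thus $\tilO{\B/K}$, totaling $\tilO{\B}$ across $K$ episodes.

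The main obstacle, and the reason \pref{alg:fha restart} introduces periodic restarts of $\frA$, is the passage from the high-probability guarantee of \pref{lem:efficient} to the expected regret bound demanded by the theorem (see \pref{fn:expected_bound}): on the $4\delta$ failure event the SSP cost can be unbounded. I would choose the restart threshold of $\Theta(K)$ intervals and $\delta$ inverse polynomial in $K,d,\B,1/\cmin$, so that the failure probability within each block, multiplied by that block's worst-case cost $\tilO{KH\B}$, contributes only $O(1)$ to $\E[R_K]$ after union-bounding over the $\tilO{1}$ blocks comprising a run; the restart also prevents the high-probability event from degrading across blocks. Combining the front-half bound $\tilO{d^3 H^2 \B^2/\mingap}$ with $H^2 = \tilO{\B^2/\cmin^2}$, the absorbed non-clean cost, the $\tilO{\B}$ clean back-half cost, and the $O(1)$ failure contribution, and collecting the $\ln$ factors from $H$, $\beta_m$, the elliptical potential, and $\delta^{-1}$ into $\ln^5(d\B K/\cmin)$, produces the claimed $\E[R_K] = O\bigl(\tfrac{d^3\B^4}{\cmin^2\mingap}\ln^5\tfrac{d\B K}{\cmin}\bigr)$.
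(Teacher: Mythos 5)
Your high-level architecture --- the front/back split at $H/2$, comparing $\gap_h$ to $\gap$ on the front half, a clean/non-clean dichotomy for the back half, and periodic restarts to convert a high-probability bound into an expected bound --- is exactly the paper's. But two of your key steps do not go through as written. First, the per-visit inequality $\mingap/2\le 4\beta_m\norm{\phi^m_h}_{\Lambda_m^{-1}}$ is false: optimism only gives $\gap_h(s^m_h,a^m_h)\le \optQ_h(s^m_h,a^m_h)-Q^m_h(s^m_h,a^m_h)$, and by \pref{lem:val diff} this difference is the local bonus \emph{plus} the expected downstream underestimation $\tilP_{s,a}(\optV_{h+1}-V^m_{h+1})$, which can be of order $\B$ early in learning. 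One must unroll this recursion over layers $h'\ge h$ and control the resulting martingale and bonus sums in aggregate; that is \pref{lem:sum z gap}, which only yields $\sum_i\sum_{h'}\gap_{h'}\lesssim\sqrt{d^3\B^2HM_z}$, and turning this into a count of large-gap visits requires solving a quadratic and a dyadic peeling over gap levels (the paper's $\nstar$ construction leading to \pref{eq:gap x}). Your pointwise pigeonhole skips all of this. Second, bounding the non-clean back-half contribution by (count of non-clean intervals) $\times$ (per-interval cost) does not yield a term dominated by the front-half bound: even with your own numbers, $d^3H\B^3/\mingap^2$ versus $d^3H^2\B^2/\mingap$ requires $\mingap\gtrsim\B/H\approx\cmin$, which need not hold, and the honest worst-case back-half excess per interval is $O(H+\B)$, not $O(\B)$. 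The paper avoids this by applying \pref{lem:sum z gap} to the back-half gaps with the non-clean indicator $z^m_{H/2+1}$, which gives a $\sqrt{M_z}$ rather than $M_z$ dependence and hence a term $O(d^3\B^2H^{1.5}/\mingap)$ that is genuinely lower order.

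A further, smaller issue: calibrating the front-half approximation to accuracy $\mingap/4$ is too coarse. Under the gap decomposition of \pref{lem:reg gap}, actions that are optimal for the SSP ($\gap=0$) can still carry $\gap_h$ up to the approximation error, and these sub-threshold terms sum to $\Theta(M'H\cdot\mingap)$, which is linear in $K$. The paper drives the error down to $1/(M'H)$ (this costs only logarithmic factors inside $H$) so the residual is $O(\B)$. Your restart argument is right in spirit, but note that the number of blocks is random and a priori unbounded, so you need the geometric-series bound $P(M>nM')\le(4\delta)^n$ rather than a union bound over $\tilO{1}$ blocks.
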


%\begin{remark}
%	Note that in SSP, the costs under failure events could be unbounded.
%	Therefore, the statement of \pref{thm:log} is informal since we actually apply a modified version of \pref{alg:FH-SSP}.
%	Specifically, we need to restart \pref{alg:FH-SSP} whenever we detect a failure event.
%	Details are deferred to \pref{sec:app-log}.
%\end{remark}

As far as we know, this is the first polylogarithmic bound for any SSP problem.
Our result also indicates that the instance-dependent quantities of SSP can be well preserved after using some finite-horizon approximation.

\paragraph{Lower bounds}
To better understand instance-dependent regret bounds for this problem, 
we further show the following lower bound.
%We construct a linear SSP instance that establish an $\lowo{d\B^2/\mingap}$ instance-dependent lower bound.
%We construct an SSP instance that establishes lower bounds of $\lowo{d\B\sqrt{K}}$ (established in , no $\T$ dependency) and $\lowo{d\B^2/\mingap}$ for the linear SSP assumption.
%We show that a similar construction can be applied to establish a lower bound for SSP with the linear mixture MDP assumption.

\begin{theorem}
	\label{thm:lb}
	For any algorithm $\frA$, there exists a linear SSP instance with $d\geq 2$ and $\B\geq 1$ such that $\E_{\frA}[R_K] = \lowo{d\B^2/\mingap}$.
\end{theorem}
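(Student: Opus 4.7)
The plan is to exhibit a family of linear SSP instances indexed by an unknown ``good action'' and apply a standard information-theoretic (Le Cam / Fano) argument to show that any learner must suffer at least $\Omega(d\B^2/\mingap)$ regret on some instance in the family.

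\textbf{Construction.} I would use a one-state family: the state space is $\{s_0, g\}$, there are $d-1$ actions at $s_0$, and features $\phi(s_0, a_i) = e_i$ for $i \in [d-1]$ (padded to $\fR^d$). For each $\sigma \in [d-1]$ the instance $\calM_\sigma$ has uniform cost $c(s_0, a_i) = 1$ and transitions $P_{s_0, a_i}(g) = 1/\B + \eps \Ind[i = \sigma]$, encoded linearly by $\thetastar = \sum_i e_i$ and $\mu(g) = (1/\B)\sum_i e_i + \eps e_\sigma$ (with $\mu(s_0)$ the complementary vector). The perturbation is tuned as $\eps = \Theta(\mingap/\B)$, which makes \pref{assum:linMDP} hold (the norm conditions reduce to elementary inequalities using $\mingap \le 1 \le \B$). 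Then $V^\star(s_0) = \B/(1 + \B\eps) \in [\B/2, \B]$, and for every suboptimal arm $\gap(s_0, a_i) = 1 - 1/(1 + \B\eps) = \Theta(\mingap)$, so the minimum gap of the instance is $\Theta(\mingap)$.

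\textbf{Regret decomposition and KL chain rule.} Since $s_0$ is the unique non-goal state, the standard telescoping identity for SSP reads
\[
\E_\sigma[R_K] \;=\; \sum_{i \ne \sigma} \gap(s_0, a_i) \cdot \E_\sigma[N_i] \;=\; \Theta(\mingap) \sum_{i \ne \sigma} \E_\sigma[N_i],
\]
where $N_i$ is the number of visits to $(s_0, a_i)$. Instances $\calM_\sigma$ and $\calM_{\sigma'}$ differ only in the transitions of arms $a_\sigma$ and $a_{\sigma'}$, and each single step at one of these arms contributes at most $\KL(\text{Ber}(1/\B{+}\eps) \,\|\, \text{Ber}(1/\B)) = \Theta(\B\eps^2) = \Theta(\mingap^2/\B)$ to the trajectory-level KL. Hence $\KL(P_\sigma \| P_{\sigma'}) \le (\E_\sigma[N_\sigma] + \E_\sigma[N_{\sigma'}]) \cdot \Theta(\mingap^2/\B)$.

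\textbf{Lower bound via two-point / Fano.} Assume for contradiction that some algorithm achieves $\E_\sigma[R_K] \le \alpha \cdot d\B^2/\mingap$ for every $\sigma \in [d-1]$, with $\alpha$ a small absolute constant. Then by the regret decomposition, on average over $i \ne \sigma$, $\E_\sigma[N_i] \le \alpha \B^2/\mingap^2$, so for some $\sigma'$ the suboptimal count $\E_\sigma[N_{\sigma'}]$ is small, and by symmetry the same holds under $\calM_{\sigma'}$ for $\E_{\sigma'}[N_\sigma]$. Plugging these into the KL bound and using the Bretagnolle--Huber inequality forces $|P_\sigma - P_{\sigma'}|_{\mathrm{TV}} \le 1/2$ on the event ``algorithm plays $a_\sigma$ more often than $a_{\sigma'}$'', which contradicts the requirement that this event has very different probabilities under the two instances (since under each instance the algorithm must play the truly optimal arm dramatically more often to maintain the assumed small regret). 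Cycling through $(\sigma, \sigma')$ pairs and pigeonholing over the $d-1$ candidate indices converts this pairwise bound into the claimed $\Omega(d\B^2/\mingap)$ regret guarantee.

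\textbf{Main obstacle.} The delicate point is controlling $\E_\sigma[N_\sigma]$: low regret only bounds the suboptimal counts, while the optimal-arm count can be as large as the total horizon $\Theta(K\B)$. I would handle this by restricting to the regime $K \lesssim \B/\mingap^2$ (either by choosing $K$ accordingly, or by invoking a stopping-time truncation), which is precisely the regime in which $\E_\sigma[N_\sigma] \cdot \mingap^2/\B \lesssim 1$ and the information-theoretic barrier bites; in this regime an algorithm that avoided $\Omega(d\B^2/\mingap)$ regret on every instance would be able to identify the hidden index $\sigma$ with constant probability, contradicting Fano's inequality. The remaining constants from the linear MDP norm constraints, the gap normalization, and the Bretagnolle--Huber application are routine to reconcile.
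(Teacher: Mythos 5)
There is a genuine gap: your construction cannot produce the claimed $\lowo{d\B^2/\mingap}$ — it is a factor of $\B$ too weak. In your one-state instance the perturbation $\eps$ sits directly on the termination probability of the costly state, so the per-decision gap is $\gap = 1 - 1/(1+\B\eps) = \Theta(\B\eps) = \Theta(\mingap)$ while the per-visit KL is $\KL(\bernoulli(1/\B+\eps)\,\|\,\bernoulli(1/\B)) = \Theta(\B\eps^2) = \Theta(\mingap^2/\B)$. The information-theoretic argument then forces only $\E[N_i] = \lowo{\B/\mingap^2}$ visits to each suboptimal arm, and via the exact decomposition $\E[R_K]=\sum_i\gap_i\,\E[N_i]$ this yields $\lowo{d\B/\mingap}$, not $\lowo{d\B^2/\mingap}$. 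The paper's construction gets the extra factor of $\B$ by decoupling where the information lives from where the cost is paid: the perturbed decision happens at a zero-cost state $s_0$ visited once per episode, where a probability shift of size $\Delta$ around a \emph{constant} base probability $\delta$ routes the learner either to the goal or to a state of value $\B$. This makes the gap $\Theta(\B\Delta)=\mingap$ while the per-decision KL is only $\Theta(\Delta^2/\delta)=\Theta(\mingap^2/\B^2)$ — a factor $\B$ smaller per unit of gap than in your design — so each coordinate forces $\lowo{\B^2/\mingap}$ regret. (The paper also uses a hypercube action set $\{-1,1\}^d$ with a coordinate-wise Pinsker argument rather than $d-1$ separate arms; that choice only affects how the factor $d$ is harvested and is not the source of the discrepancy.)

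Two secondary problems. First, your treatment of the optimal-arm KL contribution does not close: in your instance $s_0$ is revisited $\approx\B$ times per episode, so $\E_\sigma[N_\sigma]\approx K\B$ and $\E_\sigma[N_\sigma]\cdot\mingap^2/\B \approx K\mingap^2$, which is $\lesssim 1$ only for $K\lesssim 1/\mingap^2$, not $K\lesssim \B/\mingap^2$ as you assert; and since $\calM_\sigma$ and $\calM_{\sigma'}$ differ at \emph{both} arms, this optimal-arm term dominates the trajectory KL and breaks the two-point indistinguishability unless handled. The paper avoids this entirely because all actions at $s_0$ have uniformly small per-visit KL between neighboring instances and $s_0$ is visited exactly once per episode. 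Second, your $\mu$ as written slightly violates the normalization $\norm{\int h\,d\mu}_2\le\sqrt{d}\norm{h}_\infty$ of \pref{assum:linMDP}; this is repairable by rescaling the features (the paper's explicit $\alpha,\beta$ normalization shows how), but it is not "routine" enough to omit.
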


This lower bound exhibits a relatively large gap from our upper bound.
One important question is whether the $\frac{1}{\cmin}$ dependency in the upper bound is really necessary, which we leave as a future direction.

%Replacing $\B$ by horizon $H$, regret bound in \pref{thm:log} translates to $\bigo{ \frac{d^3H^4}{\cmin^2\mingap}\ln^5\frac{d\B K}{\cmin} }$ in the finite-horizon setting.

% !TEX root = main.tex

\section{An Inefficient Horizon-Free Algorithm}
\label{sec:inefficient}

Recall that the dominating term of the regret bound shown in \pref{thm:efficient} depends on $\T$, which is most likely unnecessary.
Due to the lack of a horizon-free algorithm for finite-horizon linear MDPs (which, as discussed, would have addressed this issue), in this section we propose a different approach leading to a computationally inefficient algorithm with a regret bound that is horizon-free (that is, no polynomial dependency on $\T$) but has a worse dependency on $d$.

As stated in previous work for the tabular setting~\citep{cohen2020near,cohen2021minimax,tarbouriech2021stochastic,chen2021implicit}, achieving a horizon-free regret bound requires constructing variance-aware confidence sets on the transition functions.
While this is straightforward in the tabular case, it is much more challenging with linear function approximation.
\citet{zhou2021nearly,zhang2021variance} construct variance-aware confidence sets for linear mixture MDPs,
%Under the linear mixture MDP assumption, \citet{zhou2021nearly} construct variance-aware confidence sets based on weighted linear regression, and \citet{zhang2021variance} achieve this by a testing-based approach.
but we are not aware of similar results for linear MDPs since they impose extra challenges.
Our algorithm \inefficient, shown in \pref{alg:VA}, is the first one to successfully make use of these ideas.
%Here we make the first such attempt with \pref{alg:VA} named VAFOPO, which achieves a regret bound without $\T$ or $\frac{1}{\cmin}$ dependency.

\DontPrintSemicolon 
\setcounter{AlgoLine}{0}
\begin{algorithm}[t]
	\caption{Variance-Aware Global OPtimization with Optimism (\inefficient)}
	\label{alg:VA}
	%\textbf{Define:} $\calJ_B = \{\ceil{\log_2\epsilon},\ldots,\ceil{\log_2(6\sqrt{d}B)} \}$, $\calJ_t=\calJ_{B_t}$.
	
	\textbf{Initialize:} $t=t'=1$, $k= 1$, $s_1= \sinit$, $B_1 = 1$.
	
	\textbf{Define:} $s_0' = g$ and $V_t=V_{w_t, B_t}$.

	\While{$k\leq K$}{
		%$s_t\leftarrow \sinit$.
		\nl \If{$s'_{t-1}=g$ or \pref{eq:lazy_condition} holds or $V_{t'}(s_t)=2B_t$}{ \label{line:epoch}
			\While{True}{
				\nl Compute $w_t=\argmin_{w\in\Omega_t(w, B_t)}V_{w, B_t}(s_t)$ 
				(see \pref{eq:final_conf} and \pref{eq:conf} for definitions). \label{line:opt}
				%with $\Omega_t(w, B)=\bigcap_{j\in\calJ_B}\Omega^j_t(w, B)$ and $\Omega^j_t$ defined in \pref{eq:conf}. 
				
				\nl \lIf{$V_t(s_t) > B_t$}{$B_t\leftarrow 2B_t$; \textbf{else} \textbf{break}.\label{line:doubling_B}}
			}
			%\nl Compute $w_t$ by solving the following optimization problem: \label{line:compute w}

			\nl Record the most recent update time $t'\leftarrow t$. \label{line:record_update_time}
		}
		\lElse{
			$(w_t, B_t) = (w_{t-1}, B_{t-1}).$
		}
		
		Take action $a_t=\argmin_a\phi(s_t, a)^{\top}w_{t}$, suffer cost $c_t=c(s_t, a_t)$, and transits to $s'_t$. \label{line:VA act}
		
		\lIf{$s'_t=g$}{$s_{t+1}=\sinit$, $k\leftarrow k+1$; \textbf{else} $s_{t+1}= s'_t$.}
		
		Increment time step $t \leftarrow t + 1$.
	}
\end{algorithm}

\inefficient follows a similar framework of the \textsc{Eleanor} algorithm of~\citep{zanette2020learning} (for the finite-horizon setting) and the FOPO algorithm of~\citep{wei2021learning} (for the infinite-horizon setting) --- 
they all maintain an estimate $w_t$ of the true weight vector $\wstar$ (recall $\optQ(s, a)=\phi(s, a)^{\top}\wstar$), found by optimistically minimizing the value of the current state $s_t$ (roughly $\min_a \phi(s_t,a)^\top w_t$) over a confidence set of $w_t$,
and then simply act according to $\argmin_a\phi(s_t, a)^{\top}w_{t}$.
The main differences are the construction of the confidence set and the conditions under which $w_t$ is updated, which we explain in detail below.

\paragraph{Confidence Set} For a parameter $B>0$ and a weight vector $w \in \fR^d$, inspired by \citep{zhang2021variance} we define a variance-aware confidence set for time step $t$ as
\begin{equation}\label{eq:final_conf}
\Omega_t(w, B)=\bigcap_{j\in\calJ_B}\Omega^j_t(w, B),
\end{equation}
where $\calJ_B = \{\ceil{\log_2\epsilon},\ldots,\ceil{\log_2(6\sqrt{d}B)}\}$ with $\epsilon=\frac{\cmin}{150d^3K}$, and
$\Omega^j_t(w, B)=\fB(3\sqrt{d}B)\cap$
\begin{align}
	&\left\{ w': \forall \nu\in \calG_{\epsilon/t}(6\sqrt{d}B), \abr{\sum_{i<t}\clip_j(\phi_i^{\top}\nu)\epsilon_{V_{w, B}}^i(w')} \right. \notag\\
	&\leq \left.\sqrt{\sum_{i<t}\clip_j^2(\phi_i^{\top}\nu)\eta_{V_{w, B}}^i(w')\iota_{B, t}} + B2^j\iota_{B, t} \right\}, \label{eq:conf}
\end{align}
with 
$\fB(r)=\{ x\in\fR^d: \norm{x}_2\leq r \}$ being the $d$-dimensional $L_2$-ball of radius $r$, $\calG_{\xi}(r)=\{\xi n, n\in\fZ\}^d\cap\fB(r)$ being the $\xi$-net of $\fB(r)$,
$\clip_j(x) = [x]_{[-2^j, 2^j]}$ (recall $[x]_{[l, r]}=\min\{\max\{x, l\}, r\}$),
$\phi_i$ being a shorthand of $\phi(s_i, a_i)$,
$V_{w, B}(s)=\min_a[\phi(s, a)^{\top}w]_{[0, 2B]}$ (and $V_{w, B}(g)=0$),
$\epsilon^i_V(w')=\phi_i^{\top}w' - c_i - V(s'_i)$, 
$\eta^i_V(w')=\epsilon^i_V(w')^2$, %being the empirical variance function,  
and finally $\iota_{B, t}=2^{11}d\ln\frac{48dB t}{\epsilon\delta}$ for some failure probability $\delta$.
The key difference between our confidence set and that of \citep{zhang2021variance} is in the definition of $\epsilon^i_V(w')$ and $\eta^i_V(w')$ due to the different structures between linear MDPs and linear mixture MDPs.
In particular, we note that the value function $V$ (more formally $V_{w,B}$) in our definitions is itself defined with respect to another weight vector $w$.

With this confidence set, when \inefficient decides to update $w_t$, 
it searches over all $w$ such that $w \in \Omega_t(w, B_t)$ and finds the one
that minimizes the value at the current state $V_{w, B_t}(s_t)$ (\pref{line:opt}).
Here, $B_t$ is a running estimate of $\B$.
\inefficient maintains the inequality $V_t(s_t) \leq B_t$ during the update by doubling the value of $B_t$ and repeating \pref{line:opt} whenever this is violated (\pref{line:doubling_B}).
Note that the constraint $w \in \Omega_t(w, B_t)$ is in a sense \textit{self-referential} --- we consider $w$ within a confidence set defined in terms of $w$ itself, 
which is an important distinction compared to \citep{zhang2021variance} and is critical for linear MDPs.

To provide some intuition on our confidence set, denote $V_{w_t, B_t}$ by $V_t$
and $\Omega_t(w_t, B_t)$ by $\Omega_t$.
Note that if we ignore the dependency between $V_t$ and $\{\phi_i\}_{i<t}$ (an issue that will eventually be addressed by some covering arguments), 
then $\{\epsilon^i_{V_t}(w')\}_{i < t}$ forms a martingale sequence 
when $w'=\tilw_t \triangleq \thetastar + \int V_t(s') d\mu(s')$,
and thus the inequality in \pref{eq:conf} holds with high probability 
by some Bernstein-style concentration inequality (\pref{lem:empirical freedman}).
Formally, this allows us to show the following.

\begin{lemma}
	\label{lem:tilw}
	With probability at least $1-\delta$, $\tilw_t\in\Omega_t$, $\forall t\geq 1$.
\end{lemma}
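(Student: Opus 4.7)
The plan is to check the two parts of the definition of $\Omega_t(w_t, B_t)$ at the point $w' = \tilw_t$: boundedness $\norm{\tilw_t}_2 \le 3\sqrt{d}B_t$ and the martingale-type inequality inside \pref{eq:conf} for every $j \in \calJ_{B_t}$ and every $\nu$ in the $\epsilon/t$-net. Boundedness is essentially free from \pref{assum:linMDP}: using $V_t = V_{w_t, B_t}$ is clipped to $[0, 2B_t]$,
\[
\norm{\tilw_t}_2 \le \norm{\thetastar}_2 + \norm{\textstyle\int V_t\, d\mu}_2 \le \sqrt{d} + \sqrt{d}\cdot 2B_t \le 3\sqrt{d}B_t
\]
since $B_t \ge 1$. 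The substantive part is the martingale inequality, and the key starting observation is that the linear-MDP identity gives $\phi_i^\top \tilw_t = c(s_i, a_i) + \E[V_t(s_i') \mid \calF_{i-1}]$, where $\calF_{i-1}$ is the $\sigma$-algebra generated by everything through $a_i$, so that $\epsilon^i_{V_t}(\tilw_t) = \E[V_t(s_i')\mid\calF_{i-1}] - V_t(s_i')$ has conditional mean zero. The awkwardness is that $V_t$ itself depends on $s_1',\ldots,s_{t-1}'$, so the sequence is not adapted in the naive sense.

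I would resolve this by a standard cover-and-union argument. First fix an arbitrary pair $(\bar w, \bar B)$ with $\norm{\bar w}_2 \le 3\sqrt{d}\bar B$, let $\bar V = V_{\bar w, \bar B}$ and $\bar\tilw = \thetastar + \int \bar V\, d\mu$, and fix $\nu$ and $j$. Then $X_i \defeq \clip_j(\phi_i^\top \nu)\epsilon^i_{\bar V}(\bar\tilw)$ is a genuine martingale difference sequence with $|X_i| \le 2^j \cdot 2\bar B$ and conditional variance $\clip_j^2(\phi_i^\top \nu)\eta^i_{\bar V}(\bar\tilw)$, so the empirical Freedman/Bernstein inequality (the concentration result cited in the paragraph preceding the lemma) delivers exactly the desired bound for this fixed tuple $(\bar w, \bar B, \nu, j, t)$ with probability $1-\delta'$ and a $\log(1/\delta')$ factor to be absorbed into $\iota_{B_t, t}$.

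The next step is a union bound. For $t$ and $j \in \calJ_{\bar B}$ the cost is only logarithmic. For $\nu$, which ranges over $\calG_{\epsilon/t}(6\sqrt{d}\bar B)$ of size $(O(\sqrt{d}\bar B t/\epsilon))^d$, the cost is an extra $d\log(d\bar B t/\epsilon)$ factor. To accommodate the \emph{self-referential} choice $(w_t, B_t)$, I would additionally discretize $B_t$ dyadically (it is already updated by doubling on \pref{line:doubling_B}) and cover the ball $\fB(3\sqrt{d}\bar B)$ of candidate $w$'s by an $(\epsilon/t)$-net of size $(O(\sqrt{d}\bar B t/\epsilon))^d$. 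Since $V_{w, B}(s)$ is $1$-Lipschitz in $w$ ($\norm{\phi}_2 \le 1$ and clipping is non-expansive), snapping $(w_t, B_t)$ to its nearest grid point changes each $\epsilon^i_{V_t}(\tilw_t)$ by $O(\epsilon/t)$ and each $\eta^i_{V_t}(\tilw_t)$ by $O(\bar B \epsilon/t)$; summed over $i<t$ and plugged into \pref{eq:conf}, the overall perturbation is $O(B_t\epsilon)$, which sits well below the $B_t 2^j \iota_{B_t, t}$ slack under the choice $\epsilon = \cmin/(150 d^3 K)$. All these polynomial and logarithmic overheads fit inside $\iota_{B_t, t} = 2^{11} d \ln(48 d B_t t/(\epsilon\delta))$ with room to spare.

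The main obstacle throughout is exactly this self-referentiality: the actual $w_t$ minimizes $V_{w, B_t}(s_t)$ over $\Omega_t(w, B_t)$ and so is data-dependent, not an element of any \emph{a priori} fixed finite family. The discretization-plus-perturbation argument above is what transfers the martingale concentration from a fixed grid point $(\bar w, \bar B)$ to the actual pair $(w_t, B_t)$, after which the inequality defining $\Omega_t^j(w_t, B_t)$ holds at $w' = \tilw_t$. Intersecting over $j \in \calJ_{B_t}$ and over $t \ge 1$ then yields $\tilw_t \in \Omega_t$ for all $t$ simultaneously with probability at least $1 - \delta$, as claimed.
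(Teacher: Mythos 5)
Your proposal is correct and follows essentially the same route as the paper's proof (which proves the slightly more general Lemma~\ref{lem:Uw}): apply the empirical Freedman inequality to the martingale difference $\clip_j(\phi_i^{\top}\nu)(P_iV_{w,B}-V_{w,B}(s'_i))$ for each fixed tuple on the grids $\calG_{\epsilon/t}(3\sqrt{d}B)\times\calG_{\epsilon/t}(6\sqrt{d}B)$ with dyadic $B$, union bound, and then transfer to the data-dependent $(w_t,B_t)$ via the Lipschitz perturbation bounds $\norm{V-V'}_{\infty}\leq\sqrt{d}\epsilon/t$ and $\norm{\tilw-\tilw'}_2\leq d\epsilon/t$, absorbing the $O(2^jd\epsilon\iota_{B,t})$ error into the $B2^j\iota_{B,t}$ slack. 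The boundedness check $\tilw_t\in\fB(3\sqrt{d}B_t)$ and the handling of the self-referential constraint also match the paper.
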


%In the analysis, choosing the appropriate $j$ makes sure the variance dependent term dominates the variance independent term in the confidence sets ($\sqrt{\sum_iX_i^2}\gtrsim b$ in \pref{lem:empirical freedman}).
%In the analysis, we further divide all time steps into $|\calJ|$ groups based on the norm of certain quantity (corresponding to $|\phi_i^{\top}\nu|$ in \pref{eq:conf}) to make sure that the variance dependent terms dominates the variance independent terms ($\sqrt{\sum_iX_i^2}\gtrsim b$ in \pref{lem:empirical freedman}). 

Since $w_t$ is also in $\Omega_t$, the difference between $\phi(s, a)^{\top}w_t$
and $c(s, a) + \E_{s'\sim P_{s, a}}[V_t(s')] = \phi(s, a)^{\top}\tilw_t$
is controlled by the size of the confidence set $\Omega_t$, 
which is overall shrinking and thus making sure that $w_t$ is getting closer and closer to $\wstar$.
In addition, we also show that $V_t$ is optimistic at state $s_t$ whenever an update is performed and that $B_t$ never overestimates $\B$ significantly. 

%In short, the confidence set $\Omega_t\triangleq\Omega_t(w_t, B_t)$ enforces $\phi(s, a)^{\top}w_t\approx c(s, a) + \E_{s'\sim P_{s, a}}[V_t(s')]$.
%Moreover, it can shown that $V_t$ is optimistic w.r.t some ``initial state'' and $B_t=\bigo{\B}$.
%These are formalized in the following lemmas, whose proofs are deferred to \pref{sec:app-inefficient}.

\begin{lemma}
	\label{lem:wstar}
	With probability at least $1-\delta$, we have $V_t(s_t)\leq \optV(s_t)$ if an update (\pref{line:opt}) is performed at time step $t$, and $B_t\leq 2\B$ for all $t$.
	%With probability at least $1-\delta$, $V_l(s_{t_l+1})\leq \optV(s_{t_l+1})$ and $B_t\leq 2\B$.
	%With probability at least $1-\delta$, $\wstar$ lies within the decision set of the optimization problem of $w_t$, $\forall t\geq 1$.
\end{lemma}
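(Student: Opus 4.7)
I would condition on the high-probability event $\mathcal{E}$ of \pref{lem:tilw} (which has probability at least $1-\delta$) and prove both parts deterministically on $\mathcal{E}$. The heart of the argument is to exhibit, for each $B>0$, a ``witness'' weight $\wstar_B$ that is feasible for the self-referential minimization in \pref{line:opt} and whose induced value function is pointwise dominated by $\optV$, so that the optimization cannot return an estimate exceeding $\optV(s_t)$.

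Concretely, define $\wstar_B$ via the self-referential fixed point $\wstar_B=\thetastar+\int V_{\wstar_B,B}(s')\,d\mu(s')$, so that $V_{\wstar_B,B}$ solves the truncated Bellman equation $V(s)=\min_a[c(s,a)+\E_{s'\sim P_{s,a}}V(s')]_{[0,2B]}$. A standard monotone value-iteration argument starting from $V\equiv 0$, together with the pointwise bound $[x]_{[0,2B]}\le x$ for $x\ge 0$, gives $V_{\wstar_B,B}\le \optV$ pointwise, with equality (and $\wstar_B=\wstar$) whenever $B\ge \B$, since in that regime no truncation is active and $\optV$ itself already satisfies the truncated Bellman equation.

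To place $\wstar_B$ inside $\Omega_t(\wstar_B,B)$, I would revisit the concentration used in \pref{lem:tilw}: because $\wstar_B$ is the exact Bellman image of $V_{\wstar_B,B}$, the residuals $\epsilon^i_{V_{\wstar_B,B}}(\wstar_B)=\E_{s'\sim P_{s_i,a_i}}[V_{\wstar_B,B}(s')]-V_{\wstar_B,B}(s'_i)$ form a bounded martingale difference sequence, and a Bernstein-type self-normalized inequality together with a union bound over $\nu\in\calG_{\epsilon/t}(6\sqrt{d}B)$ and $j\in\calJ_B$ yields the inequality in \pref{eq:conf} for $w'=\wstar_B$; the norm bound $\|\wstar_B\|_2\le\sqrt{d}(1+2B)\le 3\sqrt{d}B$ (using $B\ge 1$) places $\wstar_B\in\fB(3\sqrt{d}B)$. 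Feasibility of $\wstar_{B_t}$ immediately gives $V_t(s_t)=V_{w_t,B_t}(s_t)\le V_{\wstar_{B_t},B_t}(s_t)\le \optV(s_t)$, which is the optimism claim. The bound $B_t\le 2\B$ then follows by induction on $t$: $B_1=1\le 2\B$, and whenever the \texttt{while} loop iterates with $B_t\ge \B$, optimism plus $\optV(s_t)\le \B\le B_t$ force the exit in \pref{line:doubling_B}, so no further doubling occurs and $B_t$ remains strictly below $2\B$ (having doubled at most once from a value $<\B$).

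The step I expect to be the main obstacle is establishing $\wstar_B\in\Omega_t(\wstar_B,B)$: the confidence set is self-referential in $w$, so \pref{lem:tilw} cannot be invoked as a black box over the minimizer $w_t$ alone, and one has to carry out the covering argument in the weight variable and the value-function variable simultaneously. My plan is to cover weight vectors in $\fB(3\sqrt{d}B)$ at resolution $\epsilon/t$ (which automatically refines $V_{\cdot,B}$ through the $1$-Lipschitzness of $[\,\cdot\,]_{[0,2B]}$) and combine with the dyadic clipping analysis of \citep{kim2021improved}; this is also where the extra factor of $d$ in the final $\tilo{d^{3.5}\B\sqrt{K}}$ bound originates. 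A secondary subtlety---the existence and uniqueness of the truncated Bellman fixed point defining $\wstar_B$---is handled by an order-theoretic monotonicity argument from both ends of the lattice $[0,2B]^{\calS_+}$, rather than a sup-norm contraction, which may fail for SSP.
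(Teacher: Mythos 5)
Your proposal is correct and follows essentially the same route as the paper: you construct the truncated self-referential fixed point $w=U_Bw$ by monotone value iteration from zero (the paper's Lemma~\ref{lem:fp}), show it is feasible for the self-referential confidence set via the concentration-plus-covering argument over all $w\in\fB(3\sqrt{d}B)$ (the paper's Lemma~\ref{lem:Uw}, of which Lemma~\ref{lem:tilw} is the special case), and then deduce optimism from minimality of $w_t$ and the bound $B_t\le 2\B$ from the doubling logic. You also correctly flag the one real subtlety, namely that the self-referential constraint forces the union bound to range over a net of weight vectors rather than just the algorithm's iterates.
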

%\pref{alg:VA} recomputes estimates $\theta_t$ and $\omega_t$ under the following conditions: 1. learner starts a new episode (define $s'_0=g$); 2. asdf

\paragraph{Update Conditions}
\inefficient updates $w_t$ whenever one of the three conditions in \pref{line:epoch} is triggered.
%We divide the whole learning process into epochs indexed by $l$, and an epoch begins whenever $\theta_t, \omega_t$ are recomputed, which can be triggered by either of the four conditions in \pref{line:epoch}.
The first condition $s'_{t-1}=g$ simply indicates that the current time step is the start of a new episode.
The second condition is
\begin{equation}\label{eq:lazy_condition}
\exists j\in\calJ_{B_t}, \nu\in\calG_{\epsilon/t}(6\sqrt{d}B_t): \Phi_t^j(\nu) > 8d^2\Phi_{t'}^j(\nu),
\end{equation}
where $t'$ is the most recent update time step (\pref{line:record_update_time}) and
$\Phi^j_t(\nu)=\sum_{i<t}f_j(\phi_i^{\top}\nu) + 2^j\norm{\nu}_2^2$ with $f_j(x)=\clip_j(x)x$.
This lazy update condition makes sure that the algorithm does not update $w_t$ too often (see \pref{lem:cond}) while still enjoying a small enough estimation error.
%The high level idea is to balance the switching costs (that is, costs induced by recomputing $w_t$), and the estimation error.
The last condition $V_{t'}(s_t)=2B_t$ (we call it \textit{\overestimate}) tests whether the current state has an overestimated value (note that $2B_t$ is the maximum value of $V_{t'}$ due to the truncation in its definition).
This condition helps remove a factor of $d^{1/4}$ in the regret bound without using some complicated ideas as in previous works; see \pref{app:overestimate} for more explanation. 

%It plays a role in ensuring $\norm{V_t}_{\infty}=\tilo{\B}$.
%The intuition is that when this happens, there is no switching cost.
%Thus, it is safe to recompute $\theta_t, \omega_t$, which ensures that the upper bound truncation does not ruin the analysis.
%We will provide more reasonings on the design of these conditions in the proof sketch of \pref{thm:VA} below.
%Putting everything together, we arrive at the following result.

\paragraph{Regret Guarantee}
We prove the following regret guarantee for \inefficient, and provide a proof sketch in \pref{app:ps VA} followed by the full proof in the rest of \pref{app:inefficient}.
\begin{theorem}
	\label{thm:VA}
	With probability at least $1-6\delta$, \pref{alg:VA} ensures $R_K=\tilo{d^{3.5}\B\sqrt{K} + d^7\B^2}$.
\end{theorem}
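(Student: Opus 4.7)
The plan is to combine the optimism afforded by \pref{lem:tilw} and \pref{lem:wstar} with a Bellman-error decomposition against the running estimate $V_t$, and then close a self-bounding recursion that leverages the variance-aware structure of $\Omega_t$ to avoid any polynomial dependency on $\T$ or $1/\cmin$. Concretely, since every new episode triggers an update (via the $s'_{t-1}=g$ branch of the update condition), \pref{lem:wstar} gives $V_{t_k}(\sinit) \leq \optV(\sinit)$ at the first step $t_k$ of each episode $k$, so that $R_K \leq \sum_t c_t - \sum_k V_{t_k}(\sinit)$.

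The next step is a per-step Bellman decomposition
\[
c_t = \bigl(V_t(s_t) - V_t(s'_t)\bigr) + \bigl(\phi_t^{\top}\tilw_t - V_t(s_t)\bigr) + \bigl(V_t(s'_t) - \phi_t^{\top}\tilw_t + c_t\bigr),
\]
where $\tilw_t \defeq \thetastar + \int V_t(s')\,d\mu(s')$, so that $\phi_t^{\top}\tilw_t = c_t + \E_{s'\sim P_{s_t,a_t}}[V_t(s')]$. Summing the telescoping term $V_t(s_t)-V_t(s'_t)$ within each epoch (a maximal time window of constant $(w_t,B_t)$) yields $\sum_k V_{t_k}(\sinit)$, cancelling exactly with the optimism bound, plus an $\bigO{\B}$ boundary term per update, using $B_t \leq 2\B$ from \pref{lem:wstar}. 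I would then control the number of updates by $K$ (episode starts), an $\tilO{d^2}$ bound on the lazy trigger \pref{eq:lazy_condition} via a standard elliptical-potential-style argument on $\Phi_t^j$ summed over the $\tilO{1}$ levels $j \in \calJ_{B_t}$, plus a separate accounting for the \overestimate $V_{t'}(s_t)=2B_t$.

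For the Bellman-error term $\phi_t^{\top}\tilw_t - V_t(s_t)$, I would use $V_t(s_t) \leq \phi_t^{\top} w_t$ (when truncation is inactive; the truncated case is charged to the overestimate updates) together with $w_t, \tilw_t \in \Omega_t(w_t, B_t)$ to get $\phi_t^{\top}\tilw_t - V_t(s_t) \leq \phi_t^{\top}(\tilw_t - w_t)$. Applying \pref{eq:conf} with a test direction $\nu$ in the $\epsilon/t$-net that aligns with $\phi_t$ then yields a per-step bound of order $\sqrt{\V[V_t(s')\mid s_t,a_t]\,\iota_{B,t}}\,\norm{\phi_t}_{\Lambda_t^{-1}} + \B\,\iota_{B,t}\,\norm{\phi_t}_{\Lambda_t^{-1}}$, for a weighted covariance $\Lambda_t$ assembled from the clipped past features. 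Cauchy--Schwarz plus the elliptical-potential inequality $\sum_t \norm{\phi_t}_{\Lambda_t^{-1}}^2 = \tilO{d}$, combined with a Freedman bound on the martingale noise $V_t(s'_t) - \E[V_t(s')\mid s_t,a_t]$, collapses the cost sum to $\tilO{\sqrt{d\,\iota\,\Sigma} + d^{3.5}\B\iota}$ with $\Sigma \defeq \sum_t \V[V_t(s')\mid s_t,a_t]$ and $\iota = \iota_{B,T}$.

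The final step is the horizon-free closure. Since $V_t \in [0,2B_t] \subseteq [0,4\B]$, the total conditional variance satisfies $\Sigma \leq 4\B\sum_t \E[V_t(s')\mid s_t,a_t]$, and reapplying the Bellman telescope to the right-hand side yields $\Sigma = \tilO{\B(R_K + \B K)}$. Plugging this back produces a self-bounding quadratic $R_K \leq \tilO{\sqrt{d\,\iota\,\B\,R_K} + \sqrt{d\,\iota\,\B^2 K} + d^{3.5}\B\iota}$ which, with $\iota_{B,t} = \tilO{d}$, solves to $R_K = \tilO{d^{3.5}\B\sqrt{K} + d^7\B^2}$ as claimed. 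The principal obstacle is the self-referential nature of $\Omega_t(w_t,B_t)$: because $V_t$ itself is parametrised by $w_t$, the sequence $\{\epsilon^i_{V_t}(w')\}$ is not a martingale in $w'$ alone, so Bernstein concentration must be applied through a simultaneous covering over both the candidate $w$ and the test direction $\nu$ in \pref{eq:conf}, together with the dyadic union over the variance levels $j \in \calJ_B$. This double covering is precisely what is responsible for the leading dimension dependence being $d^{3.5}$ rather than $d$, and the most delicate bookkeeping in the proof is tracking the doubling of $B_t$ so that these union bounds are uniform over the entire run without inflating the $\iota$ factors.
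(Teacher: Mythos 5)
Your high-level architecture matches the paper's: optimism at episode/epoch starts via \pref{lem:wstar}, a per-epoch Bellman telescope producing a martingale deviation term plus an estimation-error term $\phi_t^{\top}(\tilw_t-w_t)$, control of the estimation error through the variance-aware set \pref{eq:conf} and an elliptical-potential argument, and a final self-bounding quadratic in $C_T$. However, there is a genuine gap in your horizon-free closure. You bound the total conditional variance by $\Sigma\leq 4\B\sum_t \E[V_t(s')\mid s_t,a_t]$ and claim a Bellman telescope gives $\Sigma=\tilO{\B(R_K+\B K)}$. This fails: $\sum_t P_tV_t$ is of order $\B\T K$ even for the zero-regret optimal policy (e.g.\ a chain of length $\T$ with per-step cost $\cmin$ has $\sum_t \optV(s_t)\approx \B\T/2$ per episode), and telescoping $V_t(s_t)-V_t(s'_t)$ only reproduces the regret identity, not a bound on $\sum_tP_tV_t$. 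With your closure the bound degrades to $\tilO{\sqrt{d\B^2\T K}}$, which is exactly the horizon dependence the theorem is designed to avoid. The paper's argument (its Lemma on $\sum_t\fV(P_t,V_l)$) is a law-of-total-variance computation on $V_l^2$: write $\fV(P_t,V_l)=P_tV_l^2-(P_tV_l)^2$, telescope $V_l^2(s'_t)-V_l^2(s_t)$ across each epoch (costing only $\tilO{d\B^2+\B^2L'}$ boundary terms), and use $V_l^2(s_t)-(P_tV_l)^2\leq O(\B)\,\abr{c_t+\phi_t^{\top}(w_l-\tilw_l)}$, i.e.\ the Bellman residual of the learned $V_l$ at the visited pair is the \emph{cost} plus estimation error, not the value itself. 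This is what converts $\B T$ into $\B C_T$ and makes the bound horizon-free.

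Two secondary points. First, your "separate accounting for the \overestimate" is not enough as stated: the number $L'$ of overestimate-triggered updates has no a priori bound, and the paper's proof relies on each such epoch contributing a boundary term $V_l(s_{t_l+1})-V_{l-1}(s'_{t_l})\leq 2\B\,\Ind\{B_l\neq B_{l-1}\}-1$, so that the $-L'$ cancels both the count of these epochs and the $\sqrt{\B^2L'}$ arising in the variance bound. Second, the variance entering the confidence-set width at epoch $l$ is $\sum_{i\leq t_l}\fV(P_i,V_l)$ — the \emph{current} $V_l$ evaluated on all \emph{past} transitions — which is not the same object as the Deviation-term variance $\sum_t\fV(P_t,V_{l_t})$ and requires its own self-bounding recursion (the paper proves these separately); your sketch conflates the two. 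Your attribution of the $d^{3.5}$ to the double covering is also only partially right: the largest contribution is the $8d^2$ factor in the lazy update condition \pref{eq:lazy_condition}, with the covering entering only through $\iota_{B,t}=\tilO{d}$.
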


Ignoring the lower order term, our bound is (potentially) suboptimal only in terms of the $d$-dependency compared to the lower bound $\lowo{d\B\sqrt{K}}$ from~\citep{min2021learning}.
We note again that this is the first horizon-free regret bound for linear SSP: it does not have any polynomial dependency on $\T$ or $\frac{1}{\cmin}$ even in the lower order terms.
Furthermore, \inefficient also does not require the knowledge of $\B$ or $\T$.
For simplicity, we have assumed $\cmin>0$.
However, even when $\cmin=0$, we can obtain essentially the same bound by running the same algorithm on a modified cost function; see \pref{app:prelim} for details.

%However, in the analysis it makes use of the assumption $\cmin>0$.
%Hence, when $\cmin=0$, we need to run \pref{alg:VA} on a modified cost function $c_{\epsilon}(s, a)=\frac{c(s, a)+\epsilon}{1+\epsilon}$ with $\epsilon=\frac{1}{\T K}$.
%Following the analysis in Appendix (paragraph ``Cost Perturbation in Linear SSP''), we recover the same regret bound.
%we recover the bound in \pref{alg:VA} with $\cmin\leftarrow\epsilon$, $\B\leftarrow\B + \epsilon \T$, and an additional bias of $\epsilon\T K$.
%By the choice of $\epsilon$, the resulting regret bound remains the same.

%\input{lb}

\section{Conclusion}
In this work, we make significant progress towards better understanding of linear function approximation in the challenging SSP model.
Two algorithms are proposed: the first one is efficient and achieves a regret bound strictly better than \citep{vial2021regret},
while the second one is inefficient but achieves a horizon-free regret bound.
In developing these results, we also propose several new techniques that might be of independent interest, especially the new analysis for the finite-horizon approximation of~\citep{cohen2021minimax}. 

%introduce two algorithms for SSP under the linear SSP assumption.
%We also establish lower bounds of this setting and show that there still exists a gap between current results and the provable lower bounds.

A natural future direction is to close the gap between existing upper bounds and lower bounds in this problem, especially with an efficient algorithm.
Another interesting direction is to study SSP with adversarially changing costs under linear function approximation.

% In the unusual situation where you want a paper to appear in the
% references without citing it in the main text, use \nocite
%\nocite{langley00}

\bibliography{ref}
\bibliographystyle{styles/icml/icml2022}

\newpage
\onecolumn
\appendix

% !TEX root = main.tex

\section{Preliminary}\label{app:prelim}
\paragraph{Extra Notations in Appendix} For a function $X:\calS_+\rightarrow\fR$ and a distribution $P\in\Delta_{\calS_+}$, we define $PX=\E_{S\sim P}[X(S)]$ and $\fV(P, X)=\var_{S\sim P}[X(S)]$.

\paragraph{Cost Perturbation for $\cmin=0$}
We follow the receipt in \citep[Appendix A.3]{vial2021regret} to deal with zero costs: the main idea is to run the SSP algorithm with perturbed cost $c_{\epsilon}(s, a)=c(s, a)+\epsilon$ for some $\epsilon>0$, which is equivalent to solving a different SSP instance $\calM_{\epsilon}=(\calS, \calA, \sinit, g, c_{\epsilon}, P)$.
Let $\thetastar_{\epsilon}=\thetastar + \epsilon\sum_{s'}\mu(s')$.
Then, $c_{\epsilon}(s, a)=\phi(s, a)^{\top}\thetastar_{\epsilon}$.
Therefore, $\calM_{\epsilon}$ is also a linear SSP with $\cmin=\epsilon$ (up to some a small constant, since $c_{\epsilon}(s, a)$ can be as large as $1+\epsilon$).
Denote by $\optV_{\epsilon}$ the optimal value function in $\calM_{\epsilon}$, and define $R'_K=\sum_tc_{\epsilon}(s_t, a_t) - K\optV_{\epsilon}(\sinit)$ as the regret in $\calM_{\epsilon}$.
%Now if we run an SSP algorithm with perturb cost $c_{\epsilon}$, then it is equivalent to running the algorithm on $\calM_{\epsilon}$.
We have $\optV_{\epsilon}(s)\leq V^{\optpi}(s) + \epsilon\T \leq \B + \epsilon\T$, and
\begin{align*}
	R_K = \sumt c(s_t, a_t) - K\optV(\sinit) \leq \sumt c_{\epsilon}(s_t, a_t) - K\optV_{\epsilon}(\sinit) + K(\optV_{\epsilon}(\sinit) - \optV(\sinit)) \leq R'_K + \epsilon\T K.
\end{align*}
Therefore, by running an SSP algorithm on perturbed cost $c_{\epsilon}$, we recover its regret guarantee with $\cmin\leftarrow\epsilon$, $\B\leftarrow\B+\epsilon\T$, and an addition bias $\epsilon\T K$ in regret.

\section{Omitted Details for \pref{sec:efficient}}

\paragraph{Notations} For $\tilcalM$, denote by $V^{\pi}_h(s)$ the the expected cost of executing policy $\pi$ starting from state $s$ in layer $h$, and by $\pi^m$ the policy executed in interval $m$ (for example, $\pi^m(s, h)=\argmin_aQ^m_h(s, a)$ in \pref{alg:FH-SSP}).
For notational convenience, define $P^m_h=\tilP_{s^m_h, a^m_h}$, and $\wstar_h=\thetastar + \int \optV_{h+1}(s')d\mu(s')$ for $h\in[H]$ such that $\optQ_h(s, a) = \phi(s, a)^{\top}\wstar_h$.
Define indicator $\Ind_s(s')=\Ind\{s=s'\}$, and auxiliary feature $\phi(g, a)=\mathbf{0}\in\fR^d$ for all $a\in\calA$, such that $\tilc(s, a)=\phi(s, a)^{\top}\thetastar$ and $\tilP_{s, a}V=\phi(s, a)^{\top}\int V(s')d\mu(s')$ for any $s\in\calS_+, a\in\calA$ and $V:\calS_+\rightarrow\fR$ with $V(g)=0$. 
Finally, for \pref{alg:FH-SSP}, define stopping time $\overM=\inf_m\{m \leq M, \exists h\in[H]: \hatQ^m_h(s^m_h, a^m_h) > Q^m_h(s^m_h, a^m_h)\}$, which is the number of intervals until finishing $K$ episodes or upper bound truncation on $Q$ estimate is triggered.
%\tc{purple}{(Todo: fix the gap-dependent bound)}
%Denote by $\tiloptpi$ the optimal policy in $\tilcalM$.

%\begin{lemma}[Bound on Weights of Value Function]
%	Suppose there exist value functions $Q\in\fR^{\SA}, V\in[0, B]^{\calS_+}$ and weight $w$ such that $Q(s, a)=c(s, a)+P_{s, a}V=\phi(s, a)^{\top}w$.
%	Then, $\norm{w}_2\leq(B+1)\sqrt{d}$.
%\end{lemma}
%\begin{proof}
%	By \pref{assum:linMDP} we have $w=\thetastar + \int V(s') d\mu(s')$.
%	Thus, $\norm{w}_2 \leq \norm{\thetastar}_2 + \norm{\int V(s')d\mu(s')} \leq (B+1)\sqrt{d}$.
%\end{proof}

\subsection{Formal Definition of $\optQ_h$ and $\optV_h$}
\label{app:optQV}
It is not hard to see that we can define $\optQ_h$ and $\optV_h$ recursively without resorting to the definition of $\tilcalM$:
\begin{align*}
	\optQ_h(s, a) = \tilc(s, a) + \tilP_{s, a}\optV_{h+1},\qquad \optV_h(s) = \min_a\optQ_h(s, a),
\end{align*}
with $\optQ_{H+1}(s, a)=c_f(s)$ for all $(s, a)$.

\subsection{\pfref{lem:fha}}
\label{app:fha}
\begin{proof}
	Denote by $\calI_k$ the set of intervals in episode $k$, and by $m_k$ the first interval in episode $k$.
	We bound the regret in episode $k$ as follows: by \pref{lem:hitting} and $H\geq 4\T\ln (4K)$, we have the probability that following $\optpi$ takes more than $H$ steps to reach $g$ in $\tilcalM$ is at most $\frac{1}{2K}$.
	Therefore,
	$$V^{\optpi}_1(s) \leq V^{\optpi}(s) + 2\B P(s_{H+1}\neq g|\optpi, P, s_1=s) \leq V^{\optpi}(s) + \frac{\B}{K}.$$
	Thus,
	\begin{align*}
		\sum_{m\in \calI_k}\sum_{h=1}^Hc^m_h - V^{\optpi}(s^{m_k}_1) &\leq \sum_{m\in \calI_k}\sum_{h=1}^Hc^m_h - V_1^{\optpi}(s^{m_k}_1) + \frac{\B}{K}\\
		&= \sum_{m\in \calI_k}\rbr{\sum_{h=1}^Hc^m_h - V_1^{\optpi}(s^m_1)}  + \sum_{m\in \calI_k}V_1^{\optpi}(s^m_1) - V_1^{\optpi}(s^{m_k}_1) + \frac{\B}{K}\\
		&\leq \sum_{m\in \calI_k}\rbr{\sum_{h=1}^Hc^m_h + c_f(s^m_{H+1}) - \optV_1(s^m_1)} + \frac{\B}{K}. \tag{$\optV_1(s)\leq V^{\optpi}_1(s)$ and $\sum_{m\in \calI_k} V_1^{\optpi}(s^m_1) - V_1^{\optpi}(s^{m_k}_1) \leq 2\B(|\calI_k|-1)=\sum_{m\in\calI_k}c_f(s^m_{H+1})$}
	\end{align*}
	%where the last inequality is by $\optV_1(s)\leq V^{\optpi}_1(s)$ and the fact that $c_f(s^m_{H+1})=0$ only for the last interval $m\in\calI_k$.
	Summing terms above over $k\in[K]$ and by the definition of $R_K$, $\tilR_M$ we obtain the desired result.
\end{proof}

\subsection{\pfref{lem:efficient}}\label{app:proof_LSVI}

We first bound the error of one-step value iteration w.r.t $\hatQ^m_h$ and $V^m_{h+1}$, which is essential to our analysis.

\begin{lemma}
	\label{lem:vi error}
	For any $B\geq \max\{1, \max_sc_f(s)\}$, with probability at least $1-\delta$, we have $0 \leq \tilc(s, a) + \tilP_{s, a}V^m_{h+1} - \hatQ^m_h(s, a) \leq 2\beta_m\norm{\phi(s, a)}_{\Lambda_m^{-1}}$ and $V^m_h(s)\leq \optV_h(s)$ for any $m\in\fN_+$, $h\in[H]$.
\end{lemma}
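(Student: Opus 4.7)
The plan is to follow the standard LSVI-UCB optimism proof of \citet{jin2020provably}, adapted for the finite-horizon approximation $\tilcalM$ (with terminal cost $c_f$) and the $[0,B]$ truncation. The two claims are linked: the one-step deviation bound yields optimism, which in turn is used inductively to show $V^m_h(s)\leq \optV_h(s)$.

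First I would use the linear MDP structure to define $w^{\star,m}_h = \thetastar + \int V^m_{h+1}(s')\, d\mu(s')$, so that the Bellman target can be written as $\tilc(s,a)+\tilP_{s,a}V^m_{h+1} = \phi(s,a)^{\top} w^{\star,m}_h$. Because $V^m_{h+1}\in[0,B]$ by construction (including the base case $V^m_{H+1}=c_f$, handled by the assumption $B\geq\max_s c_f(s)$), \pref{assum:linMDP} gives $\|w^{\star,m}_h\|_2 \leq \sqrt{d}(1+B)$. Using the closed form in \pref{line:compute}, decompose
\begin{align*}
\phi(s,a)^{\top}(w^m_h - w^{\star,m}_h) = -\lambda\,\phi(s,a)^{\top}\Lambda_m^{-1}w^{\star,m}_h + \phi(s,a)^{\top}\Lambda_m^{-1}\sum_{m'<m}\sum_{h'=1}^{H}\phi^{m'}_{h'}\xi^{m'}_{h'}(V^m_{h+1}),
\end{align*}
where $\xi^{m'}_{h'}(V) = c^{m'}_{h'} + V(s^{m'}_{h'+1}) - \phi^{m',\top}_{h'}(\thetastar+\int V\,d\mu)$ is a martingale-difference term when $V$ is fixed.

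Next I would bound each piece via Cauchy--Schwarz in the $\Lambda_m^{-1}$ norm. The $\lambda$-regularization term contributes $\sqrt{\lambda}\|w^{\star,m}_h\|_2\cdot\|\phi(s,a)\|_{\Lambda_m^{-1}} = O(\sqrt{d}B)\|\phi(s,a)\|_{\Lambda_m^{-1}}$. The noise term is controlled by a self-normalized concentration bound (Abbasi-Yadkori--style), but $V^m_{h+1}$ depends on past data, so I would apply a uniform covering argument over the function class $\calV = \{\,s\mapsto \min_a[\phi(s,a)^{\top}w - \beta\|\phi(s,a)\|_{\Lambda^{-1}}]_{[0,B]} : \|w\|_2\leq O(\sqrt{d}Hm\cdot B), \beta\leq\beta_m,\;\Lambda\succeq\lambda I\,\}$. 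The log-covering number is $O(d^2\ln(BmHd/\delta))$, which combined with the self-normalized bound yields the noise term $\leq \beta_m\|\phi(s,a)\|_{\Lambda_m^{-1}}$ for the choice of $\beta_m$ in the algorithm, after absorbing constants. Summing the two pieces and taking a union bound over all $m\leq M$, $h\in[H]$ gives $|\phi(s,a)^{\top}(w^m_h-w^{\star,m}_h)|\leq \beta_m\|\phi(s,a)\|_{\Lambda_m^{-1}}$ with probability $\geq 1-\delta$.

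From this two-sided bound the first claim follows immediately: since $\hatQ^m_h(s,a) = \phi(s,a)^{\top}w^m_h - \beta_m\|\phi(s,a)\|_{\Lambda_m^{-1}}$, the subtracted bonus cancels the positive direction of the deviation to give the non-negativity $\tilc(s,a)+\tilP_{s,a}V^m_{h+1}-\hatQ^m_h(s,a)\geq 0$, while the upper bound $2\beta_m\|\phi(s,a)\|_{\Lambda_m^{-1}}$ comes from the other direction plus the bonus. For the second claim I would induct on $h$ from $H+1$ down to $1$. The base case $V^m_{H+1}(s)=c_f(s)=\optV_{H+1}(s)$ is by definition. For the inductive step, using the lower bound from claim one, the inductive hypothesis $V^m_{h+1}\leq \optV_{h+1}$, monotonicity of $\tilP_{s,a}$, and the Bellman equation, we get $\hatQ^m_h(s,a)\leq \tilc(s,a)+\tilP_{s,a}\optV_{h+1}=\optQ_h(s,a)$; truncation to $[0,B]$ preserves the inequality (since $\optQ_h\in[0,B]$ for $B\geq\max_s c_f(s)$ and costs in $[0,1]$), so $Q^m_h\leq\optQ_h$ and thus $V^m_h(s)=\min_a Q^m_h(s,a)\leq \optV_h(s)$.

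The main obstacle is the noise term, specifically handling the data-dependence of $V^m_{h+1}$ through a covering argument: one must carefully specify the function class that $V^m_{h+1}$ lives in (which now also depends on the terminal cost through the base case and on the truncation parameter $B$), verify that its log-covering number scales polynomially in $d$ and logarithmically in the relevant parameters, and then verify that the particular $\beta_m$ chosen in \pref{alg:FH-SSP} is large enough to absorb this covering cost together with the self-normalized bound and the $\sqrt{\lambda}\|w^{\star,m}_h\|_2$ term. The single-covariance-matrix modification noted after \pref{line:covariance} means the sum in the noise term runs over all layers $h'\in[H]$, not just one, but since the martingale structure still holds across the flattened index this poses no essential difficulty.
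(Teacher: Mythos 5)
Your proposal follows essentially the same route as the paper's proof: the same decomposition of $w^m_h$ against $\tilw^m_h=\thetastar+\int V^m_{h+1}(s')d\mu(s')$, the same self-normalized concentration combined with a covering argument over the data-dependent value class to get $\norm{w^m_h-\tilw^m_h}_{\Lambda_m}\leq\beta_m$, and the same backward induction for $V^m_h\leq\optV_h$. One small inaccuracy: your justification that the $[0,B]$ truncation is harmless because ``$\optQ_h\in[0,B]$'' is not generally true under the lemma's hypotheses (e.g.\ with $B=1$ and $c_f=0$, $\optQ_h$ can be as large as $H$); the correct and simpler reason, which the paper uses, is that $[x]_{[0,B]}\leq\max\{0,x\}$, so $Q^m_h(s,a)\leq\max\{0,\hatQ^m_h(s,a)\}\leq\optQ_h(s,a)$ since $\optQ_h\geq 0$.
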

\begin{proof}
	Define $\tilw^m_h=\thetastar + \int V^m_{h+1}(s')d\mu(s')$, so that $\phi(s, a)^{\top}\tilw^m_h=\tilc(s, a) + \tilP_{s, a}V^m_{h+1}$.
	Then,
	\begin{align*}
		&\tilw^m_h - w^m_h = \Lambda_m^{-1}\rbr{ \Lambda_m\tilw^m_h - \sum_{m'=1}^{m-1}\sum_{h'=1}^H\phi^{m'}_{h'}(c^{m'}_{h'} + V^m_{h+1}(s^{m'}_{h'+1})) }\\
		&= \lambda\Lambda_m^{-1}\tilw^m_h + \underbrace{\Lambda_m^{-1}\sum_{m'=1}^{m-1}\sum_{h'=1}^H\phi^{m'}_{h'}( P^{m'}_{h'}V^m_{h+1} - V^m_{h+1}(s^{m'}_{h'+1}))}_{\epsilon^m_h}.
	\end{align*}
	By $V^m_{h+1}(s)\leq B$ and \pref{lem:covering vc}, we have with probability at least $1-\delta$, for any $m$, $h\in[H]$:
	\begin{equation}
		\norm{\epsilon^m_h}_{\Lambda_m} \leq 2B \sqrt{\frac{d}{2}\ln\rbr{\frac{mH+\lambda}{\lambda}} + \ln\frac{\calN_{\varepsilon}}{\delta} } + \frac{\sqrt{8}mH\varepsilon}{\sqrt{\lambda}},\label{eq:eps}
	\end{equation}
	where $\calN_{\varepsilon}$ is the $\varepsilon$-cover of the function class of $V^m_{h+1}$ with $\varepsilon=\frac{1}{mH}$.
	Note that $V^m_{h+1}(s)$ is either $c_f(s)$ or 
	\begin{align*}
		V^m_{h+1}(s) = \sbr{\min_a\phi(s, a)^{\top}w - \beta_m\sqrt{\phi(s, a)^{\top}\Gamma\phi(s, a)}}_{[0, B]},
	\end{align*}
	for some PSD matrix $\Gamma$ such that $\frac{1}{\lambda+mH}\leq\lambda_{\min}(\Gamma)\leq\lambda_{\max}(\Gamma)\leq\frac{1}{\lambda}$ by the definition of $\Lambda_m^{-1}$, and for some $w\in\fR^d$ such that $\norm{w}_2 \leq \lambda_{\max}(\Gamma) \times mH \times \sup_{s, a}\norm{\phi(s, a)}_2\times (B+1) \leq\frac{mH}{\lambda}(B+1)$ by the definition of $w^m_h$.
	We denote by $\calV$ the function class of $V^m_{h+1}$.
	Now we apply \pref{lem:covering number} to $\calV$ with $\alpha=(w, \Gamma)$, $n=d^2+d$, $D=mH\sqrt{d}(B+1)/\lambda\geq\max\{\frac{mH}{\lambda}(B+1), \sqrt{d/\lambda^2}\}$ (note that $|\Gamma_{i,j}|\leq\norm{\Gamma}_F=\sqrt{\sum_{i=1}^d\lambda_i^2(\Gamma)}\leq\sqrt{d/\lambda^2}$), and $L=\beta_m\sqrt{\lambda+mH}$, which is given by $\abr{[x]_{[0, B]}-[y]_{[0, B]}}\leq |x-y|$ \citep[Claim 2]{vial2021regret} and the following calculation: for any $\Delta w=\epsilon e_i$ for some $\epsilon\neq 0$,
	\begin{align*}
		\frac{1}{|\epsilon|}\abr{(w+\Delta w)^{\top}\phi(s, a) - w^{\top}\phi(s, a)} = \abr{e_i^{\top}\phi(s, a)} \leq \norm{\phi(s, a)} \leq 1,
	\end{align*}
	and for any $\Delta\Gamma = \epsilon e_ie_j^{\top}$,
	\begin{align*}
		&\frac{1}{|\epsilon|}\abr{ \beta_m\sqrt{\phi(s, a)^{\top}(\Gamma+\Delta\Gamma)\phi(s, a)} - \beta_m\sqrt{\phi(s, a)^{\top}\Gamma\phi(s, a)} }\\
		&\leq \beta_m\frac{\abr{\phi(s, a)^{\top}e_ie_j^{\top}\phi(s, a)}}{\sqrt{\phi(s, a)^{\top}\Gamma\phi(s, a)}} \tag{$\sqrt{u+v}-\sqrt{u}\leq\frac{|v|}{\sqrt{u}}$}\\
		&\leq \beta_m\frac{\abr{\phi(s, a)^{\top}(\frac{1}{2}e_ie_i^{\top} + \frac{1}{2}e_je_j^{\top})\phi(s, a)}}{\sqrt{\phi(s, a)^{\top}\Gamma\phi(s, a)}} \tag{$|ab|\leq\frac{1}{2}(a^2+b^2)$}\\
		&\leq \beta_m\frac{\phi(s, a)^{\top}\phi(s, a)}{\sqrt{\phi(s, a)^{\top}\Gamma\phi(s, a)}} \leq \frac{\beta_m}{\sqrt{\lambda_{\min}(\Gamma)}} \leq \beta_m\sqrt{\lambda+mH}.
	\end{align*}
	\pref{lem:covering number} then implies $\ln\calN_{\varepsilon} \leq (d^2+d)\ln\frac{32d^{2.5}B m^2H^2\beta_m}{\lambda\varepsilon}$.
	Plugging this back, we get
	\begin{equation}
		\label{eq:epsilon}
		\norm{\epsilon^m_h}_{\Lambda_m} \leq \frac{\beta_m}{2}.
	\end{equation}
	Moreover, $\norm{\tilw^m_h}_{\Lambda_m^{-1}}\leq \norm{\tilw^m_h}_2/\sqrt{\lambda} \leq \sqrt{d/\lambda}(1+B)$.
	Thus,
	\begin{align*}
		\norm{w^m_h - \tilw^m_h}_{\Lambda_m} &\leq \lambda\norm{\tilw^m_h}_{\Lambda_m^{-1}} + \norm{ \epsilon^m_h }_{\Lambda_m} \leq \beta_m.
	\end{align*}
	Therefore, $\tilc(s, a) + \tilP_{s, a}V^m_{h+1} - \hatQ^m_h(s, a) = \phi(s, a)^{\top}(\tilw^m_h - w^m_h) + \beta_m\norm{\phi(s, a)}_{\Lambda_m^{-1}} \in [0, 2\beta_m\norm{\phi(s, a)}_{\Lambda_m^{-1}}]$ by $\phi(s, a)^{\top}(\tilw^m_h - w^m_h)\in [-\norm{\phi(s, a)}_{\Lambda_m^{-1}}\norm{w^m_h - \tilw^m_h}_{\Lambda_m}, \norm{\phi(s, a)}_{\Lambda_m^{-1}}\norm{w^m_h - \tilw^m_h}_{\Lambda_m}]$, and the first statement is proved.
	For any $m\in\fN_+$, we prove the second statement by induction on $h=H+1,\ldots,1$.
	The base case $h=H+1$ is clearly true by $V^m_{h+1}(s)=\optV_{h+1}(s)=c_f(s)$.
	For $h\leq H$, we have by the induction step:
	\begin{align*}
		\hatQ^m_h(s, a) \leq \tilc(s, a) + \tilP_{s, a}V^m_{h+1} \leq \tilc(s, a) + \tilP_{s, a}\optV_{h+1} \leq \optQ_h(s, a).
	\end{align*}
	Thus, $V^m_h(s)\leq \min_a\max\{0, \hatQ^m_h(s, a)\}\leq \min_a\optQ_h(s, a)=\optV_h(s)$.
\end{proof}

Next, we prove a general regret bound, from which \pref{lem:efficient} is a direct corollary.

\begin{lemma}
	\label{lem:efficient stop}
	Assume $c_f(s)\leq H$.
	Then with probability at least $1-2\delta$, \pref{alg:FH-SSP} ensures for any $M'\leq \overM$
	$$\tilR_{M'}=\tilO{\sqrt{d^3B^2HM'} + d^2BH}.$$
\end{lemma}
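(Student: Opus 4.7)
The plan is to follow the standard LSVI-UCB regret decomposition, adapted to the finite-horizon SSP approximation with shared covariance matrix and to the stopping time $\overM$.

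First I will invoke \pref{lem:vi error} to extract, with probability at least $1-\delta$, both optimism ($V^m_h \leq \optV_h$) and the per-step Bellman-residual bound
$$0 \leq \tilc(s,a) + \tilP_{s,a} V^m_{h+1} - \hatQ^m_h(s,a) \leq 2\beta_m \norm{\phi(s,a)}_{\Lambda_m^{-1}}.$$
This requires $B \geq \max_s c_f(s)$; the statement of \pref{lem:efficient stop} uses $c_f(s) \leq H$, so one takes $B$ large enough for the lemma to apply (for instance in the downstream application via \pref{lem:efficient} we will have $B = 3\B \geq 2\B = c_f$). Then by optimism $V^m_1(s^m_1) \leq \optV_1(s^m_1)$, so $\tilR_{M'} \leq \sum_{m=1}^{M'}(\sum_h c^m_h + c_f(s^m_{H+1}) - V^m_1(s^m_1))$. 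Using $V^m_{H+1} = c_f$ and telescoping the per-interval sum, the right-hand side becomes $\sum_{m,h}[c^m_h + V^m_{h+1}(s^m_{h+1}) - V^m_h(s^m_h)]$. Because actions are greedy, $V^m_h(s^m_h) = Q^m_h(s^m_h, a^m_h)$, and by the definition of $\overM$ we have $Q^m_h(s^m_h, a^m_h) \geq \hatQ^m_h(s^m_h, a^m_h)$ for all $m \leq \overM$. Substituting the Bellman residual bound, each per-step term decomposes into a martingale increment $V^m_{h+1}(s^m_{h+1}) - \tilP^m_h V^m_{h+1}$ plus an exploration bonus bounded by $2\beta_m \norm{\phi^m_h}_{\Lambda_m^{-1}}$.

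Next, the martingale sum is controlled by Azuma-Hoeffding with per-step bound $O(B)$ (using $V^m_{h+1} \leq B$ from the truncation, together with $c_f \leq H \leq B$), yielding a lower-order $\tilO{B\sqrt{M'H}}$ term with probability at least $1-\delta$. The exploration-bonus sum is then bounded by Cauchy-Schwarz, $\sum_{m,h} \norm{\phi^m_h}_{\Lambda_m^{-1}} \leq \sqrt{M'H \cdot \sum_{m,h} \norm{\phi^m_h}_{\Lambda_m^{-1}}^2}$, reducing the problem to controlling $\sum \norm{\phi^m_h}_{\Lambda_m^{-1}}^2$ via an elliptical-potential argument.

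The main technical obstacle is this last step: because $\Lambda_m$ is frozen across the $H$ steps of each interval, a naive comparison to the per-step running matrices $\tilLambda_{m,h} = \Lambda_m + \sum_{h'<h} \phi^m_{h'}(\phi^m_{h'})^\top$ incurs at most an extra factor of $H$, which would only yield $\tilO{\sqrt{d^3 B^2 H^2 M'}}$. To match the claimed $\sqrt{H}$ rate, one must carefully account for the ``lazy'' nature of these shared-$\Lambda$ updates, following the refined elliptical-potential analysis of \citep{vial2021regret}, so that $\sum_{m,h} \norm{\phi^m_h}_{\Lambda_m^{-1}}^2 = \tilO{d}$ up to an additive initial term. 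Multiplying the resulting $\tilO{\sqrt{dM'H}}$ bound on $\sum \norm{\phi^m_h}_{\Lambda_m^{-1}}$ by $\beta_{M'} = \tilO{dB}$ produces the dominant $\tilO{\sqrt{d^3 B^2 H M'}}$ term, while the initial-interval effects (when the log-det increment is still large) contribute the $\tilO{d^2 B H}$ lower-order term. A union bound over the two $\delta$-failure events then completes the proof with probability at least $1-2\delta$.
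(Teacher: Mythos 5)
Your proposal is correct and follows essentially the same route as the paper: optimism and the Bellman-residual bound from \pref{lem:vi error}, telescoping into martingale increments plus exploration bonuses, a Freedman/Azuma bound for the former, and the determinant-doubling elliptical-potential argument (the paper's \pref{lem:sum beta}) to avoid the extra $\sqrt{H}$ factor caused by the covariance matrix being frozen within each interval. The only detail to tidy up is that $Q^m_h(s^m_h,a^m_h)\ge\hatQ^m_h(s^m_h,a^m_h)$ is guaranteed only for $m<\overM$ (at $m=\overM$ the truncation is by definition triggered), which the paper handles by peeling off the last interval via $\tilR_{M'}\le\tilR_{M'-1}+\bigo{H}$.
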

\begin{proof}
	Define $c^m_{H+1}=c_f(s^m_{H+1})$.
	Note that for $m < \overM$, we have $V^m_h(s^m_h)=\max\{0, \hatQ^m_h(s^m_h, a^m_h)\}$, and with probability at least $1-\delta$,
	\begin{align*}
		\sum_{h=1}^{H+1} c^m_h - \optV_1(s^m_1) &\leq \sum_{h=1}^{H+1} c^m_h - V^m_1(s^m_1) \leq \sum_{h=1}^{H+1}c^m_h - \hatQ^m_1(s^m_1, a^m_1) \leq \sum_{h=2}^{H+1}c^m_h - P^m_1V^m_2 +  2\beta_m\norm{\phi(s^m_1, a^m_1)}_{\Lambda_m^{-1}} \tag{\pref{lem:vi error}}\\
		&= \sum_{h=2}^{H+1}c^m_h - V^m_2(s^m_2) + (\Ind_{s^m_2} - P^m_2)V^m_2 + 2\beta_m\norm{\phi(s^m_1, a^m_1)}_{\Lambda_m^{-1}}\\
		&\leq\cdots\leq \sumh\rbr{ (\Ind_{s^m_{h+1}} - P^m_{h+1})V^m_{h+1} + 2\beta_m\norm{\phi(s^m_h, a^m_h)}_{\Lambda_m^{-1}} }. \tag{$c^m_{H+1}=V^m_{H+1}(s^m_{H+1})$}
	\end{align*}
	Therefore, by \pref{lem:sum beta} and \pref{lem:anytime strong freedman}, with probability at least $1-\delta$:
	\begin{align*}
		\tilR_{M'} &\leq \tilR_{M'-1} + H \leq \sum_{m=1}^{M'-1}\sumh\rbr{ (\Ind_{s^m_{h+1}} - P^m_{h+1})V^m_{h+1} + 2\beta_m\norm{\phi(s^m_h, a^m_h)}_{\Lambda_m^{-1}} } + H\\
		&= \tilO{\sqrt{d^3B^2HM'} + d^2BH}.
	\end{align*}
\end{proof}

We are now ready to prove \pref{lem:efficient}.

\begin{proof}[\pfref{lem:efficient}]
	Note that when $B=3\B$, $V^m_h(s)\leq\optV_h(s) \leq 3\B = B$ by \pref{lem:vi error}.
	Thus, $\overM=M$, and the statement directly follows from \pref{lem:efficient stop} with $M'=\overM$.
\end{proof}

\subsection{Learning without Knowing $\B$ or $\T$}
\label{app:pf}
In this section, we develop a parameter-free algorithm that achieves $\tilo{\sqrt{d^3\B^3 K/\cmin} + d^3\B^2/\cmin}$ regret without knowing $\B$ or $\T$, which matches the best bound and knowledge of parameters of \citep{vial2021regret} while being computationally efficient under the most general assumption.
Here we apply the finite-horizon approximation with zero terminal costs, and develop a new analysis on this approximation.

\paragraph{Finite-Horizon Approximation of SSP with Zero Terminal Costs}

\DontPrintSemicolon
\setcounter{AlgoLine}{0}
\begin{algorithm}[t]
	\caption{Adaptive Finite-Horizon Approximation of SSP}
	\label{alg:fha-pf}
	\textbf{Input:} upper bound estimate $B$ and function $U(B)$ from \pref{lem:bound x}.
	
	\textbf{Initialize:} $\frA$ an instance of finite-horizon algorithm with horizon $\ceil{\frac{10B}{\cmin}\ln(8BK)}$.
	
	\textbf{Initialize:} $m=1$, $m'=0$, $k=1$, $s=\sinit$.
	
	\While{$k \leq K$}{
		Execute $\frA$ for $H$ steps starting from state $s$ and receive $s^m_{H+1}$.
		
		\lIf{$s^m_{H+1}=g$}{
			$k\leftarrow k + 1$, $s\leftarrow\sinit$; \textbf{else} $m'\leftarrow m' + 1$, $s\leftarrow s^m_{H+1}$.
		}
		
		\nl \If{$m' > U(B)$ or $\frA$ detects $B < \B$}{ \label{line:double B}
			$B\leftarrow 2B$.
		
			Initialize $\frA$ as an instance of finite-horizon algorithm with horizon $\ceil{\frac{10B}{\cmin}\ln(8BK)}$.
			
			$m'\leftarrow 0$.
		}
		
		$m\leftarrow m + 1$.
	}
\end{algorithm}

To avoid knowledge of $\B$ or $\T$, we apply finite-horizon approximation with zero terminal costs and horizon of order $\tilo{\frac{B}{\cmin}}$ for some estimate $B$ of $\B$, that is, running \pref{alg:fha} with $c_f(s)=0$ and $H=\tilo{\frac{B}{\cmin}}$.
%We also assume that finite-horizon algorithm $\frA$ takes $B$, an estimated upper bound of $\B$, as input, and it has the ability to detects the bad event $B < \B$ (specifically, if a bad event is detected, then $B<\B$ with high probability, and if $B\geq\B$, then a bad event will not be detected with high probability).
%Moreover, it ensures a good regret bound for intervals before the bad event is detected (see \pref{lem:bound x} for the concrete statement).
We show that in this case there is an alternative way to bound the regret $R_K$ by $\tilR_M$, and there is a tighter bound on the total number of intervals $M$ when $B\geq\B$. %which is important in obtaining a parameter-free algorithm.

\begin{lemma}
	\label{lem:bound R}
	\pref{alg:fha} with $c_f(s)=0$ ensures $R_K \leq \tilR_M + \B \summ\Ind\{s^m_{H+1}\neq g\}$.
\end{lemma}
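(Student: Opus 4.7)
The plan is to reduce the claim to a per-interval comparison between $\optV_1$ and $\optV$. With $c_f\equiv 0$, the finite-horizon regret simplifies to $\tilR_M = \sum_{m=1}^M \bigl(\sum_{h=1}^H c^m_h - \optV_1(s^m_1)\bigr)$, while the total realized SSP cost satisfies $\sum_{t=1}^T c(s_t, a_t) = \sum_{m=1}^M \sum_{h=1}^H c^m_h$, because the dummy assignment $c^m_h = 0$ in \pref{line:dummy} (triggered after the learner has reached $g$) contributes nothing to either sum. Subtracting, the lemma reduces to showing
\begin{align*}
\sum_{m=1}^M \optV_1(s^m_1) \;\leq\; K\,\optV(\sinit) \;+\; \B\sum_{m=1}^M \Ind\{s^m_{H+1}\neq g\}.
\end{align*}

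Next I would partition the intervals into \emph{episode-starts} (those with $s^m_1 = \sinit$) and \emph{continuations} (the rest). By inspection of \pref{line:enter_next_episode} and \pref{line:enter_next_interval}, there are exactly $K$ episode-start intervals (one per episode), and every continuation arises because the preceding interval failed to reach $g$; hence the number of continuations equals $\sum_m \Ind\{s^m_{H+1}\neq g\}$. For each episode-start I use $\optV_1(\sinit) \leq \optV(\sinit)$, and for each continuation I use the coarser bound $\optV_1(s^m_1)\leq \optV(s^m_1)\leq \B$. Summing the two contributions gives the displayed inequality. The pointwise bound $\optV_1 \leq \optV$ is itself almost immediate: rolling out the stationary SSP-optimal policy $\optpi$ for $H$ steps in $\tilcalM$ with zero terminal cost incurs expected cost at most $V^{\optpi}(s) = \optV(s)$ (truncating the trajectory at step $H$ only removes non-negative costs), and $\optV_1(s)$ is the minimum over all $H$-step non-stationary policies in $\tilcalM$.

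The only potential snag is the bookkeeping: verifying that the $M-K$ continuation intervals are in exact bijection with previous intervals that missed the goal. This follows directly from how \pref{alg:fha} advances its interval counter in \pref{line:enter_next_interval}, and the edge case $T=\infty$ is handled trivially, because both sides of the inequality then become infinite by the same counting identity. I do not anticipate any hidden analytical difficulty beyond this combinatorial check, since the argument uses only non-negativity of costs and the zero terminal cost, and is independent of the structure of the transition or cost function.
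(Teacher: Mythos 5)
Your proof is correct and follows essentially the same route as the paper's: write $R_K = \tilR_M + \sum_m \optV_1(s^m_1) - K\optV(\sinit)$, charge the $K$ episode-start intervals against $\optV_1(\sinit)\le\optV(\sinit)$, and bound each continuation interval's $\optV_1(s^m_1)$ by $\B$ using $\optV_1\le\optV\le\B$ (which holds because $c_f\equiv 0$), with continuations in bijection with intervals that miss the goal. No gaps.
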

\begin{proof}
	Denote by $\calI_k$ the set of intervals in episode $k$.
	We have:
	\begin{align*}
		R_K &= \sumk\rbr{\sum_{m\in\calI_k}\sumh c^m_h - \optV(\sinit)} = \sumk\rbr{\sum_{m\in\calI_k}\rbr{\sumh c^m_h - \optV_1(s^m_1)} + \sum_{m\in\calI_k}\optV_1(s^m_1) - \optV(\sinit)}\\
		&\leq \tilR_M + \B \sum_{m=1}^M\Ind\{s^m_{H+1}\neq g\}. \tag{$\optV_1(s)\leq\optV(s) \leq \B$ by $c_f(s)=0$}
	\end{align*}
\end{proof}

\begin{lemma}
	\label{lem:bound x}
	%Suppose for any $B\geq 1$, finite-horizon algorithm $\frA$ with horizon $H=\ceil{\frac{10B}{\cmin}\ln(8K)}$ ensures $\tilR_{M'} = \tilo{\gamma_0(B) + \gamma_1(B)\sqrt{M'}}$ for any $M'\leq\overM$, and $M=\overM$ when $B\geq\B$ with probability at least $1-\delta$, where $\gamma_0$, $\gamma_1$ are functions of $B$ and independent of $M'$.
	Suppose when $B\geq \B$, $\frA$ with horizon $H=\ceil{\frac{10B}{\cmin}\ln(8BK)}$ ensures $\tilR_{M'} = \tilo{\gamma_0(B) + \gamma_1(B)\sqrt{M'}}$ for any $M'\leq M$ with probability at least $1-\delta$, where $\gamma_0$, $\gamma_1$ are functions of $B$ and are independent of $M'$.
	Then \pref{alg:fha} with $c_f(s)=0$ ensures with probability at least $1-4\delta$, 
	$$\summ\Ind\{s^m_{H+1}\neq g\}=\tilO{\gamma_0(B)/B + \gamma_1(B)^2/B^2 + \gamma_1(B)\sqrt{K}/B + H}\triangleq U(B).$$
\end{lemma}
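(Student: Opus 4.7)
The proof strategy is to show that each ``bad'' interval (where $s^m_{H+1} \neq g$) must contribute a large amount to $\tilR_M$, and then invert the assumed regret bound for $\frA$ to upper bound the total number of bad intervals $X := \summ\Ind\{s^m_{H+1}\neq g\}$. The choice of horizon $H = \ceil{\tfrac{10B}{\cmin}\ln(8BK)}$ is what makes this possible: on any bad interval, all $H$ steps are non-dummy and incur cost at least $\cmin$, so the single-interval cost is $\geq H\cmin \geq 10B\ln(8BK)$, which is much larger than $\B$.

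First I would collect the per-interval bounds. Since $c_f \equiv 0$, we have $\optV_1(s) \leq V^{\optpi}_1(s) \leq V^{\optpi}(s) \leq \B$ for all $s$ (the truncated-horizon value of any proper policy is upper bounded by its infinite-horizon value). Next, each bad interval satisfies $\sum_h c^m_h \geq H\cmin$, contributing at least $H\cmin - \B$ to $\tilR_M$; each good interval satisfies $\sum_h c^m_h \geq 0$, contributing at least $-\B$, and the number of good intervals equals exactly $K$ (one per episode, namely the interval in which the goal is reached). Combining,
\begin{align*}
\tilR_M \;\geq\; X\cdot(H\cmin - \B) - \B K \;\geq\; 9BX\ln(8BK) - \B K,
\end{align*}
where the last step uses $B \geq \B$ so that $H\cmin - \B \geq 10B\ln(8BK) - B \geq 9B\ln(8BK)$. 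On the other hand, the assumed regret bound applied with $M' = M \leq K+X$ gives $\tilR_M \leq \tilo{\gamma_0(B) + \gamma_1(B)\sqrt{K+X}}$. Putting these together, splitting $\sqrt{K+X}\leq\sqrt{K}+\sqrt{X}$, dividing through by $9B\ln(8BK)$, and applying \pref{lem:quad with log} to absorb the $\sqrt{X}$ term into the LHS yields
\begin{align*}
X \;=\; \tilO{\frac{\gamma_0(B)}{B} + \frac{\gamma_1(B)\sqrt{K}}{B} + \frac{\gamma_1(B)^2}{B^2} + \frac{K}{\ln(8BK)}}.
\end{align*}

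The main obstacle is the final additive term $K/\ln(8BK)$, arising from the aggregate contribution $\B K$ of the $K$ good intervals, which needs to be bounded by $\tilO{H}$ to match the stated form of $U(B)$. Since $H \geq 10B\ln(8BK)/\cmin$, the inequality $K/\ln(8BK) \leq H$ is immediate when $\cmin K \lesssim B\ln^2(8BK)$; in the complementary regime a case analysis (or absorption into the $\gamma_1(B)\sqrt{K}/B$ term, which in typical instantiations scales like $\sqrt{HK}$ and dominates $K/\ln(8BK)$) closes the gap. The extra $3\delta$ in the failure probability beyond the $\delta$ used for the regret bound of $\frA$ is likely consumed by auxiliary concentration inequalities: one application of \pref{lem:hitting} (to ensure $\optpi$ reaches the goal within $H$ steps) and one or two Freedman-style bounds used to tighten the aggregate cost of good intervals so that the $\B K$ slack above can be traded for a term of order $\B\sqrt{K}$ or the lower-order $H$. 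These ingredients together deliver the bound $X \leq U(B)$ with probability at least $1 - 4\delta$.
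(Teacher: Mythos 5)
Your overall strategy (bad intervals are expensive because all $H$ steps are real and cost at least $\cmin$, so invert the regret bound of $\frA$) is the right starting point, but the way you handle the good intervals creates a gap that your sketch does not close. Lower-bounding each good interval's realized contribution by $-\B$ gives $\tilR_M \geq X(H\cmin-\B) - \B K$, and after dividing by $H\cmin - \B = \Theta(B\ln(8BK))$ you are left with an additive term of order $K/\ln(8BK)$. This term is \emph{not} $\tilO{H}$ (that would require $K \lesssim B\ln^2(8BK)/\cmin$) and is \emph{not} dominated by $\gamma_1(B)\sqrt{K}/B$ (for \pref{alg:FH-SSP} that term is $\tilO{\sqrt{d^3HK}}$, which is $o(K/\log K)$ for large $K$). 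Since $\tilO{\cdot}$ only hides polylogarithmic factors, a $K/\log K$ term survives into $U(B)$, and then \pref{lem:bound R} would only give $R_K = \tilO{\tilR_M + \B K/\log K}$, i.e.\ essentially linear regret — so the gap is fatal for the downstream use of the lemma as well. Your fallback remarks ("case analysis" / "absorption") do not resolve this in the general regime.

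The paper avoids the $-\B K$ slack by never lower-bounding good intervals with realized costs. It first converts the realized regret into the per-interval pseudo-regret $\summp \bigl(V^{\pi^m}_1(s^m_1) - \optV_1(s^m_1)\bigr) \leq 2\tilR_{M'} + \tilO{H}$ via \pref{lem:e2r}; each summand is nonnegative, so good intervals contribute $\geq 0$ rather than $\geq -\B$. It then splits each interval into halves: the first-half pseudo-regret $\tilV^{\pi^m}_1 - \opttilV_1$ is nonnegative (up to an $O(1/K)$ approximation error between $\opttilV_1$, $\optV_1$ and $\optV$, controlled using $B\geq\B$), while the second half contributes at least $2B\cdot P_m(s^m_{H+1}\neq g)$ in conditional expectation because a bad interval deterministically incurs $\sum_{h>H/2}c^m_h \geq (H/2)\cmin \geq 2B$. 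Summing, reorganizing, and converting conditional probabilities back to indicators with \pref{lem:e2r} yields $\summp\Ind\{s^m_{H+1}\neq g\} = \tilO{\gamma_0(B)/B + \gamma_1(B)\sqrt{M'}/B + H}$ with no $K/\log K$ term, and the quadratic inequality in $\sqrt{\sum_m \Ind\{\cdot\}}$ (using $M'\leq K + \sum_m\Ind\{\cdot\}$) finishes the proof. The concentration step you vaguely gesture at ("Freedman-style bounds to tighten the aggregate cost of good intervals") is exactly this expectation-based argument; to make your proof work you would need to carry it out explicitly, at which point you recover the paper's proof.
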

\begin{proof}
	First note that by \pref{lem:e2r} and $V^{\pi^m}_1(s)\geq \optV_1(s)$, with probability at least $1-\delta$: $\summp V^{\pi^m}_1(s^m_1) - \optV(s^m_1) \leq 2\tilR_{M'} + \tilo{H}$.
	For any finite $M' \leq M$, we will show $\summp\Ind\{s^m_{H+1}\neq g\} = \tilo{\gamma_0(B)/B + \gamma_1(B)^2/B^2 + \gamma_1(B)\sqrt{K}/B}$, which then implies that $\summ\Ind\{s^m_{H+1}\neq g\}$ has to be finite and is upper bounded by the same quantity.
	Define $\tilV^{\pi}_1(s)=\E[\sum_{h=1}^{H/2}c(s_h, a_h)|\pi, P, s_1=s]$ as the expected cost for the first $H/2$ layers and $\opttilV_1$ as the optimal value function for the first $H/2$ layers.
	By \citep[Lemma 1]{chen2021implicit} and $B\geq\B$, we have $\optV(s)-\opttilV_1(s)\in[0, \frac{1}{4K}]$ and $\optV(s) - \optV_1(s)\in[0, \frac{1}{4K}]$.
	Moreover, when $s^m_{H+1}\neq g$, we have $\sum_{h>H/2}c^m_h\geq 2B$.
	Denote by $P_m(\cdot)$ the conditional probability of certain event conditioning on the history before interval $m$.
	Then with probability at least $1-\delta$,
	\begin{align*}
		&2B\summp P_m(s^m_{H+1}\neq g) + \summp\tilV^{\pi^m}_1(s^m_1) - \tilV^{\star}_1(s^m_1) \leq \frac{M'}{2K} + \summp V^{\pi^m}_1(s^m_1) - \optV_1(s^m_1)\\ 
		&\leq \frac{1}{2K}\summp\Ind\{s^m_{H+1}\neq g\} + \tilO{\gamma_0(B) + \gamma_1(B)\sqrt{M'} + H} \tag{$M'\leq K+\summp\Ind\{s^m_{H+1}\neq g\}$ and guarantee of $\frA$}\\
		&\leq \frac{1}{K}\summp P_m(s^m_{H+1}\neq g) + \tilO{\gamma_0(B) + \gamma_1(B)\sqrt{M'} + H}. \tag{\pref{lem:e2r}}
	\end{align*}
	Then by $\tilV^{\pi^m}_1(s^m_1)\geq \opttilV_1(s^m_1)$ and reorganizing terms, we get $\summp P_m(s^m_{H+1}\neq g)=\tilo{\gamma_0(B)/B + \gamma_1(B)\sqrt{M'}/B + H}$.
	Again by \pref{lem:e2r}, we have with probability at least $1-\delta$:
	$$\summp\Ind\{s^m_{H+1}\neq g\}=\tilO{\summp P_m(s^m_{H+1}\neq g)} = \tilO{\gamma_0(B)/B + \gamma_1(B)\sqrt{M'}/B + H}.$$
	By $M'\leq K+\summp\Ind\{s^m_{H+1}\neq g\}$ and solving a quadratic inequality w.r.t $\sqrt{\summp\Ind\{s^m_{H+1}\neq g\}}$, we get 
	$\summp\Ind\{s^m_{H+1}\neq g\} = \tilo{\gamma_0(B)/B + \gamma_1(B)^2/B^2 + \gamma_1(B)\sqrt{K}/B + H}$.
	Thus, we also get the same bound for $\summ\Ind\{s^m_{H+1}\neq g\}$.
\end{proof}

\begin{remark}
	Note that the result of \pref{lem:bound x} is similar to \citep[Lemma 7]{tarbouriech2020no}, which also shows that the number of ``bad'' intervals is of order $\tilo{\sqrt{K}}$.
	However, their result is derived by explicitly analyzing the transition confidence sets, while we only make use of the regret guarantee of the finite-horizon algorithm.
	Thus, our approach is again model-agnostic and directly applicable to linear function approximation while their result is not.
\end{remark}

Note that \pref{lem:bound R} and \pref{lem:bound x} together implies a $\tilo{\sqrt{K}}$ regret bound when $B \geq \B$.
Moreover, since the total number of ``bad'' intervals is of order $\tilo{\sqrt{K}}$, we can properly bound the cost of running finite-horizon algorithm with wrong estimates on $\B$.
We now present an adaptive version of finite-horizon approximation of SSP (\pref{alg:fha-pf}) which does not require the knowledge of $\B$ or $\T$.
The main idea is to perform finite-horizon approximation with zero costs, and maintain an estimate $B$ of $\B$.
The learner runs a finite-horizon algorithm with horizon of order $\tilo{\frac{B}{\cmin}}$.
Whenever $\frA$ detects $B\leq\B$, or the number of ``bad'' intervals is more than expected (\pref{line:double B}), it doubles the estimate $B$ and start a new instance of finite-horizon algorithm with the updated estimate.
The guarantee of \pref{alg:fha-pf} is summarized in the following theorem.

\begin{theorem}
	\label{thm:pf}
	Suppose $\frA$ takes an estimate $B$ as input, and when $B<\B$, it has some probability of detecting the anomaly (the event $B<\B$) and halts.
	Define stopping time $\overM'=\min\{M, \inf_m\{\text{anomaly detected in episode $m$}\}\}$, and suppose for any $B\geq 1$, $\frA$ with horizon $H=\ceil{\frac{10B}{\cmin}\ln(8BK)}$ ensures $\tilR_{M'} = \tilo{\gamma_0(B) + \gamma_1(B)\sqrt{M'}}$ for any $M'\leq\overM'$, where $\gamma_0(B)/B, \gamma_1(B)/B$ are non-decreasing w.r.t $B$.
	Then, \pref{alg:fha-pf} ensures $R_K = \tilo{ \gamma_0(\B) + \gamma_1(\B)\sqrt{K} + \gamma_1(\B)^2/\B + \B H}$ with probability at least $1-4\delta$. %with the same probability indicated in \pref{lem:bound x}.
\end{theorem}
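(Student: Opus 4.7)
I would index the epochs created by the doubling step on \pref{line:double B} by the value $B_i=2^{i-1}$ of the current estimate; write $M'_i$ for the number of intervals in epoch $i$, and let $I$ be the total number of epochs. The critical structural claim is that $B_I\le 2\B$, so $I=O(\log\B)$. To see this, note that once $B_i\ge\B$: (i) by hypothesis the anomaly detector never fires, and (ii) the single-epoch run of $\frA$ satisfies the premises of \pref{lem:bound x}, so the number of bad intervals in the epoch never exceeds $U(B_i)$. Hence neither trigger on \pref{line:double B} activates, and the algorithm stops doubling. A union bound over the $O(\log \B)$ epochs contributes only a $\log$ factor to the failure probability.

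Next I would bound the total number of bad intervals across all epochs. Each non-terminal epoch contributes at most $U(B_i)+1$ such intervals (by the doubling rule), and the final epoch contributes at most $U(B_I)=\tilo{U(\B)}$ by \pref{lem:bound x}. Since $\gamma_0(B)/B$ and $\gamma_1(B)/B$ are non-decreasing, $U(B)$ is non-decreasing, so the geometric sum yields $\sum_i U(B_i)=\tilo{U(\B)}$. Plugged into \pref{lem:bound R}, the bad-interval contribution to $R_K$ is $\tilo{\B\, U(\B)}$, which expands to exactly $\tilo{\gamma_0(\B)+\gamma_1(\B)^2/\B+\gamma_1(\B)\sqrt K+\B H}$.

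For $\tilR_M$ I would invoke the per-epoch guarantee $\tilR_{M'_i}\le\tilo{\gamma_0(B_i)+\gamma_1(B_i)\sqrt{M'_i}}$, which holds on the good event since each epoch starts a fresh run of $\frA$ and stays within its $\overM'$. Sum $\sum_i\gamma_0(B_i)$ using the monotonicity of $\gamma_0/B$ and $\sum_i B_i=O(B_I)$ to obtain $\tilo{\gamma_0(\B)}$. For the square-root term, Cauchy--Schwarz gives $\sum_i\gamma_1(B_i)\sqrt{M'_i}\le\bigl(\sum_i\gamma_1(B_i)^2\bigr)^{1/2}\sqrt{M}$; the same geometric trick together with $\sum_i B_i^2=O(B_I^2)$ bounds $\sum_i\gamma_1(B_i)^2=O(\gamma_1(\B)^2)$. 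Combined with $M\le K+\tilo{U(\B)}$ from the previous step, this produces a leftover term $\gamma_1(\B)\sqrt{U(\B)}$, which splits via $\sqrt{u+v+w+z}\le\sqrt u+\sqrt v+\sqrt w+\sqrt z$ and AM-GM ($2\sqrt{ab}\le a+b$) into the target quantities $\gamma_0(\B)$, $\gamma_1(\B)\sqrt K$, $\gamma_1(\B)^2/\B$, and $\B H$.

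The main obstacle is the first paragraph: rigorously justifying that once $B_i\ge\B$, neither doubling trigger can fire. This requires \pref{lem:bound x} to apply within a single epoch (so its proof must be local, using only intervals from that epoch and an arbitrary start state), and requires the anomaly test to be one-sided (no false positives when $B\ge\B$). After this reduction, the rest is standard geometric-sum and AM-GM bookkeeping, and the $1-4\delta$ probability follows from a union bound over $O(\log\B)$ applications of the per-epoch guarantees (with logarithmic factors absorbed into $\tilo{\cdot}$).
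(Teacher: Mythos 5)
Your proposal is correct and follows essentially the same route as the paper: decompose the run into epochs by the doubling of $B$, argue via \pref{lem:bound x} and the one-sidedness of the anomaly test that no doubling occurs once $B\geq\B$ (so there are $O(\log\B)$ epochs and $B_I\leq 2\B$), bound each epoch's regret through \pref{lem:bound R} plus the per-epoch guarantee of $\frA$, and collapse the sum using the monotonicity of $\gamma_0(B)/B$, $\gamma_1(B)/B$, and $U(B)$. The only cosmetic difference is that you aggregate the square-root terms via Cauchy--Schwarz, whereas the paper simply bounds each epoch's contribution by the final-epoch value and absorbs the logarithmic number of epochs into $\tilO{\cdot}$.
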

\begin{proof}
	We divide the learning process into epochs indexed by $\phi$ based on the update of $B$, so that $B_1=B$ (the input value) and $B_{\phi+1}=2B_{\phi}$.
	Let $\phistar=\min_{\phi}\{B_{\phi}\geq \B\}$.
	Define the regret in epoch $\phi$ as $\bar{R}_{\phi}=C_{\phi} - \sum_{k\in\calK_{\phi}}\optV(s_1^{\phi,k})$, where $C_{\phi}$ is the total costs suffered in epoch $\phi$, $\calK_{\phi}$ is the set of episodes overlapped with epoch $\phi$, and $s^{\phi,k}_1$ is the initial state in episode $k$ and epoch $\phi$ (note that an episode can overlap with multiple epochs).
	Clearly, $\sum_{\phi}|\calK_{\phi}|\leq K + \phistar \leq K + \bigo{\log_2\B}$.
	Note that $\frA$ satisfies the assumptions in \pref{lem:bound x}, since no anomaly will be detected when $B\geq\B$.
	Thus in epoch $\phistar$, no new epoch will be started by \pref{lem:bound x}.
	Moreover, by \pref{lem:bound R} and $B_{\phistar}\leq 2\B$, the regret is bounded by:
	\begin{align*}
		\bar{R}_{\phistar} = \tilO{ \gamma_0(\B) + \gamma_1(\B)\sqrt{K + U(\B)} + \B U(\B) } = \tilO{ \gamma_0(\B) + \gamma_1(\B)\sqrt{K} + \gamma_1(\B)^2/\B + \B H }.
	\end{align*}
	For $\phi<\phistar$, by the conditions of starting a new epoch, the number of intervals that does not reach the goal is upper bounded by $U(B_{\phi})$ and the total number of intervals in epoch $\phi$ is upper bounded by $K+U(B_{\phi})$.
	Thus by \pref{lem:bound R} and the guarantee of $\frA$,
	\begin{align*}
		\bar{R}_{\phi} = \tilO{ \gamma_0(B_{\phi}) + \gamma_1(B_{\phi})\sqrt{K + U(B_{\phi})} + \B U(B_{\phi}) } = \tilO{ \gamma_0(\B) + \gamma_1(\B)\sqrt{K} + \gamma_1(\B)^2/\B + \B H },
	\end{align*}
	where the last equality is by the fact that $\gamma_0(B), \gamma_1(B)$ and $U(B)$ are non-decreasing w.r.t $B$.
	Thus,
	%Thus, $R_K \leq \sum_{\phi}\bar{R}_{\phi} +  -  = \tilO{ \gamma_0 + \gamma_1\sqrt{K} + \gamma_1^2/\B }$.
	\begin{align*}
		R_K &= \sum_{\phi}C_{\phi} - \sumk \optV(\sinit) = \sum_{\phi}\bar{R}_{\phi} + \sum_{\phi}\sum_{k\in\calK_{\phi}}\optV(s_1^{\phi,k}) - \sumk\optV(\sinit)\\
		&= \tilO{ \gamma_0(\B) + \gamma_1(\B)\sqrt{K} + \gamma_1(\B)^2/\B + \B H}.
	\end{align*}
\end{proof}

\begin{theorem}
	\label{thm:efficient pf}
	Applying \pref{alg:fha-pf} with \pref{alg:FH-SSP} as $\frA$ to the linear SSP problem ensures $R_K=\tilo{\sqrt{d^3\B^3 K/\cmin} + d^3\B^2/\cmin}$ with probability at least $1-4\delta$.
\end{theorem}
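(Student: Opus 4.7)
The plan is to instantiate the black-box reduction \pref{thm:pf} using $\frA = $ \pref{alg:FH-SSP} run with zero terminal cost $c_f \equiv 0$, input budget parameter $B$, and horizon $H = \lceil (10B/\cmin)\ln(8BK) \rceil$. The natural anomaly-detection rule is to halt whenever the upper truncation $[\hatQ^m_h(s,a)]_{[0,B]}$ is active on the played action, i.e.\ whenever $\hatQ^m_h(s^m_h, a^m_h) > B$; this is precisely the stopping condition that defines $\overM$ in \pref{lem:efficient stop}, so $\overM'=\overM$ and no additional machinery is needed.

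The first substep is to extract the coefficients $\gamma_0,\gamma_1$ demanded by \pref{thm:pf}. Since $c_f\equiv 0$ trivially satisfies $\max_s c_f(s)\leq H$, \pref{lem:efficient stop} yields, with probability at least $1-2\delta$, $\tilR_{M'}=\tilO{\sqrt{d^3B^2HM'}+d^2BH}$ for every $M'\leq\overM$. Substituting $H=\tilo{B/\cmin}$, one reads off $\gamma_1(B)=\tilo{\sqrt{d^3B^3/\cmin}}$ and $\gamma_0(B)=\tilo{d^2B^2/\cmin}$; the required monotonicity of $\gamma_0(B)/B$ and $\gamma_1(B)/B$ is immediate since $\gamma_0(B)/B=\tilo{d^2B/\cmin}$ and $\gamma_1(B)/B=\tilo{\sqrt{d^3B/\cmin}}$ are both non-decreasing in $B$.

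The second substep is to verify the anomaly-detection hypothesis. For the ``no false alarm'' direction when $B\geq\B$, I would apply \pref{lem:vi error} with $c_f\equiv 0$ to get $V^m_h(s)\leq\optV_h(s)\leq\optV(s)\leq\B$ and therefore $\hatQ^m_h(s,a)\leq\tilc(s,a)+\tilP_{s,a}V^m_{h+1}\leq 1+\B$. A constant slack in the truncation threshold (for instance, truncating at $3B$ instead of $B$, or equivalently asking for $B\geq\B$ up to a small constant) absorbs the additive $1$ and guarantees $\overM=M$, matching the hypothesis ``for any $M'\leq M$'' of \pref{lem:bound x}. For the ``some probability of detection'' direction when $B<\B$, this requirement is mild: even if it fails, the doubling in \pref{alg:fha-pf} is also driven by the bad-interval count $U(B)$, so $B$ grows past $\B$ in at most $\bigO{\log\B}$ epochs regardless.

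With all hypotheses verified, plugging the coefficients into \pref{thm:pf} gives
\begin{align*}
R_K &= \tilO{\gamma_0(\B)+\gamma_1(\B)\sqrt{K}+\gamma_1(\B)^2/\B+\B H} \\
    &= \tilO{d^2\B^2/\cmin+\sqrt{d^3\B^3K/\cmin}+d^3\B^2/\cmin+\B^2/\cmin} \\
    &= \tilO{\sqrt{d^3\B^3K/\cmin}+d^3\B^2/\cmin},
\end{align*}
matching the target bound, with overall failure probability $4\delta$ after rescaling the intrinsic constants. The main obstacle is the calibration of the truncation threshold so that (i) $\hatQ^m_h$ cannot exceed it while $B\geq\B$ and (ii) the monotonicity of $\gamma_i(B)/B$ is preserved; both issues are handled by absorbing a constant slack, after which the result follows directly from the general reduction already developed.
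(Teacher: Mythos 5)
Your proposal is correct and follows essentially the same route as the paper: instantiate \pref{thm:pf} with the coefficients $\gamma_0(B)=d^2BH$, $\gamma_1(B)=\sqrt{d^3B^2H}$ read off from \pref{lem:efficient stop}, identify the anomaly-detection event with the truncation event so that $\overM'=\overM$, and plug in $H=\tilo{B/\cmin}$. Your extra care about the constant slack in the truncation threshold (so that no false alarm occurs when $B\geq\B$) fills in a detail the paper's one-line proof glosses over, but it does not change the argument.
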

\begin{proof}
	Note that $\overM'=\overM$ for \pref{alg:FH-SSP}, and \pref{lem:efficient stop} ensures that \pref{alg:FH-SSP} satisfies assumptions of \pref{thm:pf} with $\gamma_0(B)=d^2BH$ and $\gamma_1(B)=\sqrt{d^3B^2H}$, where $H=\ceil{\frac{10B}{\cmin}\ln(8BK)}$.
	%It is also straightforward to verify that $\gamma_0(B)$, $\gamma_1(B)$ and $U(B)$ are non-decreasing w.r.t $B$.
	Then by \pref{thm:pf}, we have: $R_K=\tilo{\sqrt{d^3\B^3 K/\cmin} + d^3\B^2/\cmin}$.
\end{proof}

\begin{remark}
	Comparing the bound achieved by \pref{thm:efficient pf} with that of \pref{thm:efficient}, we see that $\frac{\B}{\cmin}$ is in place of $\T$, making it a worse bound since $\T\leq\frac{\B}{\cmin}$.
	Previous works in SSP~\citep{cohen2021minimax,tarbouriech2021stochastic,chen2021implicit} suggest that algorithms that obtain a bound with dependency on $\frac{\B}{\cmin}$ is easier to be made parameter-free compared to those with dependency on $\T$.
Our findings in this section are consistent with that in previous works.
\end{remark}

\subsection{Horizon-Free Regret in the Tabular Setting with Finite-Horizon Approximation}
\label{app:hf tabular}
Here we present a finite-horizon algorithm (\pref{alg:mvp}) that achieves $\tilR_m=\tilo{\B\sqrt{SAm} + \B S^2A}$ and thus gives $R_K=\tilo{\B\sqrt{SAK} + \B S^2A}$ when combining with \pref{cor:fha}.
For simplicity we assume that the cost function is known.
We can think of \pref{alg:mvp} as a variant of EB-SSP, which is applied on a finite-horizon MDP with state space $\calS\times[H]$ and the transition is shared across layers.
Note that due to the loop-free structure of the MDP, the value iteration converges in one sweep.
Thus, skewing the empirical transition as in \citep{tarbouriech2021stochastic} is unnecessary.
Then by the analysis of EB-SSP and the fact that transition data is shared across layers, we obtain the same regret guarantee $\tilR_m=\tilo{\B\sqrt{SAm} + \B S^2A}$ (it is not hard to see that the algorithm achieves anytime regret since its updates on parameters are independent of $K$).

\begin{algorithm}[t]
	\caption{MVP+}
	\label{alg:mvp}
	\textbf{Input:} an estimate $B$ such that $B\geq\B$.
	
	\textbf{Initialize:} $n(s, a), n(s, a, s'), Q_h(s, a), V_h(s), V_{H+1}(s) = 0$ for $(s, a)\in\calS_+\times\calA$, $s'\in\calS_+$, $h\in[H]$.
	
	%\textbf{Initialize:} $s_1\leftarrow \sinit$, $t\leftarrow 1$, $k\leftarrow 1$.
	
	\For{$k = 1,\ldots,K$}{
		\For{$h=1,\ldots,H$}{
			Take action $a_t=\argmin_aQ_h(s_t, a)$, incur cost $\tilc(s_t, a_t)$ and transit to $s'_t\sim \tilP_{s_t, a_t}$.
		
			$n(s_t, a_t)\leftarrow n(s_t, a_t) + 1$, $n(s_t, a_t, s'_t)\leftarrow n(s_t, a_t, s'_t) + 1$.
			
			\If{$n(s, a) = 2^j$ for some $j\in\fN$}{
				\For{$h=H,\ldots,1$}{
					\For{$(s, a)\in\SA$}{
						$\iota_{s, a}\leftarrow 20\ln\frac{2SAn(s, a)}{\delta}$, $\P_{s, a}(s')\leftarrow \frac{n(s, a, s')}{\max\{1, n(s, a)\}}$ for all $s'\in\calS_+$.
					
						$b_h(s, a)\leftarrow \max\cbr{ 7\sqrt{\frac{\fV(\P_{s, a}, V_{h+1})\iota_{s, a}}{\max\{1, n(s, a)\}}}, \frac{49B\iota_{s, a}}{\max\{1, n(s, a)\}}}$.
				
						$Q_h(s, a)\leftarrow \max\{0, \tilc(s, a) + \P_{s, a}V_{h+1} - b_h(s, a)\}$.
			
						$V_h(s)=\argmin_aQ_h(s, a)$.
					}
				}
			}
		}
	}
\end{algorithm}

\subsection{Application to Linear Mixture MDP}
\label{app:lmmdp}

\begin{algorithm}[t]
	\caption{UCRL-VTR-SSP}
	\label{alg:vtr}
	\textbf{Initialize:} $\lambda =1$, $\hatSigma_1,\tilSigma_1 = \lambda I$, $\hatb_1,\tilb_1,\hattheta_1,\tiltheta_1 = \mathbf{0}$.
	
	\textbf{Define:} $\hatbeta_m = 8\sqrt{d\ln(1 + dmH/\lambda)\ln(4m^2H^2/\delta)} + 4\sqrt{d}\ln(4m^2H^2/\delta) + \sqrt{\lambda d}$.
	
	\textbf{Define:} $\tilbeta_m= 72\B^2\sqrt{d\ln(1 + 81dmH\B^4/\lambda)\ln(4m^2H^2/\delta)} + 36\B^2\ln(4m^2H^2/\delta) + \sqrt{\lambda d}$.
	
	\textbf{Define:} $\cbeta_m = 8d\sqrt{\ln(1 + dmH/\lambda)\ln(4m^2H^2/\delta)} + 4\sqrt{d}\ln(4m^2H^2/\delta) + \sqrt{\lambda d}$.
	
	\For{$m=1,\ldots,M$}{
		\For{$h=H,\ldots,1$}{
			$Q^m_h(\cdot, \cdot) = \tilc(\cdot, \cdot) + \inner{\hattheta_m}{\phi_{V^m_{h+1}}(\cdot, \cdot)} - \hatbeta_m\norm{\phi_{V^m_{h+1}}(\cdot, \cdot)}_{\hatSigma_m^{-1}}$, where $V^m_{H+1}(s)=2\B\Ind\{s \neq g\}$.
			
			$V^m_h(\cdot) = \min_a[Q^m_h(\cdot, a)]_{[0,3\B]}$.
		}
		
		\For{$h=1,\ldots,H$}{
			Take action $a^m_h=\argmin_aQ^m_h(s^m_h, a)$, suffer cost $c^m_h$, and transit to next state $s^m_{h+1}$.
			
			$\nu^m_h = \sbr{\inner{\phi_{(V^m_{h+1})^2}(s^m_h, a^m_h)}{\tiltheta_m}}_{[0, 9\B^2]} - \sbr{\inner{\phi_{V^m_{h+1}}(s^m_h, a^m_h)}{\hattheta_m}}_{[0, 3\B]}^2$.
			
			$E^m_h = \min\cbr{9\B^2, \tilbeta_m\norm{\phi_{(V^m_{h+1})^2}(s^m_h, a^m_h)}_{\tilSigma^{-1}_m}} + \min\cbr{9\B^2, 6\B\cbeta_m\norm{\phi_{V^m_{h+1}}(s^m_h, a^m_h)}_{\hatSigma_m^{-1}} }$.
			
			$\barsigma^m_h = \sqrt{\max\{9\B^2/d, \nu^m_h + E^m_h\}}$.
		}
		
		$\hatSigma_{m+1} = \hatSigma_m + \sumh (\barsigma^m_h)^{-2}\phi_{V^m_{h+1}}(s^m_h, a^m_h)\phi_{V^m_{h+1}}(s^m_h, a^m_h)^{\top}$.
		
		$\tilSigma_{m+1} = \tilSigma_m + \sumh \phi_{(V^m_{h+1})^2}(s^m_h, a^m_h)\phi_{(V^m_{h+1})^2}(s^m_h, a^m_h)^{\top}$.
		
		$\hatb_{m+1} = \hatb_m + \sumh (\barsigma^m_h)^{-2}V^m_{h+1}(s^m_{h+1})\phi_{V^m_{h+1}}(s^m_h, a^m_h)$.
		
		$\tilb_{m+1} = \tilb_m + \sumh V^m_{h+1}(s^m_{h+1})^2\phi_{(V^m_{h+1})^2}(s^m_h, a^m_h)$.
		
		$\hattheta_{m+1} = (\hatSigma_{m+1})^{-1}\hatb_{m+1}$, $\tiltheta_{m+1}\leftarrow \tilSigma_{m+1}^{-1}\tilb_{m+1}$.
	}
\end{algorithm}

In this section, we provide a direct application of our finite-horizon approximation to the linear mixture MDP setting.
We first introduce the problem setting of linear mixture SSP following \citep{min2021learning}.
\begin{assumption}[Linear Mixture SSP]
	The number of states and actions are finite: $|\SA|<\infty$.
	For some $d\geq 2$, there exist a known cost function $c:\SA\rightarrow[0, 1]$, a known feature map $\phi:\SA\times\calS_+\rightarrow \fR^d$, and an unknown vector $\thetastar\in\fR^d$ with $\norm{\thetastar}_2\leq \sqrt{d}$, such that:
\begin{itemize}
	\item for any $(s, a), s'\in\SA\times\calS^+$, we have $P_{s, a}(s')=\inner{\phi(s'|s, a)}{\thetastar}$;
	\item for any bounded function $F:\calS\rightarrow [0, 1]$, we have $\norm{\phi_F(s, a)}_2\leq \sqrt{d}$, where $\phi_F(s, a)=\sum_{s'}\phi(s'|s, a)F(s')\in\fR^d$.
\end{itemize}	
\end{assumption}
We also assume $\B$ is known and $\cmin>0$. 
Define $\tilc(s, a)=c(s, a)\Ind\{s\neq g\}$, $\tilP=\{P_{s, a}\}_{(s, a)\in\SA}\cup\{P_{g, a}\}_{a\in\calA}$ with $P_{g, a}(s')=\Ind\{s'=g\}$ as before, and $\phi(s'|g, a)=\Ind\{s'=g\}\sum_{s''}\phi(s''|\sinit, a)$.
Note that by the definitions above, $\tilP_{s, a}F = \inner{\phi_F(s, a)}{\thetastar}$.
Also define total costs $C_{M'}=\summp\sumh c^m_h$ for any $M'\in\fN_+$
With our approximation scheme, it suffices to provide a finite-horizon algorithm.
We start by stating the regret guarantee of the proposed finite-horizon algorithm (\pref{alg:vtr}).
\begin{theorem}
	\label{thm:vtr}
	\pref{alg:vtr} ensures $\tilR_{M'}=\tilo{ \B\sqrt{dM'H} + \B d\sqrt{M'} + \B d^2H + \B d^{2.5} }$ for any $M'\in\fN_+$ with probability at least $1-5\delta$.
\end{theorem}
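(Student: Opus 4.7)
}
My plan is to follow the variance-aware UCRL-VTR template (as in~\citep{zhou2021nearly,min2021learning}) but adapted to the finite-horizon MDP $\tilcalM$ with terminal cost $c_f(s)=2\B\Ind\{s\neq g\}$, so that all value functions $V^m_h$ are bounded by $3\B$. The argument proceeds in four stages: (i) concentration of the three estimators, (ii) optimism, (iii) a cost-variance decomposition of the per-interval regret, and (iv) summation via the elliptical potential lemma followed by solving a quadratic in $\sqrt{\tilR_{M'}}$.

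First I would establish three high-probability confidence events, each via a self-normalized concentration inequality. Using the Bernstein-type self-normalized bound of~\citep{zhou2021nearly} on the variance-weighted regression targets $V^m_{h+1}(s^m_{h+1})$, I would show $\norm{\hattheta_m - \thetastar}_{\hatSigma_m}\leq \hatbeta_m$; a Hoeffding-type version (without variance weighting but with the $3\B$ range) gives $\norm{\hattheta_m - \thetastar}_{\hatSigma_m}\leq \cbeta_m$ up to the $\B$ factor, which is used only inside $E^m_h$; and a standard self-normalized bound on the second-moment regression targets $V^m_{h+1}(s^m_{h+1})^2$ (bounded by $9\B^2$) gives $\norm{\tiltheta_m - \thetastar}_{\tilSigma_m}\leq \tilbeta_m$. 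Combining these confidence sets with the definitions of $\nu^m_h$ and $E^m_h$ yields the key deterministic inequality $\fV(\tilP^m_h, V^m_{h+1}) \leq (\barsigma^m_h)^2 \leq \fV(\tilP^m_h, V^m_{h+1}) + 2E^m_h$, so the variance weights are valid upper estimates of the true conditional variance. Optimism $V^m_h(s)\leq \optV_h(s)$ then follows by a standard backward induction using $V^m_{H+1}\equiv \optV_{H+1}=c_f$ and the bonus $\hatbeta_m\norm{\phi_{V^m_{h+1}}}_{\hatSigma_m^{-1}}$.

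Next I would do the regret decomposition. Define $\delta^m_h=V^m_h(s^m_h)-V^{\pi^m}_h(s^m_h)$; via optimism and the Bellman equation I can write, for every interval $m$ surviving the truncation to $[0,3\B]$, the per-interval regret as a sum of three terms: (a) $\sum_h\inner{\phi_{V^m_{h+1}}(s^m_h,a^m_h)}{\hattheta_m-\thetastar}$, controlled by $\hatbeta_m\norm{\phi_{V^m_{h+1}}(s^m_h,a^m_h)}_{\hatSigma_m^{-1}}$; (b) the bonus itself; and (c) a martingale difference $(\Ind_{s^m_{h+1}}-\tilP^m_h)V^m_{h+1}$, handled by Azuma/Freedman. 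The sum of (a)+(b) over $(m,h)$ is then bounded by the variance-weighted elliptical potential lemma: $\sum_{m,h}\min\{1,\norm{\phi_{V^m_{h+1}}}_{\hatSigma_m^{-1}}\}\leq \sqrt{2dM'H\ln(1+M'H/\lambda)\cdot \sum_{m,h}(\barsigma^m_h)^2}$, yielding a factor $\hatbeta_m\sqrt{d\sum_{m,h}(\barsigma^m_h)^2}$. The $E^m_h$ terms are bounded by an analogous elliptical potential applied to $\tilSigma_m$ with the $\B^2$ range, giving the $\B d^2H + \B d^{2.5}$ lower-order terms.

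The main obstacle, and the step I would spend the most care on, is converting $\sum_{m,h}\fV(\tilP^m_h,V^m_{h+1})$ into a horizon-independent quantity. For this I would use the total-variance/law-of-total-variance lemma: on the good event, $\sum_{m,h}\fV(\tilP^m_h,V^m_{h+1}) \leq 2\B \cdot \tilR_{M'} + O(\B^2 M' + \B^2\sqrt{M'H\ln(1/\delta)})$ — i.e.\ the aggregate one-step value variance is controlled by $\B$ times the total realized cost, plus martingale slack, which shaves the $\sqrt{H}$ inside the leading term. Substituting this bound back into the variance-weighted elliptical potential gives a self-bounding inequality of the form $\tilR_{M'}\leq \tilo{\hatbeta_m\sqrt{d\B\tilR_{M'}} + \hatbeta_m\sqrt{d\B^2 M'} + \text{lower order}}$, which I would solve via \pref{lem:quad with log} to obtain $\tilR_{M'} = \tilo{\B\sqrt{dM'H} + \B d\sqrt{M'} + \B d^2 H + \B d^{2.5}}$ after plugging in $\hatbeta_m=\tilo{\sqrt{d}}$. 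Taking a union bound over the three confidence events, the optimism event, and the Freedman event yields the $1-5\delta$ probability and completes the argument.
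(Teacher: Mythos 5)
Your proposal follows essentially the same route as the paper's proof: three Bernstein-type self-normalized confidence events (\pref{lem:conf bound}), optimism by backward induction (\pref{lem:vi}), a martingale-plus-bonus regret decomposition summed via the variance-weighted elliptical potential, and the total-variance bound $\sum_{m,h}\fV(P^m_h,V^m_{h+1})=\tilo{\B^2M'+\B C_{M'}}$ (\pref{lem:sum var V_i}) feeding a self-bounding quadratic. One bookkeeping slip worth fixing: your ``key deterministic inequality'' drops the floor $9\B^2/d$ in the definition of $(\barsigma^m_h)^2$ (which is needed to keep the whitened features and noise bounded for the Bernstein bound), and correspondingly your stated self-bounding inequality omits the $\hatbeta_m\B\sqrt{M'H}$ contribution from $\sqrt{\sum_{m,h}(\barsigma^m_h)^2}\geq \B\sqrt{M'H/d}$, which is precisely the source of the leading $\B\sqrt{dM'H}$ term you (correctly) report in the final bound.
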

Combining \pref{alg:vtr} with our finite-horizon approximation, we get the following regret guarantee on linear mixture SSP.
\begin{theorem}
	Applying \pref{alg:fha} with $H=\ceil{4\T\ln (4K)}$ and \pref{alg:vtr} as $\frA$ to the linear mixture SSP problem ensures $R_K=\tilo{\B\sqrt{d\T K} + \B d\sqrt{K} + \B d^2\T + \B d^{2.5}}$ with probability at least $1-5\delta$.
\end{theorem}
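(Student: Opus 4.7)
The proof is an immediate composition: combine the regret guarantee for the finite-horizon linear mixture MDP algorithm (\pref{thm:vtr}) with the model-agnostic reduction bound (\pref{cor:fha}) for the finite-horizon approximation of SSP.

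First, I would read off the coefficients $\gamma_0$ and $\gamma_1$ from \pref{thm:vtr}. That theorem states that \pref{alg:vtr} guarantees $\tilR_{M'} = \tilo{\B\sqrt{dM'H} + \B d\sqrt{M'} + \B d^2H + \B d^{2.5}}$ for all $M' \in \fN_+$ (with probability at least $1-5\delta$). Matching this against the form $\tilR_m = \tilO{\gamma_0 + \gamma_1\sqrt{m}}$ required by \pref{cor:fha}, the natural choice is
\begin{align*}
\gamma_1 &= \B\sqrt{dH} + \B d, \\
\gamma_0 &= \B d^2 H + \B d^{2.5}.
\end{align*}
Both are independent of $m$, as needed.

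Next I would plug these into \pref{cor:fha}, which yields
$R_K = \tilo{ \gamma_1\sqrt{K} + \gamma_1^2/\B + \gamma_0 + \B }$.
With $H = \ceil{4\T\ln(4K)} = \tilo{\T}$, the term $\gamma_1\sqrt{K} = \tilo{\B\sqrt{d\T K} + \B d\sqrt{K}}$ contributes the dominant $\sqrt{K}$ terms. Expanding $\gamma_1^2 = \tilo{\B^2 dH + \B^2 d^2}$ by $(a+b)^2 \leq 2a^2 + 2b^2$, we have $\gamma_1^2/\B = \tilo{\B d\T + \B d^2}$, which is absorbed into the lower-order terms. Finally $\gamma_0 = \tilo{\B d^2\T + \B d^{2.5}}$ and the standalone $\B$ term is dominated by $\B d^{2.5}$ since $d \geq 2$. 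Summing everything and dropping dominated terms gives
\[
R_K = \tilo{\B\sqrt{d\T K} + \B d\sqrt{K} + \B d^2\T + \B d^{2.5}},
\]
matching the claimed bound. The probability of success is inherited from \pref{thm:vtr}: $1-5\delta$.

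There is no real obstacle in this step; the heavy lifting has already been done in \pref{cor:fha} (which reduces SSP regret to a finite-horizon regret guarantee in a black-box fashion that requires no structural assumptions on the MDP) and in \pref{thm:vtr} (which provides the finite-horizon guarantee for linear mixture MDPs). The present theorem is essentially a one-line corollary obtained by substitution, and verifying the claim reduces to bookkeeping with logarithmic factors and noting that dominated terms can be folded in using $d\geq 2$.
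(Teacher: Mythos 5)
Your proposal is correct and matches the paper's own proof, which likewise reads off $\gamma_1=\B\sqrt{dH}+\B d$ and $\gamma_0$ of order $\B d^2H$ (plus the constant $\B d^{2.5}$ term) from \pref{thm:vtr} and substitutes into \pref{cor:fha}. The bookkeeping with $H=\tilo{\T}$ and the absorption of $\gamma_1^2/\B$ into the lower-order terms are exactly as in the paper.
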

\begin{proof}
	This directly follows from \pref{thm:vtr} and \pref{cor:fha} with $\gamma_0=\B d^2H$ and $\gamma_1= \B\sqrt{dH} + \B d$.
\end{proof}
Note that our bound strictly improves over that of \citep{min2021learning}, and it is minimax optimal when $d\geq\T$.
Now we introduce the proposed finite-horizon algorithm, which is a variant of \citep[Algorithm 2]{zhou2021nearly}.
The high level idea is to construct Bernstein-style confidence sets on transition function and then compute value function estimate through empirical value iteration with bonus.
We summarize the ideas in \pref{alg:vtr}.
Before proving \pref{thm:vtr}, we need the following key lemma regarding the confidence sets on transition function.

\begin{lemma}
	\label{lem:conf bound}
	With probability at least $1-3\delta$, we have for all $m\in\fN_+$, $\norm{\thetastar - \hattheta_m}_{\hatSigma_m}\leq\hatbeta_m$ and $\abr{ \nu^m_h - \fV(P^m_h, V^m_{h+1}) } \leq E^m_h$.
\end{lemma}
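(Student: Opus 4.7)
The bound decomposes into three concentration events that interlock in a specific order, corresponding to the three factors of $\delta$. Let $\calE_1$ be the event from applying a self-normalized Hoeffding-type bound (e.g., Abbasi-Yadkori et al., 2011) to the least-squares estimator $\tiltheta_m = \tilSigma_m^{-1}\tilb_m$. The relevant martingale differences are $V^m_{h+1}(s^m_{h+1})^2 - \tilP^m_h (V^m_{h+1})^2$, which have magnitude at most $9\B^2$ since $V^m_{h+1}$ is clipped to $[0, 3\B]$, and the feature $\phi_{(V^m_{h+1})^2}$ has $L_2$-norm at most $3\sqrt{d}\B^2$. Tracking constants and taking a union bound over $m$ via the self-normalized form yields $\norm{\thetastar - \tiltheta_m}_{\tilSigma_m} \leq \tilbeta_m$ with probability at least $1-\delta$.

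Let $\calE_2$ be the corresponding Hoeffding event for the variance-weighted regression defining $\hattheta_m$. The floor $\barsigma^m_h \geq 3\B/\sqrt{d}$ ensures that each weighted residual $(\barsigma^m_h)^{-1}(V^m_{h+1}(s^m_{h+1}) - \tilP^m_h V^m_{h+1})$ is bounded by $\sqrt{d}$, which, plugged into the subgaussian version of the self-normalized bound, delivers the auxiliary inequality $\norm{\thetastar - \hattheta_m}_{\hatSigma_m} \leq \cbeta_m$; this explains the extra $\sqrt{d}$ factor in $\cbeta_m$ relative to $\hatbeta_m$. On $\calE_1 \cap \calE_2$ the second claim of the lemma follows: expand $\fV(P^m_h, V^m_{h+1}) = \inner{\phi_{(V^m_{h+1})^2}(s^m_h, a^m_h)}{\thetastar} - \inner{\phi_{V^m_{h+1}}(s^m_h, a^m_h)}{\thetastar}^2$, observe that the clippings $[0, 9\B^2]$ and $[0, 3\B]$ in the definition of $\nu^m_h$ can only bring the estimators closer to these true values, and split the error into a linear part (bounded by $\tilbeta_m \norm{\phi_{(V^m_{h+1})^2}(s^m_h, a^m_h)}_{\tilSigma_m^{-1}}$ via Cauchy--Schwarz with $\calE_1$) plus a quadratic part handled by $a^2 - b^2 = (a-b)(a+b)$ with $|a + b| \leq 6\B$ and $|a - b| \leq \cbeta_m \norm{\phi_{V^m_{h+1}}(s^m_h, a^m_h)}_{\hatSigma_m^{-1}}$ from $\calE_2$. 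This matches $E^m_h$ term by term.

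Finally, let $\calE_3$ be the self-normalized \emph{Bernstein} event (e.g., Theorem 4.1 of Zhou et al., 2021) for $\hattheta_m$. The key prerequisite, that the conditional variance of each weighted residual is at most $1$, follows from $\barsigma^m_h \geq \sqrt{\fV(P^m_h, V^m_{h+1})}$; this is guaranteed on $\calE_1 \cap \calE_2$ by the construction $\barsigma^m_h = \sqrt{\max\{9\B^2/d, \nu^m_h + E^m_h\}}$ together with the just-proved bound $|\nu^m_h - \fV(P^m_h, V^m_{h+1})| \leq E^m_h$. With this variance bound and the residual magnitude bound $\sqrt{d}$, Bernstein gives $\norm{\thetastar - \hattheta_m}_{\hatSigma_m} \leq \hatbeta_m$ with probability at least $1-\delta$. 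A union bound over $\calE_1, \calE_2, \calE_3$ finishes the proof.

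\textbf{Main obstacle.} The subtle point is the apparent circularity: the Bernstein bound for $\hattheta_m$ relies on $\barsigma^m_h$ upper-bounding the true standard deviation, the variance-estimation error uses the Hoeffding bound on $\hattheta_m$ (the $\cbeta_m$ piece), and that bound in turn uses the floor of $\barsigma^m_h$. The above ordering --- Hoeffding for $\tiltheta_m$, Hoeffding for $\hattheta_m$, then variance error, then Bernstein for $\hattheta_m$ --- breaks the loop precisely because the Hoeffding bounds do not need any variance control, and is the reason three distinct confidence radii $\hatbeta_m, \tilbeta_m, \cbeta_m$ appear in the algorithm.
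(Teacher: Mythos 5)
Your proposal follows essentially the same route as the paper: two crude (Hoeffding-style, i.e., trivial-variance) applications of the self-normalized concentration bound for $\tiltheta_m$ and $\hattheta_m$, then the variance-estimation error via the $a^2-b^2$ split, then the genuine Bernstein application for $\hatbeta_m$ using $\barsigma^m_h \geq \sqrt{\fV(P^m_h,V^m_{h+1})}$ on the conditioned event, with a union bound over the three events --- and your resolution of the apparent circularity is exactly the paper's. The only nit is a constant: $\norm{\phi_{(V^m_{h+1})^2}}_2 \leq 9\B^2\sqrt{d}$ (not $3\sqrt{d}\B^2$), since $(V^m_{h+1})^2 \leq 9\B^2$, which is what produces the $81\B^4$ inside the logarithm of $\tilbeta_m$.
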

\begin{proof}
	For the first statement, we first prove that $\norm{\thetastar - \hattheta_m}_{\hatSigma_m} \leq \cbeta_m$ and $\norm{\thetastar - \tiltheta_m}_{\tilSigma_m} \leq \tilbeta_m$ for $m\in\fN_+$.
	%We first show that with probability at least $1-\delta$, $\thetastar\in\ccalC_m$ for all $m$.
	We adopt the indexing by $t$ in \pref{sec:pre}: for a given time step $t=(m-1)H+h$ that corresponds to $(m, h)$, that is, the $h$-th step in the $m$-th interval, define $\barsigma_t=\barsigma^m_h$, $V_t=V^m_{h+1}$, $\nu_t=\nu^m_h$, and $E_t=E^m_h$.
	We apply \pref{lem:vector bernstein} with $\calF_t=\sigma(s_{1:t}, a_{1:t})$, $x_t=\barsigma_t^{-1}\phi_{V_t}(s_t, a_t)$, $y_t = \barsigma_t^{-1}V_t(s'_t)$, $\mustar=\thetastar$, $\eta_t=\barsigma_t^{-1}\rbr{V_t(s'_t) - \inner{\phi_{V_t}(s_t, a_t)}{\thetastar}}$.
	Then, we have $Z_t=\hatSigma_t, \mu_t=\hattheta_t$, where $\hatSigma_t=\lambda I + \sum_{i=1}^t\barsigma_i^{-2}\phi_{V_i}(s_i, a_i)\phi_{V_i}(s_i, a_i)^{\top}$, $\hattheta_t=\hatSigma_t^{-1}\hatb_t$, and $\hatb_t = \sum_{i=1}^t\barsigma_i^{-2}\phi_{V_i}(s_i, a_i)V_i(s'_i)$.
	Moreover,
	\begin{align*}
		|\eta_t|\leq R = \sqrt{d},\; \E[\eta_t^2|\calG_t] \leq \sigma^2 = d,\; \norm{x_t}_2 \leq L = d,\; \norm{\mustar}_2 = \norm{\thetastar}_2 \leq \sqrt{d}.
	\end{align*}
	Therefore, with probability at least $1-\delta$, for any $t=(m-1)H$ for some $m\in\fN_+$, which corresponds to $(m-1, H)$:
	\begin{align*}
		\norm{\hattheta_m - \thetastar}_{\hatSigma_m} \leq 8d\sqrt{\ln(1 +d^2t/(d\lambda))\ln(4t^2/\delta)} + 4\sqrt{d}\ln(4t^2/\delta) + \sqrt{\lambda d} \leq \cbeta_m.
	\end{align*}
	Next, we apply \pref{lem:vector bernstein} with $\calF_t=\sigma(s_{1:t}, a_{1:t})$, $x_t=\phi_{V^2_t}(s_t, a_t)$, $y_t = V_t^2(s'_t)$, $\mustar=\thetastar$, $\eta_t=V_t^2(s'_t) - \inner{\phi_{V_t^2}(s_t, a_t)}{\thetastar}$.
	Then, we have $Z_t = \tilSigma_t, \mu_t=\tiltheta_t$, where $\tilSigma_t=\lambda I + \sum_{i=1}^t\phi_{V_i^2}(s_i, a_i)\phi_{V_i^2}(s_i, a_i)^{\top}$, $\tiltheta_t=\tilSigma_t^{-1}\tilb_t$, and $\tilb_t = \sum_{i=1}^t\phi_{V^2_i}(s_i, a_i)V^2_i(s'_i)$.
	Moreover,
	\begin{align*}
		|\eta_t|\leq R = 9\B^2,\; \E[\eta_t^2|\calG_t] \leq \sigma^2 = 81\B^4,\; \norm{x_t}_2 \leq L = 9\B^2\sqrt{d},\; \norm{\mustar}_2 = \norm{\thetastar}_2 \leq \sqrt{d}.
	\end{align*}
	Therefore, with probability at least $1-\delta$, for any $t=(m-1)H$ for some $m\in\fN_+$, which corresponds to $(m-1, H)$:
	\begin{align*}
		\norm{\tiltheta_m-\thetastar}_{\tilSigma_m} \leq 72\B^2\sqrt{d\ln(1 + 81t\B^4d/(d\lambda))\ln(4t^2/\delta)} + 36\B^2\ln(4t^2/\delta) + \sqrt{\lambda d} \leq \tilbeta_m.
	\end{align*}
	Conditioned on the event $\calC = \cbr{\norm{\thetastar - \hattheta_m}_{\hatSigma_m} \leq \cbeta_m, \norm{\thetastar - \tiltheta_m}_{\tilSigma_m} \leq \tilbeta_m,\forall m\in\fN_+}$, we have for $t$ corresopnding to $(m, h)$:
	\begin{align*}
		&\abr{\nu_t - \fV(P_t, V_t)}\\
		&\leq \abr{\sbr{\inner{\phi_{V_t^2}(s_t, a_t)}{\tiltheta_m}}_{[0, 9\B^2]} - \inner{\phi_{V^2_t}(s_t, a_t)}{\thetastar}} + \abr{\sbr{\inner{\phi_{V_t}(s_t, a_t)}{\hattheta_m}}_{[0, 3\B]}^2 - \inner{\phi_{V_t}(s_t, a_t)}{\thetastar}^2}\\
		&\leq \min\cbr{9\B^2, \abr{\inner{\phi_{V_t^2}(s_t, a_t)}{\tiltheta_m - \thetastar}} } + \min\cbr{9\B^2, 6\B\abr{\inner{\phi_{V_t}(s_t, a_t)}{\hattheta_m - \thetastar}} }\\
		&\leq \min\cbr{9\B^2, \norm{ \phi_{V_t^2}(s_t, a_t) }_{\tilSigma_m^{-1}}\norm{ \tiltheta_m - \thetastar }_{\tilSigma_m} } + \min\cbr{9\B^2, 6\B\norm{\phi_{V_t}(s_t, a_t)}_{\hatSigma_m^{-1}}\norm{\hattheta_m - \thetastar}_{\hatSigma_m} }\\
		&\leq \min\cbr{9\B^2, \tilbeta_m\norm{ \phi_{V_t^2}(s_t, a_t) }_{\tilSigma_m^{-1}}} + \min\cbr{9\B^2, 6\B\cbeta_m\norm{\phi_{V_t}(s_t, a_t)}_{\hatSigma_m^{-1}} } = E_t.
	\end{align*}
	Thus the second statement is proved.
	Now we show that $\norm{\thetastar - \hattheta_m}_{\hatSigma_m}\leq\hatbeta_m$.
	We conditioned on event $\calC$, and apply \pref{lem:vector bernstein} with $\calF_t=\sigma(s_{1:t}, a_{1:t})$, $x_t=\barsigma_t^{-1}\phi_{V_t}(s_t, a_t)$, $y_t = \barsigma_t^{-1}V_t(s'_t)$, $\mustar=\thetastar$, $\eta_t=\barsigma_t^{-1}\rbr{V_t(s'_t) - \inner{\phi_{V_t}(s_t, a_t)}{\thetastar}}$.
	Then, we have $Z_t=\hatSigma_t, \mu_t=\hattheta_t$.
	Moreover, $|\eta_t|\leq R = \sqrt{d}$, $\norm{x_t}_2 \leq L = d$, and for $t$ corresponding to $(m, h)$,
	\begin{align*}
		&\E[\eta_t^2|\calG_t] = \barsigma_t^{-2}\fV(P_t, V_t) \leq \barsigma_t^{-2}(\nu_t + E_t)\leq 1.
		%&\leq \barsigma_t^{-2}\sbr{ \nu_t + \min\cbr{9\B^2, \tilbeta_m\norm{ \phi_{V_t^2}(s_t, a_t) }_{\tilSigma_m^{-1}}} + \min\cbr{9\B^2, 6\B\cbeta_m\norm{\phi_{V_t}(s_t, a_t)}_{\hatSigma_m^{-1}} }} \leq 1.
	\end{align*}
	Therefore, with probability at least $1-\delta$, for any $t=(m-1)H$ for some $m\in\fN_+$, which corresponds to $(m-1, H)$:
	\begin{align*}
		\norm{\hattheta_m-\thetastar}_{\hatSigma_m} \leq 8\sqrt{d\ln(1 +dt/\lambda)\ln(4t^2/\delta)} + 4\sqrt{d}\ln(4t^2/\delta) + \sqrt{\lambda d} \leq \hatbeta_m.
	\end{align*}
	This completes the proof.
\end{proof}

We are now ready to prove \pref{thm:vtr}.

\begin{proof}[\pfref{thm:vtr}]
	We condition on the event of \pref{lem:conf bound}, \pref{lem:vi} and \pref{lem:sum var V_i}, which happens with probability at least $1-4\delta$.
	We decompose the regret as follows: with probability at least $1-\delta$,
	\begin{align*}
		\tilR_{M'} &= \summp\rbr{\sumh c^m_h + c_f(s^m_{H+1}) - \optV_1(s^m_1)} \leq \summp\rbr{\sumh c^m_h + c_f(s^m_{H+1}) - V^m_1(s^m_1)} \tag{\pref{lem:vi}}\\
		&= \summp\sumh\rbr{c^m_h + V^m_{h+1}(s^m_{h+1}) - V^m_h(s^m_h)} \tag{$c_f = V^m_{H+1}$}\\
		&\leq \summp\sumh\rbr{ V^m_{h+1}(s^m_{h+1}) - P^m_hV^m_{h+1} + \inner{\thetastar - \hattheta_m}{\phi_{V^m_{h+1}}(s^m_h, a^m_h)} + \hatbeta_m\norm{\phi_{V^m_{h+1}}(s^m_h, a^m_h)}_{\hatSigma_m^{-1}} } \tag{$V^m_h(s^m_h) \geq Q^m_h(s^m_h, a^m_h) = c^m_h + \inner{\hattheta_m}{\phi_{V^m_{h+1}}(s^m_h, a^m_h)} - \hatbeta_m\norm{\phi_{V^m_{h+1}}(s^m_h, a^m_h)}_{\hatSigma_m^{-1}}$}\\
		&\leq \tilO{\sqrt{\summp\sumh\fV(P^m_h, V^m_{h+1})} + \B + \summp\sumh\hatbeta_m\norm{\phi_{V^m_{h+1}}(s^m_h, a^m_h)}_{\hatSigma_m^{-1}} }. \tag{\pref{lem:anytime strong freedman}, Cauchy-Schwarz inequality, and \pref{lem:conf bound}}
	\end{align*}
	The first term is of order $\tilo{\sqrt{\B^2M' + \B C_{M'}}}$ by \pref{lem:sum var V_i}.
	For the third term, define $\calI=\cbr{(m, h)\in[M']\times[H]: \norm{\phi_{V^m_{h+1}}(s^m_h, a^m_h)/\barsigma^m_h}_{\hatSigma_m^{-1}}\geq 1 }$ and $\widehat{\calI}=\{m\in[M']: \det(\hatSigma_{m+1}) > 2\det(\hatSigma_m)\}$.
	Then,
	\begin{align*}
		&\summp\sumh\hatbeta_m\norm{\phi_{V^m_{h+1}}(s^m_h, a^m_h)}_{\hatSigma_m^{-1}} = \summp\sumh\hatbeta_m\barsigma^m_h\norm{\phi_{V^m_{h+1}}(s^m_h, a^m_h)/\barsigma^m_h}_{\hatSigma_m^{-1}}\\
		&\leq \tilO{\sum_{(m, h)\in\calI}\B d} + \summp\sumh\hatbeta_m\barsigma^m_h\min\cbr{1, \norm{\phi_{V^m_{h+1}}(s^m_h, a^m_h)/\barsigma^m_h}_{\hatSigma_m^{-1}} } \tag{$\hatbeta_m=\tilo{\sqrt{d}}$ and $V^m_{h+1}=\bigo{\B}$}\\
		&\overset{\text{(i)}}{=} \tilO{\B d^2 H + \sum_{m\in\widehat{\calI}} \B dH + \summp\sumh\hatbeta_m\barsigma^m_h\min\cbr{1, \norm{\phi_{V^m_{h+1}}(s^m_h, a^m_h)/\barsigma^m_h}_{\hatSigma_{m+1}^{-1}} } }\\
		&=\tilO{ \B d^2H + \hatbeta_{M'}\sqrt{\summp\sumh(\barsigma^m_h)^2}\sqrt{\summp\sumh \min\cbr{1, \norm{\phi_{V^m_{h+1}}(s^m_h, a^m_h)/\barsigma^m_h}_{\hatSigma_{m+1}^{-1}}^2 } } } \tag{$|\widehat{\calI}|=\tilo{d}$ and Cauchy-Schwarz inequality}\\
		&=\tilO{\B d^2H + d\sqrt{\summp\sumh(\barsigma^m_h)^2}}, \tag{$\hatbeta_{M'}=\tilo{\sqrt{d}}$ and \pref{lem:sum mnorm}}
	\end{align*}
	where in (i) we apply $\hatbeta_m\barsigma^m_h=\tilo{\B d}$, \pref{lem:quad bound}, and:
	\begin{align*}
		|\calI| &= \summp\sumh\Ind\cbr{ \norm{\phi_{V^m_{h+1}}(s^m_h, a^m_h)/\barsigma^m_h}_{\hatSigma_m^{-1}}^2 \geq 1 } \leq \summp\sumh \min\cbr{1, \norm{\phi_{V^m_{h+1}}(s^m_h, a^m_h)/\barsigma^m_h}_{\hatSigma_m^{-1}}^2}\\
		&\leq |\widehat{\calI}|H + \sqrt{2}\summp\sumh \min\cbr{1, \norm{\phi_{V^m_{h+1}}(s^m_h, a^m_h)/\barsigma^m_h}_{\hatSigma_{m+1}^{-1}}^2} \tag{\pref{lem:quad bound}}\\ 
		&= \tilO{dH}. \tag{$|\widehat{\calI}|=\tilo{d}$ and \pref{lem:sum mnorm}}
	\end{align*}
	It remains to bound $\summp\sumh(\barsigma^m_h)^2$.
	Note that
	\begin{align*}
		&\summp\sumh(\barsigma^m_h)^2 \leq \frac{9\B^2M'H}{d} + \summp\sumh(\nu^m_h + E^m_h)\\ 
		&\leq \frac{9\B^2M'H}{d} + \summp\sumh(\fV(P^m_h, V^m_{h+1}) + 2E^m_h) \tag{\pref{lem:conf bound}}\\
		&\leq \frac{9\B^2M'H}{d} + \tilO{\B^2M' + \B C_{M'} + \B^2d\sqrt{M'H} + \B d^{3/2}\sqrt{\summp\sumh(\barsigma^m_h)^2} + \B^2d^2H }. \tag{\pref{lem:sum var V_i} and \pref{lem:sum E_t}}
	\end{align*}
	By \pref{lem:quad with log} and $\sqrt{M'H}\leq M'/d + dH$, we get $\summp\sumh(\barsigma^m_h)^2 = \tilo{ \frac{\B^2M'H}{d} + \B^2M' + \B C_{M'} + \B^2d^3 + \B^2d^2H }$.
	Putting everything together, we get:
	\begin{align*}
		\tilR_{M'} &= \tilO{ \sqrt{\B^2M' + \B C_{M'}} + \B d^2H + d\sqrt{ \frac{\B^2M'H}{d} + \B^2M' + \B C_{M'} + \B^2d^3 + \B^2d^2H } }\\
		&= \tilO{ \B\sqrt{dM'H} + \B d\sqrt{M'} + d\sqrt{\B C_{M'}} + \B d^2H + \B d^{2.5} }.
	\end{align*}
	Now by $\tilR_{M'}=C_{M'}-M'\optV_1(s^m_1)$ and \pref{lem:quad with log}, we get: $C_{M'}=\tilo{\B M'}$.
	Plugging this back, we get $\tilR_{M'}=\tilo{ \B\sqrt{dM'H} + \B d\sqrt{M'} + \B d^2H + \B d^{2.5} }$.
\end{proof}

\begin{lemma}
	\label{lem:vi}
	Conditioned on the event of \pref{lem:conf bound}, $Q^m_h(s, a)\leq \tilc(s, a) + \tilP_{s, a}V^m_{h+1}$ and $V^m_h(s)\leq\optV_h(s)\leq 3\B$.
\end{lemma}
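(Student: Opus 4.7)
The plan is to prove the two inequalities in sequence, where the first serves as the one-step Bellman comparison and the second follows by a standard backward induction on $h$.

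First, I would establish the one-step inequality $Q^m_h(s,a) \leq \tilc(s,a) + \tilP_{s,a}V^m_{h+1}$. By the definition of $Q^m_h$ and the fact that $\tilP_{s,a}V^m_{h+1} = \inner{\thetastar}{\phi_{V^m_{h+1}}(s,a)}$ under the linear mixture assumption, this is equivalent to showing
\begin{align*}
\inner{\hattheta_m - \thetastar}{\phi_{V^m_{h+1}}(s,a)} \leq \hatbeta_m\norm{\phi_{V^m_{h+1}}(s,a)}_{\hatSigma_m^{-1}}.
\end{align*}
This is immediate from Cauchy–Schwarz in the $\hatSigma_m$ norm together with the confidence bound $\norm{\hattheta_m - \thetastar}_{\hatSigma_m}\leq\hatbeta_m$ guaranteed by \pref{lem:conf bound} on the event we are conditioning on.

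Next, I would prove $V^m_h(s)\leq\optV_h(s)$ by backward induction on $h$ from $H+1$ down to $1$, for each fixed interval $m$. The base case $h=H+1$ is immediate since the algorithm sets $V^m_{H+1}(s)=2\B\Ind\{s\neq g\}=c_f(s)=\optV_{H+1}(s)$. For the inductive step, assuming $V^m_{h+1}\leq\optV_{h+1}$ pointwise, monotonicity of $\tilP_{s,a}$ gives $\tilP_{s,a}V^m_{h+1}\leq\tilP_{s,a}\optV_{h+1}$. Combining this with the one-step inequality just proved yields $Q^m_h(s,a)\leq \tilc(s,a)+\tilP_{s,a}\optV_{h+1} = \optQ_h(s,a)$. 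Then, since the truncation $[\cdot]_{[0,3\B]}$ is monotone in its argument and $\optV_h(s)\in[0,3\B]$ (to be justified below), we conclude
\begin{align*}
V^m_h(s) = \min_a[Q^m_h(s,a)]_{[0,3\B]} \leq \min_a[\optQ_h(s,a)]_{[0,3\B]} = \optV_h(s).
\end{align*}

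Finally, I would justify the uniform bound $\optV_h(s)\leq 3\B$ by a hitting-time argument analogous to the one in \pfref{lem:fha}. Since $H\geq 4\T\ln(4K)$, \pref{lem:hitting} implies that executing $\optpi$ from any state $s$ reaches $g$ within the remaining $H-h+1\leq H$ steps with probability at least $1-\frac{1}{2K}$. Therefore $V^{\optpi}_h(s)\leq V^{\optpi}(s) + 2\B\cdot P(\text{not reached in }H\text{ steps}) \leq \B + \frac{\B}{K}\leq 2\B \leq 3\B$, and in particular $\optV_h(s)\leq V^{\optpi}_h(s)\leq 3\B$. The upper bound $V^m_h(s)\leq 3\B$ is enforced directly by the projection in the algorithm, completing the induction. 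No step here is a serious obstacle; the only subtlety is ensuring that the truncation range $[0,3\B]$ is wide enough to include $\optV_h$, which is exactly what the hitting-time bound guarantees.
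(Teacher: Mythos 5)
Your proof is correct and follows essentially the same route as the paper's: Cauchy--Schwarz against the confidence bound of \pref{lem:conf bound} for the one-step inequality, then backward induction with monotonicity of $\tilP_{s,a}$ and of the truncation for optimism. The only (harmless) deviation is the hitting-time argument for $\optV_h(s)\leq 3\B$, which imports the condition $H\geq 4\T\ln(4K)$ that the lemma does not assume; the trivial bound $\optV_h(s)\leq V^{\optpi}_h(s)\leq V^{\optpi}(s)+\max_s c_f(s)\leq \B+2\B$ suffices and is what the paper uses.
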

\begin{proof}
	Note that by \pref{lem:conf bound}:
	\begin{align*}
		&\inner{\hattheta_m}{\phi_{V^m_{h+1}}(s, a)} - \hatbeta_m\norm{\phi_{V^m_{h+1}}(s, a)}_{\hatSigma_m^{-1}} = \tilP_{s, a}V^m_{h+1} + \inner{\hattheta_m - \thetastar}{\phi_{V^m_{h+1}}(s, a)} - \hatbeta_m\norm{\phi_{V^m_{h+1}}(s, a)}_{\hatSigma_m^{-1}}\\
		&\leq \tilP_{s, a}V^m_{h+1} + \norm{\hattheta_m - \thetastar}_{\hatSigma_m}\norm{\phi_{V^m_{h+1}}(s, a)}_{\hatSigma_m^{-1}} - \hatbeta_m\norm{\phi_{V^m_{h+1}}(s, a)}_{\hatSigma_m^{-1}} \leq \tilP_{s, a}V^m_{h+1}.
	\end{align*}
	The first statement then follows from the definition of $Q^m_h$.
	For any $m\in\fN_+$, we prove the second statement by induction on $h=H+1,\ldots,1$.
	The base case $h=H+1$ is clearly true by the definition of $V^m_{H+1}$.
	For $h\leq H$, note that $Q^m_h(s, a)\leq \tilc(s, a) + \tilP_{s, a}V^m_{h+1}\leq c(s, a) + \tilP_{s, a}\optV \leq \optQ(s, a)$ by the induction step and the first statement.
	Thus, $V^m_h(s)\leq\max\{0, \min_aQ^m_h(s, a)\}\leq\optV(s)$.
\end{proof}

\begin{lemma}
	\label{lem:sum var V_i}
	Conditioned on the event of \pref{lem:vi}, with probability at least $1-\delta$, $\summp\sumh\fV(P^m_h, V^m_{h+1}) = \tilO{ \B^2M' + \B C_{M'} }$ for any $M'\in\fN_+$.
\end{lemma}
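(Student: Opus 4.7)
The plan is a law-of-total-variance argument adapted to the optimistic value iterates $V^m_h$. From \pref{lem:vi} we already have (i) the uniform bound $V^m_h(s)\leq \optV_h(s)\leq 3\B$, and (ii) the one-step optimism
$V^m_h(s^m_h)\leq Q^m_h(s^m_h,a^m_h)\leq \tilc(s^m_h,a^m_h)+P^m_hV^m_{h+1}=c^m_h+P^m_hV^m_{h+1}$. Together with $V^m_{H+1}=c_f\in[0,2\B]$, these are the only structural facts I will use.

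First, I will expand $\fV(P^m_h,V^m_{h+1})=P^m_h(V^m_{h+1})^2-(P^m_hV^m_{h+1})^2$ and lower-bound the subtracted square using optimism: squaring $V^m_h(s^m_h)\leq c^m_h+P^m_hV^m_{h+1}$ and using $P^m_hV^m_{h+1}\leq 3\B$ gives
$(P^m_hV^m_{h+1})^2\geq (V^m_h(s^m_h))^2-6\B c^m_h-(c^m_h)^2.$
Next I introduce the martingale differences $Y^m_h=P^m_h(V^m_{h+1})^2-(V^m_{h+1}(s^m_{h+1}))^2$ so that $P^m_h(V^m_{h+1})^2=(V^m_{h+1}(s^m_{h+1}))^2+Y^m_h$. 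Plugging in and summing within an interval, the $(V^m_{h+1}(s^m_{h+1}))^2-(V^m_h(s^m_h))^2$ telescopes to $V^m_{H+1}(s^m_{H+1})^2-V^m_1(s^m_1)^2\leq 4\B^2$, and $\sum_h(c^m_h)^2\leq\sum_h c^m_h=C^m$ (since $c^m_h\leq 1$), yielding
\begin{align*}
\sum_{h=1}^H \fV(P^m_h,V^m_{h+1})\leq \sum_{h=1}^H Y^m_h+4\B^2+7\B C^m.
\end{align*}
Summing over $m\leq M'$ gives $Z:=\sum_{m=1}^{M'}\sum_h \fV(P^m_h,V^m_{h+1})\leq \sum_{m,h}Y^m_h+4\B^2M'+7\B C_{M'}$.

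Then I will control the martingale $\sum_{m,h}Y^m_h$ by the anytime Bernstein/Freedman inequality (\pref{lem:anytime strong freedman}). The one-step range is $|Y^m_h|\leq 9\B^2$, and using the elementary inequality $\V[X^2]\leq 4\cdot(\max X)^2\cdot\V[X]$ for $X\in[0,3\B]$, the conditional variance of $Y^m_h$ is at most $36\B^2\fV(P^m_h,V^m_{h+1})$. Freedman then yields, with probability at least $1-\delta$ and uniformly in $M'$,
\begin{align*}
\Big|\sum_{m=1}^{M'}\sum_{h=1}^H Y^m_h\Big|\leq \tilO{\B\sqrt{Z}+\B^2}.
\end{align*}

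Finally I will combine the two displays to get the self-bounding inequality $Z\leq 4\B^2 M'+7\B C_{M'}+\tilO{\B\sqrt Z+\B^2}$, and solve via \pref{lem:quad with log} to obtain $Z=\tilO{\B^2 M'+\B C_{M'}}$, as claimed. The only mildly subtle step is the variance-of-squares bound that fuels the self-bounding recursion; everything else is a routine telescope plus Freedman, and there is no need to separately bound $C_{M'}$ here since it appears on the right-hand side as a free variable.
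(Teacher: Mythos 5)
Your proposal is correct and follows essentially the same route as the paper: the same decomposition into a martingale term, a telescoping term in $V^2$, and an optimism-controlled term of order $\B C_{M'}$, followed by the variance-of-squares bound (\pref{lem:var XY}), anytime Freedman, and the self-bounding quadratic inequality (\pref{lem:quad with log}). The only cosmetic difference is that you square the optimism inequality directly to handle $V^m_h(s^m_h)^2-(P^m_hV^m_{h+1})^2$, while the paper factors it as $(V+PV)(V-PV)$; both give the same $\bigO{\B C_{M'}}$ contribution.
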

\begin{proof}
	Conditioned on the event of \pref{lem:vi}, we have with probability at least $1-\delta$:
	\begin{align*}
		&\summp\sumh\fV(P^m_h, V^m_{h+1}) = \summp\sumh P^m_h(V^m_{h+1})^2 - (P^m_hV^m_{h+1})^2\\
		&= \summp\sumh \rbr{P^m_h(V^m_{h+1})^2 - V^m_{h+1}(s^m_{h+1})^2} + \summp\sumh \rbr{V^m_{h+1}(s^m_{h+1})^2 - V^m_h(s^m_h)^2} + \summp\sumh \rbr{V^m_h(s^m_h)^2 - (P^m_hV^m_{h+1})^2}\\
		&\overset{\text{(i)}}{=} \tilO{\sqrt{\summp\sumh \fV(P^m_h, (V^m_{h+1})^2)}  + \B^2M' + \B C_{M'}} \overset{\text{(ii)}}{=} \tilO{\B\sqrt{\summp\sumh \fV(P^m_h, V^m_{h+1})}  + \B^2M' + \B C_{M'}}.
	\end{align*}
	Here, (ii) is by \pref{lem:var XY}, and (i) is by \pref{lem:anytime strong freedman}, $V^m_{H+1}(s)\leq2\B$ and:
	\begin{align*}
		\summp\sumh V^m_h(s^m_h)^2 - (P^m_hV^m_{h+1})^2 &= \summp\sumh (V^m_h(s^m_h, a^m_h) + P^m_hV^m_{h+1})(V^m_h(s^m_h) - P^m_hV^m_{h+1})\\
		&= \summp\sumh (V^m_h(s^m_h, a^m_h) + P^m_hV^m_{h+1})(\max\{0, Q^m_h(s^m_h, a^m_h)\} - P^m_hV^m_{h+1})\leq 6\B C_{M'}. \tag{$0\leq V^m_h(s)\leq 3\B$ and $Q^m_h(s^m_h, a^m_h)\leq c^m_h + P^m_hV^m_{h+1}$ by \pref{lem:vi}}
	\end{align*}
	By \pref{lem:quad with log}, we get $\summp\sumh\fV(P^m_h, V^m_{h+1}) = \tilO{ \B^2M' +  \B C_{M'}}$.
\end{proof}

\begin{lemma}
	\label{lem:sum E_t}
	$\summp\sumh E^m_h = \tilO{ \B^2d\sqrt{M'H} + \B d^{3/2}\sqrt{\summp\sumh(\barsigma^m_h)^2} + \B^2d^2H}$ for any $M'\in\fN_+$.
\end{lemma}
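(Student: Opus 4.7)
The plan is to bound the two summands of $E^m_h$ separately via the same recipe used in the proof of \pref{thm:vtr}. Write $E^m_h = E^{(1)}_{m,h} + E^{(2)}_{m,h}$ where $E^{(1)}_{m,h} = \min\{9\B^2,\tilbeta_m\norm{\phi_{(V^m_{h+1})^2}(s^m_h,a^m_h)}_{\tilSigma_m^{-1}}\}$ and $E^{(2)}_{m,h} = \min\{9\B^2, 6\B\cbeta_m\norm{\phi_{V^m_{h+1}}(s^m_h,a^m_h)}_{\hatSigma_m^{-1}}\}$. I would partition the indices $(m,h)$ into a ``large-norm'' subset $\calI^*$ (where the relevant $\norm{\cdot}_{V_m^{-1}} \geq 1$) and its complement. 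On $\calI^*$ the bonus is trivially capped by $9\B^2$; on the complement the norm can be replaced by $\min\{1,\norm{\cdot}_{V_m^{-1}}\}$, which is then handled by Cauchy--Schwarz together with the elliptical potential lemma (\pref{lem:sum mnorm}).

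Since $\tilSigma_m$ and $\hatSigma_m$ are only refreshed between intervals, I would transfer from $V_m^{-1}$ to $V_{m+1}^{-1}$ exactly as in the proof of \pref{thm:vtr}: for intervals $m\notin\widehat{\calI}=\{m:\det V_{m+1}>2\det V_m\}$ the two norms differ by at most a factor of $\sqrt{2}$, while the at most $\tilO{d}$ ``bad'' intervals in $\widehat{\calI}$ (by \pref{lem:quad bound}) are dealt with by paying the per-step worst-case value times $H$. For $E^{(2)}$ the weighted vector $\phi_{V^m_{h+1}}/\barsigma^m_h$ appears naturally since $\hatSigma_m$ is built from its outer products, so I factor $\norm{\phi_{V^m_{h+1}}}_{\hatSigma_m^{-1}} = \barsigma^m_h\norm{\phi_{V^m_{h+1}}/\barsigma^m_h}_{\hatSigma_m^{-1}}$ before applying Cauchy--Schwarz so that $\sqrt{\summp\sumh(\barsigma^m_h)^2}$ appears.

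Counting and Cauchy--Schwarz then deliver the three claimed terms. For $E^{(1)}$ the large-norm set has cardinality $\tilO{dH}$, contributing $\tilO{\B^2 dH}$; on the complement, Cauchy--Schwarz gives $\tilbeta_{M'}\sqrt{M'H}\sqrt{\summp\sumh\min\{1,\norm{\cdot}^2_{\tilSigma_{m+1}^{-1}}\}} = \tilO{\B^2\sqrt{d}\cdot\sqrt{M'H}\cdot\sqrt{d}} = \tilO{\B^2 d\sqrt{M'H}}$, using $\tilbeta_{M'}=\tilO{\B^2\sqrt{d}}$. For $E^{(2)}$ the trivial cap on the large-norm set contributes $\tilO{\B^2 dH}$; the $\widehat{\calI}$ transfer contributes at most $|\widehat{\calI}|\cdot H\cdot 6\B\cbeta_{M'}\cdot O(\B) = \tilO{\B^2 d^2 H}$ using $\cbeta_{M'}=\tilO{d}$ and $\barsigma^m_h=O(\B)$; Cauchy--Schwarz on the complement gives $6\B\cbeta_{M'}\sqrt{\summp\sumh(\barsigma^m_h)^2}\cdot\sqrt{\tilO{d}} = \tilO{\B d^{3/2}\sqrt{\summp\sumh(\barsigma^m_h)^2}}$. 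Summing and absorbing the lower-order $\B^2 dH$ terms into $\B^2 d^2 H$ yields the claimed bound.

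The main obstacle is verifying that the constants in $\tilbeta_m$, $\cbeta_m$, and the floor $\barsigma^m_h\geq 3\B/\sqrt{d}$ are large enough so that the inequality $A \leq Bc$ (with $A=9\B^2$) needed for $\min\{A,Bc\}\leq B\min\{1,c\}$ is satisfied uniformly in $(m,h)$; one checks $\tilbeta_m\geq 72\B^2$ and $6\B\cbeta_m\barsigma^m_h\geq 144\B^2\sqrt{d}\geq 9\B^2$. Beyond that, the bookkeeping for the batched updates must be tracked carefully so that the at most $\tilO{d}$ bad intervals only contribute the lower-order $\B^2 d^2 H$ term; once these constants and the potential-lemma application are set up, the remaining steps are routine Cauchy--Schwarz.
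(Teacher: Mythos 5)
Your proposal is correct and follows essentially the same route as the paper's proof: split $E^m_h$ into its two summands, convert $\min\{9\B^2,\beta\cdot\text{norm}\}$ to $\beta\min\{1,\text{norm}\}$ via the constant checks $\tilbeta_m\geq 9\B^2$ and $6\B\cbeta_m\barsigma^m_h\geq 9\B^2$, factor out $\barsigma^m_h$ in the second term, handle the $\tilO{d}$ determinant-doubling intervals separately, and finish with Cauchy--Schwarz plus \pref{lem:sum mnorm}. The only cosmetic difference is your explicit large-norm/small-norm partition, which the paper folds into the single inequality $\min\{A,Bc\}\leq B\min\{1,c\}$; the resulting bounds and lower-order terms match.
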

\begin{proof}
	Note that:
	\begin{align*}
		&\summp\sumh E^m_h = \summp\sumh \min\cbr{9\B^2, \tilbeta_m\norm{ \phi_{(V^m_{h+1})^2}(s^m_h, a^m_h) }_{\tilSigma_m^{-1}}} + \min\cbr{9\B^2, 6\B\cbeta_m\norm{\phi_{V^m_{h+1}}(s^m_h, a^m_h)}_{\hatSigma_m^{-1}} }\\
		&\leq \summp\sumh \tilbeta_m\min\cbr{1, \norm{ \phi_{(V^m_{h+1})^2}(s^m_h, a^m_h) }_{\tilSigma_m^{-1}} } + 6\B\summp\sumh\cbeta_m\barsigma^m_h\min\cbr{1, \norm{ \phi_{V^m_{h+1}}(s^m_h, a^m_h)/\barsigma^m_h }_{\hatSigma_m^{-1}}}. \tag{$\tilbeta_m\geq 9\B^2$ and $\cbeta_m\barsigma^m_h\geq3\B$}
		%&= \tilO{ \tilbeta_M\sqrt{dT} + 2\B\cbeta_T\sqrt{d\sumt\barsigma_m^2} } = \tilO{ \B^2d\sqrt{T} + \B d^{3/2}\sqrt{\sumt\barsigma_m^2} }.
	\end{align*}
	For the first sum, define $\widetilde{\calI}=\{m\in[M']: \det(\tilSigma_{m+1}) > 2\det(\tilSigma_m)\}$.
	Then by \pref{lem:quad bound},
	\begin{align*}
		&\summp\sumh \tilbeta_m\min\cbr{1, \norm{ \phi_{(V^m_{h+1})^2}(s^m_h, a^m_h) }_{\tilSigma_m^{-1}} } \\
		&\leq \tilO{\sum_{m\in\widetilde{\calI}}\B^2\sqrt{d}H} + \sqrt{2}\sum_{m\notin\widetilde{\calI}}\sumh\tilbeta_m\min\cbr{1, \norm{ \phi_{(V^m_{h+1})^2}(s^m_h, a^m_h) }_{\tilSigma_{m+1}^{-1}} } \tag{$\tilbeta_m=\tilo{\B^2\sqrt{d}}$}\\
		&= \tilO{ \B^2d^{3/2}H + \tilbeta_{M'}\sqrt{M'H\summp\sumh\min\cbr{1, \norm{ \phi_{(V^m_{h+1})^2}(s^m_h, a^m_h) }_{\tilSigma_{m+1}^{-1}}^2 } } }. \tag{$|\widetilde{\calI}|=\tilo{d}$ and Cauchy-Schwarz inequality}\\
		&= \tilO{ \B^2d^{3/2}H + \B^2d\sqrt{M'H} }. \tag{\pref{lem:sum mnorm}}
	\end{align*}
	For the second sum, similarly define $\widehat{\calI}=\{m\in[M']: \det(\hatSigma_{m+1})> 2\det(\hatSigma_m)\}$.
	Then,
	\begin{align*}
		&6\B\summp\sumh\cbeta_m\barsigma^m_h\min\cbr{1, \norm{ \phi_{V^m_{h+1}}(s^m_h, a^m_h)/\barsigma^m_h }_{\hatSigma_m^{-1}}}\\
		&\leq \tilO{\sum_{m\in\widehat{\calI}}\B^2dH } + 6\sqrt{2}\B\cbeta_M\sum_{m\notin\widehat{\calI}}\sumh \barsigma^m_h\min\cbr{1, \norm{ \phi_{V^m_{h+1}}(s^m_h, a^m_h)/\barsigma^m_h }_{\hatSigma_{m+1}^{-1}}} \tag{$\cbeta_m\barsigma^m_h=\tilo{\B d}$}\\
		&= \tilO{\B^2d^2H + \B d\sqrt{\summp\sumh(\barsigma^m_h)^2}\sqrt{\summp\sumh\min\cbr{1, \norm{ \phi_{V^m_{h+1}}(s^m_h, a^m_h)/\barsigma^m_h }_{\hatSigma_{m+1}^{-1}}^2 } } }. \tag{$|\widehat{\calI}|=\tilo{d}$ and Cauchy-Schwarz inequality}\\
		&= \tilO{\B^2d^2H + \B d^{3/2}\sqrt{ \summp\sumh(\barsigma^m_h)^2 } }. \tag{\pref{lem:sum mnorm}}
	\end{align*}
\end{proof}

\subsection{An instance of SSP with $\mingap' \ll \mingap$}
\label{app:gap example}
Consider an SSP with four states $\{s_0, s_1, s_2, s_3\}$ and two actions $\{a_1, a_2\}$.
At $s_0$, we have $c(s_0, a)=0$ and $P(s_1|s_0, a)=p$, $P(s_0|s_0, a)=1-p$ for $a\in\{a_1, a_2\}$ and some $p>0$.
At $s_1$, we have $c(s_1, a_1)=0$, $c(s_1, a_2)=\epsilon$, and $P(s_2|s_1, a_1)=1$, $P(s_3|s_1, a_2)=1$.
At $s_2$, we have $c(s_2, a_1)=c(s_2, a_2)=1$ and $P(g|s_2, a)=q$, $P(s_1|s_2, a)=1-q$ for any $a$ and some $q\in(0, 1)$.
At $s_3$, we have $c(s_3, a) = 0$ and $P(g|s_3, a)=1$ for $a\in\{a_1, a_2\}$.
The role of $s_0$ here is to create the possibility that the learner will visit state $s_1$ at any time step.
Then under our finite-horizon approximation, we have 
$$\mingap'\leq\min_{(s, a):\gap_H(s, a)>0}\gap_H(s, a)= c(s_1, a_2) - c(s_1, a_1) = \epsilon.$$
On the other hand, when $\frac{1}{q}>\epsilon$, $\mingap = \optQ(s_1, a_1) - \optV(s_1) = \frac{1}{q} - \epsilon$, and $\frac{1}{q}$ can be arbitrarily large.

\subsection{Omitted Details in \pref{sec:log}}
\label{app:log}
We first prove a lemma bounding $\optQ_h(s, a)-Q^m_h(s, a)$ and another lemma on regret decomposition w.r.t the gap functions $\gap_h(s, a)$ in $\tilcalM$.

\begin{lemma}
	\label{lem:val diff}
	Suppose $B=3\B$.
	With probability at least $1-\delta$, for all $m\in\fN_+, h\in[H]$, and $(s, a)\in\calS_+\times \calA$, \pref{alg:FH-SSP} ensures:
	\begin{align*}
		0 \leq \optQ_h(s, a) - \hatQ^m_h(s, a) \leq \tilP_{s, a}(\optV_{h+1} - V^m_{h+1}) + 2\beta_m\norm{\phi(s, a)}_{\Lambda_m^{-1}}.
	\end{align*}
\end{lemma}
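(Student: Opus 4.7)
The plan is to derive this lemma as a direct consequence of \pref{lem:vi error}, which already handles the main probabilistic work (concentration of the least-squares estimator and covering arguments). With $B = 3\B$, \pref{lem:vi error} applies and with probability at least $1-\delta$ yields, for every $m\in\fN_+$, $h\in[H]$, and $(s,a)\in\calS_+\times\calA$, both the one-step Bellman error bound
\[
0 \leq \tilc(s,a) + \tilP_{s,a}V^m_{h+1} - \hatQ^m_h(s,a) \leq 2\beta_m\norm{\phi(s,a)}_{\Lambda_m^{-1}},
\]
and the monotonicity $V^m_{h+1}(s) \leq \optV_{h+1}(s)$. I will work on this event throughout.

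The key algebraic step is to add and subtract $\tilc(s,a) + \tilP_{s,a}V^m_{h+1}$ and then invoke the Bellman optimality equation $\optQ_h(s,a) = \tilc(s,a) + \tilP_{s,a}\optV_{h+1}$ (see \pref{app:optQV}), giving
\[
\optQ_h(s,a) - \hatQ^m_h(s,a) = \tilP_{s,a}\bigl(\optV_{h+1} - V^m_{h+1}\bigr) + \bigl(\tilc(s,a) + \tilP_{s,a}V^m_{h+1} - \hatQ^m_h(s,a)\bigr).
\]
The upper bound in the lemma then follows immediately by bounding the second term on the right-hand side with $2\beta_m\norm{\phi(s,a)}_{\Lambda_m^{-1}}$ via \pref{lem:vi error}. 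For the lower bound, I use that both terms on the right-hand side are nonnegative: the first because $V^m_{h+1} \leq \optV_{h+1}$ (so $\tilP_{s,a}$ acting on the nonnegative function $\optV_{h+1} - V^m_{h+1}$ is nonnegative), and the second because of the nonnegativity half of \pref{lem:vi error}.

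There is essentially no obstacle here — the lemma is a simple rewriting of \pref{lem:vi error} in terms of $\optQ_h$ rather than the one-step-backup $\tilc + \tilP V^m_{h+1}$, and no additional probabilistic analysis is needed. The only care required is to keep the failure event consistent with that of \pref{lem:vi error} so that the $1-\delta$ probability stated in the lemma lines up, and to note that the argument applies uniformly over $(s,a)\in\calS_+\times\calA$ (including the goal state, where $\phi(g,a)=0$ makes both sides of the bound vanish trivially).
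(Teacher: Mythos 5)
Your proof is correct, but it takes a genuinely different and more economical route than the paper's. You treat \pref{lem:vi error} as a black box: the identity $\optQ_h(s,a)-\hatQ^m_h(s,a)=\tilP_{s,a}(\optV_{h+1}-V^m_{h+1})+\bigl(\tilc(s,a)+\tilP_{s,a}V^m_{h+1}-\hatQ^m_h(s,a)\bigr)$ combined with the two-sided one-step bound and the monotonicity $V^m_{h+1}\leq\optV_{h+1}$ from that lemma immediately gives both the upper bound (same constant $2\beta_m$) and the nonnegativity, on the same failure event and hence with the same $1-\delta$ probability. The paper instead re-derives everything from scratch: it decomposes $\wstar_h-w^m_h$ via the least-squares normal equations into a regularization bias $\xi_1=\lambda\phi(s,a)^\top\Lambda_m^{-1}\wstar_h$, the transported value gap $P_{s,a}(\optV_{h+1}-V^m_{h+1})$ plus a second regularization term $\xi_2$, and the noise term $\xi_3=\phi(s,a)^\top\epsilon^m_h$, bounding each by a fraction of $\beta_m\norm{\phi(s,a)}_{\Lambda_m^{-1}}$ so the pieces sum to $2\beta_m$ — essentially duplicating the concentration work already done inside \pref{lem:vi error} (it reuses \pref{eq:epsilon} for $\xi_3$). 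Your modular argument is shorter and makes the lower bound fully explicit (the paper asserts it without spelling it out); the paper's direct decomposition buys nothing extra here beyond making visible where the $\tilP_{s,a}(\optV_{h+1}-V^m_{h+1})$ term originates in the estimator. The only hygiene points — that $B=3\B\geq\max\{1,\max_s c_f(s)\}=2\B$ so \pref{lem:vi error} applies, and that the goal state is handled by $\phi(g,a)=0$ — you have covered.
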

\begin{proof}
%	For any $m$, we prove the statement by induction with $h=H,\ldots, 1$.
%	First note that we can assume $\optV_{h+1}(s')-V^m_{h+1}(s')\in[0, 3\B]$: when $h=H$ this is clearly true; when $h<H$, this follows from the induction step: 
%	$$0 \leq \min_a\rbr{ \optQ_{h+1}(s, a) - Q^m_{h+1}(s, a) } \leq \optV_{h+1}(s) - V^m_{h+1}(s) \leq \optV_{h+1}(s) \leq 3\B. $$
	Note that:
	\begin{align*}
		&\wstar_h - w^m_h = \Lambda_m^{-1}\rbr{\lambda I + \sum_{m'=1}^{m-1}\sum_{h'=1}^H\phi^{m'}_{h'}{\phi^{m'}_{h'}}^{\top} }\wstar_h - \Lambda_m^{-1}\sum_{m'=1}^{m-1}\sum_{h'=1}^H\phi^{m'}_{h'}(c^{m'}_{h'} + V^{m}_{h+1}(s^{m'}_{h'+1}))\\
		&= \lambda\Lambda_m^{-1}\wstar_h + \Lambda_m^{-1}\sum_{m'=1}^{m-1}\sum_{h'=1}^H\phi^{m'}_{h'}[c^{m'}_{h'} + P^{m'}_{h'}\optV_{h+1}] -\Lambda_m^{-1}\sum_{m'=1}^{m-1}\sum_{h'=1}^H\phi^{m'}_{h'}(c^{m'}_{h'} + V^{m}_{h+1}(s^{m'}_{h'+1}))\\
		&= \lambda\Lambda_m^{-1}\wstar_h + \Lambda_m^{-1}\sum_{m'=1}^{m-1}\sum_{h'=1}^H\phi^{m'}_{h'}P^{m'}_{h'}[\optV_{h+1} - V^{m}_{h+1}] + \epsilon^m_h  \tag{Define $\epsilon^m_h = \Lambda_m^{-1}\sum_{m'=1}^{m-1}\sum_{h'=1}^H\phi^{m'}_{h'}[P^{m'}_{h'}V^m_{h+1} - V^m_{h+1}(s^{m'}_{h'+1})]$}\\
		&= \lambda\Lambda_m^{-1}\wstar_h + \Lambda_m^{-1}\sum_{m'=1}^{m-1}\sum_{h'=1}^H\phi^{m'}_{h'}{\phi^{m'}_{h'}}^{\top}\int(\optV_{h+1}(s') - V^m_{h+1}(s'))d\mu(s') + \epsilon^m_h\\
		&= \lambda\Lambda_m^{-1}\wstar_h + \int(\optV_{h+1}(s') - V^{m}_{h+1}(s'))d\mu(s') - \lambda\Lambda_m^{-1}\int(\optV_{h+1}(s') - V^m_{h+1}(s'))d\mu(s') + \epsilon^m_h.
	\end{align*}
	Therefore,
	\begin{align*}
		\optQ_h(s, a) - \hatQ^m_h(s, a) &= \phi(s, a)^{\top}(\wstar_h - w^m_h) + \beta_m\norm{\phi(s, a)}_{\Lambda_m^{-1}}\\
		&\leq \underbrace{\lambda\phi(s, a)^{\top}\Lambda_m^{-1}\wstar_h}_{\xi_1} + P_{s, a}(\optV_{h+1} - V^m_{h+1}) \underbrace{- \lambda\phi(s, a)^{\top}\Lambda_m^{-1}\int(\optV_{h+1}(s') - V^m_{h+1}(s'))d\mu(s')}_{\xi_2}\\
		&\qquad + \underbrace{\phi(s, a)^{\top}\epsilon^m_h}_{\xi_3} + \beta_m\norm{\phi(s, a)}_{\Lambda_m^{-1}}.
	\end{align*}
	For $\xi_1$, note that $\norm{\wstar_h}_2 = \norm{\thetastar + \int \optV_{h+1}(s') d\mu(s')}_2 \leq (1+3\B)\sqrt{d}$ by $\optV_{h+1}(s) \leq \optV(s) + 2\B \leq 3\B$ for any $s\in\calS$, $h\in[H]$.
	Therefore, by the Cauchy-Schwarz inequality,
	\begin{align*}
		|\xi_1| \leq \norm{\phi(s, a)}_{\Lambda_m^{-1}}\norm{\lambda \wstar_h}_{\Lambda_m^{-1}} \leq \norm{\phi(s, a)}_{\Lambda_m^{-1}}\sqrt{\lambda}\norm{\wstar_h}_2 \leq \frac{\beta_m}{4}\norm{\phi(s, a)}_{\Lambda_m^{-1}},
	\end{align*}
	where the second inequality is by $\lambda_{\max}(\Lambda_m^{-1})\leq\frac{1}{\lambda}$.
	Similarly, for $\xi_2$, 
	\begin{align*}
		|\xi_2| &\leq \norm{\phi(s, a)}_{\Lambda_m^{-1}}\norm{\lambda\int(\optV_{h+1}(s') - V^m_{h+1}(s'))d\mu(s')}_{\Lambda_m^{-1}}\tag{Cauchy-Schwarz inequality}\\
		&\leq \sqrt{\lambda}\norm{\phi(s, a)}_{\Lambda_m^{-1}}\norm{\int(\optV_{h+1}(s') - V^m_{h+1}(s'))d\mu(s')}_2 \tag{$\lambda_{\max}(\Lambda_m^{-1})\leq\frac{1}{\lambda}$}\\
		&\leq 3\B\sqrt{\lambda d}\norm{\phi(s, a)}_{\Lambda_m^{-1}} \leq \frac{\beta_m}{4}\norm{\phi(s, a)}_{\Lambda_m^{-1}}. \tag{$\optV_{h+1}(s) - V^m_{h+1}(s)\in[0, 3\B]$ for any $s\in\calS$}
	\end{align*}
	For $\xi_3$, by \pref{eq:epsilon}, $\norm{\epsilon^m_h}_{\Lambda_m} \leq \frac{\beta_m}{2}$ with probability at least $1-\delta$.
	Thus, $|\xi_3| \leq \norm{\phi(s, a)}_{\Lambda_m^{-1}}\norm{\epsilon^m_h}_{\Lambda_m} \leq \frac{\beta_m}{2}\norm{\phi(s, a)}_{\Lambda_m^{-1}}$.
	
	To conclude, we have for all $m, h, (s, a)$:
	\begin{align*}
		0 \leq \optQ_h(s, a) - \hatQ^m_h(s, a) \leq \tilP_{s, a}(\optV_{h+1} - V^m_{h+1}) + 2\beta_m\norm{\phi(s, a)}_{\Lambda_m^{-1}}.
	\end{align*}
	This completes the proof.
\end{proof}

\begin{lemma}
	\label{lem:reg gap}
	With probability at least $1-\delta$, $\summp V^{\pi^m}_1(s^m_1) - \optV_1(s^m_1) \leq 2\summp\sum_{h=1}^H\gap_h(s^m_h, a^m_h) + \bigo{\B H\ln(M'/\delta)}$ for any given $M'\in\fN_+$.
\end{lemma}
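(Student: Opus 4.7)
The proof plan has two main ingredients: a performance-difference (regret) decomposition that rewrites $V^{\pi^m}_1(s^m_1) - \optV_1(s^m_1)$ as an on-policy expectation of the gaps along a trajectory, and an anytime Bernstein-type concentration that upper bounds this expectation by the realized gaps actually encountered by $\pi^m$ in interval $m$.

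First, for any policy $\pi$ on $\tilcalM$, a telescoping argument using $\optQ_h(s,a) = \tilc(s,a) + \tilP_{s,a}\optV_{h+1}$ gives the standard identity
\begin{align*}
  V^{\pi}_1(s) - \optV_1(s) \;=\; \E\!\left[\sum_{h=1}^H \gap_h(s_h,\pi(s_h,h))\,\Big|\,\pi,\,s_1=s\right],
\end{align*}
which, specialized to $\pi=\pi^m$ and $s=s^m_1$, lets me define $X^m \defeq \sum_{h=1}^H \gap_h(s^m_h,a^m_h)$ (the realized gap sum in interval $m$) and write $V^{\pi^m}_1(s^m_1)-\optV_1(s^m_1) = \E[X^m\mid\mathcal F_{m-1}]$, where $\mathcal F_{m-1}$ collects everything observed up to the start of interval $m$. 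Crucially, since $H \ge 4\T\ln(4K)$ (see \pref{lem:hitting}), the same argument used in the proof of \pref{thm:bound M} gives $\optV_h(s) \le \tfrac{3}{2}\B$ uniformly in $h$, so $\gap_h(s,a)\le 2\B$ and thus $X^m \in [0, 2\B H]$.

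Second, I apply an anytime Freedman inequality (\pref{lem:anytime strong freedman}) to the bounded martingale difference $Z_m \defeq X^m-\E[X^m\mid\mathcal F_{m-1}]$, using the self-bounded variance estimate
\begin{align*}
  \V[X^m\mid \mathcal F_{m-1}] \;\le\; \E[(X^m)^2\mid\mathcal F_{m-1}] \;\le\; 2\B H\cdot \E[X^m\mid\mathcal F_{m-1}].
\end{align*}
This yields, with probability at least $1-\delta$, simultaneously for every $M'\in\fN_+$,
\begin{align*}
  \sum_{m=1}^{M'}\E[X^m\mid\mathcal F_{m-1}] - \sum_{m=1}^{M'} X^m
  \;\le\; \bigO{\sqrt{\B H\ln(M'/\delta)\,\sum_{m=1}^{M'}\E[X^m\mid\mathcal F_{m-1}]}} + \bigO{\B H\ln(M'/\delta)}.
\end{align*}
Applying AM--GM to absorb $\tfrac12\sum_m \E[X^m\mid\mathcal F_{m-1}]$ back into the left side and rearranging gives $\sum_m \E[X^m\mid\mathcal F_{m-1}] \le 2\sum_m X^m + \bigO{\B H\ln(M'/\delta)}$, which combined with the performance difference identity is exactly the stated bound.

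The only mildly delicate step is the self-bounded-variance concentration: I need the anytime version of Freedman so that the inequality holds for \emph{every} $M'$, and I need the variance bound in terms of $\sum_m \E[X^m\mid\mathcal F_{m-1}]$ rather than $\sum_m X^m$ so that the AM--GM trick converts the $\sqrt{\cdot}$ term into a factor of $2$ in front of the realized gap sum. Both are standard, so I expect no real obstacle; the uniform bound $\optV_h\le \tfrac{3}{2}\B$ (and hence $X^m\le 2\B H$) is what ensures the leading constant on the log term is $\bigO{\B H}$ and not $\bigO{H^2}$.
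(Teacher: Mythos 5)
Your proof is correct and follows essentially the same route as the paper: the performance-difference identity reduces the claim to $\sum_{m\le M'} \E[X^m\mid\calF_{m-1}] \le 2\sum_{m\le M'} X^m + \bigO{\B H\ln(M'/\delta)}$ for the nonnegative, $\bigO{\B H}$-bounded gap sums $X^m$, which the paper gets by citing \pref{lem:e2r} and you re-derive via Freedman plus AM--GM. The only nitpick is that $\optV_h(s)\le \tfrac32\B$ need not hold uniformly in $h$ (the goal-reaching probability degrades as the remaining horizon shrinks), but the crude bound $\optV_h(s)\le \optV(s)+\max_s c_f(s)\le 3\B$ suffices and still gives $\gap_h(s,a)=\bigO{\B}$.
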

\begin{proof}
	By the extended value difference lemma~\citep[Lemma 1]{efroni2020optimistic}:
	\begin{align*}
		V^{\pi^m}_1(s^m_1) - \optV_1(s^m_1) &= \E\sbr{\left.\sum_{h=1}^H\sum_a(\pi^m(a|s^m_h) - \tiloptpi(a|s^m_h))\optQ_h(s^m_h, a) \right|\pi^m}\\
		&=\E\sbr{\left.\sum_{h=1}^H \optQ_h(s^m_h, a^m_h) - \optV_h(s^m_h) \right|\pi^m} = \E\sbr{\left.\sum_{h=1}^H \gap_h(s^m_h, a^m_h) \right|\pi^m},
	\end{align*}
	where $\tiloptpi$ is the optimal policy of $\tilcalM$.
	Therefore, by \pref{lem:e2r} and $\gap_h(s, a)=\bigo{\B}$, with probability at least $1-\delta$, 
	$$\summp V^{\pi^m}_1(s^m_1) - \optV_1(s^m_1) \leq 2\summp\sum_{h=1}^H\gap_h(s^m_h, a^m_h) + \bigO{\B H\ln\frac{M'}{\delta}}.$$
	This completes the proof.
%	Now take $\delta=\frac{1}{M'H}$, we have:
%	\begin{align*}
%		\E[\tilR_{M'}] = \E\sbr{\summp V^{\pi^m}(s^m_1) - \optV_1(s^m_1)} \leq 2\E\sbr{\summp\sum_{h=1}^H\gap_h(s^m_h, a^m_h)} + \bigO{\B H\ln(M'H)}.
%	\end{align*}
\end{proof}

The next lemma provides an upper bound on the sum of gap functions satisfying some constraints.
We denote by $\calF^m_h$ the interaction history up to $(s^m_h, a^m_h)$ in $\tilcalM$.
\begin{lemma}
	\label{lem:sum z gap}
	Suppose $B=3\B$, $\{z^m_h\}_{m=1}^{M'}$ are indicator functions such that $z^m_h\in\calF^m_h$ for some $M' \in\fN_+, h\in[H]$, and define $M_z=\sum_{m=1}^{M'}z^m_h$.
	Then with probability at least $1-\delta$, \pref{alg:FH-SSP} ensures
	$$\summp z^m_h\sum_{h'=h}^H\gap^m_{h'}=\bigO{\sqrt{d^3\B^2HM_z}\ln\frac{d\B M'H}{\delta} + d^2\B H\ln^{1.5}\frac{d\B M'H}{\delta}}.$$
\end{lemma}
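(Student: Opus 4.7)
The plan is to reduce the gap sum to a sum of one-step exploration bonuses along the executed trajectory, and then apply an elliptic-potential bound tailored to the active set $\{m:z^m_h=1\}$. Set $D^m \defeq V^{\pi^m}_h(s^m_h)-\optV_h(s^m_h)\geq 0$. The extended value difference lemma (as used in the proof of \pref{lem:reg gap}) gives $\E[\sum_{h'=h}^H\gap^m_{h'}\mid\calF^m_h]=D^m$. Because $z^m_h$ is $\calF^m_h$-measurable and each conditional partial sum is $\tilO{\B H}$-bounded, \pref{lem:e2r} yields with high probability
\[
\summp z^m_h\sum_{h'=h}^H\gap^m_{h'}\leq 2\summp z^m_h D^m+\tilO{\B H\ln(M'/\delta)}.
\]

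Next I control $D^m$ via optimism. Using \pref{lem:val diff}, the greedy rule $a^m_{h'}=\argmin_a Q^m_{h'}(s^m_{h'},a)$, and the optimism $V^m_{h'}\leq\optV_{h'}$, a one-step recursion analogous to the one in the proof of \pref{lem:efficient stop} shows that for any state $s$ and layer $h'$,
\[
V^{\pi^m}_{h'}(s)-V^m_{h'}(s)\leq \tilP_{s,\pi^m(s,h')}\bigl(V^{\pi^m}_{h'+1}-V^m_{h'+1}\bigr)+2\beta_m\bigl\|\phi(s,\pi^m(s,h'))\bigr\|_{\Lambda_m^{-1}}.
\]
Unrolling this recursion along the hypothetical trajectory $\{\tils_{h'}\}$ generated by $\pi^m$ starting at $\tils_h=s^m_h$ gives $D^m\leq V^{\pi^m}_h(s^m_h)-V^m_h(s^m_h)\leq \E[\sum_{h'=h}^H 2\beta_m\|\phi(\tils_{h'},\pi^m(\tils_{h'},h'))\|_{\Lambda_m^{-1}}\mid \pi^m,s^m_h]$. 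A second application of \pref{lem:e2r} then replaces this conditional expectation by the realized bonus sum along the actual trajectory, at the cost of an $\tilO{d\B H\ln(M'/\delta)}$ lower-order term, since each per-interval summand is bounded by $2\beta_m H=\tilO{d\B H}$.

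It remains to bound $\summp z^m_h\sum_{h'=h}^H\beta_m\|\phi^m_{h'}\|_{\Lambda_m^{-1}}$. A direct application of \pref{lem:sum beta} would give this with $M'$ instead of the desired $M_z$, which is too loose. The key trick is to introduce the auxiliary covariance $\tilLambda_m\defeq\lambda I+\sum_{m'<m:\,z^{m'}_h=1}\sum_{h'=1}^H\phi^{m'}_{h'}(\phi^{m'}_{h'})^{\top}$ that aggregates features only from active intervals. Since $\tilLambda_m\mleq\Lambda_m$, we have $\|\phi^m_{h'}\|_{\Lambda_m^{-1}}\leq\|\phi^m_{h'}\|_{\tilLambda_m^{-1}}$, and the restricted sum $\summp z^m_h\sum_{h'=h}^H\beta_m\|\phi^m_{h'}\|_{\tilLambda_m^{-1}}$ has exactly the same elliptic-potential structure as \pref{lem:sum beta} but over only $M_z$ batch updates, giving the target $\tilO{\sqrt{d^3\B^2 HM_z}+d^2\B H}$.

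The delicate step is this last one: verifying that the Freedman/elliptic-potential machinery behind \pref{lem:sum beta} transfers verbatim to the subsampled covariance $\tilLambda_m$. Since $z^m_h$ is $\calF^m_h$-measurable and $\tilLambda_m$ only uses data from strictly earlier active intervals, the filtrations line up and the proof goes through; the monotonicity $\tilLambda_m\mleq\Lambda_m$ is precisely what prevents an extraneous $\sqrt{H}$ factor from appearing when one otherwise would apply Cauchy--Schwarz naively. Chaining the three displayed estimates yields the claimed $\bigO{\sqrt{d^3\B^2HM_z}\ln\frac{d\B M'H}{\delta}+d^2\B H\ln^{1.5}\frac{d\B M'H}{\delta}}$ bound.
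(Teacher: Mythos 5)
Your proof is correct, but it takes a genuinely different route from the paper's, and one of your "key tricks" is solving a problem that the paper's toolbox already handles. The paper works pointwise along the \emph{realized} trajectory: using $V^{m}_{h'}(s^{m}_{h'})=Q^{m}_{h'}(s^{m}_{h'},a^{m}_{h'})$ and \pref{lem:val diff}, it telescopes $\optQ_{h'}-Q^{m}_{h'}$ over $h'=h,\dots,H$ to bound $\sum_{h'}\gap^m_{h'}$ directly by the realized bonuses plus a single martingale term (handled by \pref{lem:anytime strong freedman}), with no detour through $V^{\pi^m}_h$. You instead pass through the policy value $D^m=V^{\pi^m}_h(s^m_h)-\optV_h(s^m_h)$ via the extended value difference lemma, and then need \emph{two} applications of \pref{lem:e2r} (one to convert the realized gap sum to $D^m$, one to convert the expected bonus sum back to realized bonuses). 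This is valid --- the filtration conditions are met since $z^m_h\in\calF^m_h$ and $\pi^m$ is exactly the executed greedy policy --- and the extra additive $\tilO{d\B H\ln(M'/\delta)}$ terms it generates sit inside the claimed lower-order term; it is just longer than necessary.

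On the final step: your premise that ``a direct application of \pref{lem:sum beta} would give this with $M'$ instead of $M_z$'' is a misreading. \pref{lem:sum beta} is stated for an \emph{arbitrary} subset $\calI\subseteq[M']$ and its leading term scales with $|\calI|$, not $M'$: the Cauchy--Schwarz step there is taken over the $|\calI|$ selected intervals, while the elliptic-potential sum $\sum_{m\in\calI}\sum_h\min\{1,\norm{\phi^m_h}^2_{\Lambda_{m+1}^{-1}}\}$ is simply upper-bounded by the potential over all data, which is $\tilO{d}$. Applying it with $\calI=\{m:z^m_h=1\}$ gives the $\sqrt{d^3\B^2HM_z}$ term immediately. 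Your subsampled covariance $\tilLambda_m\mleq\Lambda_m$ also works (the monotonicity gives $\norm{\phi}_{\Lambda_m^{-1}}\leq\norm{\phi}_{\tilLambda_m^{-1}}$ and the potential argument transfers to the $M_z$ active batches), so this is not a gap, only an unnecessary complication; note also that the elliptic-potential lemma is deterministic, so the remark about filtrations lining up for it is moot.
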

\begin{proof}
	Denote by $m_i$ the $i$-th interval among $[M']$ such that $z^{m_i}_h=1$.
	Then,
	\begin{align*}
		&\sum_{i=1}^{M_z}\sum_{h'=h}^H\optQ_{h'}(s^{m_i}_{h'}, a^{m_i}_{h'}) - \optV_{h'}(s^{m_i}_{h'}) + \sum_{i=1}^{M_z}\sum_{h'=h}^H\optV_{h'}(s^{m_i}_{h'}) - V^{m_i}_{h'}(s^{m_i}_{h'})\\
		&=\sum_{i=1}^{M_z}\sum_{h'=h}^H \optQ_{h'}(s^{m_i}_{h'}, a^{m_i}_{h'}) - Q^{m_i}_{h'}(s^{m_i}_{h'}, a^{m_i}_{h'}) \tag{$Q^{m_i}_{h'}(s^{m_i}_{h'}, a^{m_i}_{h'}) = V^{m_i}_{h'}(s^{m_i}_{h'})$ by \pref{lem:vi error} and $B=3\B$}\\ 
		&\leq \sum_{i=1}^{M_z}\sum_{h'=h}^H P^{m_i}_{h'}(\optV_{h'+1} - V^{m_i}_{h'+1}) + 2\sum_{i=1}^{M_z}\sum_{h'=h}^H\beta_{m_i}\norm{\phi^{m_i}_{h'}}_{\Lambda^{-1}_{m_i}} \tag{\pref{lem:val diff}}\\
		&= \sum_{i=1}^{M_z}\sum_{h'=h}^H(\optV_{h'+1}(s^{m_i}_{h'+1}) - V^{m_i}_{h'+1}(s^{m_i}_{h'+1})) + \sum_{i=1}^{M_z}\sum_{h=h'}^H\rbr{\epsilon^{m_i}_{h'} + 2\beta_{m_i}\norm{\phi^{m_i}_{h'}}_{\Lambda^{-1}_{m_i}} },
	\end{align*}
	where $\epsilon^{m_i}_h=P^{m_i}_h(\optV_{h+1} - V^{m_i}_{h+1}) - ( \optV_{h+1}(s^{m_i}_{h+1}) - V^{m_i}_{h+1}(s^{m_i}_{h+1}) )$.
	Reorganizing terms, and by $\optV_{H+1}=V^m_{H+1}=c_f$, $V^m_{h+1}(s)\leq\optV_{h+1}(s)$ (\pref{lem:vi error}), we get:
	\begin{align*}
		\sum_{i=1}^{M_z}\sum_{h'=h}^H\gap^{m_i}_{h'} &= \sum_{i=1}^{M_z}\sum_{h'=h}^H \optQ_{h'}(s^{m_i}_{h'}, a^{m_i}_{h'}) - \optV_{h'}(s^{m_i}_{h'}) \leq \sum_{i=1}^{M_z}\sum_{h=h'}^H\rbr{\epsilon^{m_i}_h + 2\beta_{m_i}\norm{\phi^{m_i}_h}_{\Lambda^{-1}_{m_i}} }\\
		&= \summp\sum_{h'=h}^H z^m_h\epsilon^m_{h'} + 2\sum_{i=1}^{M_z}\sum_{h=h'}^H\beta_{m_i}\norm{\phi^{m_i}_h}_{\Lambda^{-1}_{m_i}}.
		%&= \tilO{ \B\sqrt{HM'} + \B\sqrt{d^3HM'} + d^2\B^2H }.
	\end{align*}
	For the first term, by $z^m_h\epsilon^m_{h'}\in\calF^m_{h'+1}$ for $h'\geq h$ and \pref{lem:anytime strong freedman}, with probability at least $1-\delta$,
	\begin{align*}
		\summp\sum_{h'=h}^H z^m_h\epsilon^m_{h'} &\leq 3\sqrt{\summp\sum_{h'=h}^H\E[(z^m_h\epsilon^m_{h'})^2|\calF^m_{h'}]} + \bigO{\B\ln\frac{\B M'H}{\delta}}\\ 
		&= \bigO{ \B\sqrt{HM_z\ln\frac{\B M'H}{\delta}} + \B\ln\frac{\B M'H}{\delta} }. \tag{$z^m_h\in\calF^m_{h'}$ and $|\epsilon^m_{h'}|=\bigo{\B}$}
	\end{align*}
	For the second term, by \pref{lem:sum beta}, 
	$$\sum_{i=1}^{M_z}\sum_{h=h'}^H\beta_{m_i}\norm{\phi^{m_i}_h}_{\Lambda^{-1}_{m_i}}=\bigO{\sqrt{d^3\B^2HM_z}\ln\frac{d\B M'H}{\delta} + d^2\B H\ln^{1.5}\frac{d\B M'H}{\delta}}.$$
	Plugging these back completes the proof.
\end{proof}

We are now ready to prove a bound on $\sum_m V^{\pi^m}_1(s^m_1) - \optV_1(s^m_1)$, which is the key to proving \pref{thm:log}. %\tc{purple}{(change to statement to with probability $1-3\delta-1/4\B M'H$, we have $\summp V^{\pi^m}_1(s^m_1) - \optV_1(s^m_1)=\bigO{ \frac{d^3\B^4}{\cmin^2\mingap}\ln^5(d\B M'H/\delta) }$.)}

\begin{lemma}
	\label{lem:log}
	For any $M'\geq 3$, \pref{alg:FH-SSP} with $B=3\B$ and $H \geq \ceil{\frac{35\B}{\cmin}\ln(8\B M'H)}$ for some horizon $H$ ensures with probability at least $1-3\delta-\nicefrac{1}{4\B M'H}$, $\summp V^{\pi^m}_1(s^m_1) - \optV_1(s^m_1)=\bigO{ \frac{d^3\B^4}{\cmin^2\mingap}\ln^5(d\B M'H/\delta) }$. 
	%\pref{alg:FH-SSP} with $H = \ceil{\frac{35\B}{\cmin}\ln(8\B M'H)}$ for some $M'\geq 3$ ensures 
	%$$\E[\tilR_{M'}] = \bigO{ \frac{d^3\B^4}{\cmin^2\mingap}\ln^5(d\B M'H) }.$$
\end{lemma}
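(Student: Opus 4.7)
The plan is to first apply \pref{lem:reg gap} to reduce $\summp V^{\pi^m}_1(s^m_1) - \optV_1(s^m_1)$ to $2\sum_m \sum_{h=1}^H \gap_h(s^m_h, a^m_h)$ plus a lower-order $\bigO{\B H \ln(M'/\delta)}$ term, and then split the double sum at $h = H/2$. The key preparatory step is to show that on the first $H/2$ layers the finite-horizon gap $\gap_h$ is close to the SSP gap $\gap$. By \pref{lem:hitting} and the fact that $\optpi$ has expected hitting time at most $\T \leq \B/\cmin$, the choice $H \geq \frac{35\B}{\cmin}\ln(8\B M'H)$ ensures that from any state in any layer $h \leq H/2$, $\optpi$ reaches $g$ within the remaining $H - h + 1 \geq H/2 + 1$ steps with probability at least $1 - \mingap/(16\B)$. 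This gives $\optV_h(s) - \optV(s) \in [0, \mingap/8]$ and hence $|\gap_h(s, a) - \gap(s, a)| \leq \mingap/4$ for all $(s,a)$ and $h \leq H/2$. On the first $H/2$ layers the gap therefore dichotomizes: either $\gap_h(s, a) \leq \mingap/4$ (when $\gap(s,a) = 0$) or $\gap_h(s, a) \geq 3\mingap/4 > \mingap/2$ (when $\gap(s, a) \geq \mingap$).

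For the first-half contribution, define $z^m_h = \Ind\{\gap_h(s^m_h, a^m_h) > \mingap/2\}$, which is $\calF^m_h$-measurable; by the dichotomy, $\sum_m \sum_{h=1}^{H/2} \gap_h^m = \sum_{h=1}^{H/2} \sum_m z^m_h \gap_h^m$. At each fixed $h$, \pref{lem:sum z gap} gives $\sum_m z^m_h \gap_h^m \leq \sum_m z^m_h \sum_{h'=h}^H \gap_{h'}^m = \tilO{\sqrt{d^3 \B^2 H M_z^{(h)}} + d^2 \B H}$, and combining with the lower bound $(\mingap/2) M_z^{(h)} \leq \sum_m z^m_h \gap_h^m$ and solving the resulting quadratic in $\sqrt{M_z^{(h)}}$ yields $M_z^{(h)} = \tilO{d^3 \B^2 H/\mingap^2}$ and hence $\sum_m z^m_h \gap_h^m = \tilO{d^3 \B^2 H/\mingap + d^2 \B H}$. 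Summing over $h \leq H/2$ bounds the first-half contribution by $\tilO{d^3 \B^2 H^2/\mingap + d^2 \B H^2}$.

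For the second-half contribution, I would partition the intervals into $\calN_2 = \{m : \exists h \leq H/2, z^m_h = 1\}$ and $\calN_1 = [M'] \setminus \calN_2$. For $m \in \calN_2$, let $h_m$ be the first bad layer and set $\tilde z^m_h = \Ind\{h \leq H/2, h_m = h\} \in \calF^m_h$; then $\sum_{h > H/2} \gap_h^m \leq \sum_{h' \geq h_m} \gap_{h'}^m$, so $\sum_{m \in \calN_2} \sum_{h > H/2} \gap_h^m \leq \sum_{h=1}^{H/2} \sum_m \tilde z^m_h \sum_{h'=h}^H \gap_{h'}^m$, and each inner sum is controlled by \pref{lem:sum z gap} applied to $\tilde z^m_h$ (whose marginal count $M_{\tilde z}^{(h)} \leq M_z^{(h)}$ is bounded as above), giving an identical $\tilO{d^3 \B^2 H^2 / \mingap + d^2 \B H^2}$ bound in aggregate. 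For $m \in \calN_1$, every action in the first $H/2$ layers satisfies $\gap(s^m_h, a^m_h) = 0$, so the effective policy is SSP-optimal over those layers; by \pref{lem:hitting} and the chosen $H$, the conditional probability that such an interval fails to reach $g$ by layer $H/2+1$ is at most $1/(8\B (M')^2 H)$, and a union bound over $M'$ intervals shows that with probability at least $1 - 1/(4\B M'H)$ no interval in $\calN_1$ contributes any cost after layer $H/2$. Combining the two halves with $H = \bigO{\frac{\B}{\cmin}\ln(\B M' H / \cmin)}$ produces the desired $\bigO{\frac{d^3 \B^4}{\cmin^2 \mingap}\ln^5(d\B M'H/\delta)}$ bound.

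The main obstacle is handling the last $H/2$ layers for intervals in $\calN_1$: a single surviving interval contributes up to $\B H$ to the regret, so the failure probability must be driven below $1/M'$ to make the union bound work. This is precisely what forces the aggressive logarithmic choice $H \geq \frac{35\B}{\cmin}\ln(8\B M'H)$; a milder choice would either not give the required hitting-time concentration or amplify the $H^2$ factor in the first-half bound past the target. The second subtle point is that the two-stage decomposition must use the first-visit indicator $\tilde z^m_h$ (rather than naively summing $z^m_h$) in the second half to avoid losing a further factor of $H$ when coupling the last-half tail to a layer in the first half.
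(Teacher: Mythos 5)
Your overall architecture is the same as the paper's (reduce to $\sum_{m,h}\gap_h(s^m_h,a^m_h)$ via \pref{lem:reg gap}, split at $h=H/2$, control the first half with \pref{lem:sum z gap}, and argue the second half is negligible unless a large-gap action was taken in the first half), but two steps as written do not go through.

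First, the small-gap residual in the first half is not zero, and your accounting drops it. The choice $H \geq \frac{35\B}{\cmin}\ln(8\B M'H)$ gives, via \pref{lem:hitting}, $P(s_{H+1}\neq g\mid s_h=s,\optpi)\leq \frac{1}{4\B M'H}$ for $h\leq H/2+1$, hence $|\gap_h(s,a)-\gap(s,a)|\leq \frac{1}{M'H}$ — not the "$1-\mingap/(16\B)$'' concentration you assert, which does not follow from this $H$ when $\mingap \ll 1/(M'H)$ (that regime is only saved by noting the target bound is then trivially larger than $3\B M'$). More importantly, your identity $\sum_m\sum_{h\le H/2}\gap^m_h=\sum_{h\le H/2}\sum_m z^m_h\gap^m_h$ is false: when $\gap(s,a)=0$ the dichotomy only gives $\gap_h(s,a)\leq \mingap/4$, not $\gap_h(s,a)=0$, so the dropped terms can total $\Theta(M'H\mingap)$, which is not dominated by $\frac{d^3\B^2H^2}{\mingap}$ when $M'\gg d^3H$. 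The paper avoids this by thresholding at $\eta=\frac{3}{M'H}$ (so the sub-threshold mass is $\bigO{\B}$) and then using a peeling over levels $\eta 2^n$ together with the relation $\mingap\leq \frac43\eta 2^{\nstar}$ to recover the $1/\mingap$ dependence; your single-threshold version works only if you threshold at $\bigO{1/(M'H)}$ and separately argue (as above) that surviving layers have $\gap_h\gtrsim\mingap$.

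Second, for the intervals in $\calN_1$ you condition on a trajectory-dependent event ("every first-half action has zero SSP gap") and then invoke a hitting-time bound; \pref{lem:hitting} cannot be applied under that conditioning. The paper's fix is the coupling with the $\calF_m$-measurable near-optimal policy $\tilpi^m$ (which copies $\pi^m$ on small-gap actions and $\optpi$ otherwise): the event "some first-half gap exceeds $\eta$, or all gaps are small and $s_{H/2+1}=g$'' has identical probability under $\pi^m$ and $\tilpi^m$, and one then bounds the hitting time of $\tilpi^m$ using $V^{\tilpi^m}_h\leq 4\B$. This coupling is the missing idea in your second half. (Separately, your claimed per-interval failure probability $1/(8\B(M')^2H)$ is not delivered by the stated $H$, which only yields $1/(4\B M'H)$; this only affects constants in the probability bookkeeping.)
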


\begin{proof}
	First note that $V^{\optpi}_h(s)\leq 3\B$ for any $s\in\calS, h\in[H]$.
	Thus, the expected hitting time of $\optpi$ in $\tilcalM$ is at most $\frac{3\B}{\cmin}$ starting from any state and layer.
	Without loss of generality, we assume that $H$ is an even integer.
	Note that $\tilcalM$ can be treated as an SSP instance where the learner teleports to the goal state at the $(H+1)$-th step.
	Thus by \pref{lem:hitting} and $H\geq\frac{35\B}{\cmin}\ln(8\B M'H)$, when $h\leq \frac{H}{2}+1$, $P(s_{H+1}\neq g|s_h=s, \optpi)\leq\frac{1}{4\B M'H}$ for any state $s$, and for any $h\leq \frac{H}{2}$:
	$$\optQ_h(s, a) - \optQ(s, a)\leq Q^{\optpi}_h(s, a) - \optQ(s, a) = P_{s, a}(V^{\optpi}_{h+1}-\optV) \leq 2\B \max_sP(s_{H+1}\neq g|\optpi, s_{h+1}=s) \leq \frac{1}{2M'H}.$$
	It also implies $|\gap_h(s, a) - \gap(s, a)|$ for $h\leq\frac{H}{2}$, since:
	\begin{align*}
		\abr{\gap_h(s, a) - \gap(s, a)} \leq \abr{\optQ_h(s, a) - \optQ(s, a)} + \abr{\optV_h(s) - \optV(s)} \leq \frac{1}{2M'H} + \max_a\abr{\optQ_h(s, a) - \optQ(s, a)} \leq \frac{1}{M'H}.
	\end{align*}
	Define $\gap^m_h=\gap_h(s^m_h, a^m_h)$ and a threshold $\eta=\frac{3}{M'H}$.
	By \pref{lem:reg gap}, it suffices to bound $\summp\sumh\gap^m_h$.
	Note that
	\begin{align*}
		\summp\sumh\gap^m_h &\leq \summp\sumh\gap^m_h\Ind\cbr{\gap^m_h>\eta} + \bigO{\summp\sumh\frac{\B}{M'H}}\\
		&\leq \summp\sum_{h\leq H/2}\gap^m_h\Ind\cbr{\gap^m_h>\eta} + \summp\sum_{h>H/2}\gap^m_h + \bigO{\B}.
	\end{align*}
	For the first term, define $N=\ceil{\log_2(\frac{3\B+1}{\eta})}=\bigo{\ln(\B M'H)}$, and
	$$\nstar = \min\cbr{n\in [N]: \exists (s', a'), h'\leq\frac{H}{2} \text{ such that } \gap_{h'}(s', a')\in (\eta 2^{n-1}, \eta 2^n]}.$$
	%denote by $\nstar$ the smallest integer $n$ such that $\mingap 2^n\geq \frac{3}{MH}$ and there exist $(s', a'), h'\leq H/2$ such that $\gap_{h'}(s', a')\in (\mingap 2^{n-1}, \mingap 2^n]$.
	Then by the definition of $\nstar$ and $|\gap(s, a)-\gap_h(s, a)|\leq\frac{1}{M'H}$ for $h\leq\frac{H}{2}$, there exist $(s', a'), h'\leq \frac{H}{2}$ such that
	\begin{equation}
		\label{eq:mingap star}
		\mingap \leq \gap(s', a') \leq \gap_{h'}(s', a') + \frac{1}{M'H} \leq \eta 2^{\nstar} + \frac{1}{M'H} \leq \frac{4}{3}\cdot \eta 2^{\nstar}.
	\end{equation}
	Moreover, for each $n\in\fN$ and $h\leq \frac{H}{2}$, define $z^m_h=\Ind\{\gap^m_h > \eta 2^n \}$.
	Then by \pref{lem:sum z gap}, with probability at least $1-\frac{\delta}{2(n+1)^2}$, 
	$$\eta 2^nM_z \leq \summp z^m_h\gap^m_h = \bigO{\sqrt{d^3\B^2HM_z}\ln\frac{d\B M'H(n+1)}{\delta} + d^2\B H\ln^{1.5}\frac{d\B M'H(n+1)}{\delta} },$$
	where $M_z=\summp z^m_h$.
	Solving a quadratic inequality w.r.t $\sqrt{M_z}$ gives:
	\begin{equation}
		\label{eq:gap x}
		\summp\Ind\{\gap^m_h > \eta 2^n\} = \bigO{ \frac{d^3\B^2H}{\eta^24^n}\ln^2\frac{d\B M'H(n+1)}{\delta} + \frac{d^2\B H}{\eta 2^n}\ln^{1.5}\frac{d\B M'H(n+1)}{\delta} }.
	\end{equation}
	By a union bound, \pref{eq:gap x} holds for all $n\in\fN$ simultaneously with probability at least $1-\delta$.
	Therefore, the first term is bounded as follows:
	\begin{align*}
		&\summp\sum_{h\leq H/2}\gap^m_h\Ind\cbr{ \gap^m_h > \eta } \\ 
		&= \summp\sum_{h\leq H/2}\sum_{n=\nstar}^N\gap^m_h\Ind\{ \gap^m_h\in (\eta 2^{n-1}, \eta 2^n] \} \leq \summp\sum_{h\leq H/2}\sum_{n=\nstar}^N \eta 2^n\Ind\{ \gap^m_h > \eta 2^{n-1} \}\\
		&= \bigO{ \sum_{h\leq H/2}\sum_{n=\nstar}^N \rbr{ \frac{d^3\B^2H}{\eta 2^n}\ln^2\frac{d\B M'H(n+1)}{\delta} + d^2\B H\ln^{1.5}\frac{d\B M'H(n+1)}{\delta} } } \tag{\pref{eq:gap x}}\\
		&=\bigO{ \frac{d^3\B^2H^2}{\eta 2^{\nstar}}\ln^3\frac{d\B M'H}{\delta} + d^2\B H^2\ln^{2.5}\frac{d\B M'H}{\delta} } \tag{$N=\bigo{\ln(\B M'H)}$} \\
		&= \bigO{ \frac{d^3\B^2H^2}{\mingap}\ln^3\frac{d\B M'H}{\delta} + d^2\B H^2\ln^{2.5}\frac{d\B M'H}{\delta} }. \tag{\pref{eq:mingap star}}
	\end{align*}
	For the second term, note that:
	\begin{align*}
		\summp\sum_{h>H/2}\gap^m_h \leq \underbrace{\summp\sum_{h>H/2}\gap^m_h\Ind\cbr{\exists h\leq \frac{H}{2}: \gap^m_h > \eta }}_{\xi_1} + \underbrace{\summp\sum_{h>H/2}\gap^m_h\Ind\cbr{\forall h\leq \frac{H}{2}: \gap^m_h \leq \eta }}_{\xi_2}.
	\end{align*}
	For $\xi_1$, define $z^m_{\frac{H}{2}+1}=\Ind\cbr{\exists h\leq \frac{H}{2}: \gap^m_h > \eta }$ and $M_z=\sum_{m=1}^{M'}z^m_{\frac{H}{2}+1}$.
	%denote by $M'$ the number of intervals such that indicator in the first term above is $1$.
	Then by \pref{lem:sum z gap}, with probability at least $1-\delta$,
	\begin{align*}
		\xi_1=\summp z^m_{\frac{H}{2}+1}\sum_{h>H/2}\gap^m_h = \bigO{\sqrt{d^3\B^2HM_z}\ln\frac{d\B M'H}{\delta} + d^2\B H\ln^{1.5}\frac{d\B M'H}{\delta}}.
	\end{align*}	
	It suffices to bound $M_z$.
	Note that by the definition of $\nstar$, we have $\min_{s, a, h\leq H/2, \gap_h(s, a)>\eta}\gap_h(s, a) \in (\eta 2^{\nstar-1}, \eta 2^{\nstar}]$.
	Thus, by \pref{eq:gap x},
	\begin{align*}
		M_z &= \summp \Ind\cbr{\exists h\leq \frac{H}{2}: \gap^m_h > \eta } \leq \summp\sum_{h\leq H/2}\Ind\cbr{ \gap^m_h > \eta } \leq \summp\sum_{h\leq H/2}\Ind\cbr{ \gap^m_h > \eta 2^{\nstar-1} }\\ 
		&= \bigO{ \frac{d^3\B^2H^2}{\eta^24^{\nstar-1}}\ln^2\frac{d\B M'H\nstar}{\delta} + \frac{d^2\B H^2}{\eta 2^{\nstar-1}}\ln^{1.5}\frac{d\B M'H\nstar}{\delta} }.
		%&= \tilO{ \frac{d^3\B^2H^2}{\eta^2 4^{\nstar-1}} + \frac{d^2\B H^2}{\eta 2^{\nstar-1}} }.
	\end{align*}
	Plugging this back and by \pref{eq:mingap star}, we get:
	\begin{align*}
		\xi_1 &= \bigO{ \frac{d^3\B^2H^{1.5}}{\mingap}\ln^2\frac{d\B M'H}{\delta} + d^2\B H\ln^{1.5}\frac{d\B M'H}{\delta} }.
		%\tilO{\B \sqrt{d^3HM_z} + d^2\B H } = \tilO{ \frac{\B^2d^3H^{1.5}}{\eta 2^{\nstar-1}} + d^2\B H } = \tilO{ \frac{\B^2d^3H^{1.5}}{\mingap^{\star}} + d^2\B H }.
	\end{align*}
	For $\xi_2$, denote by $\tilpi^m$ the near-optimal policy ``closest'' to $\pi^m$, such that:
	\begin{align*}
		\tilpi^m(s, h) = \begin{cases}
			\pi^m(s, h), & h \leq H/2 \text{ and } \gap_h(s, \pi^m(s, h))\leq\eta,\\
			\optpi(s, h), & h \leq H/2 \text{ and } \gap_h(s, \pi^m(s, h))> \eta,\\
			\optpi(s, h), & h > H/2.
		\end{cases}
	\end{align*}
	Note that $\gap_h(s, \tilpi^m(s, h))\leq\eta$ for all $s, h$.
	By the extended value difference lemma~\citep[Lemma 1]{efroni2020optimistic}, $V^{\tilpi^m}_h(s) - \optV_h(s) = \E[\sum_{h'=h}^H \gap_{h'}(s_{h'}, a_{h'}) | s_h=s, \tilpi^m]\leq \frac{3}{M'}\leq \B$ for all $s, h$ by $M'\geq 3$.
	Therefore, $V^{\tilpi^m}_h(s) \leq 4\B$ for all $s, h$.
	Denote by $\calF_m$ the interaction history before interval $m$.
	Then, $\pi^m, \tilpi^m\in\calF_m$, and
	\begin{align*}
		&P\rbr{\left. \sum_{h>H/2}\gap^m_h\Ind\cbr{\forall h\leq H/2: \gap^m_h \leq \eta } = 0\right| \pi^m, \calF_m }\\
		&\geq P\rbr{\left. \exists h\leq H/2, \gap^m_h > \eta \text{ or } \forall h \leq H/2, \gap^m_h \leq \eta, s_{H/2+1}=g \right| \pi^m, \calF_m}\\
		&= P\rbr{\left. \exists h\leq H/2, \tilpi^m(s^m_h, h)\neq\pi^m(s^m_h, h) \text{ or } \forall h\leq H/2, \tilpi^m(s^m_h, h) = \pi^m(s^m_h, h), s_{H/2+1}=g \right| \pi^m, \calF_m}\\
		&= P\rbr{\left. \exists h\leq H/2, \tilpi^m(s^m_h, h)\neq\pi^m(s^m_h, h) \text{ or } \forall h\leq H/2, \tilpi^m(s^m_h, h) = \pi^m(s^m_h, h), s_{H/2+1}=g \right| \tilpi^m, \calF_m}\\
		&\geq P\rbr{\left. s_{H/2+1} = g \right| \tilpi^m, \calF_m } \geq 1 - \frac{1}{4\B M'H},
	\end{align*}
	where in the last inequality we apply \pref{lem:hitting}, the fact that $V^{\tilpi^m}_h(s) \leq 4\B$ for all $s, h$, and $H\geq\frac{35\B}{\cmin}\ln(8\B M'H)$.
	%Therefore, $\E[\xi_2]=\tilo{1}$.
	%Now take $\delta=\frac{1}{M'H}$, by \pref{lem:reg gap} and $H=\ceil{\frac{35\B}{\cmin}\ln(8\B M'H)}$, we have:
	Now by \pref{lem:reg gap} and $H=\ceil{\frac{35\B}{\cmin}\ln(8\B M'H)}$, we have:
	\begin{align*}
		\summp V^{\pi^m}_1(s^m_1) - \optV_1(s^m_1) &\leq 2\summp\sum_{h=1}^H\gap_h(s^m_h, a^m_h) + \bigo{\B H\ln(M'/\delta)}\\ 
		&= \bigO{ \frac{d^3\B^2H^2}{\mingap}\ln^3\frac{d\B M'H}{\delta} + d^2\B H^2\ln^{2.5}(d\B M'H/\delta)}\\ 
		&= \bigO{ \frac{d^3\B^4}{\cmin^2\mingap}\ln^5(d\B M'H/\delta) }. %+ \delta \B M'H }\\
		%\tilO{ \frac{\B^2d^3H^{1.5}}{\mingap} + d^2\B H } = \tilO{ \frac{d^3\B^{3.5}}{\cmin^{1.5}\mingap} }.
	\end{align*}
\end{proof}

We are now ready to prove \pref{thm:log}.
\begin{proof}[\pfref{thm:log}]
	First note that for a given $H\geq 4\T\ln(4K)$, by \pref{lem:efficient} and \pref{thm:bound M}, we have: $M=\tilO{K + d^3H }$ with probability at least $1-4\delta$ for some $\delta >0$ when running \pref{alg:fha} with \pref{alg:FH-SSP} and horizon $H$.
	That is, there exist $b > 0$ and constant $p\geq 1$ such that $M\leq b(K + d^3H)\ln^p(d\B HK/\delta)$.
	Now let $M'=b(K+d^3H)\ln^p(d\B HK/\delta)$. 
	To obtain the regret bound in \pref{lem:log}, it suffices to have $H\geq\frac{35\B}{\cmin}\ln(8\B M'H)$.
	Plugging in the definition of $M'$ and by $x> \ln x$ for $x>0$, it suffices to have $H = \frac{b'\B}{\cmin}\ln(\frac{d\B K}{\delta\cmin})$ for some constant $b'>0$.
	To conclude, we have $M\leq M'$ with probability at least $1-4\delta$ when running \pref{alg:fha} with \pref{alg:FH-SSP} and horizon $H=\frac{b'\B}{\cmin}\ln(\frac{d\B K}{\delta\cmin})$.
	Moreover, with probability at least $1-3\delta-1/4\B M'H$, we have $\sum_{m=1}^{\min\{M,M'\}} V^{\pi^m}_1(s^m_1) - \optV_1(s^m_1)=\bigo{ \frac{d^3\B^4}{\cmin^2\mingap}\ln^5(d\B M'H/\delta) }$.
	To obtain an expected regret bound, we further need to bound the cost under the low probability ``bad'' event.
	We make the following modification to \pref{alg:fha}: whenever the counter $m = n\cdot M'$ for some $n\in\fN_+$, we restart \pref{alg:FH-SSP}.
	Ideas above are summarized in \pref{alg:fha restart}.
	Now consider running \pref{alg:fha restart} with \pref{alg:FH-SSP}, horizon $H=\frac{b'\B}{\cmin}\ln(\frac{d\B K}{\delta\cmin})$, failure probability $\delta=\frac{1}{4M'H}$, and restart threshold $M'$.
	By the choice of $M'$, we have $P(M>M')\leq 4\delta$.
	By a recursive argument, we have $P(M>n\cdot M')\leq (4\delta)^n$ for $n\in\fN_+$.
	We have by \pref{lem:fha} and \pref{lem:log}:
	\begin{align*}
		\E[R_K] &\leq \E[\tilR_M] + \B \leq \E[\tilR_{\min\{M, M'\}}] + \E[\max\{0, M-M'\}(H+2\B)] + \B\\ 
		&= \bigO{ \frac{d^3\B^4}{\cmin^2\mingap}\ln^5(d\B M'H) } = \bigO{ \frac{d^3\B^4}{\cmin^2\mingap}\ln^5\frac{d\B K}{\cmin} },
	\end{align*}
	where we apply
	\begin{align*}
		&\E[\max\{0, M-M'\}(H+2\B)] \leq \sum_{n=1}^{\infty}P(M\in(nM', (n+1)M']) \cdot nM'(H+2\B)\\ 
		&\leq \sum_{n=1}^{\infty}n\cdot P(M>nM')M'(H+2\B) \leq \sum_{n=1}^{\infty}n(4\delta)^n M'(H+2\B) \leq \frac{16\delta M'(H+2\B)}{1-4\delta} = \bigO{1}.
	\end{align*}
	This completes the proof.
\end{proof}

\DontPrintSemicolon
\setcounter{AlgoLine}{0}
\begin{algorithm}[t]
	\caption{Finite-Horizon Approximation of SSP from~\citep{cohen2021minimax}}
	\label{alg:fha restart}
	\textbf{Input:} Algorithm $\frA$ for finite-horizon MDP $\tilcalM$ with horizon $H\geq 4\T\ln (4K)$ and restart threshold $M'$.
	
	\textbf{Initialize:} interval counter $m\leftarrow 1$.
	
	\For{$k=1,\ldots,K$}{
	\nl	Set $s^m_1 \leftarrow \sinit$. \label{line:enter_next_episode}
		
	\nl	\While{$s^m_1 \neq g$}{
     \nl         Feed initial state $s^m_1$ to $\frA$.
			
	\nl		\For{$h=1,\ldots,H$}{
	\nl		     Receive action $a^m_h$ from $\frA$.
			     
	\nl			\If{$s^m_{h} \neq g$} {
	\nl			      Play action $a^m_h$, observe cost $c^m_h=c(s^m_h, a^m_h)$ and next state $s^m_{h+1}$. \label{line:execute}
				}			     
	\nl			\lElse {
				      Set $c^m_h = 0$ and $s^m_{h+1} = g$. \label{line:dummy}
				}
	\nl		     Feed $c^m_h$ and $s^m_{h+1}$ to $\frA$. \label{line:feed}
%				Feed $(s^m_h, a^m_h, s^m_{h+1}, c^m_h)$ to $\frA$ and obtain $\pi^m_h$.
			}
			%Feed terminal state $s^m_{H+1}$ and terminal cost $c_f(s^m_{H+1})$ to $\frA$.
			%Feed trajectory $\tau^m=\{s^m_h, a^m_h\}_{h=1}^H\cup\{s^m_{H+1}\}$ and costs $\{c(s^m_h, a^m_h)\}_{h=1}^H$ to $\calA$.
     \nl        Set $s^{m+1}_1 = s^m_{H+1}$ and $m\leftarrow m+1$. \label{line:enter_next_interval}
     
     \lIf{$m=n\cdot M'$ for some $n\in\fN_+$}{ Reinitialize $\frA$. }
		}
	}
	
\end{algorithm}

\subsection{Extra Lemmas for \pref{sec:efficient}}

\begin{lemma}{\citep[Lemma 6]{rosenberg2020adversarial}}
	\label{lem:hitting}
	Let $\pi$ be a policy with expected hitting time at most $\tau$ starting from any state.
	Then for any $\delta\in(0, 1)$, with probability at least $1-\delta$, $\pi$ takes no more than $4\tau\ln\frac{2}{\delta}$ steps to reach the goal state.
\end{lemma}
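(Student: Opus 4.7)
The plan is to apply a block-restart argument combining Markov's inequality with the Markov property of the trajectory induced by $\pi$. First, I would derive a one-block tail bound: by Markov's inequality applied to the hitting time $T_s$ of $g$ under $\pi$ starting from any state $s$,
\[
\Pr[T_s>2\tau\mid s_1=s]\leq\frac{\E[T_s\mid s_1=s]}{2\tau}\leq\frac{1}{2},
\]
using the uniform expected-hitting-time assumption $\E[T_s\mid s_1=s]\leq\tau$.

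Next, I would partition the trajectory into consecutive blocks of length $2\tau$ each. Conditioned on $\pi$ not having reached $g$ by the end of block $k$, let $s_{(k)}\in\calS$ denote the current state. By the Markov property (and stationarity of $\pi$), the remainder of the trajectory is distributed as a fresh run of $\pi$ started from $s_{(k)}$, so the uniform bound $\E[T_{s_{(k)}}]\leq\tau$ still applies and the one-block bound can be reused conditionally. Chaining this bound across $N$ blocks then yields $\Pr[T_s>2\tau N\mid s_1=s]\leq 2^{-N}$.

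Finally, I would choose $N=\lceil\log_2(1/\delta)\rceil$ so that $2^{-N}\leq\delta$, and verify arithmetically, using $1/\ln 2<2$ and $1\leq 2\ln 2$, that
\[
2\tau N\;\leq\;2\tau\bigl(\log_2(1/\delta)+1\bigr)\;\leq\;4\tau\ln(2/\delta),
\]
which gives the claimed bound. This is a standard textbook argument; the only step that requires any care is the invocation of the Markov property to restart independently at the start of each block, which is immediate when $\pi$ is stationary (the case relevant in this paper), so I do not anticipate a substantive obstacle.
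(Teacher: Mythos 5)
Your proof is correct and is exactly the standard block-restart argument behind the cited result (the paper itself does not reprove this lemma; it imports it from \citet{rosenberg2020adversarial}, whose Lemma~6 proof is the same Markov-inequality-per-block-of-length-$2\tau$ chaining you describe). The arithmetic at the end checks out since $2\tau\lceil\log_2(1/\delta)\rceil\leq 2\tau\log_2(2/\delta)=\frac{2\tau}{\ln 2}\ln(2/\delta)\leq 4\tau\ln(2/\delta)$.
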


\begin{lemma}
	\label{lem:sum beta}
	For an arbitrary set of intervals $\calI\subseteq[M']$ for some $M'\in\fN_+$, we have: 
	$$\sum_{m\in\calI}\sumh \beta_m\norm{\phi^m_h}_{\Lambda_m^{-1}}=\bigO{\sqrt{d^3B^2H|\calI|}\ln\frac{dB M'H}{\delta} + d^2B H\ln^{1.5}\frac{dB M'H}{\delta}}.$$
\end{lemma}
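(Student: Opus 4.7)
The strategy is to first prove
\[
\sum_{m\in\calI}\sum_{h=1}^H \norm{\phi^m_h}_{\Lambda_m^{-1}} = \tilO{\sqrt{dH|\calI|} + dH},
\]
and then multiply by the uniform bound $\beta_m \leq \beta_{M'} = \tilO{dB}$ (using that $\beta_m$ is non-decreasing in $m$), which yields the two terms $\sqrt{d^3B^2H|\calI|}$ and $d^2BH$ in the claim up to the stated logarithmic factors.

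The main obstacle is that the covariance $\Lambda_m$ is only refreshed at the end of each episode, so all $H$ features within the same episode see a ``stale'' $\Lambda_m$. Bounding $\sum_h \norm{\phi^m_h}_{\Lambda_m^{-1}}^2$ episode-by-episode via $\log(\det\Lambda_{m+1}/\det\Lambda_m)$ and then Cauchy--Schwarz incurs an extra $\sqrt{H}$ factor, giving only the weaker bound $\tilO{\sqrt{d^3B^2H^2|\calI|}}$. To avoid this loss, I introduce the per-step sequential covariance $\tilLambda_{m,h} := \Lambda_m + \sum_{h'<h}\phi^m_{h'}(\phi^m_{h'})^\top$, which satisfies $\tilLambda_{m,1}=\Lambda_m$ and $\tilLambda_{m,H+1}=\Lambda_{m+1}$. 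Applying the standard elliptic potential lemma to the flattened sequence $\{(m,h)\}_{m\leq M',\,h\leq H}$ yields
\[
\Psi \;:=\; \sum_{m=1}^{M'}\sum_{h=1}^H \norm{\phi^m_h}_{\tilLambda_{m,h}^{-1}}^2 \;\leq\; 2d\ln(1 + M'H/d).
\]

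Next, I relate $\norm{\phi^m_h}_{\Lambda_m^{-1}}$ to $\norm{\phi^m_h}_{\tilLambda_{m,h}^{-1}}$ by telescoping the Sherman--Morrison identity for $\tilLambda_{m,h'+1}^{-1}$ across $h'<h$ and applying Cauchy--Schwarz to each rank-one cross term, while upper bounding one ``outer'' factor by $\tilLambda_{m,h'}^{-1}\preceq\Lambda_m^{-1}$. This produces
\[
\norm{\phi^m_h}_{\Lambda_m^{-1}}^2 (1-c_{m,h}) \;\leq\; \norm{\phi^m_h}_{\tilLambda_{m,h}^{-1}}^2, \qquad c_{m,h} := \sum_{h'<h}\norm{\phi^m_{h'}}_{\tilLambda_{m,h'}^{-1}}^2.
\]
I then split the pairs $(m,h)$ into \emph{good} ones with $c_{m,h}\leq 1/2$ and \emph{bad} ones with $c_{m,h}>1/2$. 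Good pairs satisfy $\norm{\phi^m_h}_{\Lambda_m^{-1}} \leq \sqrt{2}\,\norm{\phi^m_h}_{\tilLambda_{m,h}^{-1}}$, so a final Cauchy--Schwarz over $\calI\times[H]$ bounds their contribution by $\sqrt{2|\calI|H\Psi} = \tilO{\sqrt{dH|\calI|}}$. Bad pairs each contribute at most $1$ (since $\Lambda_m \succeq I$ gives $\norm{\phi^m_h}_{\Lambda_m^{-1}}\leq 1$), and any episode containing a bad pair must satisfy $\sum_{h=1}^H\norm{\phi^m_h}^2_{\tilLambda_{m,h}^{-1}} \geq 1/2$, so the number of such episodes is at most $2\Psi = \tilO{d}$, yielding a total bad contribution of $\tilO{dH}$. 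Combining the two bounds and multiplying by $\beta_{M'}$ completes the proof.
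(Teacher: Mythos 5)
Your proof is correct, and it reaches the stated bound by a different mechanism than the paper in the key step that handles the stale covariance within an interval. The paper splits the intervals by whether $\det(\Lambda_{m+1})>2\det(\Lambda_m)$: on the (at most $\bigO{d\ln(M'H)}$) doubling intervals it pays $H$ per interval trivially, and on the rest it uses \pref{lem:quad bound} to convert $\norm{\phi^m_h}_{\Lambda_m^{-1}}$ into $\sqrt{2}\,\norm{\phi^m_h}_{\Lambda_{m+1}^{-1}}$, after which Cauchy--Schwarz and the elliptic potential lemma give the $\sqrt{d^3B^2H|\calI|}$ term. You instead compare $\Lambda_m^{-1}$ to the per-step covariance $\tilLambda_{m,h}^{-1}$ via a telescoped Sherman--Morrison identity, obtaining $\norm{\phi^m_h}_{\Lambda_m^{-1}}^2(1-c_{m,h})\leq\norm{\phi^m_h}_{\tilLambda_{m,h}^{-1}}^2$ with $c_{m,h}=\sum_{h'<h}\norm{\phi^m_{h'}}_{\tilLambda_{m,h'}^{-1}}^2$, and your good/bad split on $c_{m,h}\le 1/2$ plays exactly the role of the paper's determinant-doubling split (indeed $1+c_{m,h}\ge\det(\tilLambda_{m,h})/\det(\Lambda_m)$ is morally the same quantity). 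Both arguments have the same architecture — a $\sqrt{2}$-factor comparison on most intervals plus an $\tilO{d}$ count of exceptional intervals each costing $H$ — and the logarithmic powers ($\ln$ on the leading term, $\ln^{1.5}$ on the lower-order term, after multiplying by $\beta_{M'}=\bigO{dB\ln^{0.5}(\cdot)}$) come out identically. Your route is more self-contained (it avoids invoking the determinant-ratio lemma, \pref{lem:quad bound}, as a black box), at the cost of a slightly longer computation; the paper's is the more standard off-the-shelf argument. Either is acceptable.
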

\begin{proof}
	We bound the sum by considering two cases:
	\begin{align*}
		\sum_{m\in\calI}\sumh \beta_m\norm{\phi^m_h}_{\Lambda_m^{-1}}&\leq \beta_{M'}\sum_{m\in\calI: \det(\Lambda_{m+1})\leq 2\det(\Lambda_m)}\sumh\norm{\phi^m_h}_{\Lambda_m^{-1}} + \beta_{M'}\sum_{m\in\calI: \det(\Lambda_{m+1})> 2\det(\Lambda_m)}\sumh\norm{\phi^m_h}_{\Lambda_m^{-1}}\\
		&\leq \sqrt{2}\beta_{M'}\sum_{m\in\calI}\sumh\norm{\phi^m_h}_{\Lambda^{-1}_{m+1}} + \bigO{\beta_{M'}d\ln(M'H/\lambda)H} \tag{$2\Lambda_m\mgeq \Lambda_{m+1}$ by \pref{lem:quad bound}, and $\det(\Lambda_{M'})/\det(\Lambda_0)\leq ((\lambda+M'H)/\lambda)^d$}\\
		&= \bigO{\beta_{M'}\sqrt{H|\calI|\sum_{m\in\calI}\sumh\norm{\phi^m_h}_{\Lambda_{m+1}^{-1}}^2} + \beta_{M'}d H\ln(M'H)} \tag{Cauchy-Schwarz inequality}\\ 
		&= \bigO{\sqrt{d^3B^2H|\calI|}\ln\frac{dB M'H}{\delta} + d^2B H\ln^{1.5}\frac{dB M'H}{\delta}}. \tag{\citep[Lemma D.2]{jin2020provably}, $\lambda=1$, and definition of $\beta_{M'}$}
	\end{align*}
\end{proof}

\subsection{\pfref{thm:lb}}
\begin{proof}
	%By Yao's minimax principle, in order to obtain a regret lower bound, it suffices to show that there exists a distribution of SSP instances that forces any deterministic learner to suffer a regret bound of $\Omega(d\B\sqrt{K})$ (or $\lowo{\frac{d\B^2}{\mingap}}$) in expectation.
	Define $\delta=\frac{1}{3}$, $\Delta = \frac{\sqrt{\delta/K}}{8\sqrt{2}}$ and assume $K\geq\frac{d^2}{2\delta}$.
	Consider a family of SSP parameterized by $\rho\in\{-\Delta, \Delta\}^d$ with action set $\calA=\{-1, 1\}^d$.
	For the SSP instance parameterized by $\rho$, it consists of two states $\{s_0, s_1\}$.
	The transition probabilities are as follows:
	\begin{align*}
		&P(s_1|s_0, a) = 1 - \delta - \inner{\rho}{a}, \quad  P(g|s_0, a)=\delta + \inner{\rho}{a},\\
		&P(s_1|s_1, a) = 1 - 1/\B, \quad P(g|s_1, a) = 1/\B,
	\end{align*}
	and the cost function is $c(s, a)=\Ind\{s=s_1\}$.
	The SSP instance above can be represented as a linear SSP of dimension $d+2$ as follows: define $\alpha=\sqrt{\frac{1}{1+\Delta d}}$, $\beta=\sqrt{\frac{\Delta}{1+\Delta d}}$, 
	\begin{align*}
		\phi(s, a) = \begin{cases}
			[\alpha, \beta a^{\top}, 0]^{\top}, & s = s_0\\
			[0, 0, 1]^{\top}, & s = s_1
		\end{cases}\qquad
		\mu(s') = \begin{cases}
			[(1-\delta)/\alpha, -\rho^{\top}/\beta, 1-1/\B]^{\top}, & s'=s_1\\
			[\delta/\alpha, \rho^{\top}/\beta, 1/\B]^{\top}, & s' = g
		\end{cases}
	\end{align*}
	and $\thetastar = [0, 0, 1]$.
	Note that it satisfies $c(s, a)=\phi(s, a)^{\top}\thetastar$, $P(s'|s, a)=\phi(s, a)^{\top}\mu(s')$, $\norm{\phi(s, a)}_2\leq 1$, and $\norm{\thetastar}_2\leq 1\leq\sqrt{d+2}$.
	Moreover, for any function $h: \calS_+\rightarrow \fR$, we have:
	\begin{align*}
		\sum_{s'}h(s')\mu(s') = \begin{bmatrix}
			h(s_1)(1-\delta)\sqrt{1+\Delta d} + h(g)\delta\sqrt{1+\Delta d}\\
			(h(g)-h(s_1))\rho\sqrt{(1+\Delta d)/\Delta}\\
			h(s_1)(1-1/\B) + h(g)/\B
		\end{bmatrix}.
	\end{align*} 
	Note that when $K\geq \frac{d^2}{2\delta}$, $\Delta d\leq \frac{\delta}{8}=\frac{1}{24}$, and
	\begin{align*}
		(h(s_1)(1-\delta)\sqrt{1+\Delta d} + h(g)\delta\sqrt{1+\Delta d})^2 &\leq \norm{h}_{\infty}^2(1+\Delta d) \leq \frac{25}{24}\norm{h}_{\infty}^2, \\
		\norm{(h(g)-h(s_1))\rho\sqrt{(1+\Delta d)/\Delta}}_2^2 &\leq 4\norm{h}_{\infty}^2\Delta d(1+\Delta d) \leq \frac{25}{24}\norm{h}_{\infty}^2, \\
		(h(s_1)(1-1/\B) + h(g)/\B)^2 &\leq \norm{h}_{\infty}^2.
	\end{align*}
	Thus, we have $\norm{\sum_{s'}h(s')\mu(s')}_2\leq \norm{h}_{\infty}\sqrt{d+2}$ by $d\geq 2$, and the SSP instance satisfies \pref{assum:linMDP}.
	The regret is bounded as follows: let $a_k$ denote the first action taken by the learner in episode $k$.
	Then for any $\rho\in\{-\Delta, \Delta\}^d$, the expected cost of taking action $a$ as the first action is $C_{\rho}(a)=\B(1-\delta-\inner{\rho}{a})$.
	\begin{align*}
		\E_{\rho}[R_K] &= \sumk\E_{\rho}\sbr{C_{\rho}(a_k) - \min_aC_{\rho}(a)} = \B\sumk\E_{\rho}\sbr{\max_a\inner{\rho}{a} - \inner{\rho}{a_k}} \\
		&= 2\B\Delta\sumk\E_{\rho}\sbr{ \sum_{j=1}^d \Ind\{\sgn(\rho_j) \neq \sgn(a_{k,j})\} } = 2\B\Delta\sum_{j=1}^d\E_{\rho}[N_j(\rho)],
	\end{align*}
	where we define $N_j(\rho)=\sumk \Ind\{\sgn(\rho_j) \neq \sgn(a_{k,j})\}$, and $\E_{\rho}$ is the expectation w.r.t the SSP instance parameterized by $\rho$.
	Let $\rho^j$ denote the vector that differs from $\rho$ at its $j$-th coordinate only.
	Then, we have $N_j(\rho^j) + N_j(\rho) = K$, and for a fixed $j$,
	\begin{align*}
		2\sum_{\rho}\E_{\rho}\sbr{R_K} &= \sum_{\rho}\rbr{\E_{\rho}\sbr{R_K} + \E_{\rho^j}\sbr{R_K}} = 2\B\Delta\sum_{\rho}\sum_{j=1}^d\rbr{ K + \E_{\rho}[N_j(\rho)] - E_{\rho^j}[N_j(\rho)] }\\
		&\geq 2\B\Delta\sum_{\rho}\sum_{j=1}^d\rbr{ K - K\sqrt{2\KL(P_{\rho}, P_{\rho^j})} },
	\end{align*}
	where $P_{\rho}$ is the joint probability of $K$ trajectories induced by the interactions between the learner and the SSP parameterized by $\rho$, and in the last inequality we apply Pinsker's inequality to obtain:
	\begin{align*}
		\abr{\E_{\rho}[N_j(\rho)] - \E_{\rho^j}[N_j(\rho)]} \leq K\norm{P_{\rho}-P_{\rho^j}}_1 \leq K\sqrt{2\KL(P_{\rho}, P_{\rho^j})}. %&\leq \sum_{\tau_{1:K}}|P_{\rho}(\tau_{1:K}) - P_{\rho^j}(\tau_{1:K})|N_j(\rho)(\tau_{1:K})\\
		%&\leq K\sum_{\tau_{1:K}}|P_{\rho}(\tau_{1:K}) - P_{\rho^j}(\tau_{1:K})|\\
		%&\leq K\sqrt{2\KL(P_{\rho}, P_{\rho^j})}, \tag{Pinsker's inequality}
	\end{align*}
	By the divergence decomposition lemma (see e.g.~\citep[Lemma~15.1]{lattimore2020bandit}), we further have
	\begin{align*}
		\KL(P_{\rho}, P_{\rho^j}) &= \sumk \E_{\rho}\sbr{\KL\rbr{\bernoulli(\delta + \inner{a_k}{\rho}), \bernoulli(\delta+\inner{a_k}{\rho^j})}} \\ 
		&= \sumk \E_{\rho}\sbr{ \frac{2\inner{a_k}{\rho-\rho^j}^2}{\delta + \inner{a_k}{\rho}} } \leq \frac{16K\Delta^2}{\delta},\tag{$d\Delta\leq\delta/2$}
		%\KL(P_{\rho}, P_{\rho^j}) &= \sum_{\tau_{1:K}}P_{\rho}(\tau_{1:K})\ln\frac{P_{\rho}(\tau_{1:K})}{P_{\rho^j}(\tau_{1:K})} = \sumk\sum_{\tau_{1:k}}P_{\rho}(\tau_{1:k})\ln\frac{P_{\rho}(\tau_k|\tau_{1:k-1})}{P_{\rho^j}(\tau_k|\tau_{1:k-1})}\\
		%&= \sumk\sum_{\tau_{1:k-1}}P_{\rho}(\tau_{1:k-1})\sum_{s_{k,2}}P(s_{k,2}|\tau_{1:k-1})\ln\frac{P_{\rho}(s_{k,2}|\tau_{1:k-1})}{P_{\rho^j}(s_{k,2}|\tau_{1:k-1})}\\
		%&= \sumk\sum_{\tau_{1:k-1}}P_{\rho}(\tau_{1:k-1})\KL(\bernoulli(\delta + \inner{a_k}{\rho}), \bernoulli(\delta+\inner{a_k}{\rho^j}))\\
		%&\leq \sumk \E_{\rho, \tau_{1:k-1}}\sbr{ \frac{2\inner{a_k}{\rho-\rho^j}^2}{\delta + \inner{a_k}{\rho}} } \leq \frac{16K\Delta^2}{\delta}. \tag{$d\Delta\leq\delta/2$}
	\end{align*}	
%	By the chain rule of relative entropy, $\KL(P_{\rho}, P_{\rho^j})$ can be further decomposed as (note that we only need to consider the state sequence since the cost is known):
%	\begin{align*}
%		\KL(P_{\rho}, P_{\rho^j}) &= \sum_{\tau_{1:K}}P_{\rho}(\tau_{1:K})\ln\frac{P_{\rho}(\tau_{1:K})}{P_{\rho^j}(\tau_{1:K})} = \sumk\sum_{\tau_{1:k}}P_{\rho}(\tau_{1:k})\ln\frac{P_{\rho}(\tau_k|\tau_{1:k-1})}{P_{\rho^j}(\tau_k|\tau_{1:k-1})}\\
%		&= \sumk\sum_{\tau_{1:k-1}}P_{\rho}(\tau_{1:k-1})\sum_{s_{k,2}}P(s_{k,2}|\tau_{1:k-1})\ln\frac{P_{\rho}(s_{k,2}|\tau_{1:k-1})}{P_{\rho^j}(s_{k,2}|\tau_{1:k-1})}\\
%		&= \sumk\sum_{\tau_{1:k-1}}P_{\rho}(\tau_{1:k-1})\KL(\bernoulli(\delta + \inner{a_k}{\rho}), \bernoulli(\delta+\inner{a_k}{\rho^j}))\\
%		&\leq \sumk \E_{\rho, \tau_{1:k-1}}\sbr{ \frac{2\inner{a_k}{\rho-\rho^j}^2}{\delta + \inner{a_k}{\rho}} } \leq \frac{16K\Delta^2}{\delta}. \tag{$d\Delta\leq\delta/2$}
%	\end{align*}
	where in the second last inequality we apply $\KL(\bernoulli(a), \bernoulli(b))\leq 2(a-b)^2/a$ when $a\leq 1/2, a+b\leq 1$, which is true when $\delta\leq 1/3$, $d\Delta\leq\delta/2$.
	Substituting these back, we get:
	\begin{align}
		2\sum_{\rho}\E_{\rho}[R_K] \geq 2\B\Delta\sum_{\rho}\sum_{j=1}^d\rbr{K - K\sqrt{32K\Delta^2/\delta}} = \lowO{\sum_{\rho} \B d\sqrt{\delta K}}. \label{eq:lb}
	\end{align}
	%Selecting $\rhostar$ which maximizes $\E_{\rho}[R_K]$, we get: $\E_{\rhostar}[R_K]= \lowo{d\B\sqrt{K}}$.
	%For the gap-dependent logarithmic lower bound, note that $\gap(s_1, a)=0$ for all $a$.
	Now note that $\gap(s_1, a)=0$ for all $a$.
	Define $\astar=\argmax_a\inner{\rho}{a}$.
	Then for any $a\neq \astar$,
	\begin{align*}
		\optQ(s_0, a) - \optV(s_0) = (1-\delta-\inner{\rho}{a})\B - (1-\delta-\inner{\rho}{\astar})\B = \B\inner{\rho}{\astar-a} \geq 2\B\Delta.
	\end{align*}
	Thus, $\mingap = 2\B\Delta$.
	By $\sqrt{K}=\frac{\sqrt{\delta}}{8\sqrt{2}\Delta}$ and \pref{eq:lb}, we get:
	\begin{align*}
		\sum_{\rho}\E_{\rho}[R_K] = \lowO{\sum_{\rho}\B d\sqrt{\delta K}} = \lowO{ \sum_{\rho}\frac{d\B\delta}{\Delta} } = \lowO{\sum_{\rho}\frac{d\B^2}{\mingap} }.
	\end{align*}
	Selecting $\rhostar$ which maximizes $\E_{\rho}[R_K]$, we get: $\E_{\rhostar}[R_K]= \lowO{\frac{d\B^2}{\mingap}}$.
\end{proof}

% !TEX root = main.tex

\section{Omitted Details for \pref{sec:inefficient}}
\label{app:inefficient}

%In this section, we provide all proofs for \pref{sec:inefficient}.

\paragraph{Notations}
Define $Q_t(s, a)=\phi(s, a)^{\top}w_t$ such that $a_t=\argmin_aQ_t(s_t, a)$, and operator $U_B:\fR^d\rightarrow \fR^d$ such that $U_Bw = \thetastar + \int V_{w, B}(s')d\mu(s')$.
Define $\iota_t=\iota_{B_t, t}, \calJ_t=\calJ_{B_t},
P_t=P_{s_t, a_t}, C_t=\sum_{i=1}^tc(s_i, a_i)$, and $\calJ=\calJ_{2\B}$.
By \pref{lem:wstar}, $\calJ_t\subseteq\calJ$ for any $t\in[T]$.
%Define $C_t=\sum_{i=1}^tc(s_i, a_i)$ as the total costs up to time step $t$.
%We divide the learning process into $L$ epochs, where an epoch ends when the policy is recomputed.
%Denote by $L'$ the number of epochs starts by the condition ``$V_{t'}(s_t)=2(\B+1)$''.
%Denote by $t_l+1$ the first time step in epoch $l$, and $V_l=V_{t_l+1}, \omega_l=\omega_{t_l+1}, \tilomega_l=\tilomega_{t_l+1}$, $\iota_l=\iota_{t_l+1}$, $\Omega_l=\Omega_{t_l+1}$, $\Phi^j_l(\nu)=\Phi^j_{t_l+1}(\nu)$.
%Clearly, $V_t=V_l$, and similarly for $\omega_t, \tilomega_t, \iota_t, \Omega_t$ (ignoring the dependency on $t$ for $l$).
%Define $\ringomega_t$ such that $\tilomega_l-\ringomega_t=\nu_t=\argmax_{\nu=\tilomega_l-\omega, \omega\in\Omega_l}|\phi_t^{\top}\nu|$.
%By $w_l, \tilw_l\in\Omega_l$, we have $\abr{\phi_t^{\top}(\tilw_l-w_l)}\leq \abr{\phi_t^{\top}\nu_t}$. \tc{purple}{(condition priority?)}

For notational convenience, we divide the whole learning process into epochs indexed by $l$, and a new epoch begins whenever $w_t$ is recomputed.
Denote by $t_l+1$ the first time step in epoch $l$, and for a quantity, function or set $f_t$ indexed by time step $t$, we define $f_l=f_{t_l+1}$.
Denote by $l_t$ the epoch time step $t$ belongs to, and we often ignore the subscript $t$ when there is no confusion.
Clearly, $V_t=V_l$, and similarly for $w_l, \tilw_l, \iota_l, \Omega_l$ (ignoring the dependency on $t$ for $l$).
With this notation setup, we define $L'$ as the number of epochs that starts by the \overestimate, that is, $L'=|\{l>1: V_{l-1}(s'_{t_l})=2B_{l-1} \}|$.
Also define $\nu_t=\argmax_{\nu=\tilw_l-w, w\in\Omega_l}|\phi_t^{\top}\nu|$ and a special covariance matrix $W_{j, t}(\nu)=2^jI + \sum_{i<t}\min\cbr{1, 2^j/|\phi_i^{\top}\nu|}\phi_i\phi_i^{\top}$.
Note that $\Phi^j_t(\nu)=\norm{\nu}^2_{W_{j, l}(\nu)}$.

\paragraph{Assumption} For simplicity, we assume that $\{\phi(s, a)\}_{(s, a)\in\SA}$ spans $\fR^d$.
It implies that if $\phi(s, a)^{\top}v=\phi(s, a)^{\top}w$ for all $(s, a)\in\SA$, then $v=w$.

\paragraph{Truncating the Interaction for Technical Issue} An important question in SSP is whether the algorithm halts in finite number of steps.
To overcome some technical issues, we first assume that the algorithm halts after $T'$ steps for an arbitrary $T'\in\fN_+$, even if the goal state is not reached.
Specifically, we redefine the notation $T$ to be the minimum between the number of steps taken by the learner in $K$ episodes and $T'$, that is, $T=T'$ if the learner does not finish $K$ episodes in $T'$ steps.
We also redefine $R_K$ under the new definition of $T$, and the true regret now becomes $\lim_{T'\rightarrow\infty}R_K$.
The implication under truncation is that $s'_T$ may not be $g$, and $T\leq T'$.
In \pref{app:VA}, we prove a regret bound on $R_K$ independent of $T'$.
Thus, the proven regret bound is also an upper bound of the true regret, as it is a valid upper bound of $\lim_{T'\rightarrow\infty}R_K$.

\subsection{Proof Sketch of \pref{thm:VA}}
\label{app:ps VA}

%Here we provide the high level ideas of \pref{thm:VA}.
We focus on deriving the dominating term and ignore the lower order terms.
%We focus on deriving the dominating term, and use the notation $a\lesssim b$ to denote $a = \tilo{b + c}$ for some lower order term $c$.
%along which we provide more details and reasonings on the design of \pref{alg:VA}.
By some straightforward calculation, we decompose the regret as follows:
\begin{align*}
	R_K &\leq \underbrace{\sumt [V_l(s'_t) - P_tV_l]}_{\deviation} + \underbrace{\sumt\abr{\phi_t^{\top}(\tilw_l - w_l)}}_{\esterr} + \underbrace{\sum_{l=1}^L\rbr{V_l(s_{t_l+1}) - V_l(s'_{t_{l+1}})} - K\cdot\optV(\sinit)}_{\switchcost}.
\end{align*}
We bound each of these terms as follows.

\paragraph{Bounding \deviation}
This term is a sum of martingale difference sequence and is of order $\tilo{\sqrt{\sumt \fV(P_t, V_l)}}$.
%By a Freedman-type inequality (\pref{lem:anytime strong freedman}), it is of order $\tilo{\sqrt{\sumt \fV(P_t, V_l)}}$.
We show that $\sumt\fV(P_t, V_l)\lesssim \B C_T + \B\cdot \esterr$ (see \pref{lem:sum var}).
%By a careful decomposition of $\sumt\fV(P_t, V_l)$ and solving a quadratic inequality w.r.t its square root, we obtain $\sumt\fV(P_t, V_l)\lesssim \B C_T + \B\cdot \esterr$ (see \pref{lem:sum var}).

\paragraph{Bounding \esterr}
Here the variance-aware confidence set $\Omega_t$ comes into play.
By $w_l\in\Omega_l$, we have $\abr{\phi_t^{\top}(\tilw_l-w_l)}\leq \abr{\phi_t^{\top}\nu_t}$.
Thus, it suffices to bound $\sumt\abr{\phi_t^{\top}\nu_t}$.
As in \citep{kim2021improved}, the main idea is to bound the matrix norm of $\nu_t$ w.r.t some special matrix by a variance-aware term, and then apply the elliptical potential lemma on $\{\phi_t\}_t$.
For any epoch $l$, $j\in\calJ_l$ and $\nu=\tilw_l - \ringw$ with $\ringw\in\Omega_l$, we have the following key inequality (see \pref{lem:W norm}):
\begin{equation}
	\label{eq:W norm}
	\norm{\nu}^2_{W_{j, l}(\nu)} \lesssim 2^j\sqrt{\sum_{i\leq t_l}\fV(P_i, V_l)\iota_l}.
\end{equation}
%we have $f_{\ell_j}(\phi_i^{\top}\nu)=\clip_{\ell_j}(\phi_i^{\top}\nu)(\epsilon^i_t(\tilomega) - \epsilon^i_t(\ringomega))$.
%By the definition of $\Omega_t$, we obtain (see \pref{lem:phinu}):
%\begin{align*}
%	\Phi^j_t(\nu) &= \sum_{i<t}f_{\ell_j}(\phi_i^{\top}\nu) + \sqrt{d}(\B+1) \ell_j\\
%	&\lesssim \sqrt{\sum_{i<t}\clip_{\ell_j}^2(\phi_i^{\top}\nu)\eta^i_t(\tilomega)\iota_t} + \sqrt{d}(\B+1)\ell_j\iota_t.
%\end{align*}
%Now let $j_t\in\calJ$ such that $|\phi_t^{\top}\nu_t|\in(\ell_{j_t-1}, \ell_{j_t}]$ (when $j_t$ does not exist, $\phi_t^{\top}\nu_t$ is very small and negligible).
%Then,
%\begin{align*}
%	\sumt\abr{\phi_t^{\top}\nu_t} \leq .
%\end{align*}

One important step is thus to bound $\sum_{i\leq t_l}\fV(P_i, V_l)$.
Note that this term has a similar form of $\sumt\fV(P_t, V_l)$, and by a similar analysis (see \pref{lem:sum Pi V}):
%However, the latter is a sum of martingale sequence while the former is not since $V_l$ depends on the interactions up to time step $t_l$.
%Luckily, a similar analysis proceeds and we show that:
\begin{equation}
	\label{eq:V Pi Vl}
	%\sum_{i\leq t_l}\fV(P_i, V_l)\lesssim \B C_{t_l} + \B\sum_{i\leq t_l}\abr{\phi_i^{\top}(w_l-\tilw_l)}.
	\sum_{t\leq t_l}\fV(P_t, V_l)\lesssim \B C_{t_l} + \B\sum_{t\leq t_l}\abr{\phi_t^{\top}\nu'_t}.
\end{equation}
where $\nu'_t = \argmax_{\nu=\tilw_l-w, w\in\Omega_l}\abr{\phi_t^{\top}\nu}$ (note that here $l$ is fixed and independent of $t$).
%Now for $t\leq t_l$, define $\nu'_t = \argmax_{\nu=\tilw_l-w, w\in\Omega_l}\abr{\phi_t^{\top}\nu}$ (note that here $l$ is fixed and does not depend on $t$).
%By $w_l\in\Omega_l$, we have $\abr{\phi_t^{\top}(w_l-\tilw_l)}\leq\abr{\phi_t^{\top}\nu'_t}$. 
Define $j_t\in\calJ_l$ such that $\abr{\phi_t^{\top}\nu'_t}\in(2^{j_t-1}, 2^{j_t}]$.
By \pref{eq:W norm}:
\begin{align}
	&\abr{\phi_t^{\top}\nu'_t} \lesssim \norm{\phi_t}_{W^{-1}_{j_t,l}(\nu'_t)}\norm{\nu'_t}_{W_{j_t,l}(\nu'_t)} \lesssim \norm{\phi_t}_{W^{-1}_{j_t,l}(\nu'_t)}\sqrt{\abr{\phi_t^{\top}\nu'_t}\sqrt{\sum_{i\leq t_l}\fV(P_i, V_l)\iota_l}} \label{eq:phinu}
\end{align}
Solving for $\abr{\phi_t^{\top}\nu'_t}$ and by $\sum_{t\leq t_l}\norm{\phi_t}^2_{W^{-1}_{j_t,l}(\nu'_t)} = \tilO{d}$ (similar to elliptical potential lemma), we get 
$$\sum_{t\leq t_l}\abr{\phi_t^{\top}\nu'_t}=\tilO{d\sqrt{\sum_{i\leq t_l}\fV(P_i, V_l)\iota_l}}.$$
Plugging this back to \pref{eq:V Pi Vl} and solving a quadratic inequality, we get: $\sum_{i\leq t_l}\fV(P_i, V_l)\lesssim\B C_{t_l}$ (\pref{lem:sum Pi V}).
%Now we apply this bound and \pref{eq:phinu} again with $\{\phi_t\}_t$, $\{\nu_t\}_t$ to get (here $j_t\in\calJ$ such that $\abr{\phi^{\top}_t\nu_t}\in(\ell_{\jstar-1}, \ell_{\jstar}]$):
Now by an analysis similar to \pref{eq:phinu} (\pref{lem:sum phinut}):
\begin{align*}
	\sumt\abr{\phi_t^{\top}\nu_t} &\lesssim d^2\sumt\norm{\phi_t}^2_{W^{-1}_{j_t, l}(\nu_t)}\sqrt{\sum_{i\leq t_l}\fV(P_i, V_l)\iota_l} \lesssim d^{3.5}\sqrt{\B C_T},
\end{align*}
where $j_t\in\calJ_t$ such that $\abr{\phi_t^{\top}\nu_t}\in(2^{j_t-1}, 2^{j_t}]$. 
The extra $d^2$ factor is from the inequality $\Phi^j_t(\nu)\leq 8d^2\Phi^j_l(\nu)$.

\paragraph{Bounding \switchcost}
%We consider the following three cases based on how an epoch $l$ starts:
%\begin{enumerate}
%	\item $s'_{t_l}=g$ for some $k$: terms in this part are cancelled out by the negative term $-K\cdot\optV(\sinit)$ since $V_l(s_{t_l+1})\leq\optV(\sinit)$ according to the optimization problem.
%	\item $\exists j\in\calJ, \nu\in\calB_{t_l+1}$, such that $\Phi_{t_l+1}^j(\nu) > 8d^2\Phi_{t_{l-1}+1}^j(\nu)$: there are at most $\tilo{d}$ such terms by \pref{lem:cond}.
%	The sum of these terms are of order $\tilo{d \B}$ by $V_l(s)\leq 2\B$.
%	\item $V_{l-1}(s'_{t_l})=2\B$: we denote by $L'$ the number of such epochs; clearly the sum of $-V_{l-1}(s'_{t_l})+V_l(s_{t_l+1})$ within these epochs is $\leq -\B L'$.
%\end{enumerate}
%Combining these four cases, we have $\switchcost=\tilo{d\B - \B L'}$.
%The negative term $-\B L'$ also cancels out the additional switching costs in $\deviation$ and $\esterr$ induced by the \overestimate.
By considering each condition of starting a new epoch, we show that $\switchcost=\tilo{d\B - L'}$, where $L'$ is the number of epochs started by triggering the \overestimate; see \pref{app:VA}.
We provide more tuition on including the \overestimate in \pref{app:overestimate}.
In short, it removes a factor of $d^{1/4}$ in the dominating term without incorporating unpractical decision sets as in previous works.

\paragraph{Putting Everything Together}
Combining the bounds above, we get $R_K = C_T - K\optV(\sinit) \lesssim d^{3.5}\sqrt{\B C_T}$.
Solving a quadratic inequality w.r.t $\sqrt{C_T}$, we have $C_T\lesssim \B K$.
Plugging this back, we obtain $R_K\lesssim d^{3.5}\B\sqrt{K}$.

Below we provide detailed proofs of lemmas and the main theorem.

\subsection{\pfref{lem:tilw}}
We will prove a more general statement, from which \pref{lem:tilw} is a directly corollary.
\begin{lemma}
	\label{lem:Uw}
	With probability at least $1-\delta$, for any $t\in\fN_+$, $B\in\{2^i\}_{i\in\fN}$, and $w\in\fB(3\sqrt{d}B)$, we have $U_Bw\in \Omega_t(w, B)$.
\end{lemma}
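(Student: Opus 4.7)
The approach is to apply an empirical Freedman-type concentration inequality to a carefully chosen martingale, combined with a covering argument over $w \in \fB(3\sqrt{d}B)$. The key observation unlocking the martingale structure is that when $w' = U_B w$ we have
\begin{align*}
\epsilon_{V_{w,B}}^i(U_B w) &= \phi_i^\top U_B w - c_i - V_{w,B}(s_i') \\
&= c(s_i, a_i) + P_{s_i, a_i} V_{w,B} - c_i - V_{w,B}(s_i') \\
&= P_{s_i, a_i} V_{w,B} - V_{w,B}(s_i'),
\end{align*}
so once $(w, B, j, \nu)$ are fixed, the summands $X_i := \clip_j(\phi_i^\top \nu)\,\epsilon_{V_{w,B}}^i(U_B w)$ form a martingale-difference sequence with respect to $\calF_i = \sigma(s_1, a_1, s_1', \ldots, s_i, a_i)$, each bounded in absolute value by $2 \cdot 2^j \cdot B$ (the clip caps $|\clip_j|$ at $2^j$ and $V_{w,B}$ lies in $[0, 2B]$).

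The first step is to invoke the empirical Freedman bound (\pref{lem:empirical freedman}) on $\sum_{i<t} X_i$ for each fixed choice of $(t, B, j, \nu, w)$. This yields, with failure probability at most some $\delta'$,
\[
\Bigl|\sum_{i<t} X_i\Bigr| \leq \sqrt{\sum_{i<t} X_i^2 \cdot \iota'} + 2^j B \cdot \iota'
\]
for a logarithmic factor $\iota'$, which after unpacking $X_i^2 = \clip_j^2(\phi_i^\top \nu)\,\eta_{V_{w,B}}^i(U_B w)$ is exactly the inequality defining $\Omega_t^j(w,B)$ at $w' = U_B w$.

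The second step is to upgrade this to a uniform statement via union bounds and covering. Union bounds are straightforward over the discrete indices: all $t \in \fN_+$ (spending $\order(\delta/t^2)$), the geometric sequence $B \in \{2^i\}_{i \in \fN}$ up to any relevant scale (spending $\order(\delta/i^2)$), the set $\calJ_B$ of size $\order(\log(dB))$, and the finite $(\epsilon/t)$-net $\calG_{\epsilon/t}(6\sqrt{d}B)$, of cardinality at most $(12\sqrt{d}Bt/\epsilon)^d$. The less trivial piece is the uniformity over $w \in \fB(3\sqrt{d}B)$: introduce an $\epsilon_0$-net $\calN$ of cardinality at most $(9\sqrt{d}B/\epsilon_0)^d$. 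For an arbitrary $w$ and its nearest net point $\hat{w}$, one shows via Lipschitzness of $V_{w,B}(s) = \min_a [\phi(s,a)^\top w]_{[0, 2B]}$ in $w$ (Lipschitz constant $1$, since each $\phi$ has norm at most $1$), and of $U_B w = \thetastar + \int V_{w,B}(s')d\mu(s')$ in $w$ (Lipschitz constant $\sqrt{d}$ by \pref{assum:linMDP}), that the discrepancy induced in both sides of the confidence-set inequality when replacing $\hat{w}$ by $w$ is controlled by $\epsilon_0 \cdot \text{poly}(t, B, d)$. Choosing $\epsilon_0$ polynomially small in these quantities absorbs this covering error into the slack term $2^j B \iota_{B, t}$, and then the bound for $\hat{w}$ propagates to $w$. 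All the log-cardinalities accumulated along the way fit inside $\iota_{B, t} = 2^{11} d \ln(48 d B t/(\epsilon \delta))$, which is precisely why this specific constant was chosen.

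The principal obstacle is the covering argument rather than the concentration step. The weight $w$ enters in two coupled places: through $V_{w,B}(s_i')$ and $P_i V_{w,B}$ inside $\eta_{V_{w,B}}^i$, and through the parameter $w' = U_B w$ at which we evaluate the confidence constraint; perturbing $w$ moves both the constraint and its witness simultaneously. Moreover, the RHS slack $2^j B \iota_{B,t}$ is tight when $j = \lceil \log_2 \epsilon \rceil$, the smallest index in $\calJ_B$, so $\epsilon_0$ must be chosen small enough that the induced error still lies below this smallest slack for \emph{every} $j \in \calJ_B$ uniformly. Once the Lipschitz constants of $V_{w,B}$ and $U_B w$ in $w$ are quantified and $\epsilon_0$ is set to roughly $\epsilon/(t d B)$, the bookkeeping closes and \pref{lem:tilw} follows as the special case $w = w_t$, $B = B_t$.
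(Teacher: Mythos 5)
Your proposal is correct and follows essentially the same route as the paper's proof: identify $\epsilon^i_{V_{w,B}}(U_Bw)=P_{s_i,a_i}V_{w,B}-V_{w,B}(s'_i)$ as a bounded martingale difference once $(t,B,j,\nu,w)$ are fixed, apply the empirical Freedman inequality (\pref{lem:empirical freedman}), union bound over $t$, $B$, $j\in\calJ_B$ and the net $\calG_{\epsilon/t}(6\sqrt{d}B)$, and extend to all $w\in\fB(3\sqrt{d}B)$ via an $(\epsilon/t)$-net together with the Lipschitz bounds $\norm{V_{w,B}-V_{w',B}}_\infty\lesssim\sqrt{d}\norm{w-w'}_\infty$ and $\norm{U_Bw-U_Bw'}_2\leq\sqrt{d}\norm{V_{w,B}-V_{w',B}}_\infty$, absorbing the covering error into the $B2^j\iota_{B,t}$ slack. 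The only cosmetic difference is your concern about the smallest $j\in\calJ_B$: since every covering error term carries a factor $2^j$ through $\clip_j$, the absorption is automatically uniform in $j$ once $d\epsilon$ is small, exactly as in the paper.
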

\begin{proof}
	%For any $t\geq 1$, define $\calB'_t=\calB_t\cap\fB(\sqrt{d}(\B+1))$.
	For each $t\in\fN_+$, $B\in\{2^i\}_{i\in\fN}$, $w\in\calG_{\epsilon/t}(3\sqrt{d}B)$, $j\in\calJ_B$, $\nu\in\calG_{\epsilon/t}(6\sqrt{d}B)$, by \pref{lem:empirical freedman}, we have with probability at least $1-6\delta'\log_2t$ with $\delta'=\delta/(24t^2\log_2^2(2B)\log_2(t)|\calJ_B|(12\sqrt{d}B t/\epsilon)^{2d})$:
	\begin{align}
		&\abr{\sum_{i<t}\clip_j(\phi_i^{\top}\nu)\epsilon^i_{V_{w, B}}(U_Bw)} = \abr{\sum_{i<t}\clip_j(\phi_i^{\top}\nu)(P_iV_{w, B} - V_{w, B}(s'_i))}\notag\\ 
		&\leq 8\sqrt{ \sum_{i<t}\clip_j^2(\phi_i^{\top}\nu)\eta^i_{V_{w, B}}(U_Bw)\ln\frac{1}{\delta'} } + 32B2^j\ln\frac{1}{\delta'}\leq \sqrt{\sum_{i<t}\clip_j^2(\phi_i^{\top}\nu)\eta^i_{V_{w, B}}(U_Bw)\frac{\iota_{B, t}}{3}} + \frac{B}{2}2^j\iota_{B, t}. \label{eq:Omega}
	\end{align}
	Taking a union bound, \pref{eq:Omega} holds for any $t, B\in\{2^i\}_{i\in\fN},w\in\calG_{\epsilon/t}(3\sqrt{d}B),j\in\calJ_B,\nu\in\calG_{\epsilon/t}(6\sqrt{d}B)$ with probability at least $1-\delta$.
	
	Now for any $t\in\fN_+$, $B\in\{2^i\}_{i\in\fN}$, $w\in\fB(3\sqrt{d}B)$, there exist $w'\in\calG_{\epsilon/t}(3\sqrt{d}B)$ such that $\norm{w-w'}_{\infty}\leq \frac{\epsilon}{t}$.
	Also define $V=V_{w, B}$, $V' = V_{w', B}$, $\tilw=U_Bw$, and $\tilw'=U_Bw'$.
	Note that
	\begin{align}
		\norm{V - V'}_{\infty} &\leq \max_{s, a}\abr{ \phi(s, a)^{\top}(w-w') } \leq \sqrt{d}\norm{w-w'}_{\infty} \leq \frac{\sqrt{d}\epsilon}{t}, \label{eq:eps V}\\
		\norm{\tilw - \tilw'}_2 &= \norm{ \int(V(s')-V'(s')) d\mu(s') }_2 \leq \sqrt{d}\norm{V-V'}_{\infty} \leq \frac{d\epsilon}{t}. \label{eq:eps tilomega}
	\end{align}
	%Thus, $\norm{V_t-V'_t}_{\infty}\leq \sqrt{d}\epsilon/t$ and $\norm{\tilomega_t-\tilomega'_t}_{\infty} \leq d\epsilon/t$.
	Thus, we have for any $j\in\calJ_B, \nu\in\calG_{\epsilon/t}(6\sqrt{d}B)$:
	\begin{align*}
		&\abr{\sum_{i<t}\clip_j(\phi_i^{\top}\nu)\epsilon^i_V(\tilw)} = \abr{\sum_{i<t}\clip_j(\phi_i^{\top}\nu)(\phi_i^{\top}\tilw - c_i - V(s'_i))} \\ 
		&\leq \abr{\sum_{i<t}\clip_j(\phi_i^{\top}\nu)(\phi_i^{\top}\tilw' - c_i - V'(s'_i))} + \abr{ \sum_{i<t}\clip_j(\phi_i^{\top}\nu)\phi_i^{\top}(\tilw - \tilw') } + \abr{ \sum_{i<t}\clip_j(\phi_i^{\top}\nu)(V(s'_i) - V'(s'_i)) }\\
		&\leq \sqrt{\sum_{i<t}\clip_j^2(\phi_i^{\top}\nu)(\phi_i^{\top}\tilw' - c_i - V'(s'_i))^2\frac{\iota_{B,t}}{3}} + \frac{B}{2}2^j\iota_{B, t} + 2^{j+1}d\epsilon. \tag{\pref{eq:Omega}, \pref{eq:eps V}, and \pref{eq:eps tilomega}}\\
		&\leq \sqrt{\sum_{i<t}\clip_j^2(\phi_i^{\top}\nu)\eta^i_V(\tilw)\iota_{B, t}} + \sqrt{\sum_{i<t}\clip_j^2(\phi_i^{\top}\nu)(\phi_i^{\top}(\tilw' - \tilw))^2\iota_{B, t} } + \sqrt{\sum_{i<t}\clip_j^2(\phi_i^{\top}\nu)(V'(s'_i) - V(s'_i))^2\iota_{B, t} }\\ 
		&\qquad + \frac{B}{2}2^j\iota_{B, t} + 2^{j+1}d\epsilon \tag{$(a+b+c)^2\leq 3(a^2+b^2+c^2)$ and $\sqrt{a+b}\leq \sqrt{a}+\sqrt{b}$} \\
		&\leq \sqrt{\sum_{i<t}\clip_j^2(\phi_i^{\top}\nu)\eta^i_V(\tilw)\iota_{B, t}} + \frac{B}{2}2^j\iota_{B, t} + 4\cdot 2^jd\epsilon\iota_{B, t} \leq \sqrt{\sum_{i<t}\clip_j^2(\phi_i^{\top}\nu)\eta^i_V(\tilw)\iota_{B, t}} + B2^j\iota_{B, t}. \tag{\pref{eq:eps V}, \pref{eq:eps tilomega}, and $8d\epsilon\leq 1$}
	\end{align*}
	Moreover, $\tilw\in \fB(3\sqrt{d}B)$ by $\norm{V_{w, B}}_{\infty}\leq 2B$.
	Thus, $U_Bw\in\Omega_t(w, B)$ for any $t\in\fN_+$, $B\in\{2^i\}_{i\in\fN}$, and $w\in\fB(3\sqrt{d}B)$, and the statement is proved.
	%Since $\theta_t\in\calB_c, w_t\in\calB'$, \pref{eq:Omega} with $\theta=\theta_t$, $w=w_t$, $\tilomega=\tilomega_t$ and a union over all $j\in\calJ$, $\nu\in\calB$ implies $\tilomega_t\in\Omega_t$.
\end{proof}

\begin{proof}[\pfref{lem:tilw}]
	This directly follows from \pref{lem:Uw} by $w_t\in\fB(3\sqrt{d}B_t)$, $V_t = V_{w_t, B_t}$, and $\tilw_t = \thetastar + \int V_t(s')d\mu(s')=U_{B_t}w_t$.
\end{proof}

\subsection{\pfref{lem:wstar}}
\begin{relemma}[restatement of \pref{lem:wstar}]
	With probability at least $1-\delta$, $V_l(s_{t_l+1})\leq \optV(s_{t_l+1})$ for any epoch $l$ and $B_t\leq 2\B$.
\end{relemma}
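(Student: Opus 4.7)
The plan is to identify, for each epoch $l$, an ``optimistic'' weight vector $\ringw_l$ that is feasible in the self-referential optimization on \pref{line:opt} and whose induced value function is pointwise below $\optV$. The natural first guess $\wstar = \thetastar + \int \optV(s')\,d\mu(s')$ does not directly work because $\wstar$ is a fixed point of $U_{B_l}$ only when $B_l$ is large enough that the truncation $[\,\cdot\,]_{[0,2B_l]}$ never bites on $\optQ$. I will get around this by using the truncated Bellman fixed point instead.

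\textbf{Constructing $\ringw_l$.} Let $T_B$ denote the truncated Bellman operator $(T_B V)(s) = \min_a [c(s,a)+P_{s,a}V]_{[0,2B]}$ with the convention $V(g)=0$, and let $\mathring{V}^{(B)}$ be the pointwise limit of $V^{(k+1)} = T_B V^{(k)}$ starting from $V^{(0)}\equiv 0$. The iterates are monotone (since $T_B$ is monotone and $V^{(1)}\ge V^{(0)}$) and bounded by $2B$, so this limit exists and is a fixed point of $T_B$. Set $\ringw_l = \thetastar + \int \mathring{V}^{(B_l)}(s')\,d\mu(s')$. By the linear-MDP identity $\phi(s,a)^\top\ringw_l = c(s,a) + P_{s,a}\mathring{V}^{(B_l)}$, one gets $V_{\ringw_l, B_l} = T_{B_l}\mathring{V}^{(B_l)} = \mathring{V}^{(B_l)}$, and hence $U_{B_l}\ringw_l = \ringw_l$. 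A one-line induction on $V^{(k)}$, using monotonicity of $T_{B_l}$ and $[x]_{[0,2B_l]}\le x$ for $x\ge 0$, gives $\mathring{V}^{(B_l)}\le \optV$ pointwise. Finally, \pref{assum:linMDP} gives $\|\ringw_l\|_2 \le \|\thetastar\|_2 + \sqrt{d}\,\|\mathring{V}^{(B_l)}\|_\infty \le \sqrt{d}(1+2B_l) \le 3\sqrt{d}B_l$ (using $B_l\ge 1$), so $\ringw_l\in\fB(3\sqrt{d}B_l)$.

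\textbf{Optimism.} Conditioning on the $1-\delta$ event of \pref{lem:Uw} and applying it with $w=\ringw_l$ and $B=B_l$ gives $U_{B_l}\ringw_l\in\Omega_l(\ringw_l, B_l)$, which is exactly $\ringw_l\in\Omega_l(\ringw_l, B_l)$ since $\ringw_l$ is a fixed point of $U_{B_l}$. Hence $\ringw_l$ is feasible in the self-referential optimization defining $w_l$, and minimality combined with $\mathring{V}^{(B_l)}\le\optV$ yields
\begin{equation*}
V_l(s_{t_l+1}) = V_{w_l, B_l}(s_{t_l+1}) \;\le\; V_{\ringw_l, B_l}(s_{t_l+1}) \;=\; \mathring{V}^{(B_l)}(s_{t_l+1}) \;\le\; \optV(s_{t_l+1}),
\end{equation*}
which is the first claim.

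\textbf{Bound on $B_t$.} Since $B_t$ is non-decreasing and only grows through the doubling loop on \pref{line:doubling_B}, it suffices to show that within a single invocation $B_t$ cannot exceed $2\B$. Let $B^{(0)}, B^{(1)}=2B^{(0)},\ldots$ be the successive candidate values inside the loop. Whenever a value $B^{(k)}\ge\B$ is tested, the optimism argument above (run with $B_l$ set to $B^{(k)}$) gives $V_t(s_t)\le\optV(s_t)\le\B\le B^{(k)}$, so the loop exits. Hence it terminates no later than the first $k^\star$ with $B^{(k^\star)}\ge\B$, yielding $B_t \le B^{(k^\star)} = 2 B^{(k^\star-1)} < 2\B$. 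The only real obstacle is the self-referential nature of $\Omega_l(w, B_l)$: one needs a single $w$ that both sits in the confidence set \emph{parameterized by itself} and produces an optimistic value at $s_{t_l+1}$. Passing to the truncated Bellman fixed point $\mathring{V}^{(B_l)}$ supplies such a $w$ uniformly over all admissible values of $B_l$, which is what makes both halves of the lemma work simultaneously.
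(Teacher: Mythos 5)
Your proposal is correct and follows essentially the same route as the paper: the paper's \pref{lem:fp} constructs exactly your fixed point of the truncated Bellman operator via monotone iteration from zero (iterating weight vectors $w^{n+1}=U_Bw^n$ rather than value functions, which is equivalent), shows it lies in $\fB(3\sqrt{d}B)$ and is dominated by $\optV$, and then feasibility via \pref{lem:Uw} plus minimality of $w_l$ gives optimism, with the $B_t\leq 2\B$ bound following from the doubling-loop exit condition just as you argue. The only cosmetic difference is that you define $\ringw_l$ directly from the limit value function (sidestepping the spanning assumption the paper uses to pass to the limit of the $w^n$), and you spell out the $B_t$ bound that the paper dismisses as a "direct corollary."
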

\begin{proof}
	For the first statement, note that any epoch $l$, by \pref{lem:fp}, there exists $w^{\infty}_l\in \fB(3\sqrt{d}B_l)$ such that $w^{\infty}_l=U_{B_l}w^{\infty}_l$ and $V_{w^{\infty}_l, B_l}(s)\leq\optV(s)$.
	Thererfore, $w^{\infty}_l\in\Omega_l(w^{\infty}_l, B_l)$, and $V_l(s_{t_l+1})=V_{w_l, B_l}(s_{t_l+1})\leq V_{w^{\infty}_l, B_l}(s_{t_l+1})\leq\optV(s_{t_l+1})$ by the definition of $w_l$.
	The second statement is a direct corollary of the first statement and how $B_t$ is updated.
%	Note that $w$ lies within the decision set of the optimization problem of $w_t$ if $w\in\fB(\sqrt{d}(\B+1))$, and $w\in\Omega_t(V_w)$.
%	By \pref{lem:tilw} with $w=\wstar\in\fB(\sqrt{d}(\B+1))$, we have $\tilw^{\star}\in\Omega_t(V_{\tilw})$.
%	Moreover, $\wstar=\tilw^{\star}$ by $\optV=V_{\wstar}$.
%	This completes the proof.
\end{proof}

\begin{lemma}
	\label{lem:fp}
	For any $B>0$, there exists $w\in\fB(3\sqrt{d}B)$ such that $w=U_Bw$, and $V_{w, B}(s)\leq \optV(s)$.
\end{lemma}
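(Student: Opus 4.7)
The plan is to construct the desired $w$ explicitly by first building a candidate value function $V^*$ through a truncated Bellman recursion, and then using the linear MDP structure to lift it back to a weight vector. Define the truncated Bellman operator $T$ acting on functions $V : \calS_+ \to [0, 2B]$ with $V(g)=0$ by
\[
(TV)(s) = \min_a \bigl[\, c(s,a) + P_{s,a} V \,\bigr]_{[0, 2B]},\qquad (TV)(g) = 0.
\]
Start value iteration from $V_0 \equiv 0$ and set $V_{n+1} = T V_n$. Since $c \geq 0$, we have $V_0 \leq V_1$; since $T$ is monotone (min, truncation, and $P_{s,a}$ all preserve pointwise ordering), the sequence $\{V_n\}$ is monotone increasing and uniformly bounded by $2B$, hence converges pointwise to some $V^*$. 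A standard dominated-convergence argument applied inside $P_{s,a}$ shows $V^* = T V^*$.

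Next I would show $V^* \leq \optV$ by induction on $n$. The base case $V_0 = 0 \leq \optV$ is immediate. If $V_n \leq \optV$, then dropping the $[0,2B]$ truncation (which only decreases the value) gives $(T V_n)(s) \leq \min_a [c(s,a) + P_{s,a} \optV] = \optV(s)$, so $V_{n+1} \leq \optV$; passing to the limit yields $V^* \leq \optV$.

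With $V^*$ in hand, define $w = \thetastar + \int V^*(s')\,d\mu(s')$. By \pref{assum:linMDP}, $\phi(s,a)^\top w = c(s,a) + P_{s,a} V^*$ for every $(s,a) \in \SA$, and thus
\[
V_{w,B}(s) = \min_a \bigl[\phi(s,a)^\top w \bigr]_{[0, 2B]} = \min_a \bigl[c(s,a) + P_{s,a} V^* \bigr]_{[0, 2B]} = (T V^*)(s) = V^*(s).
\]
Consequently $U_B w = \thetastar + \int V_{w,B}(s')\,d\mu(s') = \thetastar + \int V^*(s')\,d\mu(s') = w$, so $w$ is a fixed point of $U_B$, and $V_{w,B}(s) = V^*(s) \leq \optV(s)$ by the previous step. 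The norm bound follows from the triangle inequality together with the norm assumptions in \pref{assum:linMDP}: $\|w\|_2 \leq \|\thetastar\|_2 + \sqrt{d}\,\|V^*\|_\infty \leq \sqrt{d}(1 + 2B) \leq 3\sqrt{d}B$, using $B \geq 1$ (which is always the case in the algorithm, since $B_1 = 1$ and $B_t$ only doubles).

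The main subtlety I expect is the convergence/continuity step justifying $V^* = TV^*$ when $\calS$ may be infinite, since one must commute the limit with the expectation $P_{s,a} V_n$. Monotone convergence handles this cleanly because the $V_n$ are uniformly bounded and monotone, so no extra integrability assumption is needed. Everything else is a mechanical transfer between the value-function view and the weight-vector view afforded by the linear MDP representation.
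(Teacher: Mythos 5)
Your proof is correct and is essentially the paper's argument: the paper runs the same monotone iteration but in weight space ($w^1=\mathbf{0}$, $w^{n+1}=U_Bw^n$), which corresponds exactly to your value iteration via $V_{w^{n+1},B}=TV_{w^n,B}$, and it establishes $V_{w,B}\leq \optV$ by the same inductive comparison with $\optQ$. The only substantive difference is that by passing to the limit in value space and defining $w=\thetastar+\int V^*(s')\,d\mu(s')$ at the end, you avoid the paper's explicit appeal to the assumption that $\{\phi(s,a)\}_{(s,a)\in\SA}$ spans $\fR^d$ (used there to extract $w^\infty$ from the convergence of the scalars $\phi(s,a)^\top w^n$); you also correctly flag that the norm bound requires $B\geq 1$, a point the paper's proof glosses over.
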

\begin{proof}
	Define $w^1=\mathbf{0}\in\fR^d$, and $w^{n+1}=U_Bw^n$.
	We prove by induction that $\phi(s, a)^{\top}(w^{n+1}-w^n)\geq 0$ and $\phi(s, a)^{\top}w^n\leq \optQ(s, a)$.
	The base case $n=1$ is clearly true.
	Now for $n>1$, assume that we have $\phi(s, a)^{\top}(w^n-w^{n-1})\geq 0$ and $\phi(s, a)^{\top}w^{n-1}\leq \optQ(s, a)$.
	Then, $\phi(s, a)^{\top}(w^{n+1}-w^n)=P_{s, a}(V_{w^n, B} - V_{w^{n-1}, B})\geq 0$ and $\phi(s, a)^{\top}w^n = c(s, a) + P_{s, a}V_{w^{n-1}, B}\leq c(s, a) + P_{s, a}\optV\leq \optQ(s, a)$.
	Therefore, the sequence $\{\phi(s, a)^{\top}w^n\}_{n=1}^{\infty}$ is non-decreasing and bounded, and thus converges.
	Since $\{\phi(s, a)\}_{(s, a)\in\SA}$ spans $\fR^d$, the limit $w^{\infty}=\lim_{n\rightarrow\infty}w^n$ exists and $w^{\infty}=U_Bw^{\infty}$.
	Moreover, $w^{\infty}\in\fB(3\sqrt{d}B)$ by $\norm{V_{w^{\infty}, B}}_{\infty}\leq 2B$ and $V_{w^{\infty}, B}(s)\leq \optV(s)$ since $\phi(s, a)^{\top}w^{\infty} = \lim_{n\rightarrow\infty}\phi(s, a)^{\top}w^n\leq \optQ(s, a)$.
	This completes the proof.
\end{proof}

\subsection{\pfref{thm:VA}}
\label{app:VA}
\begin{proof}
	We decompose the regret as follows:
	\begin{align*}
		R_K &= \sumt c_t - K\cdot\optV(\sinit) = \sum_{l=1}^L\rbr{\sum_{t=t_l+1}^{t_{l+1}}c_t - V_l(s_{t_l+1})} + \sum_{l=1}^LV_l(s_{t_l+1}) - K\cdot\optV(\sinit).
	\end{align*}
	For the first term, for a fixed epoch $l$, define $\chi_{\tau}=\sum_{t=\tau}^{t_{l+1}}c_t - V_l(s_{\tau})$ for $\tau\in\{t_l+1, \ldots, t_{l+1}\}$ and $\chi_{t_{l+1}+1}=-V_l(s'_{t_{l+1}})$.
	Note that within epoch $l$, we have $V_l(s_{\tau})=[Q_l(s_{\tau}, a_{\tau})]_{[0,\infty)}\geq Q_l(s_{\tau}, a_{\tau})=\phi_{\tau}^{\top}w_l$.
	Thus, for $\tau\in\{t_l+1, \ldots, t_{l+1}\}$,
	\begin{align*}
		\chi_{\tau} &= \sum_{t=\tau}^{t_{l+1}}c_t - V_l(s_{\tau}) \leq \sum_{t=\tau+1}^{t_{l+1}}c_t + c_{\tau} - \phi_{\tau}^{\top}w_l\\
		&= \sum_{t=\tau+1}^{t_{l+1}}c_t - V_l(s'_{\tau}) + (V_l(s'_{\tau}) - P_{\tau}V_l) + \phi_{\tau}^{\top}(\tilw_l-w_l) \tag{$c_{\tau}+P_{\tau}V_l=\phi_{\tau}^{\top}\tilw_l$}\\
		&= \chi_{\tau+1} + (V_l(s'_{\tau}) - P_{\tau}V_l) + \phi_{\tau}^{\top}(\tilw_l-w_l)\\
		&\leq \cdots \leq - V_l(s'_{t_{l+1}}) + \sum_{t=\tau}^{t_{l+1}}(V_l(s'_t) - P_tV_l) + \sum_{t=\tau}^{t_{l+1}}\phi_t^{\top}(\tilw_l-w_l).
	\end{align*}
	Therefore, we have:
	\begin{align*}
		R_K &= \sum_{l=1}^L\chi_{t_l+1} + \sum_{l=1}^LV_l(s_{t_l+1}) - K\cdot\optV(\sinit) \\
		&\leq \sum_{l=1}^L\sum_{t=t_l+1}^{t_{l+1}}\sbr{(V_l(s'_t) - P_tV_l) +\phi_t^{\top}(\tilw_l-w_l)} + \sum_{l=1}^L\rbr{V_l(s_{t_l+1}) - V_l(s'_{t_{l+1}})} - K\cdot\optV(\sinit).
	\end{align*}
	We first bound the switching costs, that is, the last two terms above.
	We consider three cases based on how an epoch starts:
	define $\calL_1 = \{l: s'_{t_l}=g\}$, $\calL_2 = \{l>1: \exists j\in\calJ_l, \nu\in\calG_{\epsilon/(t_l+1)}(6\sqrt{d}B_{l-1}), \Phi^j_{t_l+1}(\nu) > 8d^2\Phi^j_{t_{l-1}+1}(\nu) \}$, and $\calL_3=\{l>1: V_{l-1}(s'_{t_l})=2B_{l-1} \}$.
	Then,
	\begin{align*}
		&\sum_{l=1}^L\rbr{V_l(s_{t_l+1}) - V_l(s'_{t_{l+1}})} - K\cdot\optV(\sinit)\\
		&= \underbrace{\sum_{l\in\calL_1}V_l(s_{t_l+1}) - K\cdot\optV(\sinit)}_{\xi_1} + \underbrace{\sum_{l\in\calL_2}V_l(s_{t_l+1})}_{\xi_2} + \underbrace{\sum_{l\in\calL_3}V_l(s_{t_l+1}) - \sum_{l=1}^LV_l(s'_{t_{l+1}})}_{\xi_3}.
	\end{align*}
	Note that $\xi_1\leq 0$ since for $l\in\calL_1$, $V_l(s_{t_l+1})=V_l(\sinit)\leq \optV(\sinit)$ by \pref{lem:wstar}.
	For $\xi_2$, note that $|\calL_2|=\tilo{d}$ by \pref{lem:cond}.
	Thus, $\xi_2=\tilo{d\B}$ by $\norm{V_l}_{\infty}\leq 4\B$ (\pref{lem:wstar}).
	For $\xi_3$, note that for each $l\in\calL_3$, $V_l(s_{t_l+1})-V_{l-1}(s'_{t_l})\leq B_l-2B_{l-1}\leq 2\B\Ind\{B_l\neq B_{l-1}\} - 1$ by $V_l(s_{t_l+1})\leq B_l \leq 2\B$ and $B_l\geq 1$.
	Thus, $\xi_3\leq \tilo{\B} - L'$, by $|\calL_3|=L'$ and $\sum_{l=1}^L\Ind\{B_l\neq B_{l-1}\}=\bigo{\log_2\B}$.
%	\begin{enumerate}
%		\item $s_{t_l+1}=s^k_1$ for some $k$; terms in this part can be cancelled out by the third term $-\sumk\optV(s^k_1)$ since $V_l(s_{t_l+1})<\optV(s^k_1)$ according to the optimization problem;
%		\item $\exists j\in\calJ, \nu\in\calB_t: \Phi_t^j(\nu) > 8d^2\Phi_{t'}^j(\nu)$; there are at most $\tilo{d}$ such terms by \pref{lem:cond}; the sum of these terms are of order $\tilo{d \B}$ by $V_l(s)\leq 2(\B+1)$.
%		\item $\det(\Lambda_t)>2\det(\Lambda_{t'})$; there are at most $\tilo{d}$ such terms; the sum of these terms are of order $\tilo{d\B}$ by $V_l(s)\leq 2(\B+1)$.
%		\item $V(s'_{t_l})=2(\B+1)$;  we denote by $L'$ the number of such epochs; clearly the sum of $-V_l(s'_{t_l})+V_l(s_{t_l+1})$ within these epochs is $\leq -(\B+1)L'$.
%	\end{enumerate}
	Therefore, with probability at least $1-5\delta$,
	\begin{align*}
		R_K &\leq \sum_{l=1}^L\sum_{t=t_l+1}^{t_{l+1}}\sbr{(V_l(s'_t) - P_tV_l) +\phi_t^{\top}(\tilw_l-w_l)} + \tilO{d\B - L'}\\
		&= \tilO{\sqrt{\sumt \fV(P_t, V_l)} + \sumt \abr{\phi_t^{\top}\nu_t} + d\B - L' } \tag{\pref{lem:anytime strong freedman}, $w_l\in\Omega_l$, and definition of $\nu_t$}\\
		&= \tilO{ \sqrt{\B^2 L' + \B C_T + \B\sumt\abr{\phi_t^{\top}\nu_t}} + \sumt \abr{\phi_t^{\top}\nu_t} + d\B - L'  } \tag{\pref{lem:sum var}}\\
		&= \tilO{ \sqrt{\B C_T} + \sumt\abr{\phi_t^{\top}\nu_t} + d\B + \B^2 } \tag{$\sqrt{x+y}\leq \sqrt{x}+\sqrt{y}$ and $\sqrt{ab}\leq \frac{a+b}{2}$}\\
		&\leq \tilO{ d^{3.5}\sqrt{\B C_T} + d^{3.5}\sqrt{\B\epsilon T} + d^5\B^2 } + 65d^{2.5}\epsilon T \tag{\pref{lem:sum phinut}}\\
		%&= \tilO{ d^8\B + d^{6.5}\sqrt{\B C_T} + d^{6.5}\sqrt{\B\epsilon T} } + \epsilon T \tag{\pref{lem:sum phinut}}\\
		&\leq \tilO{d^{3.5}\sqrt{\B C_T} + d^5\B^2} + \frac{C_T}{2K}. \tag{definition of $\epsilon$ and $\cmin T\leq C_T$}
		%&= \tilO{\sqrt{\B L' + \B C_T} + d^{6.5}\s	qrt{ d^3\B^2 + \B C_T } - \B L'  }\\
		%&= \tilO{ d^{6.5}\sqrt{\B C_T} + d^8\B }.
	\end{align*}
	By $R_K = C_T - K\cdot\optV(\sinit)$ and \pref{lem:quad with log} with $x=C_T$ (we also bound $T$ by $C_T/\cmin$ in logarithmic terms), we get $C_T=\tilo{\B K + d^7\B + d^5\B^2}$.
	Plugging this back, we obtain
	$$R_K=\tilO{d^{3.5}\B\sqrt{K} + d^7\B^2}.$$
	This completes the proof.
\end{proof}

\subsection{Intuition for Overestimate Condition}
\label{app:overestimate}
Now we provide more reasonings on including the overestimate condition.
Similar to \citep{zanette2020learning,wei2021learning}, we incorporate global optimism at the starting state of each epoch via solving an optimization problem.
This is different from many previous work~\citep{jin2020provably,vial2021regret} that adds bonus terms to ensure local optimism over all states.
The advantage of global optimism is that it avoids using a larger function class of $Q_t, V_t$ for the bonus terms, which reduces the order of $d$ in the regret bound.
%it removes an extra $\sqrt{d}$ dependency due to a larger function class of $Q_t, V_t$.
However, this improvement also requires $\norm{V_t}_{\infty}$ is of order $\B$.
In \citep{zanette2020learning}, they directly enforcing this constraint, which is not practical under large state space as we may need to iterate over all state-action pairs to check this constraint.
%However, implementing this naively requires us to iterate over all state-action pairs, which is not practical under large state space.
%In \citep{wei2021learning}, they only enforce a norm constraint (similar to $\norm{\omega_t}_2 \leq \sqrt{d}\B$ in \pref{alg:VA}) on the decision set.
%This gives an extra $\sqrt{d}$ factor when bounding $\norm{Q_t}_{\infty}, \norm{V_t}_{\infty}$.

Here we take a new approach: we first enforce a bound on $\norm{V_t}_{\infty}$ by direct truncation.
However, the upper bound truncation on $V_t$ may break the analysis.
To resolve this, we start a new epoch whenever $V_t$ is overestimated by a large amount.
By the objective of the optimization problem, $V_t(s_t)$ will not be overestimated in the new epoch.
Hence, the upper bound truncation will not be triggered.
Moreover, the overestimate of $V_t$ cancels out the switching cost in this case as in previous discussion.

The disadvantage of the overestimation condition is that we may update policy at every time step in the worst case.
If we remove this condition, $\norm{V_t}_{\infty}=\tilo{\sqrt{d}\B}$ by the norm constraint on $w_t$, which brings back an extra $\sqrt{d}$ factor.
However, we only recompute policy for $\bigo{K+d\ln T}$ times in this case.

\subsection{Extra Lemmas for \pref{sec:inefficient}}
\begin{lemma}
	\label{lem:sum var}
	With probability at least $1-\delta$, $\sumt\fV(P_t, V_l) = \tilO{d\B^2 + \B^2 L' + \B C_T + \B\sumt\abr{\phi_t^{\top}\nu_t} }$.
\end{lemma}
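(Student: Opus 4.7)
The plan is a three-way variance-aware decomposition, writing $\fV(P_t, V_l) = P_t V_l^2 - (P_t V_l)^2$ and inserting the intermediate terms $V_l^2(s'_t)$ and $V_l^2(s_t)$ so that
\begin{align*}
\sumt \fV(P_t, V_l) &= \underbrace{\sumt [P_t V_l^2 - V_l^2(s'_t)]}_{(\mathrm{I})} + \underbrace{\sumt [V_l^2(s'_t) - V_l^2(s_t)]}_{(\mathrm{II})} + \underbrace{\sumt [V_l^2(s_t) - (P_t V_l)^2]}_{(\mathrm{III})}.
\end{align*}
Since $V_l$ is fixed inside its epoch and measurable at the epoch's start, term $(\mathrm{I})$ is a martingale difference sequence. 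Using $\norm{V_l}_\infty \leq 2B_l \leq 4\B$ from \pref{lem:wstar}, the elementary bound $(V_l^2(s)-V_l^2(s'))^2 \leq (8\B)^2 (V_l(s)-V_l(s'))^2$ gives $\fV(P_t, V_l^2) \leq 64\B^2 \fV(P_t, V_l)$, and Freedman's inequality (\pref{lem:anytime strong freedman}) then controls $(\mathrm{I})$ by $\tilO{\B\sqrt{\sumt \fV(P_t,V_l)} + \B^2}$.

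For $(\mathrm{II})$, within any epoch $l$ one has $s_{t+1} = s'_t$: if $s'_t = g$ for some $t < t_{l+1}$, then the condition $s'_{t-1} = g$ would fire at step $t+1$ and close the epoch there, a contradiction. The sum therefore telescopes inside each epoch to $V_l^2(s'_{t_{l+1}}) - V_l^2(s_{t_l+1})$. I then split over how epoch $l+1$ begins: if $l+1 \in \calL_1$ then $V_l^2(s'_{t_{l+1}}) = V_l^2(g) = 0$ and the term is non-positive; otherwise $l+1 \in \calL_2 \cup \calL_3$ and the term is at most $16\B^2$. Using $|\calL_2| = \tilO{d}$ by \pref{lem:cond} and $|\calL_3| = L'$ by definition, $(\mathrm{II}) = \tilO{d\B^2 + L'\B^2}$.

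For $(\mathrm{III})$, the confidence set enters. Since $V_l(s_t), P_t V_l \in [0, 2B_l]$, one has $|V_l^2(s_t) - (P_t V_l)^2| \leq 8\B\,|V_l(s_t) - P_t V_l|$, and because $V_l(s_t) = [\phi_t^\top w_l]_{[0, 2B_l]}$ with clipping being $1$-Lipschitz and $P_t V_l \in [0, 2B_l]$,
\[
|V_l(s_t) - P_t V_l| \leq |\phi_t^\top w_l - P_t V_l| = |\phi_t^\top(w_l - \tilw_l) + c_t| \leq |\phi_t^\top \nu_t| + c_t,
\]
using $c_t + P_t V_l = \phi_t^\top \tilw_l$ together with $w_l \in \Omega_l$ and the definition of $\nu_t$ as the maximizer of $|\phi_t^\top(\tilw_l-w)|$ over $w \in \Omega_l$. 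Summing gives $(\mathrm{III}) \leq 8\B\left(\sumt |\phi_t^\top \nu_t| + C_T\right)$. Combining the three pieces and applying AM-GM to absorb $\B\sqrt{\sumt \fV(P_t,V_l)}$ into the LHS (at the cost of an additive $\B^2$) yields the stated bound. The main obstacle is the boundary bookkeeping in $(\mathrm{II})$: one must verify that no hidden reset appears inside an epoch and then carefully dispatch each epoch boundary to one of $\calL_1, \calL_2, \calL_3$, which is what allows the lower-order cost to come out as $\tilO{(d+L')\B^2}$ rather than scaling with the total number of epochs $L$.
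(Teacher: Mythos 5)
Your proof matches the paper's argument essentially step for step: the same three-term decomposition, the same martingale bound on the first term via $\fV(P_t,V_l^2)=\bigO{\B^2\,\fV(P_t,V_l)}$, the same per-epoch telescoping and case dispatch for the second term, and the same use of $c_t+P_tV_l=\phi_t^{\top}\tilw_l$ together with $w_l\in\Omega_l$ for the third, finished by the same quadratic/AM-GM absorption. The only cosmetic differences are that you obtain $|V_l(s_t)-P_tV_l|\le|\phi_t^{\top}w_l-P_tV_l|$ from $1$-Lipschitzness of the clipping rather than the paper's case split on $V_l(s_t)=0$ versus $V_l(s_t)>0$, and your boundary dispatch omits the final epoch $l=L$ (which ends by exhaustion of episodes rather than by any of $\calL_1,\calL_2,\calL_3$), a trivially $\bigO{\B^2}$ term that the paper lists as a fourth case.
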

\begin{proof}
	Note that when $V_l(s_t)=0$, $V_l(s_t)-P_tV_l\leq 0$.
	Otherwise, $Q_l(s_t, a)>0$ for any $a$ and $V_l(s_t)\leq Q_l(s_t, a_t)$.
	Thus, $V_l(s_t)^2 - (P_tV_l)^2 = (V_l(s_t) + P_tV_l)(V_l(s_t) - P_tV_l)\leq (V_l(s_t) + P_tV_l)\abr{Q_l(s_t, a_t)-P_tV_l}$.
	Then with probability at least $1-\delta$,
	\begin{align*}
		\sumt \fV(P_t, V_l) &= \sumt \rbr{P_tV_l^2 - V_l^2(s'_t)} + \sumt \rbr{V_l^2(s'_t) - V_l^2(s_t)} + \sumt \rbr{V_l^2(s_t) - (P_tV_l)^2}\\
		&\overset{\text{(i)}}{=} \tilO{ \sqrt{\sumt \fV(P_t, V_l^2)} + d\B^2 + \B^2 L' + \sumt(V_l(s_t)+ P_tV_l)\abr{c_t + \phi_t^{\top}(w_l - \tilw_l)} }\\
		&\overset{\text{(ii)}}{=} \tilO{ \B\sqrt{\sumt \fV(P_t, V_l)} + d\B^2 + \B^2 L' + \B C_T + \B\sumt\abr{\phi_t^{\top}\nu_t} },
	\end{align*}
	where in (i) we apply \pref{lem:anytime strong freedman},
	$V_l(s_t)^2 - (P_tV_l)^2\leq (V_l(s_t) + P_tV_l)\abr{Q_l(s_t, a_t)-P_tV_l}$,
	$Q_l(s_t, a_t)=\phi_t^{\top}w_l$,
	$c_t + P_tV_l=\phi_t^{\top}\tilw_l$,
	% $\abr{V_l(s_t) - P_tV_l}\leq\abr{Q_l(s_t, a_t)-P_tV_l}$ by $[\cdot]_{[0, 2(\B+1)]}$ being non-expansive , 
	and we bound the term $\sumt V_l^2(s'_t) - V_l^2(s_t)=\sum_{l=1}^LV_l^2(s'_{t_{l+1}})-V_l^2(s_{t_l+1})$ as follows: we consider four cases based on how epoch $l$ ends:
	\begin{enumerate}
		\item $s'_{t_{l+1}}=g$, then $V_l^2(s'_{t_{l+1}})-V_l^2(s_{t_l+1})\leq 0$.
		\item $V_l(s'_{t_{l+1}})=2B_l$; this happens $L'$ times and the sum of these terms is of order $\tilo{\B^2 L'}$.
		\item Triggered by \pref{eq:lazy_condition}. 
		By \pref{lem:cond}, this happens at most $\tilo{d}$ times and the sum of these terms is of order $\tilo{d\B^2}$.
		\item $l=L$ is the last epoch.
		This happens only once and the term is bounded by $\bigo{\B^2}$.
	\end{enumerate}
	In (ii), we apply \pref{lem:var XY}, $w_l\in\Omega_l$, definition of $\nu_t$, and $\norm{V_l}_{\infty}=\bigo{\B}$ by \pref{lem:wstar}.
	Solving a quadratic inequality w.r.t $\sqrt{\sumt \fV(P_t, V_l)}$, we have:
	\begin{align*}
		\sumt\fV(P_t, V_l) = \tilO{d\B^2 + \B^2 L' + \B C_T + \B\sumt\abr{\phi_t^{\top}\nu_t} }.
	\end{align*}
	This completes the proof.
\end{proof}

\begin{lemma}
	\label{lem:sum phinut}
	With probability at least $1-4\delta$, $\sumt\abr{\phi_t^{\top}\nu_t}\leq \tilO{ d^{3.5}\sqrt{\B C_T} + d^{3.5}\sqrt{\B\epsilon T} + d^5\B^2 } + 65d^{2.5}\epsilon T$.
\end{lemma}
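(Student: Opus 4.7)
The plan is to bound $\sum_t |\phi_t^\top \nu_t|$ by combining a variance-aware Cauchy--Schwarz style bound with an elliptical-potential argument tailored to the weighted covariance matrix $W_{j,l}(\nu)$. Fix an epoch $l$ and a time step $t$ in it. Writing $\nu_t = \tilw_l - w_t^\star$ for the maximizer $w_t^\star \in \Omega_l$, I would first discretize $|\phi_t^\top \nu_t|$: if $|\phi_t^\top \nu_t| \le \epsilon$ (below the resolution of the net $\calG_{\epsilon/t}$ appearing in the definition of $\Omega_l$), the contribution is absorbed into the additive $65 d^{2.5} \epsilon T$ term after accounting for a covering-approximation of $\nu_t$ by an element of the net. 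Otherwise, pick the integer $j_t \in \calJ_l$ with $|\phi_t^\top \nu_t| \in (2^{j_t-1}, 2^{j_t}]$, so that $\clip_{j_t}(\phi_t^\top \nu_t) = \phi_t^\top \nu_t$ up to a constant factor.

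Next I would invoke the key W--norm inequality from Lemma \ref{lem:W norm} (the step labeled \eqref{eq:W norm} in the sketch): for the near-net approximation $\nu \approx \nu_t$ and $j = j_t$,
\begin{equation*}
\norm{\nu_t}^2_{W_{j_t,l}(\nu_t)} \;\lesssim\; 2^{j_t}\sqrt{\sum_{i \le t_l} \fV(P_i, V_l)\,\iota_l}\,.
\end{equation*}
Combining with the Cauchy--Schwarz bound
$|\phi_t^\top \nu_t| \le \norm{\phi_t}_{W^{-1}_{j_t,l}(\nu_t)} \norm{\nu_t}_{W_{j_t,l}(\nu_t)}$
and solving for $|\phi_t^\top \nu_t|$ using $|\phi_t^\top \nu_t| \asymp 2^{j_t}$ yields
\begin{equation*}
|\phi_t^\top \nu_t| \;\lesssim\; \norm{\phi_t}^2_{W^{-1}_{j_t,l}(\nu_t)}\sqrt{\sum_{i\le t_l}\fV(P_i,V_l)\,\iota_l}\,.
\end{equation*}

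The third step is to sum over $t$. For a fixed epoch $l$, the sum $\sum_{t \in l}\norm{\phi_t}^2_{W^{-1}_{j_t,l}(\nu_t)}$ is controlled by an elliptical-potential type bound of order $\tilO{d}$, using that $W_{j,l}(\nu)$ was frozen at the start of the epoch. However, because we need to pass from $W_{j,t}(\nu)$ (which would give the usual potential argument) to the frozen $W_{j,l}(\nu)$, the update condition \eqref{eq:lazy_condition} enters: since we triggered an update only when $\Phi^j_{t+1}(\nu) > 8 d^2 \Phi^j_l(\nu)$ was about to hold, all within-epoch iterates satisfy $W_{j,t}(\nu) \preceq 8 d^2 W_{j,l}(\nu)$, which inflates the potential bound by a factor $d^2$. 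Thus $\sum_t \norm{\phi_t}^2_{W^{-1}_{j_t,l}(\nu_t)} = \tilO{d^3}$ once summed over all epochs and layers of $j \in \calJ$. Applying Lemma \ref{lem:sum Pi V} to bound $\sum_{i \le t_l}\fV(P_i, V_l) \lesssim \B C_{t_l}$, and using $\iota_l = \tilO{d}$, gives the leading term $\tilO{d^{3.5}\sqrt{\B C_T}}$ after a crude $C_{t_l} \le C_T$.

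The main obstacle I expect is the interaction between three layers of approximation: (i) passing from the argmax $\nu_t \in \fB(6\sqrt{d}B_l)$ to an element of the $\calG_{\epsilon/t}$ net that the confidence set actually controls --- this produces the $\sqrt{\B\epsilon T}$ and $\epsilon T$ residuals; (ii) the geometric summation over the $O(\log(\B/\epsilon))$ scales $j \in \calJ$, which contributes only logarithmic factors but has to be tracked to avoid blowing up $d$; and (iii) the lazy-update slack $\Phi^j_t \le 8d^2 \Phi^j_l$, which is what forces the $d^{3.5}$ rather than $d^{2.5}$ exponent. Once these are handled cleanly --- invoking Lemma \ref{lem:wstar} to keep $B_l \le 2\B$ everywhere and hence $\calJ_l \subseteq \calJ$ --- combining all pieces and absorbing lower-order polynomial-in-$d$ residuals into the $d^5 \B^2$ term yields the stated bound.
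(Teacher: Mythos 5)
Your overall skeleton---Cauchy--Schwarz against the $W_{j,l}$-norm, \pref{lem:W norm} for the variance-aware bound on $\norm{\nu}_{W_{j,l}(\nu)}$, the $8d^2$ lazy-update slack as the source of the extra $d^2$, \pref{lem:sum Pi V} for the variance sum, and a dyadic union over $j\in\calJ$---matches the paper's proof. But the step where you sum $\norm{\phi_t}^2_{W^{-1}_{j_t,l}(\nu_t)}$ over $t$ has a genuine gap. The elliptical potential lemma (\pref{lem:sum mnorm}) requires a single monotone matrix sequence of the form $V+\sum_i X_iX_i^{\top}$ with fixed rank-one increments, whereas $W_{j,t}(\nu_t)$ is not such a sequence: its weights $\min\{1,2^j/|\phi_i^{\top}\nu_t|\}$ are re-evaluated at the \emph{current} $\nu_t$, which changes with $t$ (it is the argmax of $|\phi_t^{\top}\nu|$ over the confidence set and hence depends on $\phi_t$). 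The per-epoch trace trick used in \pref{lem:sum Pi V} is also unavailable here: there the features being summed are exactly the ones appearing inside the fixed matrix, so $\sum_t\norm{\phi_t}^2_{V^{-1}_{j,l}}=\tr(V_{j,l}^{-1}V_{j,l})=d$; in the present lemma the features $\phi_t$ with $t>t_l$ are \emph{not} contained in the frozen $W_{j,l}(\cdot)$ (which only aggregates $i\le t_l$), so that identity does not apply. Finally, even if one had a per-epoch $\tilO{d}$ bound, summing over epochs costs a factor of $L$, and $L$ is not controlled polynomially in $d$---the overestimate condition can start a new epoch at every time step, as the paper itself notes in \pref{app:overestimate}.

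The device you are missing is the paper's auxiliary index $u_t=\argmax_{t\le t'\le T}|\phi_t^{\top}\nu_{t'}|$ together with the surrogate matrices $V_{j,t}=2^jI+\sum_{i\in\calT\cap[t-1]:\,|\phi_i^{\top}\nu_{u_i}|\le 2^j}\phi_i\phi_i^{\top}$. One bounds $|\phi_t^{\top}\nu_t|\le|\phi_t^{\top}\nu_{u_t}|$ and observes that $|\phi_i^{\top}\nu_{u_t}|\le|\phi_i^{\top}\nu_{u_i}|$ for $i<t$ (since $u_i$ maximizes over a superset of time steps), which yields $W_{j,t}(\nu_{u_t})\mgeq V_{j,t}$; the matrices $V_{j,t}$ \emph{do} form a monotone sequence with increments determined at time $i$, so \pref{lem:sum mnorm} applies once per scale $j$ and gives a global $\tilO{d}$ potential independent of the number of epochs. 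Without this (or an equivalent construction) the summation step in your plan does not go through. The rest of your accounting---the $d^2$ from the lazy condition, the $\sqrt{d}$ from $\iota$, the $\tilO{d}$ potential, and the $\epsilon$-net residuals producing the $\sqrt{\B\epsilon T}$ and $\epsilon T$ terms---is consistent with the paper's.
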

\begin{proof}
	Define $u_t=\argmax_{t\leq t'\leq T}|\phi_t^{\top}\nu_{t'}|$, $V_{j, t}=2^jI + \sum_{i\in\calT\cap[t-1]: |\phi_i^{\top}\nu_{u_i}| \leq 2^j}\phi_i\phi_i^{\top}$, and $j_t$ such that $\abr{\phi_t^{\top}\nu_{u_t}}\in(2^{j_t-1}, 2^{j_t}]$.
	Also define $\calT=\{t\in[T]: \exists j\in\calJ_t, |\phi_t^{\top}\nu_{u_t}|\in(2^{j-1}, 2^j]\}$.
	Note that when $t\notin\calT$, $|\phi_t^{\top}\nu_t|\leq \epsilon$.
	Then, for any $t\in\calT$:
	\begin{align*}
		&\abr{\phi_t^{\top}\nu_{u_t}} \leq \norm{\phi_t}_{W^{-1}_{j_t, u_t}(\nu_{u_t})}\norm{\nu_{u_t}}_{W_{j_t, u_t}(\nu_{u_t})}\\
		&\overset{\text{(i)}}{\leq} 2\sqrt{2}d\norm{\phi_t}_{W^{-1}_{j_t, u_t}(\nu_{u_t})}\norm{\nu_{u_t}}_{W_{j_t, l_{u_t}}(\nu_{u_t})} + \tilO{\sqrt{2^{j_t}\rbr{\frac{d^3\B\epsilon}{u_t} } }} + \sqrt{2^{j_t+5}d^{2.5}\epsilon}\\ %\tag{definition of epoch and $\norm{\nu_{u_t}}^2_{W_{j_t, u_t}(\nu_{u_t})}=\Phi^{j_t}_{u_t}(\nu_{u_t})$}
		&\overset{\text{(ii)}}{\leq} 2\sqrt{2}d\norm{\phi_t}_{V^{-1}_{j_t, t}}\sqrt{2^{j_t}\rbr{ \sqrt{\sum_{i\leq t_{l_{u_t}}}\fV(P_i, V_{l_{u_t}})\iota_{l_{u_t}}} + \sqrt{d}\B\iota_{l_{u_t}} + d\B^2 } } + \tilO{\sqrt{2^{j_t}\rbr{\frac{d^3\B\epsilon}{u_t} } }} + \sqrt{2^{j_t+5}d^{2.5}\epsilon},\\
		&= \tilO{d\norm{\phi_t}_{V^{-1}_{j_t, t}}\sqrt{2^{j_t}\rbr{ \sqrt{d^4\B^3 + d\B C_T + d\B\epsilon T} + \sqrt{d}\B\iota_T + d\B^2 } } + \sqrt{2^{j_t}\rbr{\frac{d^3\B\epsilon}{u_t} } }  }  + \sqrt{2^{j_t+5}d^{2.5}\epsilon}\tag{\pref{lem:sum Pi V}}
	\end{align*}
	where in (i) we define $\barnu_{u_t}\in \calG_{\epsilon/u_t}(6\sqrt{d}B_{l_{u_t}})$ such that $\norm{\nu_{u_t}-\barnu_{u_t}}_{\infty}\leq \frac{\epsilon}{u_t}$ and apply
	\begin{align*}
		&\norm{\nu_{u_t}}^2_{W_{j_t, u_t}(\nu_{u_t})}=\Phi^{j_t}_{u_t}(\nu_{u_t}) = 2^{j_t}\norm{\nu_{u_t}}_2^2 + \sum_{i < u_t}f_{j_t}(\phi_i^{\top}\nu_{u_t})\\ 
		&\leq 2^{j_t}\norm{\barnu_{u_t}}_2^2 + \sum_{i < u_t}f_{j_t}(\phi_i^{\top}\barnu_{u_t}) + 2^{j_t}\rbr{\norm{\nu_{u_t}}_2^2 - \norm{\barnu_{u_t}}_2^2} + 2^{j_t+1}\sum_{i < u_t}\abr{\phi_i^{\top}(\nu_{u_t} - \barnu_{u_t})} \tag{$f_{j_t}$ is $(2\cdot 2^{j_t})$-Lipschitz}\\
		&\leq 8d^2\rbr{ 2^{j_t}\norm{\barnu_{u_t}}_2^2 + \sum_{i \leq t_{l_{u_t}}}f_{j_t}(\phi_i^{\top}\barnu_{u_t}) } + \frac{12\cdot 2^{j_t}dB_{l_{u_t}}\epsilon}{u_t} + 2^{j_t+1}\sqrt{d}\epsilon \tag{$\nu_{u_t},\barnu_{u_t}\in\fB(6\sqrt{d}B_{l_{u_t}})$}\\
		&\leq 8d^2\rbr{ 2^{j_t}\norm{\nu_{u_t}}_2^2 + \sum_{i \leq t_{l_{u_t}}}f_{j_t}(\phi_i^{\top}\nu_{u_t}) } + \tilO{2^{j_t}\rbr{\frac{d^3\B\epsilon}{u_t} }} + 2^{j_t+5}d^{2.5}\epsilon, \tag{$\nu_{u_t},\barnu_{u_t}\in\fB(6\sqrt{d}B_{l_{u_t}})$}
	\end{align*}
	and in (ii) we apply \pref{lem:W norm} and:
	\begin{align*}
		W_{j_t, u_t}(\nu_{u_t}) &\mgeq W_{j_t, t}(\nu_{u_t}) = 2^{j_t}I + \sum_{i < t}\min\{1, 2^{j_t}/|\phi_i^{\top}\nu_{u_t}|\}\phi_i^{\top}\phi_i^{\top} \overset{\text{(i)}}{\mgeq} 2^{j_t}I + \sum_{i\in\calT\cap[t-1]: |\phi_i^{\top}\nu_{u_i}| \leq 2^{j_t}}\phi_i\phi_i^{\top} = V_{j_t, t}.
	\end{align*}
	Here, (i) is by $\abr{\phi_i^{\top}\nu_{u_t}}\leq\abr{\phi_i^{\top}\nu_{u_i}}$ by the definition of $u_t$.
	Reorganizing terms by $\abr{\phi_t^{\top}\nu_{u_t}}\in(2^{j_t-1}, 2^{j_t}]$, we have for $t\in\calT$:
	\begin{align*}
		\abr{\phi_t^{\top}\nu_t} \leq \abr{\phi_t^{\top}\nu_{u_t}} = \tilO{ d^2\norm{\phi_t}_{V^{-1}_{j_t, t}}^2\rbr{\sqrt{d\B C_T + d\B\epsilon T} + d^2\B^2} + \frac{d^3\B\epsilon}{u_t} } + 64d^{2.5}\epsilon.
	\end{align*}
	Finally, note that:
	\begin{align*}
		&\sum_{t\in\calT}\abr{\phi_t^{\top}\nu_t} = \tilO{ \sum_{t\in\calT}d^2\norm{\phi_t}_{V^{-1}_{j_t, t}}^2\rbr{\sqrt{d\B C_T + d\B\epsilon T} + d^2\B^2} + \sumt\frac{d^3\B\epsilon}{t} } + 64d^{2.5}\epsilon T\\
		&= \tilO{ \sqrt{d}\B\sum_{t\in\calT}\Ind\cbr{\norm{\phi_t}_{V^{-1}_{j_t, t}}^2 \geq 1 } + d^2\sum_{t\in\calT}\min\cbr{1, \norm{\phi_t}_{V^{-1}_{j_t, t}}^2}\rbr{\sqrt{d\B C_T + d\B\epsilon T} + d^2\B^2} + d^3\B\epsilon } + 64d^{2.5}\epsilon T.
	\end{align*}
	The first term is bounded by
	\begin{align*}
		\sqrt{d}\B\sum_{t\in\calT}\Ind\cbr{\norm{\phi_t}_{V^{-1}_{j_t, t}}^2 \geq 1 } &\leq \sqrt{d}\B\sum_{t\in\calT}\min\cbr{1, \norm{\phi_t}_{V^{-1}_{j_t, t}}^2}\\ 
		&= \sqrt{d}\B\sum_{j\in\calJ}\sum_{t\in\calT}\Ind\{j_t=j\}\min\cbr{1, \norm{\phi_t}_{V^{-1}_{j, t}}^2} =\tilO{d^{1.5}\B}, \tag{\pref{lem:sum mnorm}}
	\end{align*}
	For the second term:
	\begin{align*}
		&d^2\sum_{t\in\calT}\min\cbr{1, \norm{\phi_t}_{V^{-1}_{j_t, t}}^2}\rbr{\sqrt{d\B C_T + d\B\epsilon T} + d^2\B^2}\\
		&= \tilO{ d^2\sum_{j\in\calJ}\sum_{t\in\calT}\Ind\{j_t=j\}\min\cbr{1, \norm{\phi_t}^2_{V^{-1}_{j, t}} }\rbr{ \sqrt{d\B C_T + d\B\epsilon T} + d^2\B^2 } }\\
		&\overset{\text{(i)}}{=} \tilO{ \sum_{j\in\calJ}d^3\rbr{ \sqrt{d\B C_T + d\B\epsilon T} + d^2\B^2 } } = \tilO{ d^3\rbr{ \sqrt{d\B C_T + d\B\epsilon T} + d^2\B^2 } }.
	\end{align*}
	where in (i) we apply \pref{lem:sum mnorm}.
	Putting everything together, we get:
	\begin{align*}
		\sumt\abr{\phi_t^{\top}\nu_t} \leq \sum_{t\in\calT}\abr{\phi_t^{\top}\nu_t} + \epsilon T \leq \tilO{ d^{3.5}\sqrt{\B C_T} + d^{3.5}\sqrt{\B\epsilon T} + d^5\B^2 } + 65d^{2.5}\epsilon T.
	\end{align*}
	This completes the proof.
\end{proof}

\begin{lemma}
	\label{lem:sum Pi V}
	With probability at least $1-3\delta$, $\sum_{i\leq t_l}\fV(P_i, V_l)=\tilO{ d^3\B^3 + \B C_{t_l} + \B\epsilon t_l }$.
\end{lemma}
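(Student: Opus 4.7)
The plan is to mirror the strategy of \pref{lem:sum var}, decomposing
\[
\sum_{i\le t_l}\fV(P_i,V_l) = \sum_{i\le t_l}\bigl(P_iV_l^2 - V_l^2(s'_i)\bigr) + \sum_{i\le t_l}\bigl(V_l^2(s'_i) - V_l^2(s_i)\bigr) + \sum_{i\le t_l}\bigl(V_l^2(s_i) - (P_iV_l)^2\bigr)
\]
and handling the three pieces, writing $X$ for the quantity of interest. The first is a martingale difference, so \pref{lem:anytime strong freedman} with $\|V_l\|_\infty\le 2\B$ (via \pref{lem:wstar}) combined with \pref{lem:var XY} bounds it by $\tilO{\B\sqrt{X}}$. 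The second telescopes within each epoch $l' < l$ to $\sum_{l' < l}\bigl[V_l^2(s'_{t_{l'+1}}) - V_l^2(s_{t_{l'}+1})\bigr]$; a case split on the trigger of epoch $l'+1$ gives a non-positive contribution when $s'_{t_{l'+1}}=g$, a $\tilO{d\B^2}$ contribution from the at most $\tilO{d}$ lazy updates (\pref{lem:cond}), and at most $\tilO{\B^2}$ per \overestimate occurrence, which together will be absorbed into the $d^3\B^3$ lower-order term.

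The third term is the critical step and departs from \pref{lem:sum var}, because the actions $a_i$ for $i\le t_l$ were chosen greedily with respect to $w_{l_i}$ rather than $w_l$, so $V_l(s_i)\le\phi_i^\top w_l$ is no longer automatic. Nevertheless, choosing $a=a_i$ inside the min defining $V_l$ together with the $[\cdot]_{[0,2B_l]}$ truncation yields $V_l(s_i)\le\max\{0,\phi_i^\top w_l\}$; combined with the identity $c_i+P_iV_l = \phi_i^\top\tilw_l$, this gives
\[
V_l^2(s_i) - (P_iV_l)^2 \;\le\; 4\B\,\max\{0,\,V_l(s_i) - P_iV_l\} \;\le\; 4\B\bigl(c_i + |\phi_i^\top(w_l-\tilw_l)|\bigr).
\]
Using $w_l,\tilw_l\in\Omega_l$ (\pref{lem:tilw}), the quantity $|\phi_i^\top(w_l-\tilw_l)|$ is dominated by $|\phi_i^\top\nu'_i|$ with $\nu'_i := \argmax_{\nu = \tilw_l - w,\, w\in\Omega_l}|\phi_i^\top\nu|$, and so the third term is at most $4\B\bigl(C_{t_l} + \sum_{i\le t_l}|\phi_i^\top\nu'_i|\bigr)$.

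The remaining work is to bound $\sum_{i\le t_l}|\phi_i^\top\nu'_i|$ by a variance-aware quantity. Following \pref{app:ps VA}, I would dyadically partition indices via $j_i\in\calJ_l$ with $|\phi_i^\top\nu'_i|\in(2^{j_i-1},2^{j_i}]$, apply \pref{lem:W norm} to obtain $\|\nu'_i\|_{W_{j_i,l}(\nu'_i)}^2 \lesssim 2^{j_i}\sqrt{X\iota_l}$, invoke the Cauchy--Schwarz inequality $|\phi_i^\top\nu'_i|\le\|\phi_i\|_{W^{-1}_{j_i,l}(\nu'_i)}\|\nu'_i\|_{W_{j_i,l}(\nu'_i)}$, and cancel a factor using $2^{j_i}\le 2|\phi_i^\top\nu'_i|$. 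Solving for $|\phi_i^\top\nu'_i|$ and summing, while invoking an elliptical-potential-type bound $\sum_{i\le t_l}\|\phi_i\|^2_{W^{-1}_{j_i,l}(\nu'_i)} = \tilO{d}$ per dyadic class (with the standard $\tilO{\epsilon t_l}$ slack from the $\epsilon/t$-net underlying $\Omega_l$), yields $\sum_{i\le t_l}|\phi_i^\top\nu'_i| = \tilO{d\sqrt{X\iota_l} + \epsilon t_l}$.

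Combining the three bounds, $X$ satisfies a self-referential inequality of the form $X \lesssim \B\sqrt{X} + \B d\sqrt{X\iota_l} + \B C_{t_l} + \B\epsilon t_l + d\B^2 + \B^2L'$; resolving this quadratic via \pref{lem:quad with log} isolates $X$ and yields $X = \tilO{d^3\B^3 + \B C_{t_l} + \B\epsilon t_l}$, with the $\B^2L'$ and lazy-update contributions folded into the $d^3\B^3$ lower-order term. The main obstacle is precisely this self-referential dependence through \pref{lem:W norm}: the variance-aware matrix norm on $\nu'_i$ already scales with $\sqrt{X}$, so the proof cannot rely on an a priori variance estimate and must instead run the quadratic-inequality argument to decouple $X$ from itself.
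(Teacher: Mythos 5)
Your overall skeleton (the variance decomposition, the treatment of the third term via $V_l(s_i)\le\max\{0,\phi_i^\top w_l\}$ despite $a_i$ being greedy for an earlier epoch, the reduction to $\sum_{i\le t_l}|\phi_i^\top\nu'_i|$, and the self-referential quadratic resolved through \pref{lem:W norm} plus an elliptical-potential bound) matches the paper's proof. But there is a genuine gap in your first step. The sum $\sum_{i\le t_l}\bigl(P_iV_l^2 - V_l^2(s'_i)\bigr)$ is \emph{not} a martingale difference sequence: $V_l=V_{t_l+1}$ is computed from all data up to time $t_l$, so for each index $i\le t_l$ the function $V_l$ depends on transitions observed \emph{after} step $i$, i.e.\ $V_l\notin\calF_i$. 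This is precisely what distinguishes \pref{lem:sum Pi V} from \pref{lem:sum var} (where $V_{l_t}\in\calF_t$ and \pref{lem:anytime strong freedman} does apply); you noticed the analogous epoch mismatch for the third term but missed that it also breaks the martingale structure of the first. The paper resolves this with \pref{lem:sum emp var}, which proves the concentration uniformly over the $\epsilon$-net $\calG_{\epsilon/t}(3\sqrt{d}B)$ of candidate weight vectors (paying an extra $\sqrt{d}$ inside the square root and an additive $d\B^2$) and then specializes to $w_l$. Without such a uniform/covering argument your bound $\tilO{\B\sqrt{X}}$ on the first term is unjustified.

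A second, more easily repaired, problem is the telescoping term. Because $V_l$ is a single fixed function here (unlike in \pref{lem:sum var}), the sum telescopes globally: $\sum_{i\le t_l}\bigl(V_l^2(s'_i)-V_l^2(s_i)\bigr)\le\sum_{i\le t_l}\bigl(V_l^2(s_{i+1})-V_l^2(s_i)\bigr)=V_l^2(s_{t_l+1})-V_l^2(s_1)=\bigO{\B^2}$, using $V_l(g)=0$ when $s'_i=g$. Your per-epoch case split is borrowed from the wrong template --- the trigger of epoch $l'+1$ constrains $V_{l'}(s'_{t_{l'+1}})$, not $V_l(s'_{t_{l'+1}})$ --- and, as written, leaves a $\B^2L'$ contribution that cannot be folded into $d^3\B^3$: no bound on $L'$ is available at this stage (in the paper $L'$ is only tamed at the very end by cancellation against the switching cost). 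With these two repairs the remainder of your argument goes through as in the paper.
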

\begin{proof}
	Note that when $V_l(s_i)=0$, $V_l(s_i)-P_iV_l\leq 0$.
	Otherwise, $Q_l(s_i, a)>0$ for any $a$ and $V_l(s_i)\leq Q_l(s_i, a_i)$.
	Therefore, $V_l^2(s_i) - (P_iV_l)^2 = (V_l(s_i)+P_iV_l)(V_l(s_i)-P_iV_l)\leq (V_l(s_i)+P_iV_l)\abr{Q_l(s_i, a_i) - P_iV_l }$.
	Then with probability at least $1-\delta$,
	\begin{align*}
		\sum_{i\leq t_l}\fV(P_i, V_l) &= \sum_{i\leq t_l}P_i(V_l)^2 - (P_iV_l)^2\\ 
		&= \sum_{i\leq t_l}\rbr{P_i(V_l)^2 - V_l^2(s'_i)} + \sum_{i\leq t_l}\rbr{V_l^2(s'_i) - V_l^2(s_i)} + \sum_{i\leq t_l} \rbr{V_l^2(s_i) - (P_iV_l)^2}\\
		&\overset{\text{(i)}}{=} \tilO{ \sqrt{d\sum_{i\leq t_l}\fV(P_i, V_l^2)} + d\B^2 + \B^2 + \sum_{i\leq t_l}(V_l(s_i) + P_iV_l)\abr{c_i + \phi_i^{\top}(w_l - \tilw_l) } } \\
		&\overset{\text{(ii)}}{=} \tilO{ \sqrt{d}\B\sqrt{\sum_{i\leq t_l}\fV(P_i, V_l)} + d\B^2 + \B C_{t_l} + \B\sum_{i\leq t_l}\abr{\phi_i^{\top}(w_l-\tilw_l)} }.
	\end{align*}
	In (i) we apply \pref{lem:sum emp var}, $\sum_{i\leq t_l}(V_l^2(s'_i)-V_l^2(s_i))\leq \sum_{i\leq t_l}(V_l^2(s_{i+1})-V_l^2(s_i))=\tilo{\B^2}$, $V_l^2(s_i) - (P_iV_l)^2 \leq (V_l(s_i)+P_iV_l)\abr{Q_l(s_i, a_i) - P_iV_l }$, $Q_l(s_i, a_i)=\phi_i^{\top}w_l$, and $c_i+P_iV_l=\phi_i^{\top}\tilw_l$.
	In (ii) we apply \pref{lem:var XY}.
	For $t\leq t_l$, define $\nu'_t = \argmax_{\nu=\tilw_l-w, w\in\Omega_l}\abr{\phi_t^{\top}\nu}$.
	Then by $w_l\in\Omega_l$ and the definition of $\nu'_t$, we have $\abr{\phi_t^{\top}(w_l-\tilw_l)}\leq\abr{\phi_t^{\top}\nu'_t}$.
	Now it suffices to bound $\sum_{t\leq t_l}\abr{\phi_t^{\top}\nu'_t}$.
	Define $\calT=\{t\leq t_l: \exists j\in\calJ_t, \abr{\phi_t^{\top}\nu'_t}\in (2^{j-1}, 2^j] \}$ and for $t\in\calT$, define $j_t\in\calJ_t$ such that $\abr{\phi_t^{\top}\nu'_t}\in(2^{j_t-1}, 2^{j_t}]$.
	Note that when $t\notin\calT$, $|\phi_t^{\top}\nu'_t|\leq \epsilon$. 
	Also define $V_{j, t}=2^jI + \sum_{i\in\calT\cap[t-1]: |\phi_i^{\top}\nu'_i| \leq 2^j}\phi_i\phi_i^{\top}$.
	Then, for any $t\in\calT$, with probability at least $1-2\delta$:
	\begin{align*}
		\abr{\phi_t^{\top}\nu'_t}  &\leq \norm{\phi_t}_{W^{-1}_{j_t, l}(\nu'_t)}\norm{\nu'_t}_{W_{j_t, l}(\nu'_t)} \leq \norm{\phi_t}_{V^{-1}_{j_t, l}}\sqrt{2^{j_t}\rbr{ \sqrt{\sum_{i\leq t_l}\fV(P_i, V_l)\iota_l} + \sqrt{d}\B\iota_l + d\B^2 } },
	\end{align*}
	where in the last inequality we apply \pref{lem:W norm} and:
	\begin{align*}
		W_{j_t, l}(\nu'_t) &= 2^{j_t}I + \sum_{i\leq t_l}\min\{1, 2^{j_t}/|\phi_i^{\top}\nu'_t|\}\phi_i\phi_i^{\top} \overset{\text{(i)}}{\mgeq} 2^{j_t}I + \sum_{i\in\calT: |\phi_i^{\top}\nu'_i| \leq 2^{j_t}}\phi_i\phi_i^{\top} = V_{j_t, l}.
	\end{align*}
	Here, (i) is by $\abr{\phi_i^{\top}\nu'_t}\leq\abr{\phi_i^{\top}\nu'_i}$ by the definition of $\nu'_t$.
	Reorganizing terms by $\abr{\phi_t^{\top}\nu'_t}\in(2^{j_t-1}, 2^{j_t}]$, we have:
	\begin{align*}
		&\sum_{t\in\calT}\abr{\phi_t^{\top}\nu'_t} = \tilO{ \sum_{t\in\calT}\norm{\phi_t}^2_{V^{-1}_{j_t, l}}\rbr{ \sqrt{\sum_{i\leq t_l}\fV(P_i, V_l)\iota_l} + \sqrt{d}\B\iota_l + d\B^2 } }\\
		&= \tilO{ \sum_{j\in\calJ}\sum_{t\in\calT}\Ind\{j_t=j\}\norm{\phi_t}^2_{V^{-1}_{j, l}}\rbr{ \sqrt{\sum_{i\leq t_l}\fV(P_i, V_l)\iota_l} + \sqrt{d}\B\iota_l + d\B^2 } }\\
		&\overset{\text{(i)}}{=} \tilO{ \sum_{j\in\calJ}d\rbr{ \sqrt{\sum_{i\leq t_l}\fV(P_i, V_l)\iota_l} + \sqrt{d}\B\iota_l + d\B^2 } } = \tilO{ d\rbr{ \sqrt{\sum_{i\leq t_l}\fV(P_i, V_l)\iota_l} + \sqrt{d}\B\iota_l + d\B^2 } },
	\end{align*}
	where in (i) we apply
	\begin{align*}
		\sum_{t\in\calT}\Ind\{j_t=j\}\norm{\phi_t}^2_{V^{-1}_{j, l}} = \tr\rbr{ V^{-1}_{j, l}\sum_{t\in\calT}\Ind\{j_t=j\}\phi_t\phi_t^{\top} } \leq \tr\rbr{V^{-1}_{j, l}V_{j, l}} = d.
	\end{align*}
	Putting everything together and by $\sum_{t\leq t_l}\abr{\phi_t^{\top}\nu'_t} \leq \sum_{t\in\calT}\abr{\phi_t^{\top}\nu'_t} + \epsilon t_l$, we have:
	\begin{align*}
		\sum_{i\leq t_l}\fV(P_i, V_l) &= \tilO{ \sqrt{d}\B\sqrt{\sum_{i\leq t_l}\fV(P_i, V_l)} + d\B^2 + \B C_{t_l} + \B\rbr{ d^{2.5}\B^2 + d^{1.5}\sqrt{\sum_{i\leq t_l}\fV(P_i, V_l) } + \epsilon t_l } }\\
		&= \tilO{ d^{1.5}\B\sqrt{\sum_{i\leq t_l}\fV(P_i, V_l)} + d^{2.5}\B^3 + \B C_{t_l} + \B\epsilon t_l }.
	\end{align*}
	Solving a quadratic inequality w.r.t $\sqrt{\sum_{i\leq t_l}\fV(P_i, V_l)}$, we have $\sum_{i\leq t_l}\fV(P_i, V_l)=\tilO{d^3\B^3 + \B C_{t_l} + \B\epsilon t_l}$.
\end{proof}

\begin{lemma}
	\label{lem:W norm}
	With probability at least $1-2\delta$, for any epoch $l$, $j\in\calJ_l$, and $\nu=\tilw_l - \ringw$ with $\ringw\in\Omega_l$, 
	$$\norm{\nu}^2_{W_{j, l}(\nu)} = \bigO{2^j\rbr{\sqrt{\sum_{i\leq t_l}\fV(P_i, V_l)\iota_l} + \sqrt{d}\B\iota_l + d\B^2} }.$$
\end{lemma}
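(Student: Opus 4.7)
The plan is to view $\Phi^j_l(\nu) = \|\nu\|^2_{W_{j,l}(\nu)} = \sum_{i\le t_l} f_j(\phi_i^\top\nu) + 2^j\|\nu\|_2^2$, where $f_j(x)=\clip_j(x)\cdot x$, and to exploit the decomposition
\[
\phi_i^\top\nu \;=\; \phi_i^\top\tilw_l - \phi_i^\top\ringw \;=\; \epsilon^i_{V_l}(\tilw_l) - \epsilon^i_{V_l}(\ringw),
\]
so that $\sum_i f_j(\phi_i^\top\nu) = \sum_i\clip_j(\phi_i^\top\nu)\bigl[\epsilon^i_{V_l}(\tilw_l)-\epsilon^i_{V_l}(\ringw)\bigr]$. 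Since by \pref{lem:tilw} we have $\tilw_l\in\Omega_l$ with probability $1-\delta$, and $\ringw\in\Omega_l$ by hypothesis, both of these sums fall under the defining inequality of $\Omega^j_l(w_l,B_l)$, provided we replace $\nu$ by its nearest grid point $\barnu\in\calG_{\epsilon/(t_l+1)}(6\sqrt{d}B_l)$. Since $\|\nu\|_2\le 6\sqrt{d}B_l$ and $\|\phi_i\|_2\le 1$, the approximation error $|\clip_j(\phi_i^\top\nu)-\clip_j(\phi_i^\top\barnu)|\le\sqrt{d}\,\epsilon/t_l$ sums to a lower-order $O(2^j\sqrt{d}\,\epsilon)$ term that is absorbed using the choice of $\epsilon$.

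Applying the confidence-set inequality once with $w'=\tilw_l$ and once with $w'=\ringw$ yields
\[
\Bigl|\sum_i\clip_j(\phi_i^\top\nu)\epsilon^i_{V_l}(w')\Bigr| \;\lesssim\; \sqrt{\iota_l\sum_i\clip_j^2(\phi_i^\top\nu)\eta^i_{V_l}(w')} + B_l 2^j\iota_l.
\]
For $w'=\tilw_l$, $\eta^i_{V_l}(\tilw_l)=(P_iV_l-V_l(s'_i))^2$ is a squared martingale step; using $\clip_j^2\le 2^{2j}$ together with a Freedman inequality applied to the weighted sequence $\clip_j^2(\phi_i^\top\nu)\bigl[(P_iV_l-V_l(s'_i))^2-\fV(P_i,V_l)\bigr]$ (where the weights lie in $\calF_i$, and covering over $V_l,\nu$ contributes to $\iota_l$, costing another $\delta$), I would obtain
\[
\sum_i\clip_j^2(\phi_i^\top\nu)\,\eta^i_{V_l}(\tilw_l) \;\lesssim\; 2^{2j}\sum_i\fV(P_i,V_l) + 2^{2j}B_l^2\iota_l.
\]
For $w'=\ringw$, I use $\eta^i_{V_l}(\ringw)\le 2\eta^i_{V_l}(\tilw_l) + 2(\phi_i^\top\nu)^2$ and the pointwise inequality $\clip_j^2(x)x^2 \le 2^j|x|f_j(x)$ (easily verified in both cases $|x|\le 2^j$ and $|x|>2^j$), combined with $|\phi_i^\top\nu|\le 6\sqrt{d}B_l$, to get $\sum_i\clip_j^2(\phi_i^\top\nu)(\phi_i^\top\nu)^2 \le 12\sqrt{d}B_l\cdot 2^j\cdot\Phi^j_l(\nu)$.

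Collecting, $\sum_i f_j(\phi_i^\top\nu)$ is bounded by the two confidence-set expressions, and adding the term $2^j\|\nu\|_2^2 \le 36\cdot 2^j dB_l^2$ yields a self-referential inequality of the form
\[
\Phi^j_l(\nu) \;\lesssim\; \sqrt{2^j\sqrt{d}B_l\iota_l\cdot\Phi^j_l(\nu)} + 2^j\sqrt{\iota_l\sum_i\fV(P_i,V_l)} + 2^j B_l\iota_l + 2^j dB_l^2.
\]
Solving this quadratic in $\sqrt{\Phi^j_l(\nu)}$ via AM-GM (which absorbs the $\sqrt{\Phi^j_l(\nu)}$ term into the left-hand side) produces exactly the claimed bound, with the $\sqrt{d}B_l\iota_l$ summand arising from the AM-GM step and $dB_l^2$ from the $2^j\|\nu\|_2^2$ term. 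The main obstacle is the self-reference: the $\ringw$ direction produces an $\eta^i_{V_l}(\ringw)$ that feeds $(\phi_i^\top\nu)^2$ back into the quantity $\Phi^j_l(\nu)$ being bounded, so one must carefully arrange the covering arguments (over $\nu$, over $V_l$, and implicitly over $w_l$ through the self-referential constraint $w\in\Omega_l(w,B_l)$) and the AM-GM split so that the self-reference can be closed with only the stated $O(2^j\sqrt{d}B_l\iota_l)$ overhead.
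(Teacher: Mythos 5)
Your proposal is correct and follows essentially the same route as the paper's proof: the identity $\norm{\nu}^2_{W_{j,l}(\nu)}=\Phi^j_l(\nu)$, the split $\phi_i^{\top}\nu=\epsilon^i_{V_l}(\tilw_l)-\epsilon^i_{V_l}(\ringw)$, two applications of the confidence-set inequality (valid for $\tilw_l$ by \pref{lem:tilw}), the bound $\eta^i_{V_l}(\ringw)\le 2\eta^i_{V_l}(\tilw_l)+2(\phi_i^{\top}\nu)^2$ with $\clip_j^2(x)x^2\le 2^j|x|f_j(x)$ to feed the second term back into $\Phi^j_l(\nu)$, concentration of $\sum_i(P_iV_l-V_l(s'_i))^2$ around $\sum_i\fV(P_i,V_l)$ (the paper's \pref{lem:sum emp var}), and the final AM-GM/quadratic resolution. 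Your explicit handling of the grid approximation of $\nu$ is a detail the paper elides in this lemma, and your choice to apply Freedman to the $\clip_j^2$-weighted sequence rather than first crudely bounding $\clip_j^2\le 2^{2j}$ is an equivalent variant, not a different argument.
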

\begin{proof}
	Define $\epsilon_l^i(w)=\epsilon_{V_l}^i(w)=\phi_i^{\top}w - c_i - V_l(s'_i)$ and $\eta_l^i(w)=\eta_{V_l}^i(w)$.
	Note that with probability at least $1-2\delta$:
	\begin{align*}
		&\norm{\nu}^2_{W_{j, l}(\nu)-2^jI} = \sum_{i\leq t_l}\clip_j(\phi_i^{\top}\nu)\phi_i^{\top}\nu = \sum_{i\leq t_l}\clip_j(\phi_i^{\top}\nu)(\epsilon^i_l(\tilw_l) - \epsilon^i_l(\ringw))\\
		&\leq \sqrt{\sum_{i\leq t_l}\clip_j^2(\phi_i^{\top}\nu)\eta^i_l(\tilw_l)\iota_l} + \sqrt{\sum_{i\leq t_l}\clip^2_j(\phi_i^{\top}\nu)\eta^i_l(\ringw)\iota_l} + 2B_l2^j\iota_l \tag{\pref{lem:tilw} and $\ringw\in\Omega_l$}\\
		&\leq 3\sqrt{\sum_{i\leq t_l}\clip_j^2(\phi_i^{\top}\nu)\eta^i_l(\tilw_l)\iota_l} + \sqrt{2\sum_{i\leq t_l}\clip_j^2(\phi_i^{\top}\nu)(\phi_i^{\top}\nu)^2\iota_l} + 2B_l2^j\iota_l \tag{$\phi_i^{\top}\nu=\epsilon^i_l(\tilw_l) - \epsilon^i_l(\ringw)$ and $(a+b)^2\leq2a^2+2b^2$}\\
		&=\tilO{ 2^j\sqrt{\sum_{i\leq t_l}\fV(P_i, V_l)\iota_l} + \sqrt{2^j\sqrt{d}\B\sum_{i\leq t_l}\clip_j(\phi_i^{\top}\nu)(\phi_i^{\top}\nu)\iota_l} + \B2^j\iota_l } \tag{\pref{lem:sum emp var}, $\clip_j(\cdot)\leq 2^j$, $B_l\leq 2\B$, and $\abr{\phi_i^{\top}\nu} \leq 12\sqrt{d}\B$}\\
		&=\tilO{ 2^j\sqrt{\sum_{i\leq t_l}\fV(P_i, V_l)\iota_l} + \sqrt{2^j\sqrt{d}\B\norm{\nu}_{W_{j,l}(\nu)-2^jI}^2\iota_l} + \B2^j\iota_l }.
	\end{align*}
	Solving a quadratic inequality, we get $\norm{\nu}^2_{W_{j, l}(\nu)} = \bigO{2^j\sqrt{\sum_{i\leq t_l}\fV(P_i, V_l)\iota_l} + \sqrt{d}\B2^j\iota_l + 2^jd\B^2}$.
\end{proof}

\begin{lemma}
	\label{lem:sum emp var}
	With probability at least $1-\delta$, for any epoch $l$, $\sum_{i=1}^{t_l}(P_iV_l-V_l(s'_i))^2=\tilo{\sum_{i=1}^{t_l}\fV(P_i, V_l) + d\B^2}$ and $\sum_{i=1}^{t_l}P_iV_l^2 - V_l^2(s'_i)=\tilO{\sqrt{d\sum_{i=1}^{t_l}\fV(P_i, V_l^2)} + d\B^2}$.
\end{lemma}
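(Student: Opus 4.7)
The plan is to prove both inequalities by combining a Bernstein-type martingale concentration for a fixed value function with a standard covering argument over the function class that $V_l$ lives in, since $V_l$ is itself data-dependent and the sequences in question are not martingales as-is. Both claims then follow by inflating the logarithmic term in the concentration bound by a factor of $d$ coming from the log-covering number of the parameter space for $V_l$.

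First I would fix a deterministic $V:\calS_+\to[0,2\B]$ and analyze the two relevant sequences. For the first claim, the centered sequence $X_i = (P_iV - V(s_i'))^2 - \fV(P_i,V)$ is a bounded martingale difference with $|X_i|\le 4\B^2$ and conditional second moment at most $4\B^2\,\fV(P_i,V)$, since $\E[(P_iV-V(s_i'))^2\mid\calF_i]=\fV(P_i,V)$ and $(P_iV-V(s_i'))^2\le 4\B^2$. Applying \pref{lem:anytime strong freedman} yields, with log-factor $\iota$,
\[
\sum_{i\le t_l}(P_iV-V(s_i'))^2 \le \sum_{i\le t_l}\fV(P_i,V) + O\!\left(\B\sqrt{\iota\sum_{i\le t_l}\fV(P_i,V)}+\B^2\iota\right),
\]
and then AM-GM ($\sqrt{xy}\le x/2+y/2$) absorbs the square-root term into $\tfrac12\sum\fV(P_i,V)+O(\B^2\iota)$. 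For the second claim, the sequence $Y_i = P_iV^2 - V^2(s_i')$ is a martingale difference with $|Y_i|\le 4\B^2$ and conditional variance $\fV(P_i,V^2)$, and the same inequality gives $\sum_{i\le t_l} Y_i \le O\!\left(\sqrt{\iota\sum\fV(P_i,V^2)}+\B^2\iota\right)$.

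Next I would remove the ``fixed $V$'' assumption via covering. Since $V_l = V_{w_l,B_l}$ with $w_l\in\fB(6\sqrt d\,\B)$ (using $B_l\le 2\B$ from \pref{lem:wstar}) and $B_l$ in the discrete set $\{2^j\}_{j\ge 0}$ up to $2\B$, I form an $\xi$-net $\calG$ over the $w$-ball with $\xi=\mathrm{poly}(1/t_l)$; the log-cardinality is $O(d\log(d\B t_l/\xi))$. The Lipschitz computations already used in \pref{lem:Uw}, namely $\|V_{w,B}-V_{w',B}\|_\infty\le \sqrt d\,\|w-w'\|_\infty$ and a corresponding bound on $|\fV(P_i,V_{w,B})-\fV(P_i,V_{w',B})|\lesssim \B\sqrt d\,\xi$, let me transfer the concentration for the nearest net element $V_{\hat w, B_l}$ back to $V_l$; the additive covering error, summed over $i\le t_l\le T$, is negligible for $\xi$ polynomially small in $T$. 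Taking a union bound over $\calG$, over the $O(\log\B)$ values of $B_l$, over all $t_l\le T$, and over all epochs inflates $\iota$ to $\tilO{d}$, yielding the advertised $d\B^2$ lower-order term in the first claim and the $\sqrt{d\sum\fV(P_i,V_l^2)}+d\B^2$ bound in the second.

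The main obstacle is the self-referential dependence of $V_l$ on precisely the transitions $\{(s_i,a_i,s_i')\}_{i\le t_l}$ we are concentrating over, which is what forces the covering step rather than a direct martingale argument; in particular, I need to argue that the net covers $V_l$ uniformly in $l$ with the right radius, so that the single $\iota$ factor in the concentration inequality absorbs both the data-dependence and the union bound across all epochs. A secondary subtlety is that the function $V_{w,B}$ involves a truncation to $[0,2B]$ and a $\min_a$ that are both $1$-Lipschitz, so the Lipschitz bound needed for the covering step goes through uniformly in $B$; this is essentially the same argument as in \pref{lem:Uw} and should not cause new difficulties.
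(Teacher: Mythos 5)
Your proposal is correct and follows essentially the same route as the paper's proof: a Bernstein/Freedman-type concentration for each fixed $V_{w,B}$ with $w$ on an $\epsilon$-net of the weight ball and $B$ in the dyadic grid, a union bound contributing the extra factor of $d$ in the log term, and a Lipschitz transfer from the nearest net point to the data-dependent $V_l$. The only cosmetic difference is that for the first claim the paper invokes the one-sided Freedman inequality with the explicit choice $\lambda=1/(4B^2)$ to absorb the variance term directly, whereas you use the two-sided anytime Freedman bound followed by AM-GM; the two are interchangeable here.
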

\begin{proof}
	For any $t\in\fN_+$, $B\in\{2^i\}^{\ceil{\log_2\B}}_{i=1}$, and $w\in\calG_{\epsilon/t}(3\sqrt{d}B)$, define $X_i = (\phi_i^{\top}U_Bw - c_i - V_{w, B}(s'_i))^2=(P_iV_{w, B} - V_{w, B}(s'_i))^2$ and $\E_i$ as the conditional expectation conditioned on the interaction history $(s_1, a_1,\ldots,s_i,a_i)$.
	Note that $\E_i[X_i]=\fV(P_i, V_{w, B})$ and $|X_i|\leq 4B^2$.
	Then by \pref{lem:freedman} with $\lambda=\frac{1}{4B^2}$, with probability at least $1-\delta'$ with $\delta'=\delta/(8(t\log_2(2B))^2(6\sqrt{d}Bt/\epsilon)^d)$, we have:
	\begin{align*}
		\sum_{i=1}^t \rbr{X_i - \fV(P_i, V_{w, B}) } \leq \lambda\sum_{i=1}^t\E_i[X_i^2] + \frac{\ln(1/\delta')}{\lambda} \leq \sum_{i=1}^t\fV(P_i, V_{w, B}) + \tilO{d\B^2}.
	\end{align*}
	Reorganizing terms and by a union bound, we have with probability at least $1-\delta/2$, for any $t\in\fN_+$, $B\in\{2^i\}_{i=1}^{\ceil{\log_2\B}}$, and $w\in\calG_{\epsilon/t}(3\sqrt{d}B)$:
	\begin{equation}
		\label{eq:emp var}
		\sum_{i=1}^t\rbr{P_iV_{w, B} - V_{w, B}(s'_i)}^2 = \sum_{i=1}^tX_i \leq 2\sum_{i=1}^t\fV(P_i, V_{w, B}) + \tilO{d\B^2}.
	\end{equation}
	Moreover, for any $t\in\fN_+$, $B\in\{2^i\}_{i=1}^{\ceil{\log_2\B}}$, and $w\in\calG_{\epsilon/t}(3\sqrt{d}B)$, by \pref{lem:anytime strong freedman}, with probability at least $1-\delta'$:
	\begin{equation}
		\label{eq:var V2}
		\sum_{i=1}^t P_iV_{w, B}^2 - V_{w, B}^2(s'_i) = \tilO{ \sqrt{\sum_{i=1}^t\fV(P_i, V_{w, B}^2) \ln\frac{1}{\delta'}} +  \B^2\ln\frac{1}{\delta'}} = \tilO{ \sqrt{d\sum_{i=1}^t\fV(P_i, V_{w, B}^2)} + d\B^2}.
	\end{equation}
	Then again by a union bound, the equation above holds with probability at least $1-\delta/2$ for any $t\in\fN_+$, $B\in\{2^i\}_{i=1}^{\ceil{\log_2\B}}$, and $w\in\calG_{\epsilon/t}(3\sqrt{d}B)$.
	
	Now for any epoch $l$, pick $w'_l\in \calG_{\epsilon/t_l}(3\sqrt{d}B_l)$ such that $\norm{w'_l-w_l}_{\infty}\leq\epsilon/t_l$.
	Also define $V'_l=V_{w'_l, B_l}$ and $\tilw'_l=U_{B_l}w'_l$.
	Then similar to \pref{eq:eps V} and \pref{eq:eps tilomega}, we have
	\begin{equation}
		\norm{V_l - V'_l}_{\infty}\leq \sqrt{d}\epsilon/t_l, \quad \norm{\tilw_l - \tilw'_l}_2 \leq d\epsilon/t_l. \label{eq:eps tl}
	\end{equation}
	%$\norm{V_l - V'_l}_{\infty}\leq 2\sqrt{d}\epsilon/t_l$ and $\norm{\tilomega_l - \tilomega'_l}_{\infty}\leq 2d\epsilon/t_l$.
	For the first statement:
	\begin{align*}
		\sum_{i=1}^{t_l}(P_iV_l-V_l(s'_i))^2 &= \sum_{i=1}^{t_l}\rbr{\phi_i^{\top}\tilw_l - c_i - V_l(s'_i)}^2\\ 
		&\leq 3\sum_{i=1}^{t_l}\rbr{ (\phi_i^{\top}\tilw'_l - c_i - V'_l(s'_i))^2 + (V_l(s'_i) - V'_l(s'_i))^2 + (\phi_i^{\top}(\tilw_l-\tilw'_l))^2 }\\
		&\leq \tilO{ \sum_{i=1}^{t_l}\fV(P_i, V'_l) + d\B^2 } + \frac{6d^2\epsilon^2}{t_l} \tag{\pref{eq:emp var} and \pref{eq:eps tl}}\\
		&= \tilO{ \sum_{i=1}^{t_l}\fV(P_i, V_l) + \sum_{i=1}^{t_l}\fV(P_i, V'_l-V_l) + d\B^2 } =\tilO{ \sum_{i=1}^{t_l}\fV(P_i, V_l) + d\B^2 }. \tag{$\var[X+Y]\leq 2\var[X]+2\var[Y]$, $\fV(P_i, V'_l-V_l)\leq \norm{V'_l-V_l}_{\infty}^2$, \pref{eq:eps tl}, and $d\epsilon\leq 1$}
		%&\leq 2\sum_{i=1}^t\fV(P_i, V'_l) + 8d(\B+1)^2\ln\frac{8t^3\sqrt{d}\B}{\epsilon\delta} + \frac{24d^2\epsilon^2}{t_l} \tag{\pref{eq:emp var} and \pref{eq:eps tl}}\\
		%&\leq  4\sum_{i=1}^t\fV(P_i, V_l) + 4\sum_{i=1}^t\fV(P_i, V_l-V'_l) + 9d(\B+1)^2\ln\frac{8t^3\sqrt{d}\B}{\epsilon\delta}\\
		%&\leq 4\sum_{i=1}^t\fV(P_i, V_l) + 10d(\B+1)^2\ln\frac{8t^3\sqrt{d}\B}{\epsilon\delta}.
	\end{align*}
	For the second statement,
	\begin{align*}
		\sum_{i=1}^{t_l}P_i(V_l)^2 - V_l^2(s'_i) &= \sum_{i=1}^{t_l}(P_i(V'_l)^2 - V'_l(s'_i)^2) + \sum_{i=1}^{t_l}(P_i(V_l)^2 - P_i(V'_l)^2) + \sum_{i=1}^{t_l}({V'_l}^2(s'_i) - V_l^2(s'_i))\\
		&\leq \tilO{\sqrt{d\sum_{i=1}^{t_l}\fV(P_i, {V'_l}^2)} + d\B^2} + 4\B\sum_{i=1}^{t_l}\norm{V_l-V'_l}_{\infty} \tag{\pref{eq:var V2} and $\max\{\norm{V_l}_{\infty},\norm{V'_l}_{\infty}\}\leq 4\B$}\\
		&\leq \tilO{\sqrt{d\sum_{i=1}^{t_l}\fV(P_i, V_l^2)} + d\B^2 + \sqrt{d\sum_{i=1}^{t_l}\fV(P_i, V_l^2 - {V'_l}^2)} + \sqrt{d}\B\epsilon} \tag{$\var[X+Y]\leq 2\var[X]+2\var[Y]$,$\sqrt{x+y}\leq\sqrt{x}+\sqrt{y}$, and \pref{eq:eps tl}}\\
		&= \tilO{ \sqrt{d\sum_{i=1}^{t_l}\fV(P_i, V_l^2)} + d\B^2}. \tag{\pref{eq:eps tl} and $\epsilon\leq 1$}
	\end{align*}
	Thus, the second statement is proved.
\end{proof}

For the next lemma, we define the following auxiliary function:
\begin{align*}
	g_j(x) = \begin{cases}
		x^2, & |x| \leq 2^j,\\
		2^{j+1} x - 4^j, & x > 2^j\\
		-2^{j+1} x - 4^j, & x < -2^j
	\end{cases}
\end{align*}
Note that $g_j(x)$ is convex and $f_j(x)\leq g_j(x) \leq 2f_j(x)$.

\begin{lemma}
	\label{lem:lambda gx}
	For $\lambda\in(0, 1]$, $g_j(\lambda x)\geq\lambda^2g_j(x)$.
\end{lemma}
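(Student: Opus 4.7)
}

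The plan is to reduce the claim $g_j(\lambda x) \geq \lambda^2 g_j(x)$ to the statement that the ``normalized'' function $h(x) \defeq g_j(x)/x^2$ is nonincreasing in $|x|$. Since $g_j$ is even, it suffices to treat $x > 0$ (the case $x = 0$ is trivial as both sides vanish).

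First, I would verify the monotonicity claim for $h$. On $(0, 2^j]$ we have $h(x) = 1$, and on $(2^j, \infty)$ we have $h(x) = (2^{j+1} x - 4^j)/x^2 = 2^{j+1}/x - 4^j/x^2$. Writing $y = 1/x$, this becomes $2^{j+1} y - 4^j y^2$, a concave parabola in $y$ maximized at $y = 1/2^j$, i.e., $x = 2^j$. So $h$ is continuous at $x = 2^j$ with value $1$, and strictly decreasing for $x > 2^j$. Therefore $h$ is nonincreasing on $(0,\infty)$.

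Second, because $0 < \lambda x \leq x$ for $\lambda \in (0, 1]$ and $x > 0$, the monotonicity above gives $h(\lambda x) \geq h(x)$, which multiplied by $(\lambda x)^2 > 0$ yields
\begin{equation*}
g_j(\lambda x) = (\lambda x)^2 h(\lambda x) \geq (\lambda x)^2 h(x) = \lambda^2 g_j(x),
\end{equation*}
as desired.

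Alternatively, the result can be proved by direct case analysis on whether $|x| \leq 2^j$ and whether $|\lambda x| \leq 2^j$: the case $|x| \leq 2^j$ is immediate from the quadratic formula; the case $|x| > 2^j$, $|\lambda x| \leq 2^j$ reduces to $(|x| - 2^j)^2 \geq 0$; and the case $|\lambda x| > 2^j$ reduces, after dividing by $(1-\lambda)$ (assuming $\lambda < 1$), to $\lambda|x| \geq 2^{j-1}(1+\lambda)$, which follows from $\lambda |x| > 2^j \geq 2^{j-1}(1+\lambda)$. I expect no real obstacle here; the only thing to be slightly careful about is the boundary behavior at $|x| = 2^j$ and $\lambda = 1$, which are easily checked to give equality.
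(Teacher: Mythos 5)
Your proof is correct. Your primary route — showing that $h(x)\defeq g_j(x)/x^2$ equals $1$ on $(0,2^j]$ and is decreasing on $(2^j,\infty)$ (via the substitution $y=1/x$ turning it into a concave parabola maximized at $y=2^{-j}$), then multiplying $h(\lambda x)\geq h(x)$ by $(\lambda x)^2$ — is a genuinely different and somewhat more conceptual packaging than the paper's argument. The paper instead does a direct two-case computation: when $|\lambda x|\leq 2^j$ it uses $g_j(\lambda x)=\lambda^2x^2\geq\lambda^2 g_j(x)$ (implicitly relying on $g_j(x)\leq x^2$, which is the same fact as $h\leq 1$), and when $\lambda x>2^j$ it factors $g_j(\lambda x)-\lambda^2 g_j(x)=(1-\lambda)2^j(2\lambda x-(1+\lambda)2^j)\geq 0$. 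Your "alternative" three-case analysis is essentially the paper's proof with the middle case ($|x|>2^j$, $|\lambda x|\leq 2^j$) split out explicitly rather than absorbed into the bound $g_j(x)\leq x^2$; all the inequalities you cite ($(|x|-2^j)^2\geq 0$ and $\lambda|x|>2^j\geq 2^{j-1}(1+\lambda)$) check out. The monotonicity framing buys a cleaner one-line conclusion and makes transparent why the factor is exactly $\lambda^2$; the paper's direct computation is shorter to verify line by line. Either version is complete.
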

\begin{proof}
	Let $\ell = 2^j$.
	When $|\lambda x|\leq\ell$, we have: $g_j(\lambda x)=\lambda^2x^2\geq\lambda^2g_j(x)$.
	When $\lambda x > \ell$ (arguments are similar for $\lambda x < -\ell$), we have $x>\ell$, and
	\begin{align*}
		g_j(\lambda x) - \lambda^2g_j(x) &= 2\ell\lambda x - \ell^2 - \lambda^2(2\ell x - \ell^2) = 2\ell\lambda x(1-\lambda) - \ell^2(1-\lambda^2)\\
		&= (1-\lambda)\ell(2\lambda x - (1+\lambda)\ell) \geq 0.
	\end{align*}
\end{proof}

\begin{lemma}
	\label{lem:cond}
	Fix $2^j\geq\epsilon>0$.
	Let $x_1,\ldots,x_t\in\fB(1)$.
	If there exists $0=\tau_0<\tau_1<\cdots<\tau_z=t$ such that for each $1\leq\zeta\leq z$, there exists $\nu_{\zeta}\in\fB(B)\setminus\fB(\epsilon)$ for some $B > \epsilon$ such that
	\begin{equation}
		\label{eq:cond}
		\sum_{i=1}^{\tau_{\zeta}}f_j(x_i^{\top}\nu_{\zeta}) + 2^j\norm{\nu_{\zeta}}_2^2 > 8d^2\rbr{ \sum_{i=1}^{\tau_{\zeta-1}}f_j(x_i^{\top}\nu_{\zeta}) + 2^j\norm{\nu_{\zeta}}_2^2 }
	\end{equation}
	Then, $z=\tilo{d}$.
\end{lemma}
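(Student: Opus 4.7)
The lemma is a clipped-form generalization of the standard elliptical potential lemma. My plan is to reduce the hypothesis to a determinant-doubling condition on the fixed (i.e., $\nu$-independent) PSD matrix $V_\tau := 2^j I + \sum_{i<\tau} x_i x_i^\top$, and then invoke the standard bound $\log\det V_{t+1} - \log\det(2^j I) \le d\log(1 + t/(d\,2^j)) = \tilO{d}$, which follows from AM--GM on the eigenvalues of $V_{t+1}$ together with $\|x_i\|\le 1$.

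The key intermediate inequality is the sandwich
\begin{align*}
\min\{1,\,2^j/\|\nu\|\}\,\nu^\top V_\tau\nu \;\le\; \Phi_\tau^j(\nu) \;\le\; \nu^\top V_\tau\nu,
\end{align*}
verified term-by-term: the upper bound is just $f_j(x)\le x^2$; the lower bound follows from $\min\{(x_i^\top\nu)^2,\,2^j|x_i^\top\nu|\} \ge (2^j/\|\nu\|)(x_i^\top\nu)^2$ (using $|x_i^\top\nu|\le\|\nu\|$) in the clipped regime $\|\nu\|\ge 2^j$, and with equality when $\|\nu\|\le 2^j$ since then $f_j(x_i^\top\nu)=(x_i^\top\nu)^2$. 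Inserted into the hypothesis, this yields for every $\zeta$ a witness direction $\nu_\zeta$ with
\begin{align*}
\nu_\zeta^\top V_{\tau_\zeta+1}\nu_\zeta \;>\; 8d^2\cdot\min\{1,\,2^j/\|\nu_\zeta\|\}\cdot\nu_\zeta^\top V_{\tau_{\zeta-1}+1}\nu_\zeta.
\end{align*}

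I would then split the witnesses into two regimes. For \emph{quadratic} witnesses $\|\nu_\zeta\|\le 4d^2\cdot 2^j$, the multiplier above is $\ge 2$, so the PSD matrix $V_{\tau_{\zeta-1}+1}^{-1/2}V_{\tau_\zeta+1}V_{\tau_{\zeta-1}+1}^{-1/2}\succeq I$ has an eigenvalue exceeding $2$, forcing $\det V_{\tau_\zeta+1} > 2\det V_{\tau_{\zeta-1}+1}$; telescoping across such $\zeta$ and invoking the elliptical-potential bound caps their number by $\tilO{d}$. For \emph{linear} witnesses $\|\nu_\zeta\|>4d^2\cdot 2^j$, I would rescale $\bar\nu_\zeta := (2^j/\|\nu_\zeta\|)\nu_\zeta$ into the pure-quadratic regime $\|\bar\nu_\zeta\|=2^j$, so that $\Phi_\tau^j(\bar\nu_\zeta) = \bar\nu_\zeta^\top V_\tau\bar\nu_\zeta$, and then use \pref{lem:lambda gx} together with convexity of $g_j$ through the origin (which gives $g_j(\lambda x)\le\lambda g_j(x)$ for $\lambda\in[0,1]$) to transport the $8d^2$-doubling hypothesis from $\nu_\zeta$ to a strict doubling for $\bar\nu_\zeta$ (passing through $G_\tau^j \asymp \Phi_\tau^j$); a dyadic peeling on $\|\nu_\zeta\|\in[2^k\cdot 2^j,\,2^{k+1}\cdot 2^j)$ for $k = O(\log(B/\epsilon))$ contributes only a logarithmic factor absorbed into $\tilO{\cdot}$.

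The main obstacle is the linear regime: the natural scaling between $\Phi_\tau^j(\nu_\zeta)$ and $\bar\nu_\zeta^\top V_\tau\bar\nu_\zeta$ degrades by the factor $\lambda_\zeta = 2^j/\|\nu_\zeta\|$, which can be as small as $\epsilon/B$. The $d^2$ slack baked into the hypothesis is exactly what is needed to absorb this loss while still preserving a determinant-doubling factor strictly greater than $1$, so care is required to track how the $d^2$ is spent across the two-sided scaling bounds from \pref{lem:lambda gx} and to verify that the final count is genuinely $\tilO{d}$ rather than picking up extra polynomial factors in $d$ from the interaction between the peeling, the $\lambda_\zeta$-loss, and the elliptical-potential bound on $V_t$.
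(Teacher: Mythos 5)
Your quadratic-regime argument is fine: for $\|\nu_\zeta\|\le 4d^2 2^j$ the sandwich $\min\{1,2^j/\|\nu_\zeta\|\}\,\nu_\zeta^\top V_\tau\nu_\zeta\le\Phi^j_\tau(\nu_\zeta)\le\nu_\zeta^\top V_\tau\nu_\zeta$ does convert the hypothesis into $\det V_{\tau_\zeta}>2\det V_{\tau_{\zeta-1}}$ via \pref{lem:quad bound}, and the elliptical potential caps the count at $\tilO{d}$. The gap is exactly where you flag it, and it is not repairable by the peeling you propose. Write $\lambda_\zeta=2^j/\|\nu_\zeta\|$ and $G_\tau(\nu)=\sum_{i\le\tau}g_j(x_i^\top\nu)+2^j\|\nu\|_2^2$. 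Your transport uses $G_{\tau_\zeta}(\lambda_\zeta\nu_\zeta)\ge\lambda_\zeta^2G_{\tau_\zeta}(\nu_\zeta)$ (from \pref{lem:lambda gx}) on one side and $G_{\tau_{\zeta-1}}(\lambda_\zeta\nu_\zeta)\le\lambda_\zeta G_{\tau_{\zeta-1}}(\nu_\zeta)$ (convexity through the origin) on the other, so the transported inequality for $\bar\nu_\zeta$ reads $G_{\tau_\zeta}(\bar\nu_\zeta)>4d^2\lambda_\zeta\,G_{\tau_{\zeta-1}}(\bar\nu_\zeta)$. But in the linear regime $\|\nu_\zeta\|>4d^2 2^j$ by definition, so $4d^2\lambda_\zeta<1$ and the transported statement is weaker than the trivial monotonicity $G_{\tau_\zeta}\ge G_{\tau_{\zeta-1}}$: no doubling survives. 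Dyadic peeling over $\|\nu_\zeta\|\in[2^k2^j,2^{k+1}2^j)$ does not help, because the loss within shell $k$ is $2^{-k}$, not a constant, and $k$ ranges up to $\log_2(B/\epsilon)$, which in the application is $\mathrm{poly}(d)\,\B K/\cmin$ — far beyond what the $8d^2$ slack can absorb. The obstruction is structural: $f_j(x_i^\top\nu)$ is quadratic in $i$'s with $|x_i^\top\nu|\le 2^j$ and linear in the rest, and which indices are clipped depends on the direction of $\nu$, so no single $\nu$-independent quadratic form approximates $\Phi^j_\tau$ up to constants uniformly over $\fB(B)\setminus\fB(\epsilon)$.

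The paper's proof sidesteps quadratic forms entirely. It passes from $f_j$ to the convex surrogate $g_j$ (paying a factor $2$), defines $E_t(\nu)=\sum_{i\le t}g_j(x_i^\top\nu)+2^j\|\nu\|_2^2$, and observes that \pref{lem:lambda gx} with $\lambda=1/d$ gives $E_{\tau_\zeta}(\nu_\zeta/d)\ge E_{\tau_\zeta}(\nu_\zeta)/d^2>4E_{\tau_{\zeta-1}}(\nu_\zeta)$. After a dyadic bucketing of the \emph{value} $E_{\tau_{\zeta-1}}(\nu_\zeta)$ (not of $\|\nu_\zeta\|$), this says $\nu_\zeta$ lies in a symmetric convex sublevel set $D_{\tau_{\zeta-1},i_\zeta}$ of $E$ while $\nu_\zeta/d$ has already exited $D_{\tau_\zeta,i_\zeta}$; the volume lemma (\pref{lem:vol}) then forces $\vol(D_{\tau_\zeta,i_\zeta})\le\tfrac{6}{7}\vol(D_{\tau_{\zeta-1},i_\zeta})$, and comparing against the ball $\fB(\epsilon')$ contained in every sublevel set bounds the number of shrinkages by $O(d\log(B/\epsilon'))$ per bucket. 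This is scale-free in $\|\nu_\zeta\|$ — the $d^2$ in the hypothesis is spent once, on the homogeneity bound $g_j(x/d)\ge g_j(x)/d^2$, rather than on compensating a norm-dependent loss. If you want to complete your route, you need to replace the determinant potential by this volume potential (or some other potential that is monotone under the partial order induced by $E_t$ itself); the fixed-matrix reduction cannot cover witnesses with $\|\nu_\zeta\|\gg d^2 2^j$.
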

\begin{proof}
	Note that when \pref{eq:cond} holds:
	\begin{align}
		\sum_{i=1}^{\tau_{\zeta}} g_j(x_i^{\top}\nu_{\zeta}) + 2^j\norm{\nu_{\zeta}}_2^2 &\geq \sum_{i=1}^{\tau_{\zeta}} f_j(x_i^{\top}\nu_{\zeta}) + 2^j\norm{\nu_{\zeta}}_2^2 > 8d^2\rbr{\sum_{i=1}^{\tau_{\zeta-1}}f_j(x_i^{\top}\nu_{\zeta}) + 2^j\norm{\nu_{\zeta}}_2^2 } \notag\\ 
		&\geq 4d^2\rbr{ \sum_{i=1}^{\tau_{\zeta-1}} g_j(x_i^{\top}\nu_{\zeta}) + 2^j\norm{\nu_{\zeta}}_2^2 }.\label{eq:cond g}
	\end{align}
	Thus, it suffices to bound the number times \pref{eq:cond g} holds.
	Define $E_t(\nu)=\sum_{i=1}^tg_j(x_i^{\top}\nu) + 2^j\norm{\nu}_2^2$.
	Clearly $E_t$ is convex since $g_j$ is convex, and $E_t(\nu)\in[2^j\epsilon^2, 2^jB^2 + 2t2^j B]$ for $\nu\in\fB(B)\setminus\fB(\epsilon)$.
	Define:
	$$\Lambda = \{i\in\fZ: \ceil{\log_2(2^j\epsilon^2)} \leq i \leq \ceil{\log_2(2^jB^2 + 2t2^j B)} \}.$$
	For each $\zeta$, there exists $i_{\zeta}\in\Lambda$ such that $E_{\tau_{\zeta-1}}(\nu_{\zeta})\in(2^{i_{\zeta}-1}, 2^{i_{\zeta}}]$.
	Define $D_{t',i}=\{\nu\in\fB(B): E_{t'}(\nu)\leq 2^i\}$.
	Note that $\nu_{\zeta}\in D_{\tau_{\zeta-1}, i_{\zeta}}$, and $D_{t',i}$ is a symmetric convex set since $E_t$ is a convex function and $E_t(\nu)=E_t(-\nu)$.
	By \pref{lem:lambda gx}, we have $E_{\tau_{\zeta}}(\nu_{\zeta}/d) \geq \frac{1}{d^2}E_{\tau_{\zeta}}(\nu_{\zeta}) > 4E_{\tau_{\zeta-1}}(\nu_{\zeta}) > 2^{i_{\zeta}}$.
	Therefore, $\nu_{\zeta}/d\notin D_{\tau_{\zeta}, i_{\zeta}}$, which means that in the direction of $\nu_{\zeta}$, the intercept of $D_{\tau_{\zeta}, i_{\zeta}}$ is at most $1/d$ times of that of $D_{\tau_{\zeta-1}, i_{\zeta}}$.
	By \pref{lem:vol}, we have: $\vol(D_{\tau_{\zeta}, i_{\zeta}}) \leq \frac{6}{7}\vol(D_{\tau_{\zeta-1}, i_{\zeta}})$.
	Note that when $\norm{\nu}_2 \leq 2^j$, we have $E_t(\nu)\leq (t+2^j)\norm{\nu}_2^2$.
	Therefore, when $\norm{\nu}_2 \leq \epsilon'=\sqrt{2^j/(t+2^j)}\epsilon$, we have $E_t(\nu) \leq 2^j\epsilon^2$.
	Therefore, $\vol(D_{t,i})\geq \vol(\fB(\epsilon'))$ for $i\in\Lambda$.
	Due to the fact that $D_{t,i}$ is decreasing in $t$, we have 
	$$z= \bigo{|\Lambda|\log_{7/6}(\vol(\fB(B))/\vol(\fB(\epsilon')))}=\tilo{d}.$$
	This completes the proof.
\end{proof}

\section{Auxiliary Lemmas}

\begin{lemma}
	\label{lem:quad with log}
	If $x\leq (a\sqrt{x}+b)\ln^p(cx)$ for some $a, b, c>0$ and absolute constant $p\geq 1$, then $x = \tilo{a^2 + b}$.
\end{lemma}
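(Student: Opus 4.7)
The plan is to treat the inequality as a quadratic in $\sqrt{x}$ whose coefficient is the logarithmic term, solve it, and then bootstrap to remove the self-reference inside the logarithm.

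First I would write $L = \ln^p(cx)$ and view the hypothesis as $(\sqrt{x})^2 - aL\sqrt{x} - bL \le 0$. The quadratic formula (together with $\sqrt{u+v}\le \sqrt{u}+\sqrt{v}$) gives
\begin{equation*}
\sqrt{x} \;\le\; \tfrac{1}{2}\bigl(aL + \sqrt{a^2 L^2 + 4bL}\bigr) \;\le\; aL + \sqrt{bL},
\end{equation*}
so that $x \le 2a^2 L^2 + 2bL \le 2(a^2+b)\,L^2$ (using $L \ge \ln^p c > 0$ is ensured after replacing $c$ by $\max\{c,e\}$ if necessary, which only changes constants). Setting $C = 2(a^2+b)$ and $q = 2p$ this reduces the claim to showing that an inequality of the form
\begin{equation*}
x \;\le\; C\,\ln^{q}(cx)
\end{equation*}
implies $x = O\!\left(C\,\ln^{q}(cC)\right)$ up to doubly-logarithmic factors, which is precisely a $\widetilde{O}(a^2+b)$ bound.

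For the bootstrap, I would argue as follows. If $x \le C$ we are done. Otherwise $x > C$, and from $x \le C \ln^{q}(cx)$ we obtain $\ln(x) \le \ln C + q\,\ln\ln(cx)$, hence
\begin{equation*}
\ln(cx) \;\le\; \ln(cC) + q\,\ln\ln(cx).
\end{equation*}
Plugging this back into the hypothesis yields
\begin{equation*}
x \;\le\; C\bigl(\ln(cC) + q\ln\ln(cx)\bigr)^{q} \;\le\; 2^{q} C\,\ln^{q}(cC) + 2^{q} C\,q^{q} (\ln\ln(cx))^{q}.
\end{equation*}
The second term is $o(x)$ for $x$ exceeding any polynomial in $\ln(cC)$, so it can be absorbed into the first by enlarging constants. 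This yields $x \le O(C\,\ln^{q}(cC)) = \widetilde{O}(a^2+b)$, as desired.

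The one step that requires care is the bootstrap: one has to rule out the degenerate regime in which the self-reference $\ln\ln(cx)$ inside the bound blows up. I would handle this cleanly by a contradiction argument — assume $x \ge K\,C\,\ln^{q}(cC)$ for a sufficiently large absolute $K$ (depending only on $q$), take $\ln\ln$ of both sides to show $\ln\ln(cx) \le 2\ln\ln(cC) + O(1)$, and then derive a contradiction with the inequality above. Everything else (the quadratic step, the reduction from the original statement, and absorbing $\sqrt{bL}$ into $bL + 1$) is routine algebra hidden inside the $\widetilde{O}$ notation.
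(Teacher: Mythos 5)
Your overall strategy is the same as the paper's: get a crude bound on $x$, substitute it back into the logarithm, and conclude. The quadratic step and the reduction to $x \le C\ln^q(cx)$ with $C = 2(a^2+b)$, $q=2p$ are fine (modulo the usual implicit assumption $\ln(cx)\ge 1$, which the paper also makes). The paper instead handles the $a\sqrt{x}$ and $b$ terms separately by contrapositive, but that is a cosmetic difference.

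There is, however, one step that as written would fail: in your final contradiction argument you assume the \emph{lower} bound $x \ge K\,C\ln^q(cC)$ and then ``take $\ln\ln$ of both sides to show $\ln\ln(cx) \le 2\ln\ln(cC)+O(1)$.'' Taking logarithms of a lower bound on $x$ only gives a \emph{lower} bound on $\ln\ln(cx)$; to control $\ln\ln(cx)$ from above you need an a priori \emph{upper} bound on $x$ in terms of $c$, $C$, $q$. That upper bound is exactly what breaks the circularity, and it is obtained by the comparison $\ln(cx) = 2q\ln\bigl((cx)^{1/(2q)}\bigr) \le 2q\,(cx)^{1/(2q)}$, hence $\ln^q(cx)\le (2q)^q\sqrt{cx}$ and $x \le (2q)^{2q}C^2c$ — this is precisely the ``$\ln y \le y$'' trick the paper's proof uses twice. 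Alternatively, you can avoid the contradiction argument entirely: from your own inequality $\ln(cx) \le \ln(cC) + q\ln\ln(cx)$ and $q\ln u \le \tfrac{1}{2}u + q\ln(2q)$ (applied with $u=\ln(cx)$) you get $\ln(cx) \le 2\ln(cC) + 2q\ln(2q)$ directly, and substituting into $x\le C\ln^q(cx)$ finishes the proof. With either patch your argument is complete and matches the paper's in substance.
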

\begin{proof}
	First note that $x\leq 2b\ln^p(cx)$ implies $x\leq 2b(2p)^p\sqrt{cx}$ by $\ln x \leq x$ for $x > 0$, which gives $x\leq 4(2p)^{2p}b^2c$.
	Plugging this back, we get $x\leq 2b\ln^p(4(2p)^{2p}b^2c^2)$.
	Therefore, $x>2b\ln^p(4(2p)^{2p}b^2c^2)$ implies $x>2b\ln^p(cx)$.
	Next, note that $x\leq 2a\sqrt{x}\ln^p(cx)$ implies $x\leq 2ac^{1/4}(4p)^px^{3/4}$ by $\ln x \leq x$ for $x > 0$, which gives $x\leq 16(4p)^{4p}a^4c$.
	Plugging this back, we get $x\leq 2a\sqrt{x}\ln^p(16(4p)^{4p}a^4c^2)$, which gives $x\leq 4a^2\ln^{2p}(16(4p)^{4p}a^4c^2)$.
	Therefore, $x > 2a\sqrt{x}\ln^p(16(4p)^{4p}a^4c^2)$ implies $x>2a\sqrt{x}\ln^p(cx)$.
	Thus, $x>4a^2\ln^{2p}(16(4p)^{4p}a^4c^2) + 2b\ln^p(4(2p)^{2p}b^2c^2)$ implies $\frac{x}{2}>a\sqrt{x}\ln^p(cx)$ and $\frac{x}{2}>b\ln^p(cx)$, which implies $x> (a\sqrt{x}+b)\ln^p(cx)$.
	Taking the contrapositive, the statement is proved.
\end{proof}

\begin{lemma}{\citep[Lemma 11]{abbasi2011improved}}
	\label{lem:sum mnorm}
	Let $\{X_i\}_{i=1}^{\infty}$ be a sequence in $\fR^d$, $V$ a $d\times d$ positive definite matrix, and define $V_n = V + \sum_{i=1}^nX_iX_i^{\top}$.
	Then, $\sum_{i=1}^n\min\{1, \norm{X_i}_{V_{i-1}^{-1}}^2\}\leq 2\ln\frac{\det(V_n)}{\det(V)}$ for any $n\geq 1$.
\end{lemma}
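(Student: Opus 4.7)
The plan is to reduce the sum to a telescoping log-determinant identity and then apply a one-line numerical inequality that bounds $\min\{1,x\}$ by $2\ln(1+x)$.

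First I would establish the rank-one update identity for determinants: since $V_i = V_{i-1} + X_i X_i^{\top}$ and $V_{i-1}\succ 0$, the matrix determinant lemma gives
\begin{equation*}
\det(V_i) = \det(V_{i-1})\bigl(1 + X_i^{\top} V_{i-1}^{-1} X_i\bigr) = \det(V_{i-1})\bigl(1 + \norm{X_i}_{V_{i-1}^{-1}}^2\bigr).
\end{equation*}
Telescoping from $i=1$ to $n$ and taking logarithms yields
\begin{equation*}
\ln\frac{\det(V_n)}{\det(V)} = \sum_{i=1}^n \ln\bigl(1 + \norm{X_i}_{V_{i-1}^{-1}}^2\bigr).
\end{equation*}

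Next I would prove the elementary inequality $\min\{1,x\} \le 2\ln(1+x)$ for all $x\ge 0$. The two cases are: if $x\in[0,1]$, concavity of $\ln(1+\cdot)$ on $[0,1]$ gives $\ln(1+x) \ge x\cdot \ln 2 \ge x/2$, so $2\ln(1+x)\ge x = \min\{1,x\}$; if $x\ge 1$, then $2\ln(1+x) \ge 2\ln 2 \ge 1 = \min\{1,x\}$. Applying this termwise with $x = \norm{X_i}_{V_{i-1}^{-1}}^2$ and summing gives
\begin{equation*}
\sum_{i=1}^n \min\bigl\{1, \norm{X_i}_{V_{i-1}^{-1}}^2\bigr\} \;\le\; 2\sum_{i=1}^n \ln\bigl(1 + \norm{X_i}_{V_{i-1}^{-1}}^2\bigr) \;=\; 2\ln\frac{\det(V_n)}{\det(V)},
\end{equation*}
which is the claimed bound.

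There is no real obstacle here: the lemma is a well-known ``elliptical potential'' estimate and both ingredients (the determinant identity and the $\min\{1,x\}$ vs.\ $\ln(1+x)$ comparison) are entirely standard. The only thing worth being careful about is that $V_{i-1}$ must be invertible, which is guaranteed by $V\succ 0$ and $V_{i-1} \succcurlyeq V$, so $\norm{X_i}_{V_{i-1}^{-1}}^2$ is well-defined throughout.
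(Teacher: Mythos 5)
Your proof is correct and is exactly the standard argument behind this cited result (the paper itself gives no proof, simply invoking Lemma 11 of \citet{abbasi2011improved}): the rank-one determinant update, telescoping of $\ln\det$, and the elementary bound $\min\{1,x\}\leq 2\ln(1+x)$ for $x\geq 0$ are all handled correctly, including the invertibility of $V_{i-1}$ via $V_{i-1}\succcurlyeq V\succ 0$. Nothing further is needed.
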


\begin{lemma}{\citep[Lemma 12]{abbasi2011improved}}
	\label{lem:quad bound}
	Let $A$, $B$ be positive semi-definite matrices such that $A\mgeq B$.
	Then, we have $\sup_{x\neq 0}\frac{x^{\top}Ax}{x^{\top}Bx}\leq \frac{\det(A)}{\det(B)}$.
\end{lemma}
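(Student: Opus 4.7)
The plan is to reduce the problem to a simultaneous diagonalization and then exploit the fact that $A \succcurlyeq B$ forces all the generalized eigenvalues to exceed $1$. First I would observe that if $B$ is only positive semi-definite with a nontrivial kernel, then $\det(B)=0$ and the bound is vacuous (the right-hand side should be interpreted as $+\infty$), so the interesting case is when $B$ is positive definite. Under this assumption $B^{1/2}$ and $B^{-1/2}$ are well-defined, and the substitution $y = B^{1/2} x$ turns the Rayleigh-type quotient into a standard one:
\begin{equation*}
\sup_{x\neq 0}\frac{x^{\top}Ax}{x^{\top}Bx} \;=\; \sup_{y\neq 0}\frac{y^{\top} B^{-1/2} A B^{-1/2} y}{y^{\top}y} \;=\; \lambda_{\max}\!\left(B^{-1/2} A B^{-1/2}\right).
\end{equation*}

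Next I would diagonalize $M := B^{-1/2} A B^{-1/2}$ and denote its eigenvalues $\lambda_1 \geq \lambda_2 \geq \cdots \geq \lambda_d > 0$. The key observation is that $A \succcurlyeq B$ implies $M \succcurlyeq I$, since for any $y$ we have $y^{\top} M y = (B^{-1/2}y)^{\top} A (B^{-1/2}y) \geq (B^{-1/2}y)^{\top} B (B^{-1/2}y) = y^{\top}y$. Hence $\lambda_i \geq 1$ for every $i$.

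Finally, I would compute the determinant ratio via $\det(M) = \det(A)/\det(B) = \prod_{i=1}^d \lambda_i$ and conclude by telescoping the inequality $\lambda_i \geq 1$:
\begin{equation*}
\frac{\det(A)}{\det(B)} \;=\; \prod_{i=1}^{d}\lambda_i \;=\; \lambda_1 \cdot \prod_{i=2}^{d}\lambda_i \;\geq\; \lambda_1 \;=\; \sup_{x\neq 0}\frac{x^{\top}Ax}{x^{\top}Bx},
\end{equation*}
which gives the claimed bound. There is no serious obstacle here; the only subtlety worth flagging is the boundary case where $B$ is singular, which is handled by the convention above (or equivalently by a limiting argument replacing $B$ with $B + \varepsilon I$ and letting $\varepsilon \downarrow 0$).
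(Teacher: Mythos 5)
Your proof is correct and is essentially the same argument as in the cited source: the paper does not reprove this lemma but imports it from Abbasi-Yadkori et al.\ (2011), whose proof likewise reduces the supremum to $\lambda_{\max}\bigl(B^{-1/2}AB^{-1/2}\bigr)$ and bounds it by $\det\bigl(B^{-1/2}AB^{-1/2}\bigr)=\det(A)/\det(B)$ using that $A\succcurlyeq B$ forces every eigenvalue of $B^{-1/2}AB^{-1/2}$ to be at least $1$. Your restriction to positive definite $B$ is harmless, since in this paper the lemma is only ever applied with $B\succcurlyeq \lambda I$, $\lambda>0$.
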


\begin{lemma}{\citep[Lemma 11]{wei2021learning}}
	\label{lem:covering vc} % covering vector concentration
	Let $\{x_t\}_{t=1}^{\infty}$ be a martingale sequence on state space $\calX$ w.r.t a filtration $\{\calF_t\}_{t=0}^{\infty}$, $\{\phi_t\}_{t=1}^{\infty}$ be a sequence of random vectors in $\fR^d$ so that $\phi_t\in\calF_{t-1}$ and $\norm{\phi_t}\leq 1$, $\Lambda_t=\lambda I + \sum_{s=1}^{t-1}\phi_s\phi_s^{\top}$, and $\calV\subseteq \fR^{\calX}$ be a set of functions defined on $\calX$ with $\calN_{\varepsilon}$ as its $\varepsilon$-covering number w.r.t the distance $\dist(v, v')=\sup_x\abr{v(x)-v'(x)}$ for some $\varepsilon>0$.
	Then for any $\delta>0$, we have with probability at least $1-\delta$, for all $t>0$ and $v\in\calV$ so that $\sup_x\abr{v(x)}\leq B$:
	\begin{align*}
		\norm{\sum_{s=1}^{t-1}\phi_s\rbr{v(x_s) - \E[v(x_{s})|\calF_{s-1}]}  }^2_{\Lambda_t^{-1}} \leq 4B^2\sbr{\frac{d}{2}\ln\rbr{\frac{t + \lambda}{\lambda}} + \ln\frac{\calN_{\varepsilon}}{\delta}} + \frac{8t^2\varepsilon^2}{\lambda}.
	\end{align*}
\end{lemma}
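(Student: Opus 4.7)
The plan is to reduce the problem to a fixed-function self-normalized martingale concentration via a covering argument, then control the error incurred by the covering separately.

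First I would set up the covering. Let $\calC_{\varepsilon}\subseteq\calV$ be an $\varepsilon$-cover of $\calV$ of size $\calN_{\varepsilon}$ under the distance $\dist(v,v')=\sup_x|v(x)-v'(x)|$. For every $v\in\calV$ with $\sup_x|v(x)|\le B$ there exists $v'\in\calC_{\varepsilon}$ with $\sup_x|v(x)-v'(x)|\le\varepsilon$. Defining $\eta_s(v)=v(x_s)-\E[v(x_s)\mid\calF_{s-1}]$, split the quantity of interest as
\begin{align*}
\Big\|\sum_{s=1}^{t-1}\phi_s\eta_s(v)\Big\|_{\Lambda_t^{-1}}
\;\le\; \Big\|\sum_{s=1}^{t-1}\phi_s\eta_s(v')\Big\|_{\Lambda_t^{-1}}
\;+\; \Big\|\sum_{s=1}^{t-1}\phi_s\eta_s(v-v')\Big\|_{\Lambda_t^{-1}},
\end{align*}
so it suffices to bound each piece with $\Lambda_t$-weighted norms squared summing appropriately (via $(a+b)^2\le 2a^2+2b^2$).

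Next I would handle the covered piece by applying the Abbasi-Yadkori self-normalized concentration inequality for vector-valued martingales. For a fixed $v'\in\calC_{\varepsilon}$, $\{\eta_s(v')\}$ is an $\calF_s$-adapted martingale difference sequence bounded by $2B$ (hence $2B$-sub-Gaussian), while $\phi_s\in\calF_{s-1}$. The anytime self-normalized bound then gives, with probability at least $1-\delta/\calN_{\varepsilon}$, for every $t\ge 1$,
\begin{align*}
\Big\|\sum_{s=1}^{t-1}\phi_s\eta_s(v')\Big\|_{\Lambda_t^{-1}}^{2}
\;\le\; 2B^2\ln\!\frac{\det(\Lambda_t)^{1/2}\det(\lambda I)^{-1/2}}{\delta/\calN_{\varepsilon}}
\;\le\; 2B^{2}\Big[\tfrac{d}{2}\ln\!\tfrac{t+\lambda}{\lambda}+\ln\!\tfrac{\calN_{\varepsilon}}{\delta}\Big],
\end{align*}
where the last step uses $\norm{\phi_s}\le 1$ and AM-GM on the eigenvalues of $\Lambda_t$: $\det(\Lambda_t)\le(\lambda+t/d)^d$ so $\det(\Lambda_t/\lambda)^{1/2}\le((t+\lambda)/\lambda)^{d/2}$. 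A union bound over $v'\in\calC_{\varepsilon}$ makes this uniform over the cover (and $t$, thanks to the anytime property), at total probability $1-\delta$.

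Finally I would handle the covering residual by a crude worst-case bound: since $|\eta_s(v-v')|\le 2\varepsilon$ and $\norm{\phi_s}\le 1$, Cauchy–Schwarz together with $\Lambda_t^{-1}\preccurlyeq\lambda^{-1}I$ give
\begin{align*}
\Big\|\sum_{s=1}^{t-1}\phi_s\eta_s(v-v')\Big\|_{\Lambda_t^{-1}}^{2}
\;\le\; \frac{1}{\lambda}\Big(\sum_{s=1}^{t-1}\norm{\phi_s}\,|\eta_s(v-v')|\Big)^{2}
\;\le\; \frac{4t^{2}\varepsilon^{2}}{\lambda}.
\end{align*}
Combining the two pieces via $(a+b)^2\le 2a^2+2b^2$ yields the stated bound $4B^{2}[\tfrac{d}{2}\ln\tfrac{t+\lambda}{\lambda}+\ln\tfrac{\calN_{\varepsilon}}{\delta}]+\tfrac{8t^{2}\varepsilon^{2}}{\lambda}$. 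The only delicate step is ensuring the self-normalized concentration can be applied uniformly in $t$ (the anytime version of Theorem 1 of Abbasi-Yadkori et al.\ suffices) and that the cover is taken before the union bound — the boundedness $|v|\le B$ is what lets us replace sub-Gaussian parameters by explicit constants, which is routine but must be done carefully to get the exact constants $4B^{2}$ and $8t^{2}\varepsilon^{2}/\lambda$ in the final inequality.
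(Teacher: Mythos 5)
Your proposal is the standard covering-plus-self-normalized-concentration argument; note that the paper itself gives no proof of this lemma (it is imported verbatim from Wei et al., whose proof is essentially the one you outline, going back to Lemma~D.4 of Jin et al.~2020), so the comparison is against that standard argument. The decomposition into a covered piece and a residual, the union bound over the $\varepsilon$-net combined with the anytime self-normalized inequality, the determinant bound $\det(\Lambda_t)\le(\lambda+(t-1)/d)^d$, and the crude $\Lambda_t^{-1}\preccurlyeq\lambda^{-1}I$ bound on the residual are all exactly right, and the residual constant $8t^2\varepsilon^2/\lambda$ checks out.

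There is one inconsistency you should fix, precisely at the spot you flagged as delicate. You justify the key display by saying $\eta_s(v')$ is ``bounded by $2B$ (hence $2B$-sub-Gaussian)'', but then write the self-normalized bound as $2B^2\ln(\cdot)$, which is $2R^2\ln(\cdot)$ with $R=B$, not $R=2B$. If you actually plug in $R=2B$ you get $8B^2\ln(\cdot)$ for the covered piece and hence $16B^2$ after the $(a+b)^2\le 2a^2+2b^2$ step, which overshoots the claimed $4B^2$. The correct observation is that, conditionally on $\calF_{s-1}$, $v'(x_s)$ takes values in an interval of length $2B$ (namely $[-B,B]$, up to the $+\varepsilon$ slack from the cover), so the centered variable is $B$-sub-Gaussian by Hoeffding's lemma (half the length of the conditional support, not half the range of the centered variable viewed unconditionally). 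With $R=B$ your displayed $2B^2[\cdot]$ is right and the final constant $4B^2$ follows. A second, very minor point: since your cover is internal to $\calV$ rather than to the subset $\{v:\sup_x|v(x)|\le B\}$, the element $v'$ is only guaranteed to satisfy $\sup_x|v'(x)|\le B+\varepsilon$; either cover the bounded subset directly or absorb the $\varepsilon$ into the constants.
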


\begin{lemma}{\citep[Lemma 12]{wei2021learning}}
	\label{lem:covering number}
	Let $\calV$ be a class of mappings from $\calX$ to $\fR$ parameterized by $\alpha\in [-D, D]^n$.
	Suppose that for any $v\in\calV$ (parameterized by $\alpha$) and $v'\in\calV'$ (parameterized by $\alpha'$), the following holds:
	\begin{align*}
		\sup_{x\in\calX}\abr{v(x) - v(x')} \leq L\norm{\alpha - \alpha'}_1.
	\end{align*}
	Then, $\ln\calN_{\varepsilon} \leq n\ln\rbr{\frac{2DLn}{\varepsilon}}$, where $\calN_{\varepsilon}$ is the $\varepsilon$-covering number of $\calV$ with respect to the distance $\dist(v, v')=\sup_{x\in\calX}\abr{v(x) - v'(x)}$.
\end{lemma}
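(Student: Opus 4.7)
The plan is to prove the covering number bound by explicitly constructing a cover of the parameter space $[-D,D]^n$ under the $\ell_1$ norm, and then lifting it through the Lipschitz assumption to a cover of the function class $\calV$ under the sup distance.

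First I would fix a grid spacing $\Delta = \varepsilon/(Ln)$ and consider the uniform grid $\calG = \{-D, -D+\Delta, -D+2\Delta, \ldots\}^n \cap [-D,D]^n$. Per coordinate, this grid has at most $\lceil 2D/\Delta \rceil + 1 \le 2DLn/\varepsilon$ points (using $\varepsilon \le 2DLn$ WLOG, since otherwise the trivial cover of size $1$ suffices), so $|\calG| \le (2DLn/\varepsilon)^n$. For any $\alpha \in [-D,D]^n$, there exists $\alpha' \in \calG$ with $|\alpha_i - \alpha'_i| \le \Delta$ for each $i$, hence $\norm{\alpha-\alpha'}_1 \le n\Delta = \varepsilon/L$.

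Next I would define the candidate cover $\calV_{\calG} = \{v_{\alpha'} : \alpha' \in \calG\} \subseteq \calV$, where $v_{\alpha'}$ denotes the element of $\calV$ parameterized by $\alpha'$. For any $v \in \calV$ parameterized by some $\alpha$, pick $\alpha' \in \calG$ as above. The assumed Lipschitz property then gives
\begin{equation*}
\dist(v_\alpha, v_{\alpha'}) = \sup_{x\in\calX}\abr{v_\alpha(x) - v_{\alpha'}(x)} \le L\norm{\alpha-\alpha'}_1 \le L\cdot \frac{\varepsilon}{L} = \varepsilon.
\end{equation*}
Hence $\calV_{\calG}$ is a valid $\varepsilon$-cover of $\calV$ under $\dist$, so $\calN_\varepsilon \le |\calG| \le (2DLn/\varepsilon)^n$, and taking logarithms yields $\ln \calN_\varepsilon \le n\ln(2DLn/\varepsilon)$.

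There is no real obstacle here, as this is a standard Lipschitz-parameter-to-function covering reduction; the only subtle choice is using the $\ell_1$-norm spacing $\Delta = \varepsilon/(Ln)$ (rather than $\varepsilon/L$) in order to match the $\ell_1$ Lipschitz constant, which is what produces the factor of $n$ inside the logarithm. I should also note the minor edge case $\varepsilon > 2DLn$ where the single-element cover $\{v_0\}$ trivially suffices (since $\calV$ has diameter at most $2DLn$ by applying the Lipschitz bound at $\alpha' = 0$), so the stated inequality holds vacuously.
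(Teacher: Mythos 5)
Your argument is the standard discretization-plus-Lipschitz proof, and it matches what the paper relies on: the paper does not prove this lemma at all, it imports it verbatim from \citet{wei2021learning} (their Lemma 12), whose proof is exactly this construction. So the approach is fine and self-contained. Two small quantitative points to tidy up. First, with spacing $\Delta=\varepsilon/(Ln)$ the per-coordinate grid has $\lceil 2D/\Delta\rceil+1\leq 2DLn/\varepsilon+2$ points, which is not $\leq 2DLn/\varepsilon$ as written; the clean fix is to round each coordinate to the \emph{nearest} grid point, so that spacing $2\varepsilon/(Ln)$ already gives per-coordinate accuracy $\varepsilon/(Ln)$ and the count becomes at most $DLn/\varepsilon+1\leq 2DLn/\varepsilon$ whenever $DLn/\varepsilon\geq 1$. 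Second, your closing remark about the regime $\varepsilon>2DLn$ is not quite right: there the right-hand side $n\ln(2DLn/\varepsilon)$ is negative while $\ln\calN_\varepsilon\geq 0$, so the inequality does not hold ``vacuously'' --- that degenerate regime is simply outside the lemma's implicit scope (and is irrelevant to how the paper uses it, since it is applied with $\varepsilon=1/(mH)$ and large $D$, $L$). Neither issue affects the substance of your proof.
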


\begin{lemma}\citep[Theorem 4.1]{zhou2021nearly}
	\label{lem:vector bernstein}
	Let $\{\calF_t\}_{t=1}^{\infty}$ be a filtration, $\{x_t, \eta_t\}_{t\geq1}$ a stochastic process so that $x_t, \eta_t \in\fR^d$ and $x_t\in\calF_t, \eta_t\in\calF_{t+1}$.
	Moreover, define $y_t = \inner{\mustar}{x_t} + \eta_t$ and we have:
	\begin{align*}
		|\eta_t|\leq R,\; \E[\eta_t|\calF_t] = 0,\; \E[\eta_t^2|\calF_t] \leq \sigma^2,\; \norm{x_t}_2 \leq L.
	\end{align*}
	Then with probability at least $1-\delta$, we have for any $t\geq 1$:
	\begin{align*}
		\norm{\sum_{i=1}^t x_i\eta_i}_{Z_t^{-1}} \leq \beta_t,\; \norm{\mu_t - \mustar}_{Z_t} \leq \beta_t + \sqrt{\lambda}\norm{\mustar}_2,
	\end{align*}
	where $\mu_t = Z_t^{-1}b_t, Z_t=\lambda I + \sum_{i=1}^tx_ix_i^{\top}, b_t = \sum_{i=1}^ty_ix_i$, and
	\[
		\beta_t = 8\sigma\sqrt{d\ln(1 +tL^2/(d\lambda))\ln(4t^2/\delta)} + 4R\ln(4t^2/\delta).
	\]
\end{lemma}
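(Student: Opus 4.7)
The plan is to establish this as a variance-aware self-normalized concentration inequality for vector-valued martingales, which naturally splits into two parts: first bounding the ``score'' $\|\sum_{i=1}^t x_i\eta_i\|_{Z_t^{-1}}$, and then transferring this bound to $\|\mu_t-\mustar\|_{Z_t}$ via the normal equations. The second step is routine: from $Z_t\mu_t = b_t = \sum_i x_i x_i^{\top}\mustar + \sum_i x_i \eta_i$ one obtains $\mu_t-\mustar = Z_t^{-1}\bigl(\sum_i x_i\eta_i - \lambda \mustar\bigr)$, so the triangle inequality in the $Z_t$-norm combined with $\|\mustar\|_{Z_t^{-1}} \leq \|\mustar\|_2/\sqrt{\lambda}$ immediately gives the second conclusion from the first. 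All the technical content therefore lies in proving $\|\sum_{i\leq t}x_i\eta_i\|_{Z_t^{-1}} \leq \beta_t$.

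For that first bound, I would fix a direction $u\in\fR^d$ and look at the scalar martingale $M_\tau(u) = \sum_{i\leq\tau} u^{\top}x_i \eta_i$. Each increment satisfies $|u^{\top}x_i\eta_i| \leq RL\|u\|_2$ and $\E[(u^{\top}x_i\eta_i)^2\mid\calF_i] \leq \sigma^2(u^{\top}x_i)^2$. Freedman's inequality (as in \pref{lem:anytime strong freedman}) gives, with probability at least $1-\delta'$ for a single $u$,
\[
|M_t(u)| \;\lesssim\; \sigma\sqrt{\sum_{i\leq t}(u^{\top}x_i)^2\,\ln(1/\delta')} \;+\; R\|u\|_2\,\ln(1/\delta').
\]
Choosing $u = Z_t^{-1/2}v$ for a unit $v$ converts $\sum_i(u^{\top}x_i)^2$ into $\|v\|_2^2 - \lambda\|u\|_2^2 \leq 1$ (so the variance term is controlled by $\sigma$), and $\|u\|_2 \leq 1/\sqrt{\lambda}$. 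Thus for a single $v$ on the unit sphere we already get the correct dependence on $\sigma$ and $R$.

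The main obstacle, and the reason the theorem costs a factor of $\sqrt{d}$, is turning this pointwise-in-direction statement into the supremum-in-direction statement $\|\sum_i x_i\eta_i\|_{Z_t^{-1}} = \sup_{\|v\|_2=1}v^{\top}Z_t^{-1/2}\sum_i x_i\eta_i$. Since $\eta_i$ is only bounded/variance-bounded (not sub-Gaussian), the clean Laplace/method-of-mixtures trick of Abbasi-Yadkori et al.\ does not apply directly with variance proxy, and one must discretize. The plan is a covering argument: take an $\epsilon$-net of the unit sphere of cardinality $(3/\epsilon)^d$, apply the Freedman bound on each net point with $\delta' = \delta/((3/\epsilon)^d\cdot 2t^2)$, union-bound over the net, and then use a Lipschitz approximation to extend to the whole sphere. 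The covering contributes the $\sqrt{d\ln(1/\epsilon)}$ factor; setting $\epsilon$ of order $\sqrt{\lambda/(tL^2)}$ matches this to the elliptical-potential bound $\ln\det(Z_t/\lambda) \leq d\ln(1+tL^2/(d\lambda))$, yielding the stated $\sqrt{d\ln(1+tL^2/(d\lambda))}$ factor inside $\beta_t$.

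To obtain an anytime bound (uniform over all $t\geq 1$), I would use a further union bound with weights $\delta/(4t^2)$, which produces the explicit $\ln(4t^2/\delta)$ factor, and to handle the variance term $\sum_i (u^{\top}x_i)^2$ appearing inside the square root under the supremum, I would peel over dyadic levels of this quantity (at most $O(\log t)$ levels, again absorbed into $\ln(4t^2/\delta)$). Combining the covering, the peeling, and the anytime union bound gives $\|\sum_{i\leq t}x_i\eta_i\|_{Z_t^{-1}} \leq 8\sigma\sqrt{d\ln(1+tL^2/(d\lambda))\ln(4t^2/\delta)} + 4R\ln(4t^2/\delta) = \beta_t$, which completes the first conclusion and hence the lemma. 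The constants $8$ and $4$ are precisely what fall out of this careful accounting of Freedman's constants, covering overhead, and the two union bounds.
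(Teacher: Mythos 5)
This lemma is imported verbatim from \citet{zhou2021nearly}; the paper offers no proof of its own, so your proposal can only be judged against the known proof of the cited theorem. Your reduction of the second inequality to the first via the normal equations ($\mu_t-\mustar = Z_t^{-1}(\sum_i x_i\eta_i - \lambda\mustar)$ and $\norm{\lambda\mustar}_{Z_t^{-1}}\le\sqrt{\lambda}\norm{\mustar}_2$) is correct and is indeed the routine step. The core of your plan — a covering argument over directions — has two genuine problems. First, the direction $u=Z_t^{-1/2}v$ is a function of the entire sample path up to time $t$, so $M_\tau(u)=\sum_{i\le\tau}u^\top x_i\eta_i$ is not a martingale adapted to $\{\calF_i\}$ and Freedman's inequality cannot be applied to it; you would have to net a \emph{fixed} set (e.g.\ the Euclidean ball of radius $1/\sqrt{\lambda}$, which contains the random ellipsoid $\{w:\norm{w}_{Z_t}\le 1\}$) and then transfer. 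Second, and more fundamentally, once you union-bound Freedman over a net of cardinality $e^{\Theta(d\ln(\cdot))}$, the covering entropy enters \emph{both} terms of Freedman's bound, so the additive range term becomes $O(R\,d\ln(\cdot))$. The stated $\beta_t$ has $4R\ln(4t^2/\delta)$ with no factor of $d$, so the covering route provably cannot recover the claimed bound (it yields $\tilO{\sigma\sqrt{d}+Rd}$ rather than $\tilO{\sigma\sqrt{d}+R}$), and your assertion that the constants $8$ and $4$ ``fall out'' of this accounting is not correct.

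The actual proof in \citet{zhou2021nearly} avoids covering entirely. Writing $S_t=\sum_{i\le t}x_i\eta_i$, one expands $\norm{S_t}^2_{Z_t^{-1}}$ via the Sherman--Morrison update of $Z_t^{-1}$, obtaining a recursion whose increments consist of (a) a scalar martingale cross term $2\eta_t\,x_t^\top Z_{t-1}^{-1}S_{t-1}/(1+\norm{x_t}^2_{Z_{t-1}^{-1}})$, handled by scalar Freedman together with an induction on a high-probability bound for $\max_{i\le t}\norm{S_i}_{Z_i^{-1}}$, and (b) a quadratic term $\eta_t^2\norm{x_t}^2_{Z_t^{-1}}$, controlled by the elliptical potential lemma $\sum_t\min\{1,\norm{x_t}^2_{Z_{t-1}^{-1}}\}\le 2d\ln(1+tL^2/(d\lambda))$ (which is the sole source of the $\sqrt{d}$ in the variance term). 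This dimension reduction to scalar martingales is precisely what keeps the $R$ term free of $d$, and it is the idea missing from your sketch.
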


\begin{lemma}{\citep[Lemma 30]{chen2021implicit}}
	\label{lem:var XY}
	For any two random variables $X, Y$, we have:
	\begin{align*}
		\var[XY] \leq 2\var[X]\norm{Y}_{\infty}^2 + 2(\E[X])^2\var[Y].
	\end{align*}
	Consequently, $\norm{X}_{\infty}\leq C \implies \var[X^2]\leq 4C^2\var[X]$.
\end{lemma}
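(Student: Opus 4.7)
The plan is to use the standard centering decomposition $XY = (X - \E X)Y + (\E X)Y$. Combined with the elementary subadditivity inequality $\var[A+B] \le 2\var[A] + 2\var[B]$ (which follows directly from $(a+b)^2 \le 2a^2 + 2b^2$ applied to $A+B - \E[A+B]$), this gives
\[
\var[XY] \;\le\; 2\,\var[(X - \E X)Y] \;+\; 2\,\var[(\E X)\,Y] \;=\; 2\,\var[(X-\E X)Y] \;+\; 2(\E X)^2\,\var[Y].
\]
The second summand already matches the target, so only the first one remains to be controlled.

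For the first summand I would use $\var[Z] \le \E[Z^2]$ with $Z = (X - \E X)Y$, then bound $Y^2 \le \norm{Y}_\infty^2$ pointwise and pull this scalar out:
\[
\var[(X - \E X)Y] \;\le\; \E\!\left[(X-\E X)^2 Y^2\right] \;\le\; \norm{Y}_\infty^2\,\E[(X-\E X)^2] \;=\; \norm{Y}_\infty^2\,\var[X].
\]
Substituting back yields the claimed inequality $\var[XY] \le 2\var[X]\norm{Y}_\infty^2 + 2(\E X)^2\var[Y]$.

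The consequence follows by specializing to $Y = X$: from $\norm{X}_\infty \le C$ we also have $(\E X)^2 \le \E[X^2] \le C^2$, so
\[
\var[X^2] \;\le\; 2\norm{X}_\infty^2\,\var[X] + 2(\E X)^2\,\var[X] \;\le\; 2C^2\var[X] + 2C^2\var[X] \;=\; 4C^2\var[X].
\]

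There is no real obstacle here: the result is a one-line consequence of centering plus $\var \le \E[\,\cdot\,]^2$. The only subtle point is the \emph{asymmetric} appearance of $\norm{\cdot}_\infty$ on $Y$ versus $(\E\,\cdot\,)^2$ on $X$ in the target inequality; this dictates that one must center the variable $X$ (not $Y$) in the decomposition, so that the remaining factor $Y$ can be pulled out via its sup-norm while the $(\E X)$ scalar naturally produces $(\E X)^2 \var[Y]$.
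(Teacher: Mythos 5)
Your proof is correct: the centering decomposition $XY=(X-\E X)Y+(\E X)Y$, the subadditivity bound $\var[A+B]\le 2\var[A]+2\var[B]$, and the sup-norm bound on the centered term give exactly the claimed inequality, and the specialization $Y=X$ with $(\E X)^2\le C^2$ yields the consequence. The paper itself does not prove this lemma but cites it from \citet{chen2021implicit}, and your argument is essentially the same standard one used there, so there is nothing further to reconcile.
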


\begin{lemma}{\citep[Lemma 16]{zhang2021variance}}
	\label{lem:vol}
	Let $D$ be a bounded symmetric convex subset of $\fR^d$ with $d\geq 2$.
	Suppose $u\in\partial D$, that is, $u$ is on the boundary of $D$, and $D'$ is another bounded symmetric convex set such that $D\subseteq D'$ and $d\cdot u\in\partial D'$.
	Then $\vol(D') \geq \frac{7}{6}\vol(D)$, where $\vol(S)$ is the volume of the set $S$.
\end{lemma}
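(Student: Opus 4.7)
The plan is to lower-bound $\vol(D')$ by constructing explicit subsets of $D'$ whose volumes can be computed directly. Since $D'$ is symmetric, $-du \in \partial D'$; and since $D'$ is convex, it contains $C := \text{conv}(D \cup \{du, -du\})$. It therefore suffices to show $\vol(C) \geq \tfrac{7}{6}\vol(D)$. I may assume $\vol(D) > 0$ (else the bound is trivial), so that $D$ is a bounded symmetric convex body with $0 \in \text{int}(D)$, and its Minkowski functional $\|\cdot\|_D$ is a norm whose unit ball is $D$; the hypothesis $u \in \partial D$ then reads $\|u\|_D = 1$, and so $\|du\|_D = d$.

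The key construction is a pair of ``bumper'' sets. Fix $\lambda = 2/(d+1)$ and set
\[
E_\pm \;:=\; \lambda(\pm du) + (1-\lambda)D.
\]
Each $E_\pm$ is a translate of the scaled body $(1-\lambda)D$, so $\vol(E_\pm) = \bigl(\tfrac{d-1}{d+1}\bigr)^{d}\vol(D)$. Since $C$ is convex and contains $\pm du$ together with $D$, each $E_\pm \subseteq C$. The entire argument reduces to showing that $E_+$, $E_-$, and $\text{int}(D)$ are pairwise essentially disjoint (their pairwise intersections have $d$-dimensional Lebesgue measure zero), since this gives
\[
\vol(C) \;\geq\; \vol(D)\Bigl[1 + 2\bigl(\tfrac{d-1}{d+1}\bigr)^{d}\Bigr].
\]

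Both disjointness claims follow from the triangle inequality for the norm $\|\cdot\|_D$. For $p \in E_+$, write $p = \lambda du + (1-\lambda) q$ with $q \in D$; then $\|p - \lambda du\|_D = (1-\lambda)\|q\|_D \leq 1-\lambda$, so
\[
\|p\|_D \;\geq\; \|\lambda du\|_D - \|p - \lambda du\|_D \;\geq\; \lambda d - (1-\lambda) \;=\; \lambda(d+1) - 1 \;=\; 1
\]
by the choice of $\lambda$, placing $p$ outside $\text{int}(D)$. If instead $p \in E_+ \cap E_-$, then both $\|p \pm \lambda du\|_D \leq 1-\lambda$; writing $2\lambda du = (p + \lambda du) - (p - \lambda du)$ and applying the triangle inequality yields $2\lambda d \leq 2(1-\lambda)$, i.e. $\lambda \leq 1/(d+1)$, contradicting $\lambda = 2/(d+1)$.

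To finish, a short log-derivative check shows $d \mapsto \bigl(\tfrac{d-1}{d+1}\bigr)^{d}$ is increasing on integers $d \geq 2$, hence bounded below by its value at $d = 2$, namely $1/9$. This yields $\vol(C) \geq (1 + 2/9)\vol(D) = (11/9)\vol(D) > (7/6)\vol(D)$. The main obstacle is isolating the right value of $\lambda$: it must be large enough that $E_+$ escapes $\text{int}(D)$ (forcing $\lambda \geq 1/(d+1)$) and that $E_+, E_-$ are disjoint (forcing $\lambda > 1/(d+1)$ strictly), yet small enough that $(1-\lambda)^d$ retains useful volume; the choice $\lambda = 2/(d+1)$ is the smallest value meeting both disjointness constraints with a single unified inequality, and the slack $11/9 > 7/6$ indicates this simple construction is already well clear of the target.
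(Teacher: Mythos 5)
Your proof is correct. Note first that the paper you are checking against does not actually prove this lemma --- it imports it verbatim as \citep[Lemma 16]{zhang2021variance} --- so there is no in-paper argument to compare with; what you have written is a self-contained substitute for the cited result, and it works. The ``bumper'' construction $E_\pm=\lambda(\pm du)+(1-\lambda)D$ with $\lambda=2/(d+1)$ is sound: after reducing to $\vol(D)>0$, symmetry plus positive volume gives $0\in\text{int}(D)$, so the Minkowski functional is a genuine norm with $\|u\|_D=1$, and your two triangle-inequality computations correctly show $E_\pm\cap\text{int}(D)=\emptyset$ and $E_+\cap E_-=\emptyset$, giving $\vol(D')\geq\bigl(1+2(\tfrac{d-1}{d+1})^d\bigr)\vol(D)\geq\tfrac{11}{9}\vol(D)>\tfrac{7}{6}\vol(D)$. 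Two small points are worth making explicit. First, $du\in\partial D'$ does not by itself put $du$ in $D'$ if $D'$ is not closed; replace $D'$ (and $D$) by their closures, which contain the relevant boundary points and have the same volume since the boundary of a bounded convex set is Lebesgue-null, so the convex hull $C$ sits inside $\overline{D'}$ and the bound is unaffected. Second, the monotonicity of $d\mapsto(\tfrac{d-1}{d+1})^d$ that you invoke does hold and is easily verified: with $s=1/d$ one has $\ln\tfrac{d+1}{d-1}=2(s+s^3/3+\cdots)<2(s+s^3+\cdots)=\tfrac{2d}{d^2-1}$, which is exactly the positivity of the log-derivative, so the minimum over $d\geq 2$ is $1/9$ as you claim; alternatively, for the lemma you only need the single inequality $(\tfrac{d-1}{d+1})^d\geq \tfrac{1}{12}$, which leaves even more slack. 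So your argument not only recovers the cited bound but slightly strengthens the constant from $7/6$ to $11/9$.
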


\begin{lemma}{\citep[Theorem 4]{zhang2021variance}}
	\label{lem:empirical freedman}
	Let $\{X_i\}_{i=1}^n$ be a martingale difference sequence and $|X_i|\leq b$ almost surely.
	Then for $\delta<e^{-1}$, we have with probability at least $1-6\delta\log_2n$,
	\begin{align*}
		\abr{\sum_{i=1}^nX_i} \leq 8\sqrt{\sum_{i=1}^nX_i^2\ln\frac{1}{\delta}} + 16b\ln\frac{1}{\delta}.
	\end{align*}
\end{lemma}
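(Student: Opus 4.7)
The plan is to prove this empirical Freedman inequality in two stages, reducing it to the classical (predictable-variance) Freedman inequality and then swapping predictable variance for empirical variance via a second Freedman application, with a peeling argument supplying the $\log_2 n$ factor.

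First, I would invoke the classical two-sided Freedman bound: for a martingale difference sequence $\{Z_i\}$ with $|Z_i|\leq c$, for any fixed threshold $v>0$,
\[
P\!\left(\abr{\sum_{i=1}^n Z_i} \geq \sqrt{2v\ln(1/\delta)} + \tfrac{c}{3}\ln(1/\delta),\ \sum_{i=1}^n \E[Z_i^2\mid\calF_{i-1}]\leq v\right) \leq 2\delta.
\]
Applying this to $\{X_i\}$ (with $c=b$) at each geometric scale $v_k = b^2 2^k$, $k=0,1,\ldots,\lceil\log_2 n\rceil$, and taking a union bound, on the event $\{V_n\in[v_{k-1},v_k]\}$ the bound at scale $v_k$ gives $\sqrt{2v_k\ln(1/\delta)}\leq 2\sqrt{V_n\ln(1/\delta)}$. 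Overall, with probability at least $1-2\delta\lceil\log_2 n\rceil$, writing $V_n=\sum_{i=1}^n\E[X_i^2\mid\calF_{i-1}]$,
\[
\abr{\sum_{i=1}^n X_i} \leq 2\sqrt{V_n\ln(1/\delta)} + \tfrac{b}{3}\ln(1/\delta).
\]

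Next, to replace $V_n$ by the empirical sum of squares $W_n=\sum_{i=1}^n X_i^2$, I would apply Freedman a second time to the auxiliary martingale difference sequence $Y_i = \E[X_i^2\mid\calF_{i-1}] - X_i^2$. Here $|Y_i|\leq b^2$ and $\E[Y_i^2\mid\calF_{i-1}]\leq b^2\,\E[X_i^2\mid\calF_{i-1}]$, so the predictable quadratic variation of $\{Y_i\}$ is at most $b^2 V_n$. Using the same geometric peeling over scales of $V_n$, with probability at least $1-4\delta\lceil\log_2 n\rceil$,
\[
\abr{V_n - W_n} \leq \sqrt{2b^2 V_n\ln(1/\delta)} + \tfrac{b^2}{3}\ln(1/\delta) \leq \tfrac{1}{2}V_n + Cb^2\ln(1/\delta)
\]
for an absolute constant $C$, where the last step is AM-GM. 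Rearranging yields $V_n\leq 2W_n + C'b^2\ln(1/\delta)$.

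Finally, substituting this into the first display and using $\sqrt{x+y}\leq\sqrt{x}+\sqrt{y}$ gives
\[
\abr{\sum_{i=1}^n X_i} \leq 2\sqrt{2W_n\ln(1/\delta)} + C''b\ln(1/\delta),
\]
which, after loosening the universal constants to $8$ and $16$, matches the stated bound. A union bound over the two peeling arguments gives total failure probability $6\delta\log_2 n$ (counting both tails of each, with a factor to spare for $\delta<e^{-1}$). The main obstacle is the bookkeeping in the two peeling arguments: one must verify that the geometric grid over $V_n$ covers $[0,nb^2]$ with only $\lceil\log_2 n\rceil$ scales and that the loosenings in each Freedman application, combined with the AM-GM step, land inside the constants $8$ and $16$ in the final inequality rather than exceeding them.
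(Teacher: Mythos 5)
The paper never proves this lemma --- it is imported verbatim as \citep[Theorem~4]{zhang2021variance} --- so there is no in-paper proof to compare against; judged on its own, your two-stage argument is sound and is essentially the standard derivation of this empirical Freedman bound. The key steps all check out: classical Freedman with geometric peeling over the predictable variance $V_n=\sum_i\E[X_i^2\mid\calF_{i-1}]$, a second Freedman application to $Y_i=\E[X_i^2\mid\calF_{i-1}]-X_i^2$ (with $|Y_i|\le b^2$ and $\E[Y_i^2\mid\calF_{i-1}]\le b^2\,\E[X_i^2\mid\calF_{i-1}]$, so the predictable variation of $\{Y_i\}$ is at most $b^2V_n$), and AM--GM to absorb $\sqrt{b^2V_n\ln(1/\delta)}$ into $\tfrac12 V_n$, yielding $V_n\le 2W_n+O(b^2\ln(1/\delta))$ and hence the empirical bound. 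Two bookkeeping points deserve explicit treatment rather than a wave: (i) your grid $v_k=b^2 2^k$, $k=0,\dots,\lceil\log_2 n\rceil$, has $\lceil\log_2 n\rceil+1$ scales per application, so the naive union bound gives a failure probability slightly larger than the stated $6\delta\log_2 n$ for non-powers of two; this is fixable, e.g.\ by starting the peeling at $b^2\ln(1/\delta)$ (using $\delta<e^{-1}$) and dumping the low-variance regime into the additive term, but as written the count does not literally match. (ii) In your first display after peeling, the additive term cannot be $\tfrac{b}{3}\ln(1/\delta)$ once the bottom scale $V_n\le b^2$ is handled --- it becomes roughly $(\tfrac13+\sqrt2)b\ln(1/\delta)$ --- and the peeling factor in the second stage is likewise $2\sqrt{b^2V_n\ln(1/\delta)}$ rather than $\sqrt{2b^2V_n\ln(1/\delta)}$. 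Tracking these honestly, the final constants come out around $2\sqrt2$ and $8$, comfortably inside the stated $8$ and $16$, so the conclusion survives; just make the scale count and the low-variance case explicit.
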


\begin{lemma}{\citep[Lemma 9]{jin2019learning}}
	\label{lem:freedman}
	Let $\{X_i\}_{i=1}^n$ be a martingale difference sequence adapted to the filtration $\{\calF_i\}_{i=0}^n$, and $X_i\leq B$ almost surely for some $B>0$.
	Then, for any $\lambda\in[0, 1/B]$, with probability at least $1-\delta$:
	\begin{align*}
		\sum_{i=1}^n X_i \leq \lambda\sum_{i=1}^n \E[X_i^2|\calF_{i-1}] + \frac{\ln(1/\delta)}{\lambda}.
	\end{align*}
\end{lemma}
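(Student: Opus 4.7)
The plan is to apply the standard exponential supermartingale / Chernoff technique, which is the canonical route to Freedman-type inequalities. The analytic workhorse is the elementary bound $e^y \leq 1 + y + y^2$ valid for $y \leq 1$; under the hypotheses $X_i \leq B$ a.s.\ and $\lambda \in [0, 1/B]$, we have $\lambda X_i \leq 1$, so this inequality applies with $y = \lambda X_i$.

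First I would construct the process
$M_i = \exp\!\bigl(\lambda \sum_{j=1}^i X_j - \lambda^2 \sum_{j=1}^i \E[X_j^2 \mid \calF_{j-1}]\bigr)$
with $M_0 = 1$, and verify it is a nonnegative supermartingale. Conditioning on $\calF_{i-1}$,
\begin{align*}
\E\!\left[e^{\lambda X_i} \,\middle|\, \calF_{i-1}\right]
&\leq \E\!\left[1 + \lambda X_i + \lambda^2 X_i^2 \,\middle|\, \calF_{i-1}\right]
= 1 + \lambda^2 \,\E[X_i^2 \mid \calF_{i-1}] \\
&\leq \exp\!\bigl(\lambda^2 \,\E[X_i^2 \mid \calF_{i-1}]\bigr),
\end{align*}
using the martingale difference property $\E[X_i \mid \calF_{i-1}] = 0$ and $1 + u \leq e^u$. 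Since the factor $\exp(\lambda^2 \E[X_i^2\mid \calF_{i-1}])$ is $\calF_{i-1}$-measurable, dividing both sides by it yields $\E[M_i \mid \calF_{i-1}] \leq M_{i-1}$, and iterating gives $\E[M_n] \leq 1$.

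Next I would apply Markov's inequality to the nonnegative random variable $M_n$, obtaining that the event $M_n \geq 1/\delta$ has probability at most $\delta$. On the complementary event, taking logarithms and dividing by $\lambda > 0$ gives
\[
\sum_{i=1}^n X_i \leq \lambda \sum_{i=1}^n \E[X_i^2 \mid \calF_{i-1}] + \tfrac{1}{\lambda}\ln(1/\delta),
\]
which is exactly the claim. The degenerate case $\lambda = 0$ is vacuous since the right-hand side is then $+\infty$.

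There is no real obstacle here beyond verifying the elementary inequality $e^y \leq 1 + y + y^2$ on $(-\infty, 1]$. This follows by setting $f(y) = 1 + y + y^2 - e^y$ and noting $f(0) = f'(0) = 0$ while $f''(y) = 2 - e^y$ changes sign at $\ln 2$; a short case analysis (together with the check $f'(1) = 3 - e > 0$) shows that $f$ is decreasing on $(-\infty,0]$ and increasing on $[0,1]$, hence attains its global minimum of $0$ at $y = 0$ on $(-\infty, 1]$. Everything else is bookkeeping.
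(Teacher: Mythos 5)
Your proof is correct, and since the paper does not prove this lemma itself but simply imports it from \citep[Lemma 9]{jin2019learning}, there is nothing to diverge from: your exponential-supermartingale argument (the bound $e^y\leq 1+y+y^2$ for $y\leq 1$ applied with $y=\lambda X_i$, which is valid because $X_i\leq B$ and $\lambda\in[0,1/B]$, followed by Markov's inequality) is exactly the standard derivation of this one-sided Freedman/Bernstein bound used in the cited source. The only cosmetic point is the degenerate cases ($\lambda=0$, or $\E[X_i^2\mid\calF_{i-1}]=\infty$), where the right-hand side is interpreted as $+\infty$ and the claim is vacuous, which you already note.
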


\begin{lemma}
	\label{lem:anytime strong freedman}
	Let $\{X_i\}_{i=1}^{\infty}$ be a martingale difference sequence adapted to the filtration $\{\calF_i\}_{i=0}^{\infty}$ and $|X_i|\leq B$ for some $B>0$.
	Then with probability at least $1-\delta$, for all $n\geq 1$ simultaneously,
	\begin{align*}
		\abr{\sum_{i=1}^nX_i}\leq 3\sqrt{\sum_{i=1}^n\E[X_i^2|\calF_{i-1}]\ln\frac{4B^2n^3}{\delta} } + 2B\ln\frac{4B^2n^3}{\delta}.
	\end{align*}
\end{lemma}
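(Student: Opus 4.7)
The plan is to upgrade the fixed-$\lambda$, fixed-$n$ Freedman inequality of \pref{lem:freedman} into the time-uniform, variance-adaptive bound in the statement via two union bounds: one over a dyadic grid of $\lambda$-values (to make the bound self-tune to the random variance $V_n = \sum_{i=1}^n \E[X_i^2\mid\calF_{i-1}]$), and one over $n$ using the standard $1/n^2$ weighting (to make it anytime). Apply the inequality to both $\{X_i\}$ and $\{-X_i\}$ so that the resulting bound controls $|S_n|$ with $S_n = \sum_{i=1}^n X_i$.

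Concretely, for each $n \ge 1$ define $J_n = \lceil\log_2 n\rceil + 1$ and grid points $\lambda_{n,j} = 2^{-j}/B$ for $j = 0,1,\dots,J_n$. Invoke \pref{lem:freedman} at each triple $(n,j,\pm)$ with confidence $\delta_{n,j} = \delta/(4 n^2 J_n)$. Since $\sum_{n\ge1} 2J_n \cdot \delta_{n,j} \le \delta$ by $\sum_n 1/n^2 \le \pi^2/6 \le 2$, a union bound gives that with probability at least $1-\delta$, for every $n$ and every $j \in \{0,\dots,J_n\}$,
\begin{align*}
|S_n| \;\le\; \lambda_{n,j} V_n + \frac{\ln(1/\delta_{n,j})}{\lambda_{n,j}}, \qquad \text{with} \ \ln(1/\delta_{n,j}) \le \ln\!\frac{4B^2 n^3}{\delta}
\end{align*}
using $J_n \le n$ and $B \ge 1$; denote this logarithm by $L$.

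Given this uniform inequality, on each realization I will choose the best $j^\star$. The unconstrained optimum of the RHS in $\lambda$ is $\lambda^\star = \min(1/B,\sqrt{L/V_n})$. If $V_n \le B^2 L$ then $\lambda^\star = 1/B = \lambda_{n,0}$ and the RHS is at most $V_n/B + BL \le 2BL$. Otherwise $\lambda^\star = \sqrt{L/V_n} \in [1/(B\sqrt{n}),\,1/B]$, which lies inside the grid because $\lambda_{n,J_n} \le 1/(2B\sqrt n)$; I pick $j^\star$ with $\lambda_{n,j^\star} \le \lambda^\star \le 2\lambda_{n,j^\star}$, so the first term is $\lambda_{n,j^\star} V_n \le \sqrt{L V_n}$ and the second is $L/\lambda_{n,j^\star} \le 2\sqrt{L V_n}$, for a total of $3\sqrt{L V_n}$. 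In both cases the uniform upper bound $3\sqrt{L V_n} + 2 B L$ holds, which is exactly the claim.

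The main obstacle is purely the bookkeeping: tuning the constant in $\delta_{n,j}$ so that the anytime union bound over $(n,j)$ closes with budget $\delta$, choosing $J_n$ just large enough to bracket $\lambda^\star$ in every variance regime (including very small $V_n$ where we fall back to $\lambda_{n,0}=1/B$), and verifying that the grid factor of $2$ yields precisely the constant $3$ in front of $\sqrt{V_n L}$ rather than something larger. Once these are in place, the result is an immediate consequence of \pref{lem:freedman} applied at the optimal grid point.
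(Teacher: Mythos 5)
Your proposal is correct and follows essentially the same route as the paper: apply the fixed-$\lambda$ Freedman bound (\pref{lem:freedman}) to $\pm X_i$ over a dyadic grid $\{2^{-j}/B\}$ of $\lambda$-values together with a $1/n^2$-weighted union bound over $n$, then pick the grid point within a factor $2$ of $\min\{1/B,\sqrt{L/V_n}\}$, which yields exactly the constants $3$ and $2$. The only cosmetic difference is your explicit use of $B\geq 1$ to absorb the grid size into the logarithm, a bookkeeping step the paper glosses over in the same way (its proof carries $\ln\frac{4Bn^3}{\delta}$ against the stated $\ln\frac{4B^2n^3}{\delta}$).
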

\begin{proof}
	For each $n\geq 1$, applying \pref{lem:freedman} to $\{X_i\}_{i=1}^n$ and $\{-X_i\}_{i=1}^n$ with each $\lambda\in\Lambda=\{\frac{1}{B2^i}\}_{i=0}^{\ceil{\log_2n}}$, we have with probability at least $1-\frac{\delta}{2n^2}$, for any $\lambda\in\Lambda$,
	\begin{equation}
		\label{eq:freedman V}
		\abr{\sum_{i=1}^nX_i} \leq \lambda\sum_{i=1}^n\E[X_i^2|\calF_{i-1}] + \frac{\ln\frac{4Bn^3}{\delta}}{\lambda},
	\end{equation}
	Note that there exists $\lambda^{\star}\in\Lambda$ such that $\lambda^{\star} / \min\cbr{1/B, \sqrt{\frac{\ln(4Bn^3/\delta)}{\sum_{i=1}^n\E[X_i^2|\calF_{i-1}]}} } \in (\frac{1}{2}, 1]$.
	Plugging $\lambda^{\star}$ into \pref{eq:freedman V}, we get $\abr{\sum_{i=1}^nX_i}\leq 3\sqrt{\sum_{i=1}^n\E[X_i^2|\calF_{i-1}]\ln\frac{4Bn^3}{\delta} } + 2B\ln\frac{4Bn^3}{\delta}$.
	By a union bound over $n$, the statement is proved.
\end{proof}

%\begin{lemma}{(Strengthened Freedman's inequality)}\label{lem:anytime strong freedman}
%Let $X_{1:\infty}$ be a martingale difference sequence with respect to a filtration $\{\calF_t\}_t$ such that $\E[X_t|\calF_{t-1}] = 0$.
%Suppose $B_t \in [1,b]$ for a fixed constant $b$, $B_t\in\calF_{t-1}$ and $X_t \leq B_t$ almost surely.
%Then for a given $n$, with probability at least $1-\delta$:
%\begin{equation}
%	\label{eq:strong freedman}
%	\abr{\sum_{t=1}^nX_t} \leq  C\big(\sqrt{8V_n\ln\left(2C/\delta\right)} + 5B_n \ln\left(2C/\delta\right)\big),
%\end{equation}
%and with probability at least $1 -\delta$ we have for all $n\geq 1$ simultaneously
%\begin{equation}
%	\label{eq:anytime strong freedman}
%    \abr{\sum_{t=1}^n  X_t} \leq  C\big(\sqrt{8V_n\ln\left(4Cn^2/\delta\right)} + 5B_n \ln\left(4Cn^2/\delta\right)\big). %\leq 8CB_{l,n}\sqrt{n}\ln(4Cn^2/\delta),
%\end{equation}
%where $V_n = \sum_{t=1}^n \E[X_t^2|\calF_{t-1}], B_n=\max_{t \leq n}B_t$, and 
%$C = \ceil{\ln(b)}\ceil{\ln(nb^2)}$.
%\end{lemma}

\begin{lemma}{\citep[Lemma D.4]{cohen2020near} and \citep[Lemma E.2]{cohen2021minimax}}
	\label{lem:e2r}
	Let $\{X_i\}_{i=1}^{\infty}$ be a sequence of random variables w.r.t to the filtration $\{\calF_i\}_{i=0}^{\infty}$ and $X_i\in[0,B]$ almost surely.
	Then with probability at least $1-\delta$, for all $n\geq 1$ simultaneously:
	\begin{align*}
		\sum_{i=1}^n\E[X_i|\calF_{i-1}] &\leq 2\sum_{i=1}^n X_i + 4B\ln\frac{4n}{\delta},\\
		\sum_{i=1}^n X_i &\leq 2\sum_{i=1}^n\E[X_i|\calF_{i-1}] + 8B\ln\frac{4n}{\delta}.
	\end{align*}
\end{lemma}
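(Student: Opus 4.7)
The plan is to apply Freedman's inequality (\pref{lem:freedman}, already stated in this paper) separately to the martingale difference sequence $M_i = X_i - \E[X_i|\calF_{i-1}]$ and to its negation, exploiting the fact that $X_i \in [0, B]$ gives both $|M_i| \leq B$ and the self-bounding variance control
\begin{align*}
\E[M_i^2|\calF_{i-1}] \leq \E[X_i^2|\calF_{i-1}] \leq B\cdot\E[X_i|\calF_{i-1}].
\end{align*}
This inequality is what allows the conditional-variance term in Freedman's bound to be folded back into a constant multiple of $\sum_i \E[X_i|\calF_{i-1}]$, which is exactly the quantity appearing on the right-hand sides we want.

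For a fixed $n$ I would first apply \pref{lem:freedman} to $\{-M_i\}$ with $\lambda = 1/(2B)$ (admissible since $|M_i|\leq B$) to obtain, with probability at least $1-\delta'$,
\begin{align*}
\sum_{i=1}^n \E[X_i|\calF_{i-1}] - \sum_{i=1}^n X_i \;\leq\; \tfrac{1}{2B}\cdot B\sum_{i=1}^n \E[X_i|\calF_{i-1}] + 2B\ln(1/\delta'),
\end{align*}
and rearrange: the $\tfrac{1}{2}\sum\E[X_i|\calF_{i-1}]$ on the right can be absorbed into the left, giving the first inequality with $4B\ln(1/\delta')$ in place of $4B\ln(4n/\delta)$. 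Running the same argument with $\{M_i\}$ in place of $\{-M_i\}$ yields $\sum_i X_i \leq \tfrac{3}{2}\sum_i \E[X_i|\calF_{i-1}] + 2B\ln(1/\delta')$; I would then convert the $\tfrac{3}{2}$ into the cleaner factor $2$ by a two-case split based on whether $\sum_i \E[X_i|\calF_{i-1}]$ is above or below $4B\ln(1/\delta')$. In the large-expectation regime the bound is dominated by $2\sum_i\E[X_i|\calF_{i-1}]$, while in the small-expectation regime the whole right-hand side is at most $8B\ln(1/\delta')$, yielding the stated second inequality.

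To upgrade these fixed-$n$ statements to the anytime form (simultaneously for all $n\geq 1$), I would union-bound over $n$ with $\delta_n = \delta/(4n^2)$, noting $\sum_n\delta_n\leq\delta$ and $\ln(1/\delta_n)=O(\ln(4n/\delta))$, which reproduces the stated logarithmic factor after absorbing absolute constants. The main obstacle is really just the constant-chasing: threading the choice $\lambda=1/(2B)$, the case split, and the $\delta/(4n^2)$ union-bound inflation so the bounds come out with precisely the stated constants $2$, $4B$, $8B$ and argument $4n/\delta$ rather than slightly off. A more elegant alternative would be peeling over dyadic blocks of $n$ or an anytime Freedman inequality via Ville's inequality, but a direct union bound is sufficient and aligned with how \citet{cohen2020near} and \citet{cohen2021minimax} originally present the result.
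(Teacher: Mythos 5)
Your proof is correct in substance and follows the standard route: the paper does not prove this lemma itself (it imports it from \citet{cohen2020near} and \citet{cohen2021minimax}), and your combination of Freedman's inequality with the self-bounding variance estimate $\E[M_i^2|\calF_{i-1}]\leq B\,\E[X_i|\calF_{i-1}]$, the choice $\lambda=1/(2B)$, rearrangement, and a union bound over $n$ is exactly how such multiplicative concentration bounds are established. Two minor remarks: the case split for the second inequality is unnecessary, since $\tfrac{3}{2}\sum_i\E[X_i|\calF_{i-1}]\leq 2\sum_i\E[X_i|\calF_{i-1}]$ holds unconditionally by nonnegativity; and with $\delta_n=\delta/(4n^2)$ your log factor is $\ln(4n^2/\delta)\leq 2\ln(4n/\delta)$, so your constants come out a factor of $2$ larger than stated --- harmless here, as the lemma is only ever invoked inside $\bigO{\cdot}$ bounds.
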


\end{document}